\DeclareSymbolFont{rsfs}{U}{rsfs}{m}{n}
\DeclareSymbolFontAlphabet{\mathscrsfs}{rsfs}
\numberwithin{equation}{section}
\newtheoremstyle{myexample} 
    {\topsep}                    
    {\topsep}                    
    {\rm }                   
    {}                           
    {\bf }                   
    {.}                          
    {.5em}                       
    {}  
\newtheoremstyle{myremark} 
    {\topsep}                    
    {\topsep}                    
    {\rm}                        
    {}                           
    {\bf}                        
    {.}                          
    {.5em}                       
    {}  
\newtheorem{claim}{Claim}[section]
\newtheorem{lemma}[claim]{Lemma}
\newtheorem{conjecture}[claim]{Conjecture}
\newtheorem{theorem}{Theorem}
\newtheorem{proposition}[claim]{Proposition}
\newtheorem{corollary}[claim]{Corollary}
\theoremstyle{myremark}
\newtheorem{remark}{Remark}[section]
\theoremstyle{myremark}
\theoremstyle{myexample}
\def\tgamma{\tilde{\gamma}}
\def\Corr{{\sf C}}
\def\Bind{{\sf B}}
\def\Norm{{\sf V}}
\def\Sym{{\sf S}}
\def\Info{{\rm I}}
\def\sT{{\sf T}}
\def\<{\langle}
\def\>{\rangle}
\def\prob{{\mathbb P}}
\def\hprob{\widehat{\mathbb P}}
\def\integers{{\mathbb Z}}
\def\E{{\mathbb E}} 
\def\hE{\widehat{\mathbb E}} 
\def\bq{\boldsymbol{q}}
\def\de{{\rm d}}
\def\ty{{\tilde{y}}}
\def\NMF{{\mathcal M}}
\def\bfeta{{\boldsymbol \eta}}
\def\tbfeta{\tilde{\boldsymbol \eta}}
\def\teta{\tilde{\eta}}
\def\lbar{{\overline{\lambda}}}
\def\tr{\tilde{r}}
\def\tm{\tilde{m}}
\def\supp{{\rm supp}}
\def\bPu{{\boldsymbol P}_{{\boldsymbol u}}}
\def\bPup{{\boldsymbol P}^{\perp}_{{\boldsymbol u}}}
\newcommand\T{\rule{0pt}{2.6ex}}
\def\diag{{\mbox{\rm diag}}}
\def\op{\mbox{\tiny\rm op}}
\def\inst{\mbox{\tiny\rm inst}}
\def\sp{\mbox{\tiny\rm spect}}
\def\sBayes{\mbox{\tiny\rm Bayes}}
\def\sTV{\mbox{\tiny\rm TV}}
\def\sTAP{\mbox{\tiny\rm TAP}}
\def\sBethe{\mbox{\tiny\rm Bethe}}
\def\reals{\mathbb{R}}
\def\cP{{\mathcal{P}}}
\def\cO{{\mathcal{O}}}
\def\cD{{\mathcal D}}
\def\cL{{\mathcal L}}
\def\bbQ{{\mathbb Q}}
\def\bbS{{\mathbb S}}
\def\Hess{{\boldsymbol \Omega}}
\def\oHess{\bar{\boldsymbol \Omega}}
\def\bg{{\boldsymbol g}}
\def\tbg{\tilde{\boldsymbol g}}
\def\bsigma{{\boldsymbol \sigma}}
\def\hbsigma{\hat{\boldsymbol \sigma}}
\def\hm{\hat{m}}
\def\bC{{\boldsymbol C}}
\def\hq{\hat{q}}
\def\cF{{\cal F}}
\def\tcF{\tilde{\cal F}}
\def\cG{{\cal G}}
\def\ELBO{{\rm ELBO}}
\def\tbr{\tilde{\boldsymbol r}}
\def\tbh{\tilde{\boldsymbol h}}
\def\br{{\boldsymbol r}}
\def\bdelta{{\boldsymbol \delta}}
\def\tbdelta{\tilde{\boldsymbol \delta}}
\def\tdelta{\tilde{\delta}}
\def\bDelta{{\boldsymbol \Delta}}
\def\tbDelta{\widetilde{\boldsymbol \Delta}}
\def\bOmega{{\boldsymbol \Omega}}
\def\tbOmega{\tilde{\boldsymbol \Omega}}
\def\bM{{\boldsymbol M}}
\def\by{{\boldsymbol y}}
\def\tbq{\tilde{\boldsymbol q}}
\def\tfb{\tilde{\boldsymbol f}}
\def\fb{{\boldsymbol f}}
\def\tby{\tilde{\boldsymbol y}}
\def\hby{\tilde{\boldsymbol y}}
\def\tbm{\tilde{\boldsymbol m}}
\def\tphi{{\tilde{\phi}}}
\def\tpsi{{\tilde{\psi}}}
\def\tbq{\widetilde{\boldsymbol q}}
\def\tbQ{\widetilde{\boldsymbol Q}}
\def\KL{{\rm KL}}
\def\cuQ{\mathscrsfs{Q}}
\def\RS{{\sf RS}}
\def\tbM{\widetilde{\boldsymbol M}}
\def\bm{{\boldsymbol m}}
\def\bQ{{\boldsymbol Q}}
\def\bS{{\boldsymbol S}}
\def\bT{{\boldsymbol T}}
\def\tbT{\tilde{\boldsymbol T}}
\def\hR{\widehat{R}}
\def\allone{{\mathbf J}}
\def\Tr{{\sf {Tr}}}
\def\sE{{\sf E}}
\def\bG{{\boldsymbol G}}
\def\bH{{\boldsymbol H}}
\def\hbh{\widehat{\boldsymbol h}}
\def\hbw{\widehat{\boldsymbol w}}
\def\hbQ{\widehat{\boldsymbol Q}}
\def\hbF{\widehat{\boldsymbol F}}
\def\hbH{\widehat{\boldsymbol H}}
\def\hbW{\widehat{\boldsymbol W}}
\def\bJ{{\boldsymbol J}}
\def\bPi{{\boldsymbol \Pi}}
\def\be{{\boldsymbol e}}
\def\obx{\overline{\boldsymbol x}}
\def\oby{\overline{\boldsymbol y}}
\def\bx{{\boldsymbol x}}
\def\bz{{\boldsymbol z}}
\def\ba{{\boldsymbol a}}
\def\bzero{{\boldsymbol 0}}
\def\bL{{\boldsymbol L}}
\def\bD{{\boldsymbol D}}
\def\bT{{\boldsymbol T}}
\def\bX{{\boldsymbol X}}
\def\obX{\bar{\boldsymbol X}}
\def\bX{\boldsymbol X}
\def\bZ{{\boldsymbol Z}}
\def\tbZ{\widetilde{\boldsymbol Z}}
\def\bP{{\boldsymbol P}}
\def\bPp{{\boldsymbol P}_{\perp}}
\def\bv{{\boldsymbol v}}
\def\tbv{\tilde{\boldsymbol v}}
\def\bu{{\boldsymbol u}}
\def\bw{{\boldsymbol w}}
\def\hbW{\widehat{\boldsymbol W}}
\def\hbH{{\widehat{\boldsymbol H}}}
\def\bW{{\boldsymbol W}}
\def\bK{{\boldsymbol K}}
\def\hbx{\hat{\boldsymbol x}}
\def\bOmega{{\boldsymbol \Omega}}
\def\bb{{\boldsymbol b}}
\def\bR{{\boldsymbol R}}
\def\bA{{\boldsymbol A}}
\def\bC{{\boldsymbol C}}
\def\bh{{\boldsymbol h}}
\def\hbm{\hat{\boldsymbol m}}
\def\Dir{{\rm Dir}}
\def\normal{{\sf N}}
\def\eps{{\varepsilon}}
\def\id{{\boldsymbol I}}
\def\sP{{\sf P}}
\def\sF{{\sf F}}
\def\sG{{\sf G}}
\def\tsF{\widetilde{\sf F}}
\def\tsG{\widetilde{\sf G}}
\def\tq{\tilde{q}}
\def\tZ{\tilde{Z}}
\def\oq{\overline{q}}
\def\bfone{{\boldsymbol 1}}
\def\sB{{\sf B}}
\def\sC{{\sf C}}
\def\mmse{{\sf mmse}}
\def\entro{{\sf h}}
\def\GOE{{\sf GOE}}
\title{An Instability in Variational Inference for Topic Models}
\author{Behrooz Ghorbani,\;\; Hamid Javadi\thanks{Department of Electrical Engineering, Stanford University}, \;\;
Andrea Montanari\thanks{Department of Electrical Engineering and Department of Statistics, Stanford University}}
\begin{document}

\maketitle

\begin{abstract}
Topic models are Bayesian models that are frequently used to capture the latent structure of certain 
corpora of documents or images. Each data element in such a corpus (for instance each item in a collection of scientific articles)
is regarded as a convex combination of a small number of vectors corresponding to `topics' or `components'.
The weights are assumed to have a Dirichlet prior distribution.
The standard approach towards approximating the posterior is to use variational inference algorithms, and in particular a 
mean field approximation. 

We show that this approach suffers from an instability that can produce misleading conclusions. 
Namely, for certain regimes of the model parameters, variational inference outputs a non-trivial decomposition 
into topics. However --for the same parameter values--  the data  contain no actual information about the 
true decomposition, and hence the output of the algorithm is
uncorrelated with the true topic decomposition. Among other
consequences, the estimated posterior mean is significantly wrong, and
estimated Bayesian credible regions do not achieve the nominal coverage.
We discuss how this instability is remedied by more accurate mean field approximations. 
\end{abstract}

\section{Introduction}
\label{sec:Intro}

Topic modeling \cite{blei2012probabilistic} aims at extracting the latent structure from a corpus of  documents (either images or texts),
that are represented as vectors $\bx_1,\bx_2,\dots,\bx_n\in\reals^d$. The key assumption is that the $n$ documents are (approximately) convex combinations of a small number $k$ of topics
$\tbh_1,\dots,\tbh_k\in\reals^d$. Conditional on the topics, documents are generated 
independently by letting
\begin{align}
\bx_a =  \frac{\sqrt{\beta}}{d}\sum_{\ell=1}^kw_{a,\ell}\tbh_{\ell}+\bz_a\, ,
\end{align}
where the weights $\bw_a= (w_{a,\ell})_{1\le \ell\le k}$ and noise vectors $\bz_a$ are i.i.d. across $a\in\{1,\dots,n\}$. The scaling factor 
$\sqrt{\beta}/d$ is introduced for mathematical convenience (an equivalent parametrization would have been to scale $\bZ$ by a noise-level parameter $\sigma$), and $\beta>0$ can be interpreted as a 
signal-to-noise ratio.
It is also useful to introduce the matrix $\bX\in \reals^{n\times d}$ whose $i$-th row is $\bx_i$, and therefore
\begin{align}
\bX = \frac{\sqrt{\beta}}{d}\,\bW\bH^{\sT} +\bZ\, ,\label{eq:LDAModel}
\end{align}
where $\bW\in\reals^{n\times k}$ and $\bH\in\reals^{d\times k}$. The $a$-th row of $\bW$, is the vector of weights
$\bw_a$, while the rows of $\bH$ will be denoted by $\bh_i\in\reals^k$. 

Note that $\bw_a$ belongs to the simplex $\sP_1(k) = \{\bw\in\reals^k_{\ge 0}\; :\;\;\<\bw,\bfone_k\> =1\}$. 
It is common to assume that its prior is Dirichlet: this class of models is known as \emph{Latent Dirichlet Allocations}, or LDA \cite{blei2003latent}. Here we will take a particularly simple example of this type,
and assume that the prior  is Dirichlet in $k$ dimensions with all parameters equal to $\nu$ (which we will denote by $\Dir(\nu;k)$).
As for the topics $\bH$, their prior distribution depends on the specific application. For instance, when applied to text corpora, the $\tbh_i$ are typically non-negative and represent normalized 
word count vectors. Here we will
make the simplifying assumption that they are standard Gaussian $(\tbh_{i})_{i\le d}\sim_{iid}\normal(0,\id_k)$.
 Finally, $\bZ$ will be a noise matrix with entries $(Z_{ij})_{i\in
   [n], j\in [d]}\sim_{iid}\normal(0,1/d)$.

In  fully Bayesian topic models, the parameters of the Dirichlet distribution, as well as the topic distributions are themselves unknown 
and to be learned from data. Here we will work in an idealized setting in which they are known. We will also assume that
data are in fact distributed according to the postulated generative model. Since we are interested
in studying some limitations of current approaches, our main point is only reinforced by assuming this idealized scenario.

As is common with Bayesian approaches, computing the posterior distribution of the factors $\bH$, $\bW$ given the data $\bX$ is computationally challenging. 
Since the seminal work of Blei, Ng and Jordan \cite{blei2003latent},
variational inference is the method of choice for addressing this problem within topic models. The term `variational inference' refers to a broad class of methods that 
aim at approximating the posterior computation by solving an optimization problem, see \cite{jordan1999introduction,wainwright2008graphical,blei2017variational} for
background. A popular starting point is the Gibbs variational principle, namely the fact that the posterior solves the following convex optimization problem:
\begin{align}
p_{\bW,\bH|\bX}(\,\cdot\,,\cdot,|\bX) & = \arg\min_{q\in\cP_{n,d,k}} \KL(q\| p_{\bW,\bH|\bX})  \label{eq:Gibbs}\\
&= \arg\min_{q\in\cP_{n,d,k}}\Big\{-\E_{q}\log p_{\bX|\bW,\bH}(\bX|\bH,\bW) +\KL(q\| p_{\bW}\times p_{\bH})\Big\}\,, \label{eq:Gibbs2}
\end{align}
where $\KL(\, \cdot\,\|\,\cdot\, )$ denotes the Kullback-Leibler divergence. The variational expression in Eq.~(\ref{eq:Gibbs2})
is also known as the Gibbs free energy.
Optimization is within the space $\cP_{n,d,k}$ of probability measures
on $\bH,\bW$. To be precise, we always assume that a dominating measure $\nu_0$ over $\reals^{n\times k}\times \reals^{d\times k}$ is given for $\bW,\bH$, and
both $p_{\bW,\bH|\bX}$ and $q$ have densities with respect to $\nu_0$: we hence identify the measure with its density. 
Throughout the paper (with the exception of the example in Section \ref{sec:Toy}) $\nu_0$ can be taken to be the  Lebesgue measure. 

Even for $\bW,\bH$ discrete, the Gibbs principle  has exponentially many decision variables.
Variational methods differ in the way the problem  (\ref{eq:Gibbs}) is approximated. The main approach within topic modeling is \emph{naive mean field}, 
which restricts the optimization problem to the space of probability measures that factorize over the rows of $\bW,\bH$:
\begin{align}
\hq\left(\bW,\bH\right) = q\left(\bH\right)\tilde q\left(\bW\right) = \prod_{i=1}^d q_i\left(\bh_i\right)\prod_{a=1}^n\tq_a\left(\bw_a\right)\, . \label{eq:ProductForm}
\end{align}
By a suitable parametrization of the marginals $q_i$, $\tq_a$, this leads to an optimization problem of dimension $O((n+d)k)$, cf. Section \ref{sec:Inst}. Despite being non-convex,
this problem is separately convex in the $(q_i)_{i\le d}$ and $(\tq_a)_{a\le n}$, which naturally suggests the use of
an alternating minimization algorithm which has been successfully deployed in a broad range of applications ranging from computer vision to genetics
\cite{fei2005bayesian,wang2011collaborative,raj2014faststructure}. We will refer to this as to the \emph{naive mean field iteration}.
Following a common use in the topics models literature,  we will use the terms `variational inference' and `naive mean field' interchangeably.

\vspace{0.25cm}

The main result of this paper is that naive mean field presents an instability for learning Latent Dirichlet Allocations.
We will focus on the limit $n,d\to\infty$ with $n/d=\delta$ fixed. Hence, an LDA distribution is determined by the
parameters $(k,\delta,\nu,\beta)$. We will show that there are regions in this parameter space such that the following two findings hold simultaneously:
\begin{description}
\item[No non-trivial estimator.] Any estimator $\hbH$, $\hbW$ of the topic or weight matrices is asymptotically uncorrelated with the real model parameters $\bH, \bW$. 
In other words, the data do not contain enough signal to perform any strong inference.
\item[Variational inference is randomly biased.] Given the above, one would hope the Bayesian posterior to be centered on an unbiased estimate. In particular, 
$p(\bw_a|\bX)$ (the posterior distribution over weights of document $a$) should be centered around the uniform distribution $\bw_a= (1/k,\dots,1/k)$. 
In contrast, we will show that the posterior produced by naive mean field is centered around a random distribution that is uncorrelated with the actual weights. 
Similarly, the posterior over topic vectors is centered around random vectors uncorrelated with the true topics.
\end{description}
One key argument in support of Bayesian methods is the hope that they  provide a measure of uncertainty of the estimated variables. In 
view of this, the failure just described is particularly dangerous because it suggests some measure of certainty, although the estimates are essentially random.

Is there a way to eliminate this instability by using a better mean field approximation? We show that a promising approach 
is provided by a classical idea in statistical physics, the Thouless-Anderson-Palmer (TAP) free energy \cite{thouless1977solution,opper2001adaptive}. This suggests a variational principle that is analogous
in form to naive mean field, but provides a more accurate approximation of the Gibbs principle:
\begin{description}
\item[Variational inference via the TAP free energy.] We show that the instability of naive mean field is  remedied by using the TAP free energy instead
of the naive mean field free energy. The latter can be optimized using an iterative scheme that is analogous to the naive mean field iteration and is known as 
approximate message passing (AMP).  
\end{description}
While the TAP approach is promising --at least for synthetic data-- we believe that further work is needed to develop a reliable inference scheme. 

The rest of the paper is organized as follows. Section  \ref{sec:Toy} discusses a simpler example, $\integers_2$-synchronization, which shares important features with latent Dirichlet allocations. 
Since calculations are fairly straightforward, this example allows to explain the main mathematical points in a simple context. 
We then present our main results about instability of naive mean field in Section \ref{sec:Inst}, and discuss the use of TAP free energy to overcome the instability in 
Section \ref{sec:Fixing}.

\subsection{Related literature}

Over the last fifteen years, topic models have been generalized to cover an impressive number of applications.
A short list includes mixed membership models \cite{erosheva2004mixed,airoldi2008mixed}, dynamic topic models \cite{blei2006dynamic}, correlated topic models
\cite{lafferty2006correlated,blei2007correlated}, spatial LDA \cite{wang2008spatial}, relational topic models \cite{chang2009relational}, Bayesian tensor models \cite{zhou2015bayesian}.
While other approaches have been used (e.g. Gibbs sampling),
variational algorithms are among the most popular methods for Bayesian inference
in these models. Variational methods provide a fairly complete and interpretable description of the posterior, while allowing to leverage advances in optimization algorithms 
and architectures towards this goal (see \cite{hoffman2010online,broderick2013streaming}).

Despite this broad empirical success, little is rigorously known about the accuracy of variational inference in 
concrete statistical problems. Wang and Titterington \cite{wang2004convergence,wang2006convergence} prove local convergence of 
naive mean field estimate to the true parameters for exponential families with missing data and Gaussian mixture models. In the context of
Gaussian mixtures, the same authors prove that the covariance of the variational posterior is asymptotically smaller (in the positive semidefinite
order) than the inverse of the Fisher information matrix \cite{wang2005inadequacy}. All of these results are established in the classical large sample asymptotics
$n\to\infty$ with $d$ fixed. In the present paper we focus instead on the high-dimensional limit $n = \Theta(d)$ and prove that 
also the mode  (or mean) of the variational posterior is incorrect.
Notice that the high-dimensional regime is particularly relevant for the analysis of Bayesian methods. Indeed, in  the classical
low-dimensional asymptotics Bayesian approaches do not outperform maximum likelihood.

 In order to correct for the underestimation of covariances, \cite{wang2005inadequacy}
suggest replacing its variational estimate by the inverse Fisher information matrix. A different approach is developed in 
\cite{giordano2015linear}, building on linear response theory. 

Naive mean field variational inference was used in \cite{celisse2012consistency,bickel2013asymptotic}
to estimate the parameters of the stochastic block model. These works establish consistency and
asymptotic normality of the variational estimates in a large signal-to-noise ratio regime. 
Our work focuses on estimating the latent factors: it would be interesting to consider implications on parameter 
estimation as well.

The recent paper \cite{zhang2017theoretical} also studies variational inference in the context of the stochastic block model,
but focuses on reconstructing the latent vertex labels. The authors prove that naive mean field achieves minimax optimal statistical rates. 
Let us emphasize that this problem is closely related  to topic models:
both are  models for approximately low-rank matrices, with a probabilistic prior on the factors. The results of \cite{zhang2017theoretical} are complementary to ours, 
in the sense that  \cite{zhang2017theoretical} establishes positive
results at large signal-to-noise ratio (albeit for a different model), 
while we prove inconsistency at low signal-to-noise ratio. 
General conditions for consistency of variational Bayes methods are proposed in \cite{pati2017statistical}.

Our work also builds on recent theoretical advances in high-dimensional low-rank models, that were mainly driven 
by techniques from mathematical statistical physics (more specifically, spin glass theory). An incomplete list of relevant references
includes \cite{korada2009exact,deshpande2014information,deshpande2017asymptotic,krzakala2016mutual,barbier2016mutual,lelarge2016fundamental,miolane2017fundamental,lesieur2017constrained,alaoui2018estimation}.
These papers prove asymptotically exact characterizations of the Bayes optimal estimation error in low-rank models, to an increasing degree of
generality, under the high-dimensional scaling $n,d\to\infty$ with $n/d\to\delta\in (0,\infty)$. 

Related ideas also suggest an iterative algorithm for Bayesian estimation, namely Bayes Approximate Message Passing \cite{DMM09,DMM_ITW_I}. As mentioned
above, Bayes AMP can be regarded as minimizing a different variational approximation known as the TAP free energy. An important
advantage over naive mean field is that AMP can be rigorously analyzed using a method known  as state evolution 
\cite{BM-MPCS-2011,javanmard2013state,berthier2017state}.

Let us finally mention that a parallel line of work develops polynomial-time algorithms to construct
non-negative matrix factorizations under certain structural assumptions on the data matrix $\bX$, such as separability
\cite{arora2012learning,arora2012computing,recht2012factoring}. It should be emphasized that the objective of these
algorithms is different from the one of Bayesian methods: they return a factorization that is guaranteed to be unique under separability.
In contrast, variational methods attempt to approximate the posterior distribution, when the data are generated according to the LDA model.

\subsection{Notations}

We denote by $\id_m$ the identity matrix, and by $\bJ_m$ the all-ones
matrix in $m$ dimensions  (subscripts will be dropped when the number of dimensions is clear from the 
context). We use $\bfone_k\in\reals^k$ for the all-ones vector. 

We will use $\otimes$ for the tensor (outer) product. In particular, given vectors expressed in
the canonical basis as $\bu = \sum_{i=1}^{d_1} u_i\be_i\in\reals^{d_1}$ and  $\bv = \sum_{i=j}^{d_2} v_j\be_j\in\reals^{d_2}$,
$\bu\otimes\bv\in\reals^{d_1}\otimes \reals^{d_2}$ is the tensor with coordinates $(\bu\otimes\bv)_{ij} = u_iv_j$ in the basis $\be_i\otimes \be_j$.
We will identify the space of matrices $\reals^{d_1\times d_2}$ with the tensor product $\reals^{d_1}\otimes \reals^{d_2}$.
In particular, for $\bu\in\reals^{d_1}$, $\bv\in\reals^{d_2}$, we identify $\bu\otimes \bv$ with the matrix $\bu\bv^{\sT}$.

Given a symmetric matrix $\bM\in\reals^{n\times n}$, we denote by $\lambda_1(\bM)\ge \lambda_2(\bM)\ge \dots\ge \lambda_n(\bM)$
its eigenvalues in decreasing order. 
For a matrix (or vector) $\bA \in \reals^{d\times n}$ we denote the orthogonal
projector operator onto the subspace spanned by the columns of $\bA$ by
$\bP_\bA\in\reals^{d\times d}$, and its orthogonal complement by $\bP_\bA^{\perp} = \id_d-\bP_{\bA}$. When the subscript is omitted, this is understood to be the
projector onto the space spanned by the all-ones vector: $\bP=\bfone_d\bfone_d/d$ and  $\bP_{\perp}=\id_d-\bP$.

\section{A simple example: $\integers_2$-synchronization}
\label{sec:Toy}

In $\integers_2$ synchronization we are interested in estimating a vector $\bsigma\in\{+1,-1\}^n$ from observations $\bX\in\reals^{n\times n}$,
generated according to 
\begin{align}
\bX = \frac{\lambda}{n}\bsigma\bsigma^{\sT}+\bZ\, ,\label{eq:Z2-synch}
\end{align}
where $\bZ=\bZ^{\sT}\in\reals^{n\times n}$ is distributed according to the Gaussian Orthogonal Ensemble $\GOE(n)$, namely $(Z_{ij})_{i<j\le n}\sim_{iid}\normal(0,1/n)$
are independent of $(Z_{ii})_{i\le n}\sim_{iid}\normal(0,2/n)$. The parameter $\lambda\ge 0$ corresponds to the signal-to-noise ratio.

It is known that for $\lambda\le 1$ no algorithm can estimate $\bsigma$ from data $\bX$ with positive correlation in the limit $n\to\infty$.
The following is an immediate consequence of \cite{korada2009exact,deshpande2017asymptotic}, see Appendix \ref{app:LemmaITZ2}.
\begin{lemma}\label{lemma:IT-Threshold-Z2}
Under model (\ref{eq:Z2-synch}), for $\lambda\le 1$ and any estimator $\hbsigma:\reals^{n\times n}\to\reals^n\setminus \{\bzero\}$, 
the following limit holds in probability:
\begin{align}
\lim\sup_{n\to\infty}\frac{|\<\hbsigma(\bX),\bsigma\>|}{\|\hbsigma(\bX)\|_2\|\bsigma\|_2} = 0\, .
\end{align}
\end{lemma}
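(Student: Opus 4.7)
The plan is to reduce this vector-recovery statement to the matrix-estimation result established in \cite{korada2009exact,deshpande2017asymptotic}: for $\lambda\le 1$ the matrix minimum mean squared error for the rank-one spike $\bsigma\bsigma^{\sT}$ equals its trivial (prior) value. The point is that any vector estimator with non-vanishing correlation would yield, through an outer-product construction, a matrix estimator with non-vanishing overlap, contradicting that MMSE formula.

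\textbf{Reduction to matrix estimation.} Given any estimator $\hbsigma(\bX)\neq \bzero$, I would define the rank-one matrix
\begin{align*}
\widehat{\bM}(\bX) \;=\; \frac{\hbsigma\,\hbsigma^{\sT}}{\|\hbsigma\|_2^2}\, ,
\end{align*}
which has unit Frobenius norm. Writing $\rho_n$ for the quantity inside the $\lim\sup$ and using $\|\bsigma\|_2^2 = n$, one computes
\begin{align*}
\rho_n^2 \;=\; \frac{\<\hbsigma,\bsigma\>^2}{\|\hbsigma\|_2^2\,\|\bsigma\|_2^2} \;=\; \frac{1}{n}\,\<\widehat{\bM},\,\bsigma\bsigma^{\sT}\>\, .
\end{align*}
Hence it is enough to show that $\E\bigl|\<\widehat{\bM}(\bX),\bsigma\bsigma^{\sT}\>\bigr| = o(n)$ for any $\bX$-measurable matrix $\widehat{\bM}$ with $\|\widehat{\bM}\|_F\le 1$.

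\textbf{Invoking matrix MMSE and concluding.} The cited references establish that for the spiked Wigner model with Rademacher prior, the asymptotic matrix MMSE at $\lambda\le 1$ equals the prior variance:
\begin{align*}
\frac{1}{n^2}\,\E\bigl\|\bsigma\bsigma^{\sT}-\E\!\left[\bsigma\bsigma^{\sT}\mid \bX\right]\bigr\|_F^2 \;\longrightarrow\; 1\, .
\end{align*}
Since $\|\bsigma\bsigma^{\sT}\|_F^2 = n^2$, the Pythagorean identity for conditional expectations gives $\frac{1}{n^2}\,\E\|\E[\bsigma\bsigma^{\sT}\mid\bX]\|_F^2\to 0$. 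By the tower property and Cauchy--Schwarz,
\begin{align*}
\E\bigl|\<\widehat{\bM},\bsigma\bsigma^{\sT}\>\bigr| \;=\; \E\bigl|\<\widehat{\bM},\E[\bsigma\bsigma^{\sT}\mid\bX]\>\bigr| \;\le\; \sqrt{\E\|\E[\bsigma\bsigma^{\sT}\mid\bX]\|_F^2} \;=\; o(n)\, .
\end{align*}
Combining with the reduction yields $\E[\rho_n^2]\to 0$, and Markov's inequality delivers $\rho_n\to 0$ in probability.

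\textbf{Main obstacle.} The argument is short once one has the correct matrix MMSE statement in hand; the only delicate point is citing the exact formulation from \cite{korada2009exact,deshpande2017asymptotic} and handling the minor diagonal correction (the diagonal of $\bsigma\bsigma^{\sT}$ is deterministically $\bfone$, so contributes only an $O(n)$ Frobenius-squared term, negligible at the $n^2$ scale). All remaining manipulations are routine.
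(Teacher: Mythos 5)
Your proof is correct and follows essentially the same route as the paper: both reduce the claim to the matrix lower bound of \cite{deshpande2017asymptotic} by forming the rank-one estimator $\hbsigma\hbsigma^{\sT}/\|\hbsigma\|_2^2$ (the paper plugs a suitably scaled version of this matrix directly into the lower bound and expands the Frobenius error, while you route through the posterior mean $\E[\bsigma\bsigma^{\sT}\mid\bX]$ and Cauchy--Schwarz, which is equivalent). The only cosmetic slip is the asserted equality $\E\bigl|\<\widehat{\bM},\bsigma\bsigma^{\sT}\>\bigr| = \E\bigl|\<\widehat{\bM},\E[\bsigma\bsigma^{\sT}\mid\bX]\>\bigr|$, which does not follow from the tower property for a general $\bX$-measurable $\widehat{\bM}$, but is harmless here because your $\widehat{\bM}$ gives $\<\widehat{\bM},\bsigma\bsigma^{\sT}\> = \<\hbsigma,\bsigma\>^2/\|\hbsigma\|_2^2 \ge 0$, so the absolute values can simply be dropped.
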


How does variational inference perform on this problem? Any product probability distribution $\hq(\bsigma) = \prod_{i=1}^n q_i(\sigma_i)$
can be parametrized by the means $m_i= \sum_{\sigma_i\in\{+1,-1\}} q_i(\sigma_i)\,\sigma_i$, and it is immediate to get
\begin{align}
\KL(\hq\|p_{\bsigma|\bX}) &= \cF(\bm) + {\rm const.}\, ,\\
\cF(\bm) & \equiv -\frac{\lambda}{2}\<\bm,\bX_0\bm\> -\sum_{i=1}^n\entro(m_i)\, .\label{eq:Z2_FreeEnergy}
\end{align}
Here $\bX_0$ is obtained from $\bX$ by setting the diagonal entries to $0$, and $\entro(x) = -\frac{(1+x)}{2}\log\frac{(1+x)}{2} -\frac{(1-x)}{2}\log\frac{(1-x)}{2}$ is the binary entropy function. In view of Lemma \ref{lemma:IT-Threshold-Z2},
the correct posterior distribution should be essentially uniform, resulting in $\bm$ vanishing. Indeed, $\bm_* = 0$ is a stationary point of the 
mean field free energy $\cF(\bm)$: $\left.\nabla\cF(\bm) \right|_{\bm = \bm_*}=0$.  
We refer to this as the `uninformative fixed point'.

\emph{Is $\bm_*$ a local minimum?} Computing the Hessian at the uninformative fixed point yields
\begin{align}
\left.\nabla^2\cF(\bm) \right|_{\bm = \bm_*} =  -\lambda\bX_0 +\id\, .
\end{align}
The matrix $\bX_0$ is a rank-one deformation of a Wigner matrix and its spectrum is well understood \cite{baik2005phase,feral2007largest,benaych2011eigenvalues}. For $\lambda\le 1$, its eigenvalues are contained with high probability in
the interval $[-2,2]$, with $\lambda_{\min}(\bX)\to -2$, $\lambda_{\max}(\bX)\to 2$ as $n\to\infty$. For $\lambda>1$, $\lambda_{\max}(\bX)\to \lambda+\lambda^{-1}$,
while the other eigenvalues are contained in $[-2,2]$. This implies
\begin{align}
\lim_{n\to\infty}\lambda_{\rm min}(\left.\nabla^2\cF\right|_{\bm_*}) = \begin{cases}
1-2\lambda& \;\; \mbox{if $\lambda\le 1$,}\\
-\lambda^2 & \;\; \mbox{if $\lambda > 1$.}\\
\end{cases}
\end{align}
In other words, $\bm_* = 0$ is a local minimum for $\lambda<1/2$, but becomes a saddle point for $\lambda>1/2$. In particular, for $\lambda\in (1/2,1)$, variational
inference will produce an estimate $\hbm\neq 0$, although the posterior should be essentially uniform. In fact, it is possible to make this conclusion more quantitative.
\begin{proposition}\label{propo:Toy}
Let $\hbm \in [-1,1]^n$ be any local minimum of the mean field free energy $\cF(\bm)$, under the $\integers_2$-synchronization
model (\ref{eq:Z2-synch}).  Then there exists a numerical constant $c_0>0$ such that, with high probability, for $\lambda>1/2$,
\begin{align}
\frac{1}{n}\|\hbm\|_2^2 \ge c_0\, \min\big((2\lambda-1)^2,1\big)\, .
\end{align}
\end{proposition}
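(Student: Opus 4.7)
The plan is to exploit the first- and second-order optimality conditions at the local minimum $\hbm$. Using $\entro'(x)=-\operatorname{arctanh}(x)$ in \eqref{eq:Z2_FreeEnergy}, the stationarity condition $\nabla\cF(\hbm)=0$ is equivalent to the fixed-point equation $\hat m_i=\tanh\bigl(\lambda(\bX_0\hbm)_i\bigr)$, which forces $|\hat m_i|<1$ strictly for every $i$ and makes the Hessian
\[
\nabla^2\cF(\hbm)\;=\;-\lambda\,\bX_0 \;+\; \diag\!\bigl(1/(1-\hat m_i^2)\bigr)
\]
well-defined. Local minimality then forces $\nabla^2\cF(\hbm)\succeq 0$.

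I would next contract this PSD inequality against the top unit eigenvector $\bv_1$ of $\bX_0$, which yields $\sum_i v_{1,i}^2/(1-\hat m_i^2)\ge\lambda\,\lambda_{\max}(\bX_0)$. Subtracting the identity $\sum_i v_{1,i}^2=1$ rearranges this to the key estimate
\[
\sum_{i=1}^n v_{1,i}^2\,\frac{\hat m_i^2}{1-\hat m_i^2}\;\ge\;\lambda\,\lambda_{\max}(\bX_0)\,-\,1.
\]
The BBP-type spectral asymptotics for a rank-one deformation of $\GOE(n)$ recalled immediately above the statement give $\lambda_{\max}(\bX_0)\to 2$ when $\lambda\le 1$ and $\lambda_{\max}(\bX_0)\to\lambda+\lambda^{-1}$ when $\lambda>1$, so with high probability the right-hand side is at least $\min(2\lambda-1,\lambda^2)-o_n(1)$, which is strictly positive for every $\lambda>1/2$. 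For $\lambda>1$ the right-hand side is already bounded below by an absolute constant and a coarse argument yields $\|\hbm\|_2^2/n\gtrsim 1$; the delicate regime is $\lambda\in(1/2,1]$, where the estimate becomes $\sum_i v_{1,i}^2\,\hat m_i^2/(1-\hat m_i^2)\ge 2\lambda-1-o(1)$.

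To convert this into the claimed lower bound on $\|\hbm\|_2^2/n$ I would case-split on $T=\{i:\hat m_i^2>1/2\}$. If $|T|/n\ge c_0(2\lambda-1)^2$, then $\|\hbm\|_2^2/n\ge|T|/(2n)$ already, and the proposition holds. Otherwise, on $T^c$ we have $\hat m_i^2/(1-\hat m_i^2)\le 2\hat m_i^2$, and the standard delocalization estimate $\|\bv_1\|_\infty^2=O(\log n/n)$ (valid w.h.p.\ for the leading eigenvector of a rank-one perturbation of a Wigner matrix) bounds the $T^c$-contribution to the key estimate by $O((\log n/n)\|\hbm\|_2^2)$. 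The main technical obstacle is controlling the $T$-contribution, where $1/(1-\hat m_i^2)$ is a priori unbounded: here I would invoke the first-order condition $\hat m_i=\tanh(\lambda(\bX_0\hbm)_i)$ together with a sub-Gaussian tail bound on $\max_i|(\bX_0\hbm)_i|$ (separating the deterministic rank-one signal piece from the independent Gaussian noise) to prove $\max_i(1-\hat m_i^2)^{-1}=n^{o(1)}$, so that the $T$-contribution is subleading whenever $|T|/n<c_0(2\lambda-1)^2$. Balancing the $T$ and $T^c$ contributions against the right-hand side of the key estimate then yields $\|\hbm\|_2^2/n\ge c_0\min\bigl((2\lambda-1)^2,1\bigr)$.
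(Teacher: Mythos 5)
Your opening steps (interior stationarity, the Hessian $-\lambda\bX_0+\diag((1-\hm_i^2)^{-1})\succeq 0$, and testing it against an eigenvector) are exactly how the paper starts. The gap is in the test vector you choose and the way you try to convert the resulting weighted inequality into a bound on $\|\hbm\|_2^2/n$. Testing against the top eigenvector $\bv_1$ of the \emph{full} matrix $\bX_0$ gives $\sum_i v_{1,i}^2\,\hm_i^2/(1-\hm_i^2)\ge 2\lambda-1-o(1)$, but this weighted sum can be of order one even when $\|\hbm\|_2^2/n$ is vanishingly small: a set of $o(n)$ coordinates on which $\hm_i^2$ is close to $1$ can carry enough $\bv_1$-mass to saturate the inequality, since the individual terms $1/(1-\hm_i^2)$ are a priori unbounded. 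Delocalization does not rescue this. With $\|\bv_1\|_\infty^2=O(\log n/n)$ (and one cannot do better than a $\log n$ factor, since $\sqrt{n}\,\bv_1$ has Gaussian-like coordinates), your $T^c$ bound is $O\big((\log n/n)\|\hbm\|_2^2\big)=O\big(c_0(2\lambda-1)^2\log n\big)$, which diverges rather than being $o(2\lambda-1)$; the $T$ bound is worse, $O\big(M\,(\log n/n)\,|T|\big)$ with $M=\max_i(1-\hm_i^2)^{-1}$, and even granting $M=n^{o(1)}$ this diverges. Moreover the claim $M=n^{o(1)}$ itself is delicate: $\hbm$ is a function of $\bX_0$, so you cannot apply a sub-Gaussian maximal bound to $(\bX_0\hbm)_i$ as if $\hbm$ were fixed, and the crude deterministic bound only gives $|(\bX_0\hbm)_i|=O(\sqrt{n})$. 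So the balancing step at the end does not close.

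The paper avoids all of this by choosing the test vector \emph{after} looking at $\hbm$: it orders the entries by decreasing $|\hm_i|$, takes $S$ to be the $(1-\eps)n$ coordinates with the \emph{smallest} $|\hm_i|$, and tests the Hessian against the top eigenvector of the principal submatrix $(\bX_0)_{S,S}$ (extended by zeros). On $S$ the diagonal terms obey $1/(1-\hm_i^2)\le 1/(1-m_{(n\eps)}^2)$ uniformly, so the weighted sum is bounded by $1/(1-m_{(n\eps)}^2)$ with no information about the eigenvector's profile needed. The price is that one needs $\lambda_{\max}((\bX_0)_{S,S})\ge 2\sqrt{1-\eps}-o(1)$ \emph{simultaneously for all} subsets of size $(1-\eps)n$ (since $S$ is data-dependent); this follows from $e^{-c n^2}$ large-deviation bounds for $\GOE$ eigenvalues and a union bound over the $2^n$ subsets. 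One then reads off $m_{(n\eps)}^2\ge 1-\frac{1}{2\lambda\sqrt{1-\eps}}-o(1)$, i.e.\ at least $n\eps$ coordinates are macroscopically biased, and optimizing $\eps\propto(2\lambda-1)$ gives the claim. If you want to keep your global-eigenvector route, you would need a genuinely new ingredient controlling the overlap of $\bv_1$ with the (data-dependent) set of near-saturated coordinates; the submatrix trick is the cleaner path.
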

In other words, although no estimator is positively correlated with the true signal $\bsigma$, variational inference 
outputs biases $\hm_i$ that are non-zero (and indeed of  order one, for a positive fraction of them). 

The last statement immediately implies that naive mean field leads to
incorrect inferential statements for $\lambda\in (1/2,1)$. In order to formalize this point, given any estimators $\{\hq_i(\,\cdot\, )\}_{i\le n}$  of the posterior
marginals, we define the per-coordinate expected coverage as
\begin{align}
\cuQ(\hq) = \frac{1}{n}\sum_{i=1}^n \prob\big(\sigma_i=\arg\max_{\tau_i\in\{+1,-1\}} \hq_i(\tau_i)\big)  \, .
\end{align}
This is the expected fraction of coordinates that are estimated correctly by choosing $\bsigma$ according to the estimated posterior. Since the prior is assumed to be correct, it 
can be interpreted either as the expectation (with respect to the parameters) of the frequentist coverage, or as the expectation (with respect to the data) of the Bayesian coverage.
On the other hand, if the $\hq_i$ were accurate, Bayesian theory would suggest claiming the coverage
\begin{align}
\widehat{\cuQ}(\hq) \equiv \frac{1}{n}\sum_{i\le n}\max_{\tau_i}\hq_i(\tau_i)\, .
\end{align}
The following corollary is a direct consequence of Proposition  \ref{propo:Toy}, and formalizes the claim that naive mean field leads to incorrect
inferential statements. More precisely, it overestimates the coverage achieved.
\begin{corollary}
Let $\hbm \in [-1,1]^n$ be any local minimum of the mean field free energy $\cF(\bm)$, under the $\integers_2$-synchronization
model (\ref{eq:Z2-synch}), and consider the corresponding posterior marginal estimates $\hq_i(\sigma_i) = (1+\hm_i\sigma_i)/2$. Then,
there exists a numerical constant $c_0>0$ such that, with high probability, for $\lambda\in (1/2,1)$,
\begin{align}
\cuQ(\hq)\le \frac{1}{2}+o_n(1)\, ,\;\;\;\;\;\;
\widehat{\cuQ}(\hq) \ge \frac{1}{2}+c_0\, \min\big((2\lambda-1),1\big)\, .
\end{align}
\end{corollary}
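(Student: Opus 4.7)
The plan is to treat the two inequalities separately, both starting from explicit formulas for $\cuQ(\hq)$ and $\widehat{\cuQ}(\hq)$ in terms of the mean vector $\hbm$.

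For the upper bound $\cuQ(\hq)\le 1/2 + o_n(1)$, the MAP decision rule is $\hbsigma_i := \arg\max_{\tau_i\in\{+1,-1\}}\hq_i(\tau_i) = \mathrm{sign}(\hm_i)$ (with arbitrary tie-breaking on the measure-zero event $\hm_i=0$). Using $\mathbf{1}\{\sigma_i = \hbsigma_i\} = (1+\sigma_i\hbsigma_i)/2$ yields
\begin{align*}
\cuQ(\hq) \;=\; \frac{1}{2} + \frac{1}{2n}\,\E\big[\<\bsigma,\hbsigma\>\big].
\end{align*}
Since $\hbsigma = \hbsigma(\bX)$ is a measurable function of $\bX$ with $\|\hbsigma\|_2\le \sqrt{n}$ and $\|\bsigma\|_2 = \sqrt{n}$, Lemma \ref{lemma:IT-Threshold-Z2} gives $|\<\bsigma,\hbsigma\>|/n\to 0$ in probability; boundedness by $1$ then passes the bound to the expectation via dominated convergence and delivers the claim.

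For the lower bound on $\widehat{\cuQ}(\hq)$, note that $\max_{\tau_i\in\{+1,-1\}}\hq_i(\tau_i) = (1+|\hm_i|)/2$, so
\begin{align*}
\widehat{\cuQ}(\hq) \;=\; \frac{1}{2} + \frac{1}{2n}\|\hbm\|_1.
\end{align*}
The task therefore reduces to lower bounding $\|\hbm\|_1$. The crudest route uses $|\hm_i|\le 1$, which gives $\|\hbm\|_1 \ge \|\hbm\|_2^2$; Proposition \ref{propo:Toy} then yields $\|\hbm\|_1/n \ge c_0\min((2\lambda-1)^2,1)$ with high probability, which already establishes the qualitative version of the claim (with $(2\lambda-1)^2$ in place of $(2\lambda-1)$).

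To upgrade this to the linear-in-$(2\lambda-1)$ bound, the main obstacle is an $\ell^\infty$-delocalization estimate of the form $\|\hbm\|_\infty \le C(2\lambda-1)$; once available, the interpolation $\|\hbm\|_1 \ge \|\hbm\|_2^2/\|\hbm\|_\infty$ combined with Proposition \ref{propo:Toy} closes the gap. Such an estimate is natural: just past the bifurcation $\lambda=1/2$, a local minimizer of $\cF$ is forced to align approximately with a small multiple of the top eigenvector of $\bX_0$, which for $\lambda<1$ is GOE-like and hence delocalized by standard random matrix theory. The cleanest route is to linearize the stationarity equation $\hm_i = \tanh\big(\lambda(\bX_0\hbm)_i\big)$ around $\bm=\bzero$, use $|\tanh(x)|\le |x|$ to obtain $\|\hbm\|_\infty \le \lambda\|\bX_0\hbm\|_\infty$, and then bootstrap using a coordinate-wise bound on the action of $\bX_0$ on delocalized vectors. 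I expect this delocalization step to be the technically delicate ingredient; everything else is direct bookkeeping on top of Lemma \ref{lemma:IT-Threshold-Z2} and Proposition \ref{propo:Toy}.
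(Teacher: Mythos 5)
Your treatment of the first inequality is correct and is surely the intended one: writing $\hbsigma=\mathrm{sign}(\hbm)$, one has $\cuQ(\hq)=\tfrac12+\tfrac{1}{2n}\E\<\bsigma,\hbsigma\>$, and since $\|\hbsigma\|_2=\sqrt{n}=\|\bsigma\|_2$, Lemma \ref{lemma:IT-Threshold-Z2} plus boundedness gives $\E\<\bsigma,\hbsigma\>/n\to 0$. The identity $\widehat{\cuQ}(\hq)=\tfrac12+\tfrac{1}{2n}\|\hbm\|_1$ is also right, so the entire content of the second inequality is a lower bound on $\|\hbm\|_1/n$, and your crude step $\|\hbm\|_1\ge\|\hbm\|_2^2$ combined with Proposition \ref{propo:Toy} correctly yields the exponent-$2$ version.

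The proposed upgrade to the linear exponent does not work, for three reasons. First, the delocalization estimate you need, $\|\hbm\|_\infty\le C(2\lambda-1)$, is false: near the bifurcation the minimizer aligns with a multiple of the top eigenvector of $\bX_0$ with $\|\hbm\|_2^2\asymp n(2\lambda-1)$, so typical entries are of order $\sqrt{2\lambda-1}$ and $\|\hbm\|_\infty$ is of order $\sqrt{2\lambda-1}$ (up to logarithms), which is much larger than $2\lambda-1$ in the regime of interest. Second, even with the correct scale $\|\hbm\|_\infty\lesssim\sqrt{2\lambda-1}$, the interpolation $\|\hbm\|_1\ge\|\hbm\|_2^2/\|\hbm\|_\infty$ against Proposition \ref{propo:Toy} gives only $\|\hbm\|_1/n\gtrsim(2\lambda-1)^{3/2}$, not $(2\lambda-1)$, because the proposition's $\ell_2$ bound carries the exponent $2$. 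Third, the sketched proof of delocalization is itself incomplete: from the stationarity condition $\hm_i=\tanh(\lambda(\bX_0\hbm)_i)$ you get $\|\hbm\|_\infty\le\lambda\|\bX_0\hbm\|_\infty$, but bounding $\|\bX_0\hbm\|_\infty$ requires controlling $\bX_0$ coordinate-wise on the data-dependent vector $\hbm$; the naive row-wise Cauchy--Schwarz gives only $\|\bX_0\hbm\|_\infty\lesssim\|\hbm\|_2$, which is useless here, and no standard independence-based estimate applies. For calibration: the paper presents this corollary as a direct consequence of Proposition \ref{propo:Toy}; the direct route is your crude bound (exponent $2$), and re-running the proof of the proposition at the level of the ordered entries (taking $\eps\propto(2\lambda-1)$ so that $|m_{(n\eps)}|\gtrsim\sqrt{2\lambda-1}$, whence $\|\hbm\|_1/n\ge\eps|m_{(n\eps)}|$) improves this to $(2\lambda-1)^{3/2}$ but still not to the stated linear exponent. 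You should not try to reach the linear exponent via the delocalization route you describe.
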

While similar formal coverage statements can be obtained also for the more complex case of topic models, we will not
make them explicit, since they are relatively straightforward consequences of our analysis.

\section{Instability of variational inference for topic models}
\label{sec:Inst}

\subsection{Information-theoretic limit}
\label{sec:IT-main}

As in the case of $\integers_2$ synchronization discussed in Section \ref{sec:Toy}, we expect it to be impossible to estimate the factors $\bW,\bH$
with strictly positive correlation for small enough signal-to-noise ratio $\beta$ (or small enough sample size $\delta$).
The exact threshold was characterized recently in
\cite{miolane2017fundamental}  (but see also 
\cite{deshpande2014information,barbier2016mutual,lelarge2016fundamental,lesieur2017constrained}
for closely related results). The characterization in \cite{miolane2017fundamental} is given in terms of a variational principle over $k\times k$ matrices.
\begin{theorem}[Special case of \cite{miolane2017fundamental}]\label{thm:IT_Limit}
Let $\Info_n(\bX;\bW,\bH)$ denote the mutual information between the data $\bX$ and the factors $\bH,\bW$ under the 
LDA model (\ref{eq:LDAModel}). Then, the following limit holds almost surely
\begin{align}
\lim_{n,d\to\infty}\frac{1}{d}\Info_n(\bX;\bW,\bH) = \inf_{\bM\in\bbS_k} \RS(\bM;k,\delta,\nu)\, ,\label{eq:InfimumFreeEnergy}
\end{align}
where $\bbS_k$ is the cone of $k\times k$ positive semidefinite matrices and  $\RS(\,\cdots\,) $ is a function given explicitly in Appendix \ref{app:ProofBayes}.
\end{theorem}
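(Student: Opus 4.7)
My plan is to derive Theorem \ref{thm:IT_Limit} as a direct corollary of the general replica-symmetric formula for Bayes-optimal estimation in asymmetric low-rank matrix models from \cite{miolane2017fundamental}, by verifying that the LDA model (\ref{eq:LDAModel}) falls inside that framework and matching the scaling and priors.

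First I would cast (\ref{eq:LDAModel}) into the canonical form used in \cite{miolane2017fundamental}. That framework considers observations of the form $\bY = \sqrt{\lambda/N}\,\bU\bV^{\sT}+\bG$ where $\bG$ has i.i.d.\ $\normal(0,1)$ entries, $\bU\in\reals^{n\times k}$, $\bV\in\reals^{d\times k}$ have i.i.d.\ rows from priors $P_U,P_V$ on $\reals^k$ with bounded second moment, and $n/d\to\delta$. Rescaling $\bX$ by $\sqrt{d}$ gives $\sqrt{d}\,\bX = \sqrt{\beta/d}\,\bW\bH^{\sT}+\tilde\bZ$ with $\tilde Z_{ij}\sim_{iid}\normal(0,1)$, which after absorbing constants into $\lambda$ puts us in canonical form with $P_U=\Dir(\nu;k)$ and $P_V=\normal(0,\Id_k)$. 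The sufficiency of the observations is preserved since the transformation is invertible.

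Next I would verify the hypotheses of the general theorem. The Dirichlet prior is supported on the simplex $\sP_1(k)$, so it is compactly supported and has moments of all orders; the Gaussian prior on $\bh_i$ has finite moments of all orders. These meet (and exceed) the integrability conditions assumed in \cite{miolane2017fundamental}. With this in hand, the main theorem of that paper gives, almost surely, $\lim_{n,d\to\infty}\frac{1}{d}\Info_n(\bX;\bW,\bH) = \inf_{\bM\in\bbS_k}\RS(\bM;k,\delta,\nu)$, where $\RS$ decomposes into three pieces: an ``energy'' term quadratic in $\bM$ coming from the Gaussian channel, and two ``entropy'' terms corresponding to the mutual information in the two effective scalar (vector-valued, in $\reals^k$) denoising channels for $\bw_a$ and $\bh_i$ with Gaussian side information whose covariance is governed by $\bM$. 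I would write this out explicitly in the appendix, matching the notation of \cite{miolane2017fundamental} and reading off the precise scalar-channel SNR in terms of $(\beta,\delta,\nu)$.

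The main subtlety I would watch for is the precise correspondence of the overlap matrix $\bM\in\bbS_k$ between the two models. In the symmetric case $\bM$ plays the role of a single order parameter, but in the asymmetric LDA setting there are formally two overlaps $\bQ_W,\bQ_H$; the result of \cite{miolane2017fundamental} collapses these to a single $\bM$ after solving the stationarity condition that links them, and I would need to verify that this reduction goes through without modification given that $P_U$ and $P_V$ have different supports (one discrete-like on the simplex, one Gaussian). Apart from this bookkeeping, the proof is a direct invocation of the cited theorem; I would relegate the explicit formula for $\RS(\bM;k,\delta,\nu)$ to Appendix \ref{app:ProofBayes}.
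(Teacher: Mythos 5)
Your proposal is correct and matches what the paper actually does: Theorem \ref{thm:IT_Limit} is not proved from scratch but obtained by casting the LDA model (after the invertible rescaling by $\sqrt{d}$) into the asymmetric low-rank framework of \cite{miolane2017fundamental}, checking the prior integrability hypotheses, and specializing the replica-symmetric formula, with the explicit expression $\RS(\bM)=\sup_{\tbM}\RS_0(\bM,\tbM)$ recorded in Appendix \ref{app:ProofBayes}. The ``two overlaps'' subtlety you flag is handled in the paper exactly as you anticipate, by keeping $\tbM$ as the second order parameter and eliminating it via the supremum.
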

It is also shown in Appendix  \ref{app:ProofBayes} that $\bM^* = (\delta\beta/k^2)\bJ_k$ is a stationary point of the free energy $\RS(\bM;k,\delta,\nu)$. We shall refer to $\bM^*$ as the uninformative point. 
Let $\beta_{\sBayes} = \beta_{\sBayes}(k,\delta,\nu)$ be the supremum value of $\beta$ such that the infimum in  Eq.~(\ref{eq:InfimumFreeEnergy}) 
is uniquely achieved at $\bM^*$:
\begin{align}
\beta_{\sBayes}(k,\delta,\nu) = \sup\Big\{\beta\ge 0:\;\;   \RS(\bM;k,\delta,\nu)>\RS(\bM_*;k,\delta,\nu) \mbox{\;\; for all\;\;\;} \bM\neq\bM_*\Big\}\, .
\end{align}
As formalized below, for  $\beta<\beta_{\sBayes}$ the data $\bX$ do not contain sufficient information for estimating $\bH$, $\bW$
in a non-trivial manner.
\begin{proposition}\label{propo:Bayes}
Let $\bM_* = \delta\beta\bJ_k/k^2$. Then $\bM^*$ is a stationary point of the function $\bM\mapsto  \RS(\bM;\beta,k,\delta,\nu)$. Further, it is a local minimum provided
$\beta<\beta_{\sp}(k,\delta,\nu)$ where the spectral threshold is given by
\begin{align}
\beta_{\sp} \equiv \frac{k(k\nu+1)}{\sqrt{\delta}}.
\end{align}
Finally, if $\beta<\beta_{\sBayes}(k,\delta,\nu)$, for any estimator $\bX\mapsto \hbF_n(\bX)$, we have 
\begin{align}
\lim\inf_{n\to \infty}\E\big\{\left\|\bW\bH^{\sT}-\hbF_n(\bX)\right\|_F^2\big\} \ge \lim_{n\to\infty}\E\left\{\left\|\bW\bH^{\sT}-
c\bfone_n(\bX^{\sT}\bfone_n)^{\sT}\right\|_F^2\right\} \,,\label{eq:TrivialEst}
\end{align}
for $c\equiv\sqrt{\beta}/(k+\beta\delta)$ a constant.
\end{proposition}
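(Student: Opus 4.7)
The plan is to handle the three conclusions separately, relying on the explicit variational representation of the Bayes free energy $\RS(\bM;\beta,k,\delta,\nu)$ given in Appendix \ref{app:ProofBayes}. Schematically this function has the form
\begin{align*}
\RS(\bM) = \Psi_W(\bM) + \Psi_H(\bM) - \tfrac{1}{2}\,C(\bM,\beta,\delta)\, ,
\end{align*}
where $\Psi_W$ and $\Psi_H$ are (minus) mutual informations for the scalar Gaussian channels that one obtains by decoupling a row of $\bW$ (resp.\ $\bH$) with a $k$-dimensional Gaussian effective observation whose noise covariance is controlled by $\bM$, and $C$ is a quadratic coupling term. All three parts exploit the same invariance: the prior $\Dir(\nu;k)$ and $\normal(0,\id_k)$ are both invariant under the permutation action of $S_k$, and the coupling $C$ transforms covariantly, so $\RS$ is invariant under simultaneous conjugation $\bM \mapsto \bPi\bM\bPi^\sT$.

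\textbf{Part 1 (stationarity at $\bM_*$).} Writing out $\nabla \RS$ using the matrix I-MMSE identity, each entry of $\nabla_\bM \RS(\bM_*)$ equals (up to a constant) a difference between the coupling term evaluated at $\bM_*$ and the posterior-mean overlap produced by the Dirichlet/Gaussian scalar channels. Because $\bM_* = (\delta\beta/k^2)\bJ_k$ is the unique $S_k$-invariant rank-one matrix, and because under the symmetric Dirichlet prior the posterior mean at $\bM_*$ is itself $\bfone_k/k$, the covariance computation collapses to a single scalar identity that is satisfied by direct substitution. Hence $\nabla\RS(\bM_*)=0$.

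\textbf{Part 2 (local-minimum threshold).} I would expand $\RS(\bM_*+\bE)$ to second order. The symmetry forces the Hessian quadratic form $\bE \mapsto \langle\bE,\mathcal{H}\bE\rangle$ to split along the three $S_k$-isotypic subspaces of symmetric $k\times k$ matrices: the multiples of $\bJ_k$, the ``row/column-mean'' subspace $\{\bfone_k\bv^\sT+\bv\bfone_k^\sT:\bv\perp\bfone_k\}$, and its orthogonal complement inside the space of centered symmetric matrices. On each block the Hessian acts as a scalar. The most constraining block is produced by the Dirichlet contribution, whose second moment at the symmetric point is governed by the covariance of $\Dir(\nu;k)$, which is proportional to $(k\nu+1)^{-1}(\bI_k-\bJ_k/k)$. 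Matching the negative curvature coming from $-\tfrac12 C$ against this positive curvature gives a condition of the form $\beta\sqrt{\delta}<k(k\nu+1)$, i.e.\ exactly $\beta<\beta_{\sp}$. The other blocks are less restrictive, either because they involve the stronger curvature of $\Psi_H$ or because they are stabilized by the trace constraint of the Dirichlet simplex.

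\textbf{Part 3 (trivial-estimator lower bound).} The Bayes-optimal MMSE for reconstructing $\bW\bH^\sT$ is the target, and I would obtain its asymptotic value through an I-MMSE identity, which relates $\partial_\beta \tfrac{1}{d}\Info_n(\bX;\bW,\bH)$ to the normalized MMSE of $\bW\bH^\sT$. By Theorem \ref{thm:IT_Limit} the left-hand side converges to $\partial_\beta \inf_{\bM}\RS(\bM;\beta,k,\delta,\nu)$; since $\bM_*$ is the unique minimizer for $\beta<\beta_{\sBayes}$, an envelope-theorem argument (justified by uniqueness and compactness of minimizers, which can be recovered by an a priori bound on the overlap as in \cite{miolane2017fundamental}) identifies this derivative with the partial derivative $\partial_\beta \RS(\bM_*;\beta,k,\delta,\nu)$, computable in closed form. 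A direct second-moment computation then shows that this asymptotic MMSE coincides with $\lim_{n\to\infty}\frac{1}{nd}\E\|\bW\bH^\sT-c\bfone_n(\bX^\sT\bfone_n)^\sT\|_F^2$ for the stated constant $c=\sqrt{\beta}/(k+\beta\delta)$: indeed, $\bfone_n^\sT\bW/n \to \bfone_k/k$, so $c\bfone_n(\bX^\sT\bfone_n)^\sT$ is precisely the rank-one ``mean-preserving'' estimator that remains once all non-trivial overlap with the unknown factors has been ruled out.

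The main obstacle is Part 3: justifying the envelope step rigorously requires showing that, for $\beta<\beta_{\sBayes}$, the minimization over $\bM$ can be localized near $\bM_*$ uniformly in $\beta$, so that differentiation and infimum commute. For Parts 1 and 2 the main book-keeping cost is tracking the block decomposition of the Hessian, but symmetry makes this essentially routine.
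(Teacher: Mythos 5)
Your outline is essentially correct and arrives at the right thresholds and constants, but it diverges from the paper's proof in two places worth comparing. For the local-minimum claim you propose to compute the Hessian of $\RS$ at $\bM_*$ directly and block-diagonalize it along the $S_k$-isotypic subspaces of symmetric matrices. The paper instead applies the duality statement of Lemma \ref{lemma:hessianstable} to $f=-\RS_0$, observes that the associated dual alternating iteration is exactly the state evolution recursion (\ref{eq:FirstSE})--(\ref{eq:SecondSE}), and imports Theorem \ref{thm:StateEvolStable}. The underlying linear algebra is the same: the linearization in Appendix \ref{sec:StabilitySE} performs precisely your decomposition into the $\bP\bDelta\bP$, mixed, and $\bPp\bDelta\bPp$ blocks, with only the last block surviving at linear order and contributing the factor $\beta^2\delta/\bigl(k^2(k\nu+1)^2\bigr)$, whence $\beta\sqrt{\delta}<k(k\nu+1)$. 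The paper's route reuses a computation it needs anyway for the TAP analysis; yours is self-contained, but if you pursue it remember that $\RS(\bM)=\sup_{\tbM}\RS_0(\bM,\tbM)$, so the relevant Hessian is a Schur complement of $\nabla^2\RS_0$ (equivalently, you must first eliminate $\tbM$ via $\tbM(\bM)=\beta(\id_k+\bM)^{-1}\bM$), not just $\partial^2_{\bM\bM}\RS_0$.

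For the estimation lower bound, the obstacle you flag (commuting $\partial_\beta$ with $\inf_{\bM}$ to extract the asymptotic MMSE from Theorem \ref{thm:IT_Limit}) is genuine if you rederive everything from scratch, but it is exactly what \cite[Proposition 29]{miolane2017fundamental} supplies: for $\beta<\beta_{\sBayes}$ the Bayes risk equals $\lim_{n\to\infty}(nd)^{-1}\E\|\bW\bH^{\sT}\|_F^2-(\beta^2\delta)^{-1}\Tr(\bM^*\tbM^*)$, so the I-MMSE/envelope step need not be rerun. What the paper actually spends effort on, and what your sketch should carry out explicitly, is the second-moment computation showing that the linear estimator $c\bfone_n(\bX^{\sT}\bfone_n)^{\sT}$ attains that same risk; this is the step that forces $c=A/B=\sqrt{\beta}/(k+\beta\delta)$ and uses only $\bW^{\sT}\bfone_n/n\to\bfone_k/k$ and $\bH^{\sT}\bH/d\to\id_k$.
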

We refer to  Appendix  \ref{app:IT} for a proof of this statement.

Note that Eq.~(\ref{eq:TrivialEst}) compares the mean square error of an arbitrary estimator $\hbF_n$, 
to the mean square error of the trivial estimator that replaces each column of $\bX$ by its average. This is equivalent to estimating
all the weights $\bw_i$ by the uniform distribution $\bfone_k/k$.
Of course, $\beta_{\sBayes}\le \beta_{\sp}$. However, this upper bound appears to be tight for small $k$.
\begin{remark}
Solving numerically the $k(k+1)/2$-dimensional problem (\ref{eq:InfimumFreeEnergy}) indicates that 
$\beta_{\sBayes}(k,\nu,\delta) = \beta_{\sp}(k,\nu,\delta)$ for $k\in\{2,3\}$ and
$\nu=1$. 
\end{remark}

\subsection{Naive mean field free energy}

We consider a trial joint distribution that factorizes according to rows of $\bW$ and $\bH$ according to Eq.~(\ref{eq:ProductForm}).
It turns out (see Appendix \ref{app:NMF_ansatz}) that, for any stationary point of $\KL(\hq\|p_{\bH,\bW|\bX})$ over such product distributions, the marginals take the form
\begin{align}
\label{eq:densityforms_main}
\begin{split}
&q_i(\bh)  = \exp\left\{\left\langle\bm_i,\bh\right\rangle-\frac{1}{2}\left\langle\bh, \bQ_i\bh\right\rangle-\phi(\bm_i,\bQ_i)\right\}q_0\left(\bh\right)\, ,\\
&\tq_a(\bw) = \exp\left\{\left\langle\tbm_a,\bw\right\rangle-\frac{1}{2}\left\langle\bw, \tbQ_a\bw\right\rangle-\tphi(\tbm_a,\tbQ_a)\right\}\tq_0\left(\bw\right)\, ,
\end{split}
\end{align}
where $q_0(\,\cdot\,)$ is the density of $\normal(0,\id_k)$, and
$\tq_0(\,\cdot\,)$ is the density of $\Dir(\nu;k)$, and $\phi,\tphi:\reals^k\times \reals^{k\times k}\to \reals$ are defined 
implicitly by the normalization condition $\int q_i(\de\bh_i) = \int \tq_a(\de\bw_a) = 1$.
In the following we let  $\bm = (\bm_i)_{i\le d}$, $\tbm = (\tbm_a)_{a\le n}$ denote the set of parameters in these distributions; these can also be
viewed as  matrices $\bm\in\reals^{d\times k}$ and $\tbm\in\reals^{d\times k}$  whose $i$-th row is 
$\bm_i$ (in the former case) or $\tbm_i$ (in the latter).

It is useful to define the functions  $\sF, \tsF :\reals^k\times \reals^{k\times k}\to\reals^k$ and $\sG,\tsG :\reals^k\times \reals^{k\times k}\to\reals^{k\times k}$ 
as (proportional to) expectations with respect to the 
approximate posteriors (\ref{eq:densityforms_main})
\begin{align}
\sF(\bm_i; \bQ) &\equiv\sqrt{\beta}\, \int \bh\,\, q_i(\de \bh) \, ,\;\;\;\;\;
\tsF(\tbm_a; \tbQ) \equiv \sqrt{\beta}\, \int \bw \, \, \tq_a(\de \bw)\, ,\label{eq:defF_main}\\
\sG(\bm_i; \bQ) &\equiv \beta\, \int \bh^{\otimes 2} \,\,q_i(\de \bh) \, ,\;\;\;\;\;
\tsG(\tbm_a; \tbQ) \equiv \beta\, \int \bw^{\otimes 2} \, \, \tq_a(\de \bw)\, .
\end{align}
For $\bm\in\reals^{d\times k}$, we overload the notation and denote by $\sF(\bm;\bQ)\in\reals^{d\times k}$ the matrix whose $i$-th row is $\sF(\bm_i;\bQ)$
(and similarly for $\tsF(\tbm;\tbQ)$).

When restricted to a product-form ansatz with parametrization (\ref{eq:densityforms_main}), the mean field free energy takes the form (see Appendix \ref{app:NMF_Free_Energy})
\begin{align}
\KL(\hq\|p_{\bW,\bH|\bX}) = \cF(\br,\tbr,\bOmega,\tbOmega) +\frac{d}{2}\|\bX\|_{F}^2+\log p_{\bX}(\bX)\, ,
\end{align}
where 
\begin{align}
\label{eq:FreeEnergy_main} 
\cF(\br,\tbr,\bOmega,\tbOmega) = & 
\sum_{i=1}^d\psi_*(\br_i,\bOmega_i)+\sum_{a=1}^n\tpsi_*(\tbr_a,\tbOmega) -\sqrt{\beta}\Tr\left(\bX\br\tbr^{\sT}\right)+
\frac{\beta}{2d}\sum_{i=1}^d \sum_{a=1}^n\<\bOmega_i,\tbOmega_a\> \, ,\\
\psi_*(\br,\bOmega)  \equiv \sup_{\bm, \bQ}&\left\{\< \br, \bm\> -\frac{1}{2}\<\bOmega,\bQ\>- \phi(\bm, \bQ)\right\} \, ,\;\;\;\;
\tpsi_*(\tbr,\tbOmega)  \equiv \sup_{\tbm,\tbQ}\left\{\< \tbr, \tbm\> -\frac{1}{2}\<\tbOmega,\tbQ\>- \tphi(\tbm, \tbQ)\right\} \, ,\label{eq:LegendrePhi}
\end{align}
Note that  Eq.~(\ref{eq:LegendrePhi})  implies the following convex duality relation between $(\br,\tbr,\bOmega,\tbOmega)$ and $(\bm,\tbm,\bQ,\tbQ)$
\begin{align}
\br_i &\equiv \frac{1}{\sqrt{\beta}}\sF(\bm_i;\bQ)\,,\;\;\;\;\;\;\;\;\tbr_a \equiv \frac{1}{\sqrt{\beta}}\tsF(\tbm_a;\tbQ)\,,\label{eq:r_def}\\
\bOmega_i &\equiv \frac{1}{\beta} \sG(\bm_i;\bQ)\,,\;\;\;\;\;\;\;\;\tbOmega_a \equiv \frac{1}{\beta}\tsG(\tbm_a;\tbQ)\, .\label{eq:Omega_def}
\end{align}
By strict convexity of $\phi(\bm,\bQ)$, $\tphi(\tbm,\tbQ)$ (the latter is strongly convex on the hyperplane $\<\bfone,\tbm\>=0$, 
$\<\bfone,\tbQ\bfone\>=0$) we can view $\cF(\cdots )$
as a function of $(\br,\tbr,\bOmega,\tbOmega)$ or $(\bm,\tbm,\bQ,\tbQ)$. With an abuse of notation,
we will write  $\cF(\br,\tbr,\bOmega,\tbOmega)$ or $\cF(\bm,\tbm,\bQ,\tbQ)$ interchangeably.

A critical (stationary) point of the free energy (\ref{eq:FreeEnergy_main}) is a point at which $\nabla\cF(\bm,\tbm,\bQ,\tbQ) =\bzero$. 
It turns out that the mean field free energy always admits a point that does not distinguish between the $k$ latent factors,
and in particular $\bm = \bv\bfone_k^{\sT}$, $\tbm = \tbv\bfone_k^{\sT}$, as stated in detail below.  We will refer to this as the \emph{uninformative critical point}
(or \emph{uninformative fixed point}).
\begin{lemma}\label{lemma:Uninf}
Define $\sE(q;\nu) \equiv (\int w_1^2 e^{-q\|\bw\|_2^2}\, \tq_0(\de\bw))/(\int e^{-q\|\bw\|_2^2}\, \tq_0(\de\bw))$ and let $q_1^*$ be any solution of the following equation in $[0,\infty)$
\begin{align}
q_1^* = \frac{k\beta\delta}{k-1}\, \left\{\sE\left(\frac{\beta}{1+q_1^*};\nu\right) - \frac{1}{k^2}\right\}\, . \label{eq:qs_1_main}
\end{align}
(Such a solution always exists.) Further define 
\begin{align}
q_2^* &= \frac{\beta\delta-kq_1^*}{k^2}\, ,\;\;\;\;\;\tq_1^* = \frac{\beta}{1+q_1^*}\, ,\label{eq:qs_2_main}\\
\tq_2^* &= \beta\left(\frac{\|\bX^{\sT}\bfone_n\|_2^2}{d(1+q_1^*+kq_2^*)^2} - \frac{q_2^*}{(1+q_1^*)(1+q_1^*+kq_2^*)}\right)\, . \label{eq:qs_3_main}
\end{align}
Then the naive mean field free energy of Eq.~(\ref{eq:FreeEnergy_main})  admits a stationary point whereby, for all $i\in [d]$, $a\in [n]$,
\begin{align}
\bm_i^* &= \frac{\sqrt{\beta}}{k}\, (\bX^{\sT}\bfone_n)_i \, \bfone_k\, ,\\
\tbm_a^* &= \frac{\beta}{k(1+q_1^*+kq_2^*)}\, (\bX\bX^{\sT}\bfone_n)_a\, \bfone_k\, ,\\
\bQ_i^* &= q_1^*\id_k + q_2^*\bJ_k\, ,\;\;\;\; \tbQ_a^* = \tq_1^*\id_k + \tq_2^*\bJ_k\,.
\end{align}
\end{lemma}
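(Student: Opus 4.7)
The plan is to verify that the symmetric ansatz satisfies the first-order optimality conditions of $\cF$. I would start by writing those down: differentiating Eq.~\eqref{eq:FreeEnergy_main} in $(\br_i, \bOmega_i, \tbr_a, \tbOmega_a)$ and using the Legendre identities $\nabla_{\br_i}\psi_*(\br_i,\bOmega_i)=\bm_i$, $\nabla_{\bOmega_i}\psi_*(\br_i,\bOmega_i)=-\bQ_i/2$ (and the corresponding identities for $\tpsi_*$) gives the four coupled stationarity equations
\begin{align*}
\bm_i = \sqrt{\beta}\,(\bX^{\sT}\tbr)_i,\ \ \tbm_a = \sqrt{\beta}\,(\bX\br)_a,\ \ \bQ_i = \frac{\beta}{d}\sum_{a=1}^n\tbOmega_a,\ \ \tbQ_a = \frac{\beta}{d}\sum_{i=1}^d\bOmega_i,
\end{align*}
coupled with the definitions of $(\br,\tbr,\bOmega,\tbOmega)$ as rescaled posterior moments via \eqref{eq:r_def}--\eqref{eq:Omega_def}.

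Next, I would compute these moments under the ansatz. Since $\bm_i = c_i\bfone_k$ and $\bQ_i = q_1\id_k + q_2\bJ_k$, the density $q_i$ is Gaussian with precision $\id_k + \bQ_i = (1+q_1)\id_k + q_2\bJ_k$; inverting this rank-one deformation of the identity via Sherman--Morrison yields posterior mean $\mu_i = c_i(1+q_1+kq_2)^{-1}\bfone_k$ and covariance $(\id_k+\bQ_i)^{-1}$, so $\br_i\in\mathrm{span}(\bfone_k)$ and $\bOmega_i = (\id_k+\bQ_i)^{-1} + \mu_i\mu_i^{\sT}\in\mathrm{span}(\id_k,\bJ_k)$. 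For the Dirichlet factor, the simplex constraint $\<\bfone_k,\bw\>=1$ collapses $\<\tbm_a,\bw\> = \tilde c_a$ and the $\bJ_k$-part of $\<\bw,\tbQ_a\bw\>$ to additive constants, reducing $\tq_a$ to an $S_k$-invariant density proportional to $\exp(-\tq_1\|\bw\|_2^2/2)\tq_0(\bw)$; by exchangeability, its first moment is $\bfone_k/k$ and its second moment has the form $\tilde a\,\id_k + \tilde b\,\bJ_k$ with coefficients expressible in terms of $\sE(\cdot;\nu)$.

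Third, I would match coefficients. The equation $\bm_i = \sqrt{\beta}(\bX^{\sT}\tbr)_i$, with $\tbr_a = \bfone_k/k$, immediately gives $c_i = \sqrt{\beta}(\bX^{\sT}\bfone_n)_i/k$, i.e., the stated formula for $\bm_i^*$; substituting back into the equation for $\tbm_a$ then yields $\tbm_a^*$. The key observation for the matrix equations is that both sides of $\bQ_i = (\beta/d)\sum_a\tbOmega_a$ and $\tbQ_a = (\beta/d)\sum_i\bOmega_i$ lie in the two-dimensional subspace $\mathrm{span}(\id_k,\bJ_k)$, which is closed under all operations appearing in the stationarity system. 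Equating the $\id_k$- and $\bJ_k$-coefficients therefore reduces the entire matrix system to four scalar equations in $(q_1^*,q_2^*,\tq_1^*,\tq_2^*)$. A short calculation shows that the $\id_k$-coefficient in $\tbQ_a = (\beta/d)\sum_i\bOmega_i$ produces $\tq_1^* = \beta/(1+q_1^*)$ and the $\bJ_k$-coefficient produces \eqref{eq:qs_3_main} (with $\sum_i\mu_i\mu_i^{\sT}$ generating the $\|\bX^{\sT}\bfone_n\|_2^2$ term), while the remaining two equations, together with the identity $q_1^* + kq_2^* = \beta\delta/k$, yield \eqref{eq:qs_1_main}--\eqref{eq:qs_2_main}.

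Finally, for the existence of $q_1^* \in [0,\infty)$ solving \eqref{eq:qs_1_main}, I would set $F(q) = \frac{k\beta\delta}{k-1}(\sE(\beta/(1+q);\nu) - 1/k^2)$ and note that $F$ is continuous in $q$, bounded above on $[0,\infty)$ (since $\sE$ is bounded on the simplex), and nonnegative because by Jensen's inequality together with $S_k$-symmetry of the tilted Dirichlet, $\sE(q;\nu) \ge (\E w_1)^2 = 1/k^2$. Consequently $F$ maps $[0,M]$ into $[0,M]$ for some finite $M$, and Brouwer's theorem (or the intermediate value theorem applied to $F(q) - q$) supplies the fixed point. The main obstacle I anticipate is purely bookkeeping: keeping the $\id_k$- and $\bJ_k$-coefficients straight across the four coupled matrix equations, and correctly tracking the scalar factors introduced by the rescalings in \eqref{eq:r_def}--\eqref{eq:Omega_def}.
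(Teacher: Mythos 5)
Your proposal is correct and follows essentially the same route as the paper: verify the stationarity/fixed-point equations under the permutation-symmetric ansatz by computing the Gaussian and Dirichlet moments explicitly (the paper packages these as Lemma~\ref{lemma:UsefulFormulae}), match coefficients in $\mathrm{span}(\id_k,\bJ_k)$, and get existence of $q_1^*$ from the intermediate value theorem applied to a continuous, bounded, nonnegative right-hand side. The only cosmetic difference is that you obtain $\sE(q;\nu)\ge 1/k^2$ via Jensen plus exchangeability while the paper uses $k\|\bw\|_2^2\ge\|\bw\|_1^2=1$; both give the same bound.
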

The proof of this lemma is deferred to Appendix \ref{app:Uninformative}. We note that Eq.~(\ref{eq:qs_1_main}) appears to always have a unique solution.
Although we do not have a proof of uniqueness, in Appendix \ref{app:Uniqueness} we prove that the solution is unique conditional on a 
certain inequality that can be easily checked numerically. 

\subsection{Naive mean field iteration}

As mentioned in the introduction, the variational approximation of the free energy is often minimized by alternating  minimization over the marginals
$(q_i)_{i\le d}$, $(\tq_a)_{a\le n}$ of Eq.~(\ref{eq:ProductForm}). 
Using the parametrization (\ref{eq:densityforms_main}), we obtain the following naive mean field
iteration for $\bm^t, \tbm^t, \bQ^t,\tbQ^t$ (see Appendix \ref{app:NMF_ansatz}):
\begin{align}
\bm^{t+1}&= \bX^{\sT}\,\tsF(\tbm^t;\tbQ^t)\, ,\;\;\;\;\;\bQ^{t+1} = \frac{1}{d}\sum_{a=1}^n \tsG(\tbm^t_{a};\tbQ^t)\, ,\label{eq:NMF1_Main}\\
\tbm^t &= \bX\,\sF(\bm^t;\bQ^t)\, , \;\;\;\;\;\tbQ^{t} = \frac{1}{d}\sum_{i=1}^d \sG(\bm^t_{i};\bQ^t)\, .\label{eq:NMF2_Main}
\end{align}
Note that, while the free energy naturally depends on the $(\bQ_i)_{i\le d}$, $(\tbQ_a)_{a\le n}$, the iteration 
sets $\bQ^t_i = \bQ^t$, $\tbQ^t_a = \tbQ^t$, independent of the indices $i,a$. In fact, any stationary point of
$\cF(\bm,\tbm,\bQ,\tbQ)$ can be shown to be of this form.

The state of the iteration in Eqs.~(\ref{eq:NMF1_Main}), (\ref{eq:NMF2_Main}) is given by the pair $(\bm^t,\bQ^t)\in\reals^{d\times k}\times\reals^{k\times k}$, and $(\tbm^t,\tbQ^t)$ can be viewed as derived variables.
The iteration hence defines a mapping $\NMF_{\bX}:\reals^{d\times k}\times\reals^{k\times k}\to \reals^{d\times k}\times\reals^{k\times k}$, and 
we can write it in the form
\begin{align}
(\bm^{t+1},\bQ^{t+1}) = \NMF_{\bX}(\bm^{t},\bQ^{t})\, .
\end{align} 
Any critical point of the free energy (\ref{eq:FreeEnergy_main}) is a fixed point of the naive mean field iteration and vice-versa, as follows from 
Appendix \ref{app:NMF_Free_Energy}. 
In particular,
the uninformative critical point $(\bm^*,\tbm^*,\bQ^*,\tbQ^*)$ is a fixed point of the naive mean field iteration.

\subsection{Instability}

In view of Section \ref{sec:IT-main}, for $\beta<\beta_{\sBayes}(k,\delta,\nu)$,  the real posterior should be centered around a point symmetric under permutations
of the topics. In particular, the posterior $\tq(\bw_a)$ over the weights of document $a$ should be centered around the symmetric distribution $\bw_a = (1/k,\dots,1/k)$.
In other words, the uninformative fixed point should be a good approximation of the posterior for $\beta \leq \beta_{\sBayes}$. 

A minimum consistency condition for variational inference  is  that
the uninformative stationary point is a local minimum of the posterior for $\beta<\beta_{\sBayes}$. The next theorem provides a necessary condition for stability of the uninformative
point, which we expect to be tight. As discussed below,  it implies that this point is a saddle in an interval of $\beta$ below $\beta_{\sBayes}$. We 
recall that the index of a smooth function $f$ at stationary point $\bx_*$ is the number of the negative eigenvalues of the Hessian $\nabla^2f(\bx_*)$.
\begin{theorem}\label{thm:Main}
Define $q_1^*$, $q_2^*$ as in Eqs.~(\ref{eq:qs_1_main}), (\ref{eq:qs_2_main}), and let
\begin{align}
L(\beta,k, \delta,\nu) \equiv \frac{\beta(1+\sqrt{\delta})^2 }{1+q_1^*}\left(\frac{q_1^*}{\delta\beta} + k\left[\frac{q_2^*}{1+q_1^*+kq_2^*}\left(\frac{1}{\delta\beta}+\frac{1}{k}\right)-\frac{1}{k^2}\right]_+\right)\, .
\end{align}
If  $L(\beta,k,\delta,\nu)>1$, then there exists $\eps_1,\eps_2>0$ such that  the uninformative critical point of Lemma \ref{lemma:Uninf}, $(\bm^*,\tbm^*,\bQ^*,\tbQ^*)$ is, with high probability, a saddle point,
with index at least  $n\eps_1$ and $\lambda_{\min}(\cF|_{\bm^*,\tbm^*,\bQ^*,\tbQ^*})\le -\eps_2$.

Correspondingly $(\bm^*,\bQ^*)$ is an unstable critical point of the mapping $\NMF_{\bX}$ in the sense that the Jacobian $\bD\NMF_{\bX}$ has spectral
radius larger than one at $(\bm^*,\bQ^*)$. 
\end{theorem}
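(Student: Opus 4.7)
The strategy is to compute $\nabla^2\cF$ at the uninformative critical point, exhibit a direction of negative curvature via a random matrix estimate, and then deduce the iteration instability from a standard Jacobian identity for alternating minimization. The uninformative point is invariant under the natural $S_k$-action permuting topic coordinates, so the Hessian commutes with this action and splits according to irreducible $S_k$-representations on the tangent space. The trivial representation corresponds to $\bfone_k$-symmetric perturbations and is stable; instability can only arise from the $(k-1)$-dimensional standard representation, realized by perturbations of $\bm_i,\tbm_a$ (and matching perturbations of $\bQ_i,\tbQ_a$) having components in $\bfone_k^{\perp}$.

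Fix a unit vector $\bu\in\bfone_k^{\perp}$ and parametrize perturbations as $\bdelta_i=\eta_i\bu$, $\tbdelta_a=\tilde\eta_a\bu$, together with rank-one perturbations of $\bQ_i,\tbQ_a$ along $\bu\bu^{\sT}$. Expanding $\psi_*,\tpsi_*$ to second order via the duality (\ref{eq:LegendrePhi}) and computing the second derivatives of $\phi,\tphi$ at the values of Lemma~\ref{lemma:Uninf} yields scalar curvature coefficients $A,\widetilde A$ for the $\bfeta,\tbfeta$ blocks. The bilinear term in (\ref{eq:FreeEnergy_main}) contributes $-\sqrt{\beta}\,\langle\bfeta,\bX\tbfeta\rangle$ at the level of $\bu$-projections. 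Schur-complement elimination of the (strictly convex) $\bQ,\tbQ$ perturbations then leaves the effective quadratic form
\begin{align*}
\cF_{\text{eff}}(\bfeta,\tbfeta)=\tfrac12 A\|\bfeta\|_2^2+\tfrac12\widetilde A\|\tbfeta\|_2^2-\sqrt{\beta}\,\langle\bfeta,\bX\tbfeta\rangle,
\end{align*}
whose eigenvalues are $\tfrac12(A+\widetilde A)\pm\sqrt{\tfrac14(A-\widetilde A)^2+\beta\,\sigma_i(\bX)^2}$ ranging over the singular values $\sigma_i(\bX)$. In particular the minimum eigenvalue is negative if and only if $\beta\,\|\bX\|_{\op}^2>A\widetilde A$.

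For $\beta<\beta_{\sp}$ the BBP transition for rectangular spiked models does not activate, so Bai--Yin applied to the noise $\bZ$ gives $\|\bX\|_{\op}\to 1+\sqrt{\delta}$ in probability. Identifying $A,\widetilde A$ with explicit derivatives of $\psi_*,\tpsi_*$ at the uninformative point produces precisely the factors appearing in $L(\beta,k,\delta,\nu)$: the prefactor $1/(1+q_1^*)$ comes from the $\bfeta$-curvature, while the sum inside the parenthesis is the inverse of an effective $\tbfeta$-curvature after coupling to $\tbQ$. The bracket $[\,\cdot\,]_+$ enforces the Dirichlet simplex constraint on $\tbm_a$, capping the reduction that can be extracted from certain $\tbQ$-directions. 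Consequently $L>1$ is equivalent to $\lambda_{\min}(\nabla^2\cF|_*)\le-\eps_2$ for some $\eps_2>0$ depending only on the gap $L-1$. Since the empirical spectral distribution of $\bX^{\sT}\bX$ has $\Theta(n)$ singular values in any neighborhood of the edge $(1+\sqrt{\delta})^2$ (Marchenko--Pastur), the negative eigenvalue has multiplicity at least $n\eps_1$, giving the claimed index bound.

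The iteration $\NMF_\bX$ is alternating minimization of $\cF$ over the two variable blocks $(\bm,\bQ)$ and $(\tbm,\tbQ)$. At any fixed point, a standard computation gives $\bD\NMF_\bX=\bH_{11}^{-1}\bH_{12}\bH_{22}^{-1}\bH_{21}$, where $\bH_{ij}$ denotes the corresponding blocks of $\nabla^2\cF|_*$. Strict convexity of $\phi,\tphi$ separately in each block makes $\bH_{11},\bH_{22}\succ 0$, and a Schur-complement argument shows that whenever the full Hessian has a negative eigenvalue this composite operator has spectral radius exceeding one, giving the stated iteration instability. The main obstacle is Step~2: carefully tracking the coupling between the $(\bm,\tbm)$ and $(\bQ,\tbQ)$ blocks in the standard $S_k$-representation, and in particular deriving the $[\,\cdot\,]_+$ truncation arising from the Dirichlet-simplex constraint on $\tbm_a$, which is what makes the exact identification of the threshold $L(\beta,k,\delta,\nu)$ delicate.
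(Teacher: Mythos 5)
Your proposal is correct in outline and follows essentially the same route as the paper: compute the Hessian at the uninformative point, reduce to a two-block quadratic form in the $(\bm,\tbm)$ (equivalently $(\br,\tbr)$) perturbations coupled through $\bX$, take a Schur complement to get the threshold condition $\beta\,\sigma_{\max}(\bX)^2>A\widetilde A$, invoke random matrix theory for $\sigma_{\max}(\bX)$ and the Marchenko--Pastur bulk near the edge for the $n\eps_1$ index bound, and finish with the alternating-minimization identity $\bD\NMF_{\bX}=\bH_{11}^{-1}\bH_{12}\bH_{22}^{-1}\bH_{21}$. Three points of divergence are worth recording. First, the paper does not Schur-complement out the $(\bQ,\tbQ)$ perturbations: it freezes them at $(\bQ^*,\tbQ^*)$ (restriction to a submanifold can only raise the minimal eigenvalue, so a negative direction there suffices) and then controls the residual $\tfrac{\beta}{2d}\sum_{i,a}\<\bOmega_i,\tbOmega_a\>$ contribution as a rank-$\le k$ perturbation via Cauchy interlacing; your elimination step presumes the joint $(\bQ,\tbQ)$ block is positive definite, which is not obvious because that block contains an indefinite bilinear coupling --- the freezing-plus-interlacing route sidesteps this. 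Second, the $[\,\cdot\,]_+$ in $L$ is not a manifestation of the simplex constraint capping $\tbQ$-directions; it is simply $\lambda_{\max}$ of the $k\times k$ matrix $(\id_k+\bQ^*)^{-1}\bD=a\,\id_k+b\,\bJ_k$, whose eigenvalues are $a$ on $\bfone_k^{\perp}$ and $a+kb$ on $\bfone_k$, so $\lambda_{\max}=a+k[b]_+$. In fact, since $\bD$ is the covariance of a law supported on the simplex it annihilates $\bfone_k$, so $a+kb=0$ and the bracket is identically nonpositive --- this is precisely what justifies your otherwise unproved assertion that only the standard representation of $S_k$ can go unstable, and it shows your symmetry reduction loses nothing. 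Third, the theorem does not assume $\beta<\beta_{\sp}$, so the remark that the BBP transition ``does not activate'' is neither needed nor available; all that is used is the lower bound $\sigma_{\max}(\bX)\ge\sigma_{k+1}(\bZ)\to 1+\sqrt{\delta}$, which holds for every $\beta$ since the signal is a rank-$k$ perturbation (and a larger outlier singular value only strengthens the instability). With these repairs your argument and the paper's coincide; your Marchenko--Pastur argument for the $\Theta(n)$ index is, if anything, stated more completely than the paper's, which only records $\liminf_n\lambda_\ell(\bX\bX^{\sT})\ge(1+\sqrt{\delta})^2$ for fixed $\ell$.
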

In the following, we will say that a fixed point $(\bm^*,\bQ^*)$ is stable if the linearization of $\NMF_{\bX}(\, \cdot\,)$ at
$(\bm^*,\bQ^*)$  (i.e. the Jacobian matrix $\bD \NMF_{\bX}(\bm^*,\bQ^*)$) has spectral radius smaller than one.
By the Hartman-Grobman linearization theorem \cite{perko2013differential},  this implies that $(\bm^*,\bQ^*)$ is an attractive fixed point. Namely, there exists a neighborhood $\cO$ of $(\bm^*,\bQ^*)$ 
such that, initializing the naive mean field iteration within that neighborhood,
results in $(\bm^t,\bQ^t)\to (\bm^*,\bQ^*)$ as $t\to\infty$. 
Vice-versa, we say that $(\bm^*,\bQ^*)$ is unstable if the Jacobian $\bD \NMF_{\bX}(\bm^*,\bQ^*)$ has spectral radius larger than one. In this case, for any neighborhood of
$(\bm^*,\bQ^*)$, and a generic initialization in that neighborhood, $(\bm^t,\bQ^t)$ does not converge to the fixed point.

Motivated by Theorem \ref{thm:Main}, we define the instability
threshold $\beta_{\inst} = \beta_{\inst}(k,\delta,\nu)$ by
\begin{align}
\beta_{\inst}(k,\delta,\nu) \equiv \inf\Big\{\beta\ge 0\, :\;\;   L(\beta,k,\delta,\nu)>1\, \Big\}\, .
\end{align}
Let us emphasize that, while we discuss the consequences of the instability at $\beta_{\inst}$ on the naive
mean field iteration, this is a problem of the variational free energy (\ref{eq:FreeEnergy_main}) and not of the
specific optimization algorithm.

\subsection{Numerical results for naive mean field}
\label{sec:NMF_numerical}

In order to investigate the impact of the instability described above,
we carried out extensive numerical simulations with the variational algorithm (\ref{eq:NMF1_Main}), (\ref{eq:NMF2_Main}).
After any number of iterations $t$, estimates of the factors $\bH$, $\bW$ are obtained by computing expectations with
respect to the marginals (\ref{eq:densityforms_main}). This results in
\begin{align}
\hbH^t = \br^t= \frac{1}{\sqrt{\beta}}\sF(\bm^t;\bQ_t)\, ,\;\;\;\;\;\;\hbW^t = \tbr^t=\frac{1}{\sqrt{\beta}}\tsF(\tbm^t;\tbQ_t)\, .\label{eq:Estimates}
\end{align}
Note that $(\hbH^t, \hbQ^t)$ can be used as the state of the naive mean-field iteration instead of $(\bm^t,\bQ^t)$.

We select a two-dimensional grid of $(\delta, \beta)$'s and generate $400$ different instances according to the LDA model for each grid point. 
We report various statistics of the estimates aggregated over the $400$ instances. We have performed the simulations for
 $\nu  =1 $ and $k\in\{2,3\}$. For space considerations, we focus here on the case $\nu = 1$, $k=2$, and
discuss other results in Appendix \ref{app:Numerical_MF}. (Simulations for other values of $\nu$ also yield similar results.)

We initialize both the naive mean field iteration near the uninformative fixed-point as follows:
\begin{align}
\hbH^0 &=(1 - \epsilon) \,\bH_* + \epsilon \frac{\bG }{\|\bG\|_F}\|\bH_* \|_F, \\
\bQ_0 &=  \bQ_*\,.
\end{align}
Here $\bG$  has entries $(G_{ij})_{i\le d,j\le k}\sim_{iid}\normal(0,1)$ and $\epsilon=0.01$ and $\bH_* = \sF(\bm_*,\bQ_*)/\sqrt{\beta}$
 is the estimate at the uninformative fixed point.
We run a maximum of $300$ and a minimum of $40$ iterations, and assess  convergence at iteration $t$ by evaluating 
\begin{equation} \label{eq:convergence_criteria_main}
\Delta_t = \min_{\bPi\in \Sym_k} \big\| \hbW^{t-1}\bPi - \hbW^{t} \big\|_\infty\, ,
\end{equation}
where the minimum is over the set $\Sym_k$ of $k\times k$ permutation matrices. We declare convergence when $\Delta_t<0.005$. 
We denote by $\hbH$, $\hbW$ the estimates obtained at convergence.
\begin{figure}[t!]
\phantom{A}
\vspace{-1cm}

\centering
\includegraphics[height=5.5in]{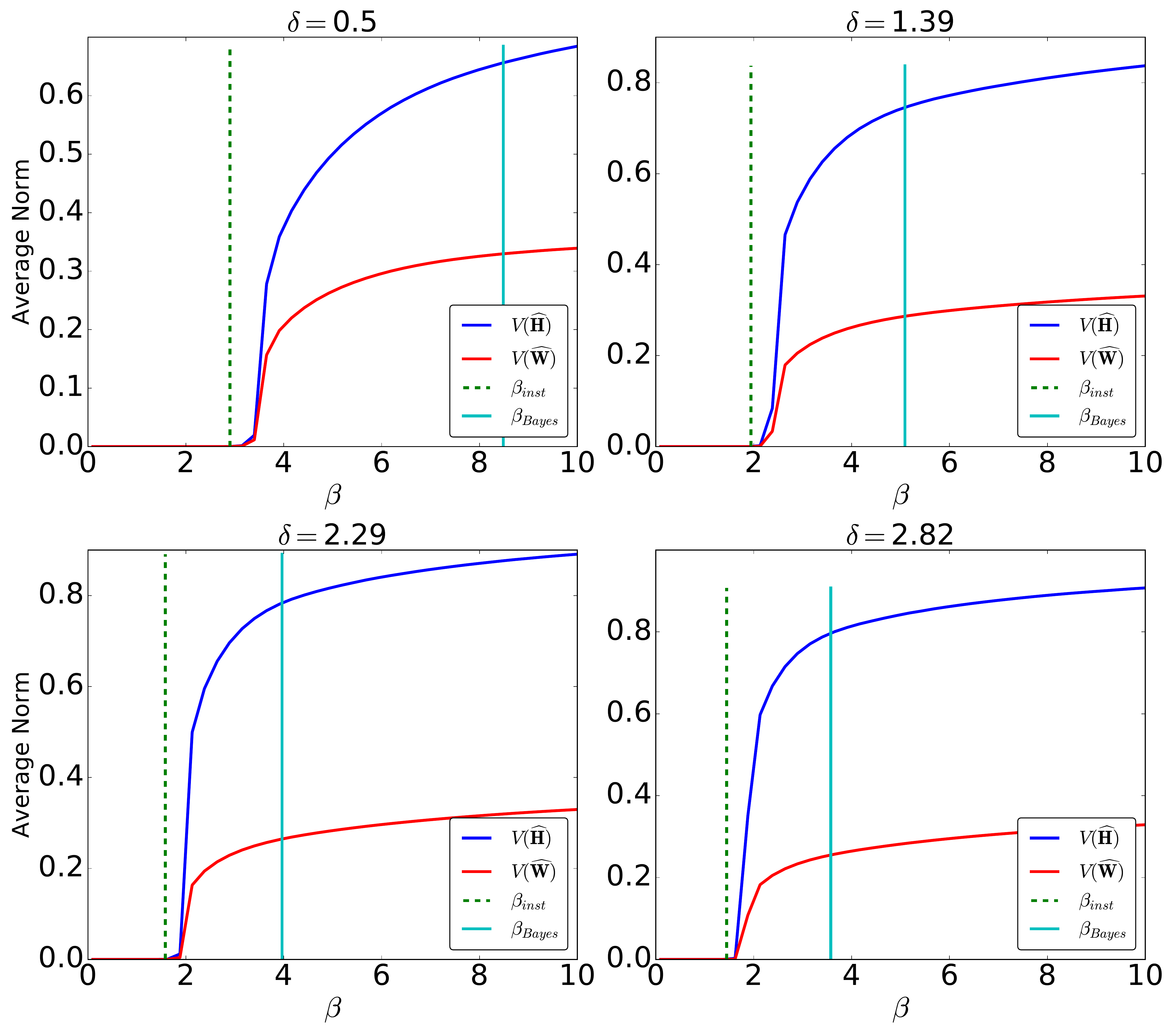}
\caption{Normalized distances $\Norm(\hbH)$, $\Norm(\hbW)$  of the
  naive mean field estimates from the uninformative fixed point.  Here $k=2$, 
 $d = 1000$ and $n= d\delta$: each data point corresponds to an average over $400$ random realizations.} 
\label{fig:H_norm_k_2}
\end{figure}

\begin{figure}[t!]
\phantom{A}\hspace{-1.85cm}\includegraphics[height=2.66in]{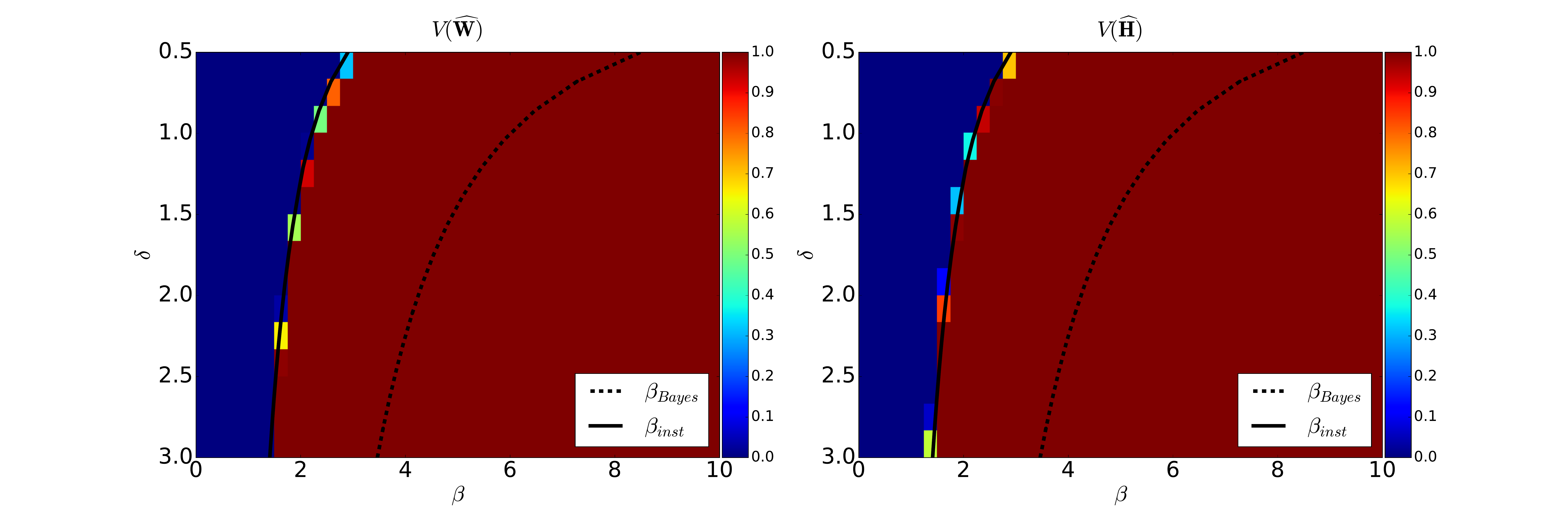}
\caption{Empirical fraction of instances such that  $\Norm(\hbW)\ge \eps_0=10^{-4}$ (left frame) or  $\Norm(\hbH)\ge \eps_0$ (right frame), where $\hbW,\hbH$ are the naive mean field
estimate. Here $k=2$, $d=1000$ and, for each $(\delta,\beta)$ point on a grid, we used $400$ random realizations to estimate the probability of $\Norm(\hbW)\ge \eps_0$.}
\label{fig:H_norm_k_2_HM}
\end{figure}
Recall the definition $\bPp=\id_k-\bfone_k\bfone_k^{\sT}/k$. In order
to investigate the instability of Theorem \ref{thm:Main}, we define the quantities
\begin{align}
\Norm(\hbW)\equiv \frac{1}{\sqrt{n}}\,\|\hbW\bPp\|_F\, ,\;\;\;\;\;\;\Norm(\hbH)\equiv \frac{1}{\sqrt{d}}\,\|\hbH\bPp\|_F
\end{align}
In Figure \ref{fig:H_norm_k_2} we plot empirical results for the average $\Norm(\hbW)$, $\Norm(\hbH)$ for $k=2$, $\nu=1$ and
four values of $\delta$. 
In Figure \ref{fig:H_norm_k_2_HM}, we plot the empirical probability that variational inference does not converge to 
the uninformative fixed point or, more precisely, $\hprob(\Norm(\hbW)\ge \eps_0)$ with $\eps_0= 10^{-4}$, evaluated on a grid of $(\beta,\delta)$
values. We also plot the Bayes threshold $\beta_{\sBayes}$ (which we find numerically that it coincides with the spectral threshold $\beta_{\sp}$)
and the instability threshold $\beta_{\inst}$. 

It is clear from Figures \ref{fig:H_norm_k_2}, \ref{fig:H_norm_k_2_HM}, that variational inference stops converging to
the uninformative fixed point (although we initialize close to it) when $\beta$ is still significantly smaller than the Bayes threshold $\beta_{\sBayes}$
(i.e. in a regime in which the uninformative fixed point would a reasonable output). The data are consistent with the hypothesis that
variational inference becomes unstable at $\beta_{\inst}$, as predicted by Theorem \ref{thm:Main}.

\begin{figure}[ht!]
\phantom{A}
\vspace{-1cm}

\includegraphics[height=5.5in]{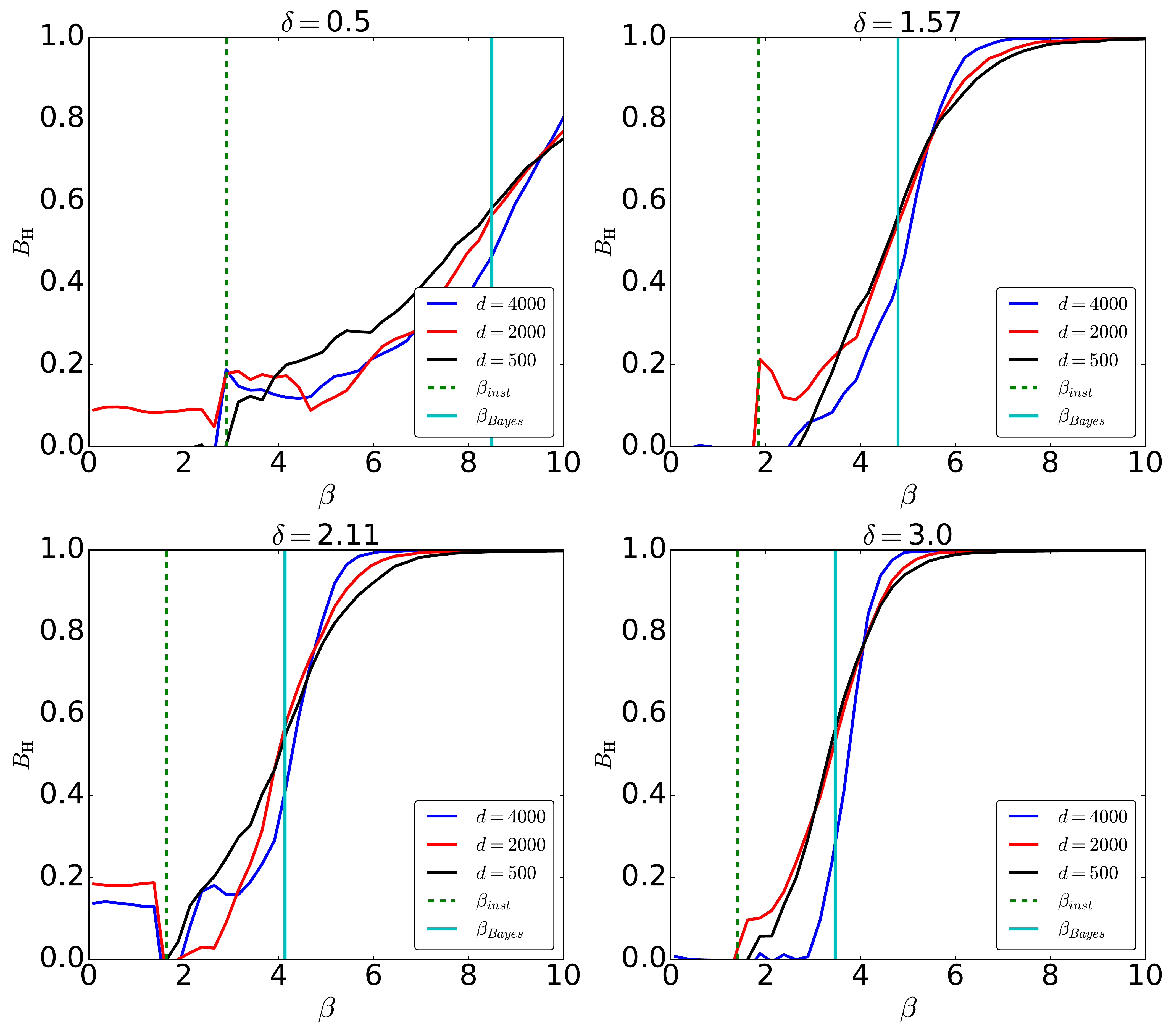}

\caption{Binder cumulant for the correlation between the naive mean field estimates $\hbH$ and the true topics $\bH$, see
Eq.~(\ref{eq:Binder_Def}). Here we report results for $k=2$, $d\in \{500,2000,4000\}$ and $n=d\delta$, obtained by averaging over $400$
realizations. Note that for $\beta<\beta_{\sBayes}(k,\nu,\delta)$,
 $\Bind_{\bH}$ decreases with increasing dimensions, suggesting asymptotically vanishing correlations.}
\label{fig:Binder_k_2}
\end{figure}

\begin{figure}[ht!]
\phantom{A}\hspace{-1.85cm}\includegraphics[height=2.66in]{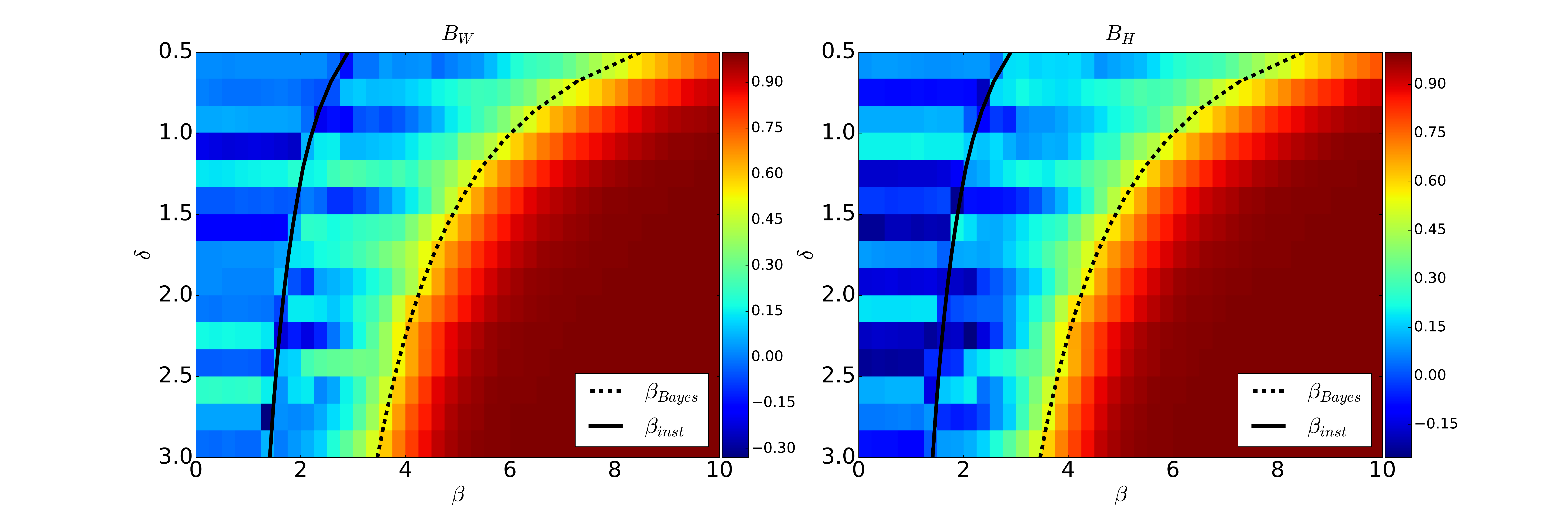}
\caption{Binder cumulant for the correlation between the naive mean field estimates $\hbW$, $\hbH$ and the true weights and topics
$\bW$, $\bH$. Here $k=2$, $d=1000$ and $n=d\delta$, and we averaged over $400$ realizations.}
\label{fig:Binder_k_2_HM}
\end{figure}
Because of Proposition \ref{propo:Bayes}, we expect the estimates $\hbH,\hbW$ produced by variational inference
to be asymptotically uncorrelated with the true factors for $\beta_{\inst}<\beta<\beta_{\sBayes}$. In order to test this
hypothesis, we borrow a technique that has been developed in the study of phase transitions in statistical physics, and
is known as the Binder cumulant \cite{binder1981finite}. For the sake of simplicity, we focus here --again-- on the case $k=2$, deferring the general case
to Appendix \ref{app:Numerical_MF}. Since in this case $\hbH,\bH\in \reals^{d\times 2}$, $\hbW,\bW\in \reals^{n\times 2}$, we can 
encode the informative component of these matrices by taking the difference between their columns.
For instance, we define $\hbh_{\perp} \equiv \hbH(\be_1-\be_2)$, and analogously $\bh_{\perp}$, $\hbw_{\perp}$, $\bw_{\perp}$.
We then define
\begin{align}
\Corr_{\eta}(\bH,\hbH) &\equiv \<\hbh_{\perp}+\eta \bg,\bh_{\perp}\>\, ,\;\;\;\;\;\;\;\; \Bind_{\bH} \equiv\frac{3}{2}-\frac{\hE\{\Corr_{\eta}(\bH,\hbH)^4\}}{2
\hE\{\Corr_{\eta}(\bH,\hbH)^2\}^2} \, .
\label{eq:Binder_Def}
\end{align}
Here $\hE$ denotes empirical average with respect to the sample,  $\bg\sim\normal(0,\id_d)$, and we set $\eta=10^{-4}$.
An analogous definition holds for $\Corr_{\eta}(\hbW)$, $\Bind_{\eta}(\hbW)$.

\begin{figure}[ht!]
\centering
\includegraphics[height=3.in]{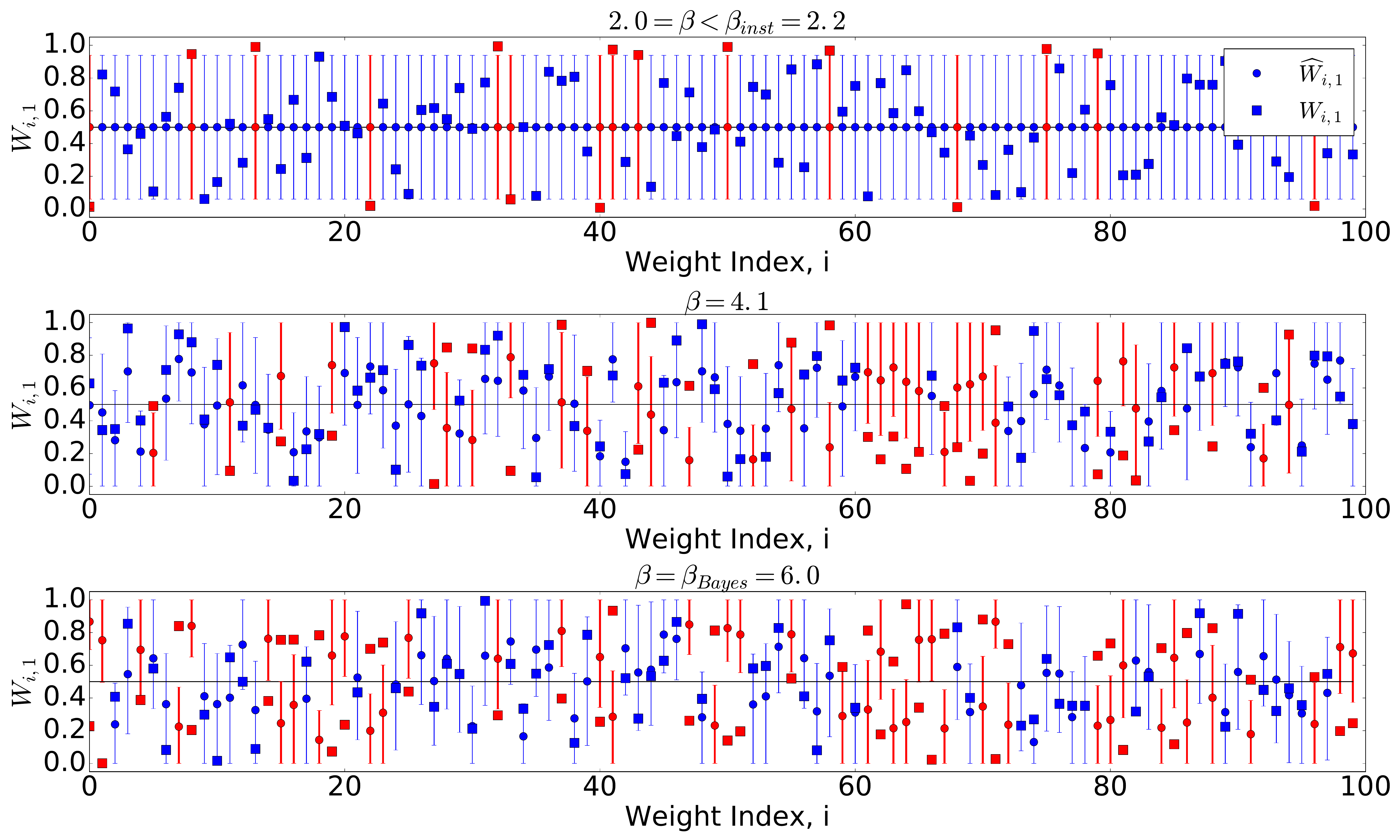}
\caption{Bayesian credible intervals as computed by variational inference at nominal coverage level $1-\alpha= 0.9$.
Here $k=2$, $n=d=5000$, and we consider three values of $\beta$: $\beta\in\{2,4.1,6\}$ (for reference $\beta_{\inst}\approx 2.2, \beta_{\sBayes}=6$).
Circles correspond to the posterior mean, and squares to the actual weights. We use red for the coordinates on which the 
credible interval does not cover the actual value of $w_{i,1}$.}
\label{fig:Uncertainty_delta1}
\end{figure}
The rationale for definition (\ref{eq:Binder_Def}) is easy to explain. At small signal-to-noise ratio $\beta$, we expect $\hbh_{\perp}$
to be essentially uncorrelated from $\bh_{\perp}$ and hence the correlation $\Corr_{\eta}(\bH,\hbH)$  to be roughly 
normal with mean zero and variance $\sigma^2_{\bH}$. 
In particular $\E\{\Corr_{\eta}(\bH,\hbH)^4\}\approx 3 \E\{\Corr_{\eta}(\bH,\hbH)^4\}$ and
therefore $\Bind_{\bH}\approx 0$.  (Note that the term $\eta\bg$ is added to avoid that empirical correlation vanishes, and hence $\Bind_{\bH}$ is not defined.)

In contrast, for large $\beta$, we expect $\hbh_{\perp}$ to be positively correlated with $\bh_{\perp}$, and 
$\Corr_{\eta}(\bH,\hbH)$ should concentrate around a non-random positive value.  As a consequence, $\Bind_{\bH}\approx 1$. 

In Figures \ref{fig:Binder_k_2} we report our empirical results for $\Bind_{\bH}$ and $\Bind_{\bW}$ for four different values of $\delta$,
and several values of $d$. 
As expected, these quantities grow from $0$ to $1$ as $\beta$ grows, and the transition is centered around $\beta_{\sBayes}$.
Figure \ref{fig:Binder_k_2_HM} reports the results on a grid of $(\beta,\delta)$ values. Again, the transition is
well predicted by the analytical curve $\beta_{\sBayes}$.
These data support our claim that, for $\beta_{\inst}<\beta<\beta_{\sBayes}$, the output of variational inference is
non-uniform but uncorrelated with the true signal.

Finally, in Figure \ref{fig:Uncertainty_delta1} we plot the estimates obtained for $100$ entries of the weights vector $w_{i,1}$ for 
three instances with $n=d=5000$ and $\beta=2<\beta_{\inst}$, $\beta= 4.1\in(\beta_{\inst},\beta_{\sBayes})$ and $\beta=6=\beta_{\sBayes}$.
The interval for $w_{a,1}$ is the form $\{w_{a,1}\in [0,1]: \tq_a(w_{a,1})\ge t_{a}(\alpha)\}$ and are constructed to achieve nominal coverage level $1-\alpha=0.9$. 
It is visually clear that the claimed coverage level is not verified in these simulations for $\beta>\beta_{\inst}$, confirming our  analytical results.
Indeed, for the three simulations in  Figure \ref{fig:Uncertainty_delta1} we achieve coverage $0.87$ 
(for $\beta=2<\beta_{\inst}$), $0.65$ (for $\beta= 4.1\in(\beta_{\inst},\beta_{\sBayes})$), and $0.51$ (for $\beta=6=\beta_{\sBayes}$).
Further results of this type are reported in Appendix \ref{app:Numerical_MF}.

\section{Fixing the instability}
\label{sec:Fixing}

The fact that naive mean field is not accurate for certain classes of random high-dimensional 
probability distributions is well understood within statistical physics. In particular, in the context of mean field spin glasses
\cite{SpinGlass}, naive mean field is known to lead to an asymptotically incorrect expression for 
the free energy. We expect the same mechanism to be relevant in the context of topic models.

Namely, the product-form expression (\ref{eq:ProductForm}) only holds asymptotically in the sense of 
finite-dimensional marginals. However, when computing the term $\E_{\hq}\log p_{\bX|\bW,\bH}(\bX|\bH,\bW)$ in the
KL divergence (\ref{eq:Gibbs2}), the error due to the product form approximation is non-negligible.
Keeping track of this error leads to the so-called TAP free energy.

\subsection{Revisiting $\integers_2$-synchronization}

It is instructive to briefly discuss the $\integers_2$-synchronization example of Section \ref{sec:Toy},
as the basic concepts can be explained more easily in this example. For this problem, the TAP approximation replaces the free energy 
(\ref{eq:Z2_FreeEnergy}) with 
\begin{align}
\cF_{\sTAP}(\bm) & \equiv -\frac{\lambda}{2}\<\bm,\bX_0\bm\> -\sum_{i=1}^n\entro(m_i)
-\frac{n\lambda^2}{4}\big(1-Q(\bm)\big)^2\,,
\end{align}
where $Q(\bm) \equiv \|\bm\|_2^2/n$. 

We can now repeat the analysis of Section \ref{sec:Toy} with this new free energy approximation. It is easy to see
that $\bm_*=\bzero$ is again a stationary point. However, the Hessian is now
\begin{align}
\left. \nabla^2\cF(\bm) \right|_{\bm = \bm_*} =  -\lambda\bX_0 +\left(1+\lambda^2\right)\id\, .
\end{align}
In particular, for $\lambda<1$, $\lambda_{\rm min}(\left.\nabla^2\cF\right|_{\bm = \bm_*})$  converges to  $(1-\lambda)^2>0$:
the uninformative stationary point is (with high probability) a local minimum.

The stationarity condition for the TAP free energy are known as TAP equations, and 
the algorithm that corresponds to the naive mean field iteration is  Bayesian approximate message passing (AMP).
For the $\integers_2$ synchronization problem, Bayes AMP is known to achieve the Bayes optimal estimation error 
\cite{deshpande2017asymptotic,montanari2017estimation}.

\subsection{TAP free energy for topic models}
\label{sec:TAP-Topic}

We now turn to topic models. The TAP approach replaces the free energy
(\ref{eq:FreeEnergy_main}) with the following (see Appendix \ref{app:TAP_Derivation} for a derivation)
\begin{align}
\label{eq:FreeEnergy_TAP_TM} 
\cF_{\sTAP}(\br,\tbr) = & \sum_{i=1}^d\psi\left(\br_i, \frac{\beta}{d}\sum_{a=1}^n\tbr_a^{\otimes 2}\right)+
\sum_{a=1}^n \tpsi\left(\tbr_a,  \frac{\beta}{d}\sum_{i=1}^d\br_i^{\otimes 2}\right) -\sqrt{\beta}\Tr\left(\bX\br\tbr^{\sT}\right)  -
\frac{\beta}{2d}\sum_{i=1}^d\sum_{a=1}^n\<\br_i,\tbr_a\>^2\, ,
\end{align}
where $\tbr\bfone_k = \bfone_n$, and we defined the partial Legendre transforms
\begin{align}
\psi(\br,\bQ)  \equiv \sup_{\bm}\left\{\< \br, \bm\> - \phi(\bm, \bQ)\right\} \, ,\;\;\;\;
\tpsi(\tbr,\tbQ) \equiv \sup_{\tbm}\left\{\< \tbr, \tbm\>- \tphi(\tbm, \tbQ)\right\} \, .\label{eq:LegendrePhiPartial}
\end{align}
Notice that $\tpsi(\tbr,\tbQ)$ is finite only if $\<\bfone_k,\tbr\>=1$.

When substituting in Eq.~(\ref{eq:FreeEnergy_TAP_TM}), the supremum of Eq.~(\ref{eq:LegendrePhiPartial}) is achieved at
\begin{align}
\br& = \frac{1}{\sqrt{\beta}}\sF(\bm;\bQ)\, ,\;\;\;\;\;\tbr = \frac{1}{\sqrt{\beta}}\tsF(\tbm;\tbQ)\, ,\label{eq:MapM-R}\\
\bQ&=\frac{\beta}{d}\sum_{a=1}^n\tbr_a^{\otimes 2},\;\;\;\;\;\;\tbQ\equiv \frac{\beta}{d}\sum_{i=1}^d\br_i^{\otimes 2}\, .
\end{align}

Calculus shows that stationary points of this free energy are in one-to-one correspondence (via Eq.~(\ref{eq:MapM-R}))
with the fixed points of the following iteration:
\begin{align}
\bm^{t+1}&= \bX^{\sT}\,\tsF(\tbm^t;\tbQ^t)-\sF(\bm^t;\bQ^t) \tbOmega_t\, ,\label{eq:AMP1}\\
\tbm^t &= \bX\,\sF(\bm^t;\bQ^t)-\tsF(\tbm^{t-1};\tbQ^{t-1}) \bOmega_t\, ,\label{eq:AMP2}\\
\bQ^{t+1} &= \frac{1}{d}\sum_{a=1}^n \tsF(\tbm^t_a;\tbQ^t)^{\otimes 2}\, ,\;\;\;\; \tbQ^t = \frac{1}{d}\sum_{i=1}^d \sF(\bm^t_i;\bQ^t)^{\otimes 2}\, .
\label{eq:Calibration}
\end{align}
where $\bOmega_t$, $\tbOmega_t$ are defined as
\begin{align}
\bOmega_t& =\frac{1}{d\sqrt{\beta}}\sum_{i=1}^d [\sG(\bm^t_i,\bQ^t)-\sF(\bm^t_i;\bQ^t)^{\otimes 2}]=   \frac{1}{d}\sum_{i=1}^d\frac{\partial\sF}{\partial \bm_i}(\bm^t_i;\bQ^t)\, ,\label{eq:OmegaTAP1}\\
\tbOmega_t& =\frac{1}{d\sqrt{\beta}}\sum_{a=1}^n[\tsG(\tbm^t_a,\tbQ)- \tsF(\tbm^t_a;\tbQ^t)^{\otimes 2}]=   \frac{1}{d}\sum_{a=1}^n\frac{\partial\tsF}{\partial \tbm_a}(\tbm^t_a;\tbQ^t)\, .\label{eq:OmegaTAP2}
\end{align}

The stationarity conditions for the TAP free energy (\ref{eq:FreeEnergy_TAP_TM}) are known as TAP equations, 
and the corresponding iterative algorithm (\ref{eq:AMP1}), (\ref{eq:AMP2})
is a special case of approximate message passing (AMP), with Bayesian updates. Note that the specific choice of time indices in Eqs. ~(\ref{eq:AMP1}), (\ref{eq:AMP2}) is instrumental
for the analysis in the next section to hold.
We also note that the general AMP analysis of \cite{BM-MPCS-2011,javanmard2013state} allows for quite general choices of the sequence of matrices
 $\bQ_t, \tbQ_t$. 
However, stationarity of the TAP free energy (\ref{eq:FreeEnergy_TAP_TM}) requires that at convergence the 
condition (\ref{eq:Calibration}) holds at the fixed point

Estimates of the factors $\bW$, $\bH$ are computed following the same
recipe as for naive mean field, cf. Eq.~(\ref{eq:Estimates}), namely
$\hbH^t = \br^t = \sF(\bm^t;\bQ_t)/\sqrt{\beta}$, $\hbW^t = \tbr^t=\tsF(\tbm^t;\tbQ_t)/\sqrt{\beta}$.

It is not hard to see that the AMP iteration admits an uninformative fixed point, which is a stationary point of the TAP free energy, see proof in 
Appendix \ref{app:UninformativeTAP}.
\begin{lemma}\label{lemma:Uninf_TAP}
Define $q_0^* = \beta\delta/k^2$ and $\tq_0^* = \beta^2\|\bX^{\sT}\bfone_n\|_2^2/(dk^2(1+kq_0)^2)$. Then,
AMP iteration admits the following fixed point
\begin{align}
\bm^* & = \frac{\sqrt{\beta}}{k}(\bX^{\sT}\bfone_n)\otimes \bfone_k\, ,\label{eq:UninfTAP_1}\\
\tbm^* & = \frac{\beta}{k(1+kq_0)} (\bX\bX^{\sT}\bfone_n)\otimes\bfone_k - \frac{\beta}{k+\delta\beta} \, \bfone_n\otimes\bfone_k\, ,\label{eq:UninfTAP_2}\\
\bQ^*& = q_0^*\, \bJ_k\, ,\;\;\;\;\;\;\;\; \tbQ^* = \tq_0^* \, \bJ_k\, .
\end{align}
This corresponds to a stationary  point of the TAP free energy (\ref{eq:FreeEnergy_TAP_TM}),
via  Eq.~(\ref{eq:MapM-R}):
\begin{align}
\br_* = \frac{\sqrt{\beta}}{k(1+kq_0^*)} (\bX^{\sT}\bfone_n)\otimes \bfone_k\,,\;\;\;\;\;\;\; \tbr_* = \frac{1}{k}\bfone_n\otimes\bfone_k\, .
\end{align}
Further, this is the only stationary point that is unchanged under permutations of the topics.
\end{lemma}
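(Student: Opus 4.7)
The plan is to prove the three claims---that $(\bm^*,\tbm^*,\bQ^*,\tbQ^*)$ is a fixed point of the AMP iteration~(\ref{eq:AMP1})--(\ref{eq:OmegaTAP2}), that it maps under~(\ref{eq:MapM-R}) to a stationary point of $\cF_{\sTAP}$, and that this is the unique permutation-symmetric stationary point---essentially by direct substitution followed by a symmetry-based converse. The whole computation is driven by two simple algebraic facts at the symmetric point: the identity $(\bI_k + q_0^*\bJ_k)^{-1}\bfone_k = \bfone_k/(1+kq_0^*)$ together with $k(1+kq_0^*) = k+\delta\beta$, and the orthogonality $(k\bI_k - \bJ_k)\bfone_k = \bzero$ coming from the Dirichlet covariance.

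First I would compute $\sF,\tsF,\bOmega,\tbOmega$ at any point of the proposed form. Since $q_0 = \normal(\bzero,\bI_k)$, Gaussian calculus gives $\sF(\bm;\bQ) = \sqrt{\beta}(\bI_k+\bQ)^{-1}\bm$ and $\partial\sF/\partial\bm = \sqrt{\beta}(\bI_k+\bQ)^{-1}$. For the Dirichlet factor the key observation is that whenever $\tbQ = \tq\bJ_k$, the quadratic form $\langle\bw,\tbQ\bw\rangle = \tq\langle\bfone_k,\bw\rangle^2 = \tq$ is constant on $\sP_1(k)$, so $\tq$ drops out of the tilted density; and if in addition $\tbm_a$ is a scalar multiple of $\bfone_k$, the density $\tq_a$ reduces exactly to $\tq_0 = \Dir(\nu;k)$. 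Hence $\tsF(\tbm_a;\tbQ) = \sqrt{\beta}\,\bfone_k/k$ and $\partial\tsF/\partial\tbm = \sqrt{\beta}\,\mathrm{Cov}_{\tq_0}(\bw) = \sqrt{\beta}(k\bI_k - \bJ_k)/(k^2(k\nu+1))$, whose null space contains $\bfone_k$.

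Next I would substitute into~(\ref{eq:AMP1})--(\ref{eq:OmegaTAP2}). Every row of $\sF(\bm^*;\bQ^*)$ is proportional to $\bfone_k$ and $\tbOmega^*$ annihilates $\bfone_k$, so the Onsager term $\sF(\bm^*;\bQ^*)\tbOmega^*$ in~(\ref{eq:AMP1}) vanishes and the equation collapses to $\bm^* = \bX^\sT\tsF(\tbm^*;\tbQ^*) = (\sqrt{\beta}/k)(\bX^\sT\bfone_n)\otimes\bfone_k$, matching~(\ref{eq:UninfTAP_1}). For~(\ref{eq:AMP2}), one computes $\bOmega^* = \sqrt{\beta}(\bI_k - q_0^*\bJ_k/(1+kq_0^*))$ and hence $\tsF(\tbm^*;\tbQ^*)\bOmega^* = (\beta/(k+\delta\beta))\bfone_n\bfone_k^\sT$; subtracting this from $\bX\sF(\bm^*;\bQ^*) = (\beta/(k+\delta\beta))\bX\bX^\sT\bfone_n\bfone_k^\sT$ reproduces $\tbm^*$ exactly, including the characteristic Onsager correction $-(\beta/(k+\delta\beta))\bfone_n\otimes\bfone_k$. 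The calibration~(\ref{eq:Calibration}) with $\br_* = \sF/\sqrt{\beta}$ and $\tbr_* = \bfone_n\otimes\bfone_k/k$ then collapses to the scalar identities $q_0^* = \beta\delta/k^2$ and $\tq_0^* = \beta^2\|\bX^\sT\bfone_n\|_2^2/(dk^2(1+kq_0^*)^2)$. Stationarity of $\cF_{\sTAP}$ at the image point~(\ref{eq:MapM-R}) then follows from the equivalence between critical points of~(\ref{eq:FreeEnergy_TAP_TM}) and AMP fixed points established in Section~\ref{sec:TAP-Topic}.

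Finally, for uniqueness I would reverse the derivation. Topic-permutation invariance of each $\tbr_a\in\sP_1(k)$ forces $\tbr_a = \bfone_k/k$ by the simplex constraint, which via~(\ref{eq:Calibration}) forces $\bQ^* = q_0^*\bJ_k$; permutation invariance of $\br_i$ forces $\br_i\propto\bfone_k$ and hence $\bm_i\propto\bfone_k$. The AMP equation then uniquely pins down $\bm^*$ (the Onsager term vanishing as above), and $\tbQ^*$ and $\tbm^*$ are determined in turn by~(\ref{eq:Calibration}) and~(\ref{eq:AMP2}). The main---rather mild---obstacle is a gauge freedom: because $\langle\bfone_k,\bw\rangle = 1$ on $\sP_1(k)$, shifting $\tbm$ by a multiple of $\bfone_k$ leaves $\tphi$, $\tsF$, and $\partial\tsF/\partial\tbm$ invariant, so $\tbm$ is only pinned down modulo a one-dimensional gauge. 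I would need to verify that~(\ref{eq:UninfTAP_2}) is the unique representative for which~(\ref{eq:AMP2}) holds as an equality of matrices (not just modulo the gauge), which is automatic since the right-hand side of~(\ref{eq:AMP2}) is a single well-defined matrix.
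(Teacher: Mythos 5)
Your proposal is correct and follows essentially the same route as the paper's Appendix \ref{app:UninformativeTAP}: direct substitution using the explicit forms of $\sF,\tsF$ and their Jacobians at permutation-symmetric arguments (the content of Lemma \ref{lemma:UsefulFormulae}), the observation that $\tbOmega^*\bfone_k=\bzero$ kills one Onsager term, and a symmetry-forcing argument for uniqueness. One remark in your favor: your value $\bOmega^*=\sqrt{\beta}\bigl(\id_k-\tfrac{q_0^*}{1+kq_0^*}\bJ_k\bigr)=\sqrt{\beta}(\id_k+\bQ^*)^{-1}$ is the one actually consistent with the definition (\ref{eq:OmegaTAP1}) and with the coefficient $\beta/(k+\delta\beta)$ in (\ref{eq:UninfTAP_2}), whereas the paper's proof states $\bOmega=\sqrt{\beta}\,\id_k$, which appears to be a typo (it would give $\beta/k$ instead); your handling of the $\bfone_k$-gauge freedom in $\tbm$ in the uniqueness step is also a point the paper glosses over.
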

\subsection{State evolution analysis}
\label{sec:StateEvol}

State evolution is a recursion over matrices $\bM_t$, $\tbM_t\in\reals^{k\times k}$, defined by
\begin{align}
\bM_{t+1} & = \delta\,  \E\Big\{\tsF(\tbM_t\bw+\tbM_t^{1/2}\bz;\tbM_t)^{\otimes 2}\Big\}\, ,\label{eq:FirstSE}\\
\tbM_{t} & = \E\Big\{\sF(\bM_t\bh+\bM_t^{1/2}\bz;\bM_t)^{\otimes 2}\Big\}\, , \label{eq:SecondSE}
\end{align}
where expectation is with respect to $\bh\sim q_0(\,\cdot\,)$, $\bw\sim \tq_0(\,\cdot\,)$ and $\bz\sim \normal(0,\id_k)$ independent.
Note that $\bM_t, \tbM_t$ are positive semidefinite symmetric matrices. Also, Eq.~(\ref{eq:SecondSE}) can be written explicitly as
\begin{align}
\tbM_t = \beta(\id_k+\bM_t)^{-1}\bM_t\, .
\end{align}
State evolution provides an asymptotically exact characterization of the behavior of AMP, as formalized by the 
next theorem (which is a direct application of \cite{javanmard2013state}). 
\begin{theorem}\label{thm:SE}
Consider the AMP algorithm of Eqs.~(\ref{eq:AMP1}), with deterministic  initialization $\bm^0,\bQ^0$.
Assume $\bG\in\reals^{d\times k}$ to be independent of data $\bX$, with 
entries $(G_{ij})_{i\le d,j\le k}\sim_{iid}\normal(0,1)$, and let
$\bm^0 = \bH\bM_0+\bZ\bM_0^{1/2}$ for $\bM_0\in \reals^{k\times k}$ non-random, $\bM_0\succeq 0$. Let $\{\bM_t,\tbM_t\}_{t\ge 1}$
be defined by the state evolution recursion (\ref{eq:FirstSE}), (\ref{eq:SecondSE}).  Then, for any pseudo-Lipschitz function $g:\reals^k\times\reals^k\to\reals$, we have, almost surely,
\begin{align}
\lim_{n\to\infty}\frac{1}{d}\sum_{i=1}^d g(\bh_i,\bm^t_i) & =\E\Big\{g(\bh,\bM_t\bh+\bM_t^{1/2}\bz)\Big\} \, ,\\
\lim_{n\to\infty}\frac{1}{n}\sum_{a=1}^n g(\bw_a,\tbm^t_a) & =\E\Big\{g(\bw,\tbM_t\bw+\tbM_t^{1/2}\bz)\Big\} \, ,
\end{align}
where it is understood that $n,d\to\infty$ with $n/d\to\delta$.  In particular
\begin{align}
\lim_{n\to\infty}\frac{1}{d}\bH^{\sT}\hbH^t & =\frac{1}{\sqrt{\beta}}\tbM_t\, ,\\
\lim_{n\to\infty}\frac{1}{n}\bW^{\sT}\hbW^t& =\frac{1}{\sqrt{\beta}}\bM_{t+1} \, .
\end{align}
Further $\lim_{n\to \infty}\bQ^t = \bM_t$, $\lim_{n\to\infty}\tbQ^t = \tbM_t$.
\end{theorem}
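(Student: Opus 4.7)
The plan is to identify (\ref{eq:AMP1})--(\ref{eq:Calibration}) as a rectangular matrix-valued AMP in the sense of \cite{javanmard2013state} and then invoke its state-evolution theorem. First I substitute $\bX = (\sqrt{\beta}/d)\bW\bH^{\sT} + \bZ$ into the updates: this splits each iterate into a signal piece linear in the planted factors (producing the drifts $\bM_t\bh$, $\tbM_t\bw$) and a Gaussian piece driven by $\bZ$. Using the identity $\sG(\bm;\bQ) - \sF(\bm;\bQ)^{\otimes 2} = \sqrt{\beta}\,(\partial\sF/\partial\bm)(\bm;\bQ)$ (Stein's lemma, since $q_0$ is standard Gaussian, with its analogue for $\tsF, \tsG$), the second equalities in (\ref{eq:OmegaTAP1})--(\ref{eq:OmegaTAP2}) hold, so $\bOmega_t, \tbOmega_t$ are precisely the averaged-Jacobian Onsager corrections that \cite{javanmard2013state} prescribes for the pair of nonlinearities $\sF(\,\cdot\,;\bQ^t), \tsF(\,\cdot\,;\tbQ^t)$. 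The prescribed initialization $\bm^0 = \bH\bM_0 + \bZ\bM_0^{1/2}$ puts the $t=0$ iterate in the ``drift plus Gaussian perturbation'' form required as the base case for the state-evolution theorem.

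With this setup, the state-evolution theorem of \cite{javanmard2013state} yields, for any pseudo-Lipschitz $g:\reals^k\times\reals^k\to\reals$,
\begin{align*}
\frac{1}{d}\sum_{i=1}^d g(\bh_i,\bm^t_i) &\to \E\big\{g(\bh,\bM_t\bh+\bM_t^{1/2}\bz)\big\}\, ,\\
\frac{1}{n}\sum_{a=1}^n g(\bw_a,\tbm^t_a) &\to \E\big\{g(\bw,\tbM_t\bw+\tbM_t^{1/2}\bz)\big\}\, ,
\end{align*}
almost surely, where $\bM_t, \tbM_t$ are the deterministic matrices generated by the general SE recursion with nonlinearities $f_t = \sF(\,\cdot\,;\bM_t), \tilde f_t = \tsF(\,\cdot\,;\tbM_t)$; written out, this recursion is exactly (\ref{eq:FirstSE})--(\ref{eq:SecondSE}). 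To close the coupling between the random parameters $\bQ^t, \tbQ^t$ appearing inside the nonlinearities and the deterministic state-evolution matrices, I induct on $t$: assuming $\bQ^s \to \bM_s$ and $\tbQ^s \to \tbM_s$ for all $s \le t$, apply the empirical-convergence statement entry-wise to the pseudo-Lipschitz test function built from $\sF(\bm;\bQ^t)^{\otimes 2}$ to deduce $\tbQ^t \to \tbM_t$, and symmetrically $\bQ^{t+1}\to\bM_{t+1}$. Taking instead $g(\bh,\bm) = \langle\bu,\bh\rangle\langle\bv,\sF(\bm;\bQ^t)\rangle/\sqrt{\beta}$ for arbitrary $\bu,\bv\in\reals^k$ (and the $\bw$-analogue) produces the stated overlap limits. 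The closed form $\tbM_t = \beta(\id_k+\bM_t)^{-1}\bM_t$ follows because the Gaussian prior $q_0 = \normal(0,\id_k)$ makes $\sF(\bm;\bQ) = \sqrt{\beta}(\id_k+\bQ)^{-1}\bm$ linear, so (\ref{eq:SecondSE}) reduces to a matrix identity.

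The chief technical hurdle is verifying uniform pseudo-Lipschitz regularity of $(\bm,\bQ)\mapsto \sF(\bm;\bQ)$ and $(\tbm,\tbQ)\mapsto \tsF(\tbm;\tbQ)$ on a bounded neighborhood of the SE trajectory, which is what licenses both the inductive step above and the use of parameter-dependent nonlinearities within the framework of \cite{javanmard2013state}. For $\sF$ this is immediate from the Gaussian formula. For $\tsF$, the Dirichlet reference measure $\tq_0$ is supported on the compact simplex $\sP_1(k)$ and the tilted posterior density is log-concave in $\bw$; Brascamp--Lieb-type bounds then give uniform control of posterior moments and their derivatives in $(\tbm,\tbQ)$ on compact sets, which, combined with boundedness of the SE trajectory established along the induction, supplies the required regularity. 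Apart from these regularity checks, the proof is a routine specialization of the rectangular-AMP state-evolution framework.
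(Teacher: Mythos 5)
Your proposal is correct and follows exactly the route the paper intends: the paper offers no written proof of this theorem beyond the remark that it is ``a direct application of \cite{javanmard2013state},'' and your writeup simply supplies the details of that application (identifying the Onsager terms via the log-moment-generating-function identity behind Eqs.~(\ref{eq:OmegaTAP1})--(\ref{eq:OmegaTAP2}), the induction closing $\bQ^t\to\bM_t$, and the regularity of the denoisers). The only quibble is cosmetic: the identity $\sG-\sF^{\otimes 2}=\sqrt{\beta}\,\partial\sF/\partial\bm$ is the exponential-family covariance identity (second derivative of $\phi$), valid for both the Gaussian and Dirichlet reference measures alike, rather than Stein's lemma specific to $q_0$ Gaussian.
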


Using state evolution, we can establish a stability result for AMP. First of all, notice that the state evolution iteration
(\ref{eq:FirstSE}), (\ref{eq:SecondSE}) admits a fixed point of the form $\bM^* = (\delta\beta/k^2)\bJ_k$, $\tbM^* = \rho_0\bJ_k$, 
for $\rho_0 = \delta\beta^2/(k\delta\beta + k^2)$, see Appendix \ref{app:SE_FP}. This is an uninformative fixed point,
in the sense that the $k$ topics are asymptotically identical. The
next theorem is proved in Appendix \ref{sec:StabilitySE}.
\begin{theorem}\label{thm:StateEvolStable}
If $\beta<\beta_{\sp}(k,\nu,\delta)$, then the uninformative fixed point is stable under the state evolution iteration 
(\ref{eq:FirstSE}), (\ref{eq:SecondSE}). 

In particular, for $\beta<\beta_{\sp}(k,\nu,\delta)$, there exists $c_0=c_0(\beta,k\nu,\delta)$ such that,
if we initialize AMP as in Theorem \ref{thm:SE} with $\|\bM_0-\bM^*\|_F\le c_0$, then (recalling $\bPp = \id_k-\bfone_k\bfone_k/k$)
\begin{align}
\lim_{t\to\infty}\lim_{n\to\infty}\frac{1}{n}\big\|\bm^t\bPp\|_F^2 = 0\, ,\;\;\;\;\;\;\lim_{t\to\infty}\lim_{n\to\infty}\frac{1}{n}\big\|\bm^t\bPp\|_F^2 = 0\, .
\end{align}
\end{theorem}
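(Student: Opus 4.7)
The plan is to linearize state evolution around the uninformative fixed point $\bM^*=(\delta\beta/k^2)\bJ_k$ and show that the Jacobian contracts strictly in operator norm for $\beta<\beta_{\sp}$. Local contraction then gives $\bM_t\to\bM^*$ whenever $\|\bM_0-\bM^*\|_F\le c_0$, and the two displayed limits follow from Theorem~\ref{thm:SE} applied with the pseudo-Lipschitz test function $g(\bh,\bm)=\|\bm\bPp\|_2^2/k$: for each fixed $t$, $n^{-1}\|\bm^t\bPp\|_F^2$ converges almost surely to a continuous function of $\bM_t$ that vanishes at $\bM^*$ (since $\bM^*\bPp=0$), and the $\tbm^t$ statement is identical.

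I would write state evolution as a single map $\bM_{t+1}=\Phi(\bM_t)$, obtained by composing $\bM\mapsto\tbM=\beta(\id_k+\bM)^{-1}\bM$ with $\tbM\mapsto\delta\,\E[\tsF(\tbM\bw+\tbM^{1/2}\bz;\tbM)^{\otimes 2}]$. By Appendix~\ref{app:SE_FP} one has $\Phi(\bM^*)=\bM^*$, so it remains to bound $\rho(\bD\Phi(\bM^*))$. The crucial structural input is permutation symmetry: because $q_0=\normal(0,\id_k)$ and $\tq_0=\Dir(\nu;k)$ are both $\Sym_k$-invariant, $\Phi$ is equivariant under conjugation, $\Phi(\bPi^{\sT}\bM\bPi)=\bPi^{\sT}\Phi(\bM)\bPi$ for every $\bPi\in\Sym_k$; since $\bM^*$ is $\Sym_k$-invariant, $\bD\Phi(\bM^*)$ commutes with the $\Sym_k$-action on symmetric $k\times k$ matrices. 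By Schur's lemma it is block-diagonalized by the isotypic decomposition of this representation, and its spectral radius is controlled by a small number of scalar eigenvalues, one per irrep. It suffices to check the 2-dimensional trivial sector spanned by $\{\id_k,\bJ_k\}$ and each of the $\Sym_k$-orthogonal ``transverse'' sectors.

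The transverse analysis is where the threshold $\beta_{\sp}$ appears. For a perturbation $\bE$ with $\bE\bfone_k=0$, the first factor simplifies to $d\tbM=\beta(\id_k+\bM^*)^{-1}\bE(\id_k+\bM^*)^{-1}=\beta\bE$, since $\bM^*$ annihilates the transverse subspace. For the outer factor one linearizes the MMSE-type map at $\tbM^*=\rho_0\bJ_k$: there the exponential tilt in $\tq_a$ is constant on the simplex (because $\<\bw,\bJ_k\bw\>=1$), so the tilted posterior coincides with the Dirichlet prior, and $\partial_{\tbm}\tsF=\sqrt{\beta}\,{\rm Cov}_{\tq_0}(\bw)=\sqrt{\beta}[k(k\nu+1)]^{-1}\bPp$. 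Symmetry arguments show that the $\partial_{\tbQ}\tsF$ and $\tbM^{1/2}\bz$ contributions vanish in the transverse sector, and after the outer product in (\ref{eq:FirstSE}) and multiplication by $\delta$, each transverse eigenvalue equals $\delta\beta^2/[k(k\nu+1)]^2=(\beta/\beta_{\sp})^2$, which is $<1$ precisely when $\beta<\beta_{\sp}$. In the trivial sector, contraction is verified by a direct scalar computation using the explicit fixed-point equations of Lemma~\ref{lemma:Uninf_TAP}, consistent with Proposition~\ref{propo:Bayes} which already identifies $\bM^*$ as a local minimum of $\RS$ in the $\Sym_k$-invariant subspace for $\beta<\beta_{\sp}$.

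The main obstacle is the apparent non-smoothness of $\tbM\mapsto\tbM^{1/2}$ at the rank-deficient $\tbM^*$, which threatens to break the linearization argument in the transverse directions. This is resolved by the MMSE representation $\tsF(\tbM\bw+\tbM^{1/2}\bz;\tbM)=\sqrt{\beta}\,\E[\bw\mid\tbM^{1/2}\bw+\bz]$: the joint law of $(\bw,\tbM^{1/2}\bw+\bz)$ depends on $\tbM$ only through $\tbM$ itself (via the effective SNR), so the iteration $\bM_{t+1}=\delta\beta\,\E\bigl[\E[\bw\mid\tbM^{1/2}\bw+\bz]^{\otimes 2}\bigr]$ is smooth in $\bM_t$ on the entire PSD cone and the derivative computation above is well-defined. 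With $\rho(\bD\Phi(\bM^*))<1$ secured, a standard Banach contraction / Hartman--Grobman argument gives the advertised local attractivity of $\bM^*$ under state evolution and completes the proof.
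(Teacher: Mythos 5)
Your overall architecture matches the paper's: linearize state evolution at $\bM^*$, exploit $\Sym_k$-equivariance to reduce to the symmetric and transverse sectors, identify the transverse eigenvalue as $\delta\beta^2/[k(k\nu+1)]^2=(\beta/\beta_{\sp})^2$, and transfer to AMP via Theorem~\ref{thm:SE} with a pseudo-Lipschitz test function. The final number is correct, but the mechanism you give for it is wrong, and the error sits exactly at the delicate point of the proof. You claim that the $\tbM^{1/2}\bz$ contribution vanishes in the transverse sector and that the eigenvalue comes from $\partial_{\tbm}\tsF$ acting on the perturbation $\tbE\bw$ of the argument. In fact the opposite is true. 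The contribution you keep dies at first order: the linear term in the outer product is $2[\,\tsF^*\otimes \E(\partial_{\tbm}\tsF\cdot\tbE\bw)\,]_s$, and $\E[\tbE\bw]=\tbE\bfone_k/k=0$ for transverse $\tbE$, while $\E[(\partial_{\tbm}\tsF\cdot\tbE\bw)^{\otimes2}]=O(\|\tbE\|^2)$ is negligible. The contribution you discard is the whole story: since $\tbM^*=\rho_0\bJ_k$ is rank one, a transverse PSD perturbation changes the square root by $(\rho_0\bJ_k+\tbE)^{1/2}-\sqrt{k\rho_0}\,\bP=\tbE^{1/2}+O(\|\tbE\|)$, which is of order $\|\tbE\|^{1/2}$; the induced fluctuation $\partial_{\tbm}\tsF\cdot\tbE^{1/2}\bz$ of $\tsF$ is $O(\|\tbE\|^{1/2})$, and its \emph{square} in the outer product, $\E[(\partial_{\tbm}\tsF\cdot\tbE^{1/2}\bz)^{\otimes2}]=\mathrm{Cov}_{\tq_0}(\bw)\,\tbE\,\mathrm{Cov}_{\tq_0}(\bw)=\tbE/[k(k\nu+1)]^2$, is first order in $\tbE$ and is precisely what produces the eigenvalue $(\beta/\beta_{\sp})^2$ after multiplying by $\delta\beta$ and composing with the factor $\beta$ from the inner map. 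This is exactly what the paper's computation with $\bC_{\bDelta}^t$ and the identity $(\bC_{\bDelta}^t)^2=\tbDelta_t^{(2)}+o(\bDelta)$ captures. If your vanishing claim were literally true, the transverse linearization would be identically zero and the fixed point would be stable for all $\beta$, contradicting the sharpness of the threshold (the appendix proves stability \emph{if and only if} $\beta\le\beta_{\sp}$).

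Your "main obstacle" paragraph actually contains the correct fix, but you do not follow through on it: once you observe that $G(\tbM)=\E\big[\E[\bw\mid\by]^{\otimes2}\big]$ depends only on $\tbM$ and is smooth on the PSD cone, its derivative must be computed as the MMSE/covariance sandwich $DG(\tbM^*)[\tbE]=\E\big[\mathrm{Cov}(\bw\mid\by)\,\tbE\,\mathrm{Cov}(\bw\mid\by)\big]$, not by differentiating only through the $\tbM\bw$ slot. At the uninformative point the posterior equals the prior, so this gives $\mathrm{Cov}_{\tq_0}(\bw)\tbE\,\mathrm{Cov}_{\tq_0}(\bw)$ and recovers the correct transverse eigenvalue. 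Either adopt that route, or carry out the explicit square-root expansion as the paper does; as written, the step "the $\tbM^{1/2}\bz$ contributions vanish in the transverse sector" is false and the derivation of the threshold does not go through. The remaining ingredients (the symmetric-sector check, and the transfer to the $n^{-1}\|\bm^t\bPp\|_F^2$ limits via state evolution) are fine.
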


\subsection{Stability of the uninformative fixed point}

The next theorem establishes that the uninformative fixed point of the TAP free energy
is a local minimum for all $\beta$ below the spectral threshold $\beta_{\sp}(k,\nu,\delta)$. Since $\beta_{\sBayes}(k,\nu,\delta)\le \beta_{\sp}(k,\nu,\delta)$,
this shows that the instability we discovered in the case of naive mean field is corrected by the TAP free energy.
\begin{theorem}\label{thm:StabilityTAP}
Let $(\br_*,\tbr_*)$ be the uninformative stationary point of the TAP free energy, cf. Lemma \ref{lemma:Uninf_TAP}.
If $\beta<\beta_{\sp}(k,\nu,\delta)$, then there exists $\eps>0$ such that, with high probability
\begin{align}
\lambda_{\min}\left(\left.\nabla^2\cF_{\sTAP}\right|_{(\br_*,\tbr_*)}\right)\ge \eps\, .
\end{align}
\end{theorem}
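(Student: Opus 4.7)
The plan is to compute the Hessian of $\cF_{\sTAP}$ at the uninformative stationary point $(\br_*,\tbr_*)$ given by Lemma \ref{lemma:Uninf_TAP}, and to show that, after decomposing perturbations according to the $\Sym_k$ permutation symmetry of the model, both resulting Hessian blocks are positive definite with a uniform spectral gap $\eps>0$. I would first parametrize perturbations as $\br=\br_*+\bdelta$ and $\tbr=\tbr_*+\tbdelta$, where the constraint $\tbr\bfone_k=\bfone_n$ forces $\tbdelta\bfone_k=\bzero$. Since $\br_*$ and $\tbr_*$ each have the form $\bv\otimes\bfone_k$, the free energy $\cF_{\sTAP}$ is invariant under simultaneous permutations of the $k$ columns of $\br,\tbr$, so its Hessian commutes with the corresponding representation of $\Sym_k$ and block-diagonalizes into a \emph{symmetric sector} (rows of $\bdelta,\tbdelta$ parallel to $\bfone_k$) and a \emph{transverse sector} (rows orthogonal to $\bfone_k$). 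The symmetric sector is a one-dimensional-per-site problem coupled only through scalars built from $\bfone_n^{\sT}\bX\bfone_d$, and is easily shown to be positive definite by a direct calculation paralleling the fixed-point analysis in Lemma \ref{lemma:Uninf_TAP}.

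The transverse sector is where the naive mean field instability lived and is the main object of interest. Its quadratic form contains (i) the diagonal Hessian contributions from the partial Legendre transforms $\psi,\tpsi$, which yield explicit positive-definite blocks computable from the Dirichlet-prior expressions underlying Lemma \ref{lemma:Uninf}; (ii) cross contributions arising because the $\bQ,\tbQ$ arguments of $\psi,\tpsi$ are themselves quadratic functions of $\tbr,\br$; (iii) the bilinear coupling $-2\sqrt{\beta}\sum_{i,a}X_{ia}\<\bdelta_i^\perp,\tbdelta_a^\perp\>$; and (iv) the quadratic expansion of the TAP correction $-\frac{\beta}{2d}\sum_{i,a}\<\br_i,\tbr_a\>^2$. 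The crucial Onsager-type cancellation takes place among (ii), (iii), and (iv): after substituting $\br_{i,*}\propto(\bX^{\sT}\bfone_n)_i\bfone_k$ and $\tbr_{a,*}=\bfone_k/k$, the sum of these contributions effectively replaces $\bX$ in the bilinear coupling by its mean-centered counterpart $(\id_n-\bfone_n\bfone_n^{\sT}/n)\bX$ (up to lower-order corrections), thereby removing the rank-one component of $\bX$ that drove the instability identified in Theorem \ref{thm:Main}.

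Once this reduction is in place, bounded-below positive-definiteness of the transverse Hessian becomes equivalent to a bound on the top singular value of the deflated random matrix $(\id_n-\bfone_n\bfone_n^{\sT}/n)\bX$ viewed between the transverse column subspaces, and a direct computation identifies the critical threshold with $\beta=\beta_{\sp}(k,\delta,\nu)=k(k\nu+1)/\sqrt{\delta}$. I would conclude by invoking the Baik-Ben Arous-P\'ech\'e phase transition for the spiked rectangular model $\bX=(\sqrt{\beta}/d)\bW\bH^{\sT}+\bZ$: since $\E[\bw_a]=\bfone_k/k$, the $\bfone_n$-deflation removes the mean spike, and for $\beta<\beta_{\sp}$ the residual signal remains below the BBP threshold so that the top singular value of the deflated matrix is asymptotically bounded away from the critical value, yielding the uniform $\eps>0$ with high probability. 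The main obstacle is precisely the algebraic bookkeeping for the Onsager cancellation in (ii)--(iv): one has to carefully differentiate $\psi(\br,\bQ)$ and $\tpsi(\tbr,\tbQ)$ through their $\bQ,\tbQ$ arguments and verify that the resulting cross terms, combined with the TAP correction, produce \emph{exactly} the rank-one deflation of $\bX$ that renders the transverse coupling subcritical throughout $\beta<\beta_{\sp}$.
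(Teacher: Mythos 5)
Your proposal follows essentially the same route as the paper's proof: a symmetric/transverse decomposition of the perturbation $(\bdelta,\tbdelta)$ adapted to the permutation symmetry, a second-order expansion of $\cF_{\sTAP}$ in which the TAP correction deflates the $\bfone_n$-direction of $\bX$ in the transverse coupling, and a BBP-type largest-singular-value bound showing the transverse block is positive definite precisely for $\beta<\beta_{\sp}(k,\nu,\delta)$. The one place your expectation of the algebra differs from what actually comes out is that the Onsager cancellation is \emph{not} an exact mean-centering: the transverse Hessian, Eq.~(\ref{eq:HessianFinalFormula}), couples $\bdelta$ and $\tbdelta$ through $\bX^{\sT}\bigl(\id_n-\tfrac{\beta\delta}{k+\delta\beta}\,\bfone_n\bfone_n^{\sT}/n\bigr)$ with deflation coefficient strictly less than one and carries an additional rank-one shift in the $\tbdelta$-diagonal block, so one needs a block positive-definiteness criterion for this partially deflated form (the paper's Lemma~\ref{lemma:GeneralBlock}) together with a contiguity argument reducing the non-Gaussian transverse part $\bW\bPp\bH^{\sT}$ of $\bX$ to a Gaussian spiked model -- but neither of these changes the threshold or the conclusion.
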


\begin{remark}
Let us emphasize that this result is not implied by the state evolution result of Theorem \ref{thm:StateEvolStable},
which only establishes stability in a certain asymptotic sense.
Vice-versa, Theorem \ref{thm:StabilityTAP} does not directly imply Theorem \ref{thm:StateEvolStable}.
\end{remark}

\subsection{Numerical results for TAP free energy}
\label{sec:TAP_numerical}

\begin{figure}[t!]
\phantom{A}
\vspace{-1cm}

\centering
\includegraphics[height=5.5in]{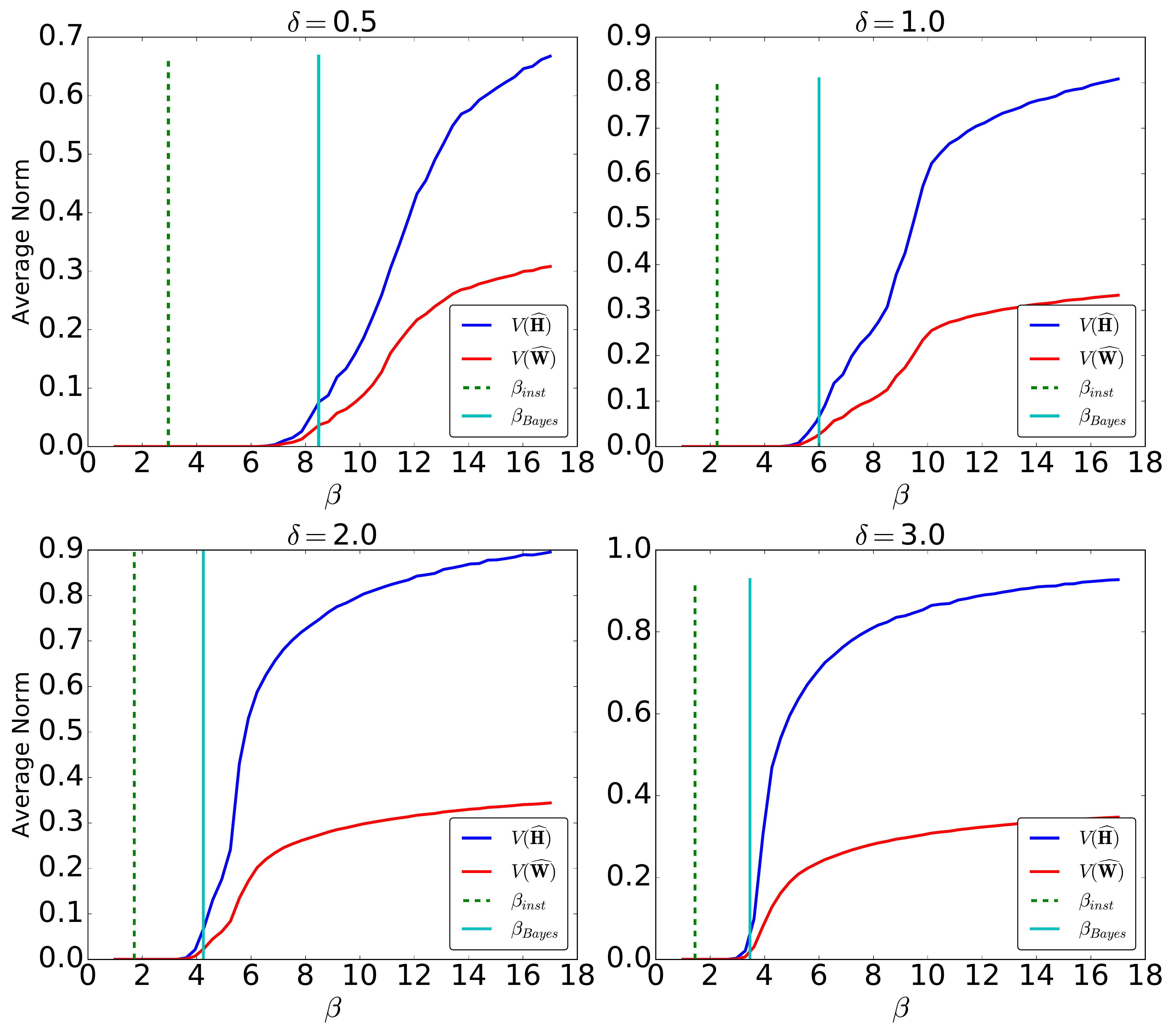}
\caption{Normalized distances $\Norm(\hbH)$, $\Norm(\hbW)$  of the AMP estimates from the uninformative fixed point.  Here,
$k=2$, $d = 1000$ and $n= d\delta$: each data point corresponds to an average over $400$ random realizations.} 
\label{fig:AMP_norm_k_2}
\end{figure}

\begin{figure}[t!]
\phantom{A}\hspace{-1.85cm}\includegraphics[height=2.66in]{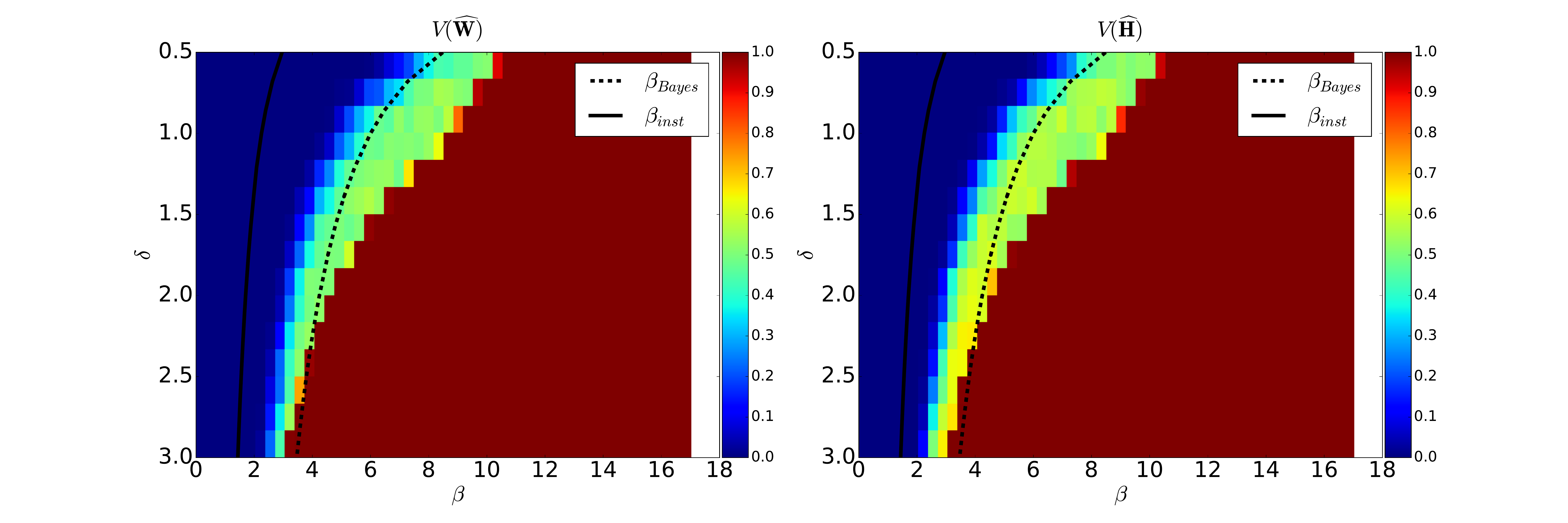}
\caption{Empirical fraction of instances such that  $\Norm(\hbW)\ge
  \eps_0=5\cdot 10^{-3}$, where $\hbW$ is the AMP estimate. Here $k=2$, $d=1000$,   
and for each $(\delta,\beta)$ point on the grid we ran AMP on  $400$ random realizations.}
\label{fig:AMP_norm_k_2_HM}
\end{figure}

\begin{figure}[ht!]
\phantom{A}
\vspace{-1cm}

\centering
\includegraphics[height=5.5in]{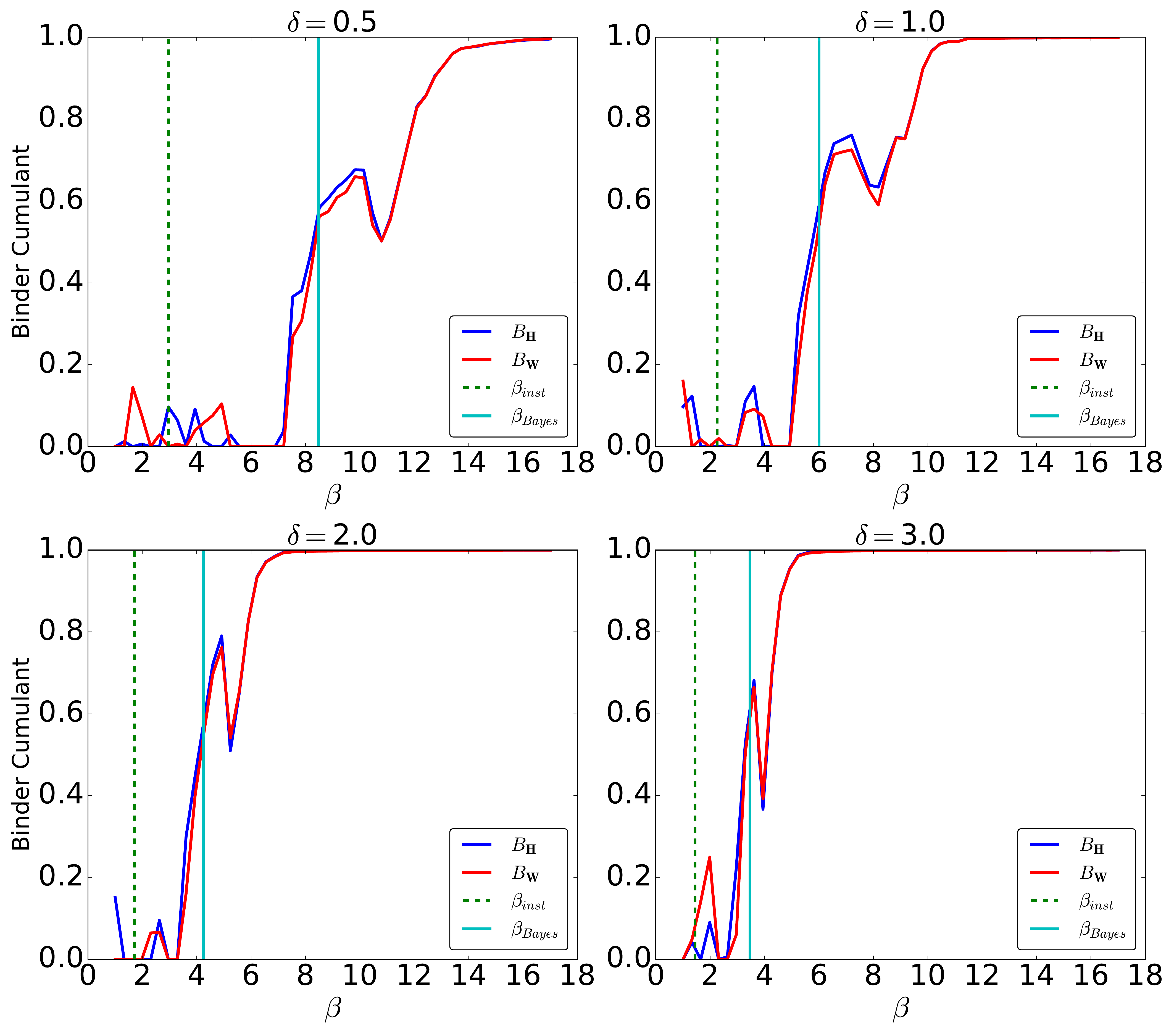}
\caption{Binder cumulant for the correlation between AMP estimates $\hbH$ and the true topics $\bH$, and between $\hbW$ and $\bW$, see
Eq.~(\ref{eq:Binder_Def}). Here $k=2$, $d=1000$,  $n=d\delta$ and estimates are obtained by averaging over $400$ realizations.} 
\label{fig:AMP_corrs_k_2}
\end{figure}

\begin{figure}[ht!]
\phantom{A}\hspace{-1.85cm}\includegraphics[height=2.66in]{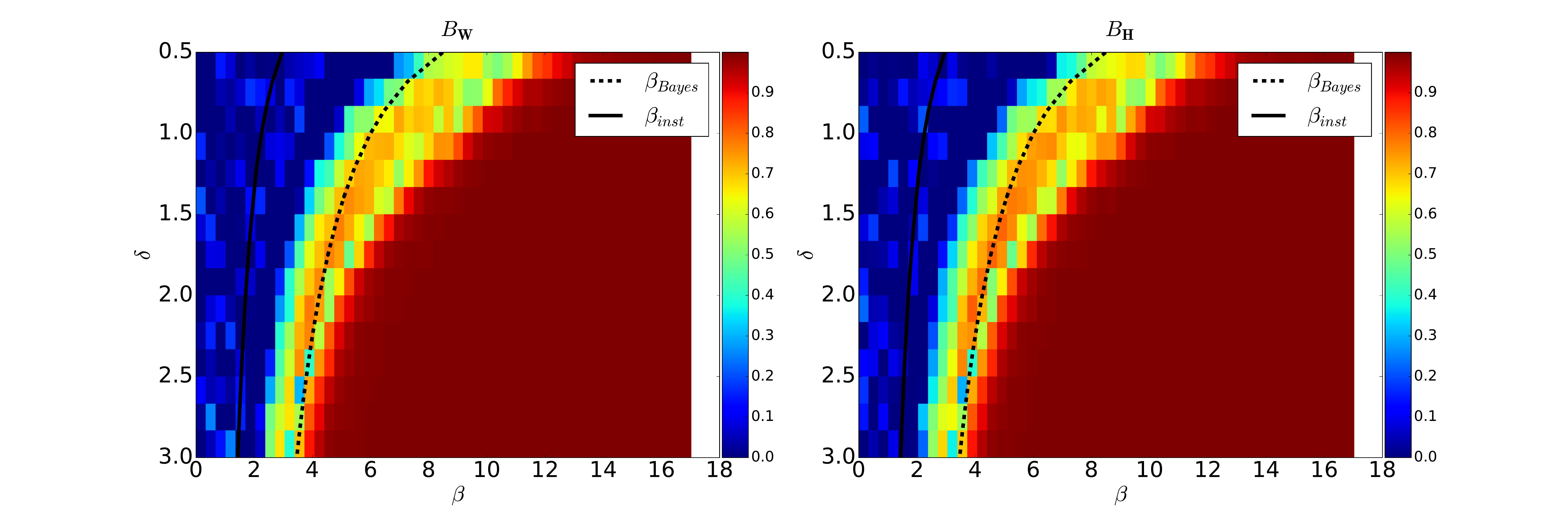}
\caption{Binder cumulant for the correlation between AMP estimates  $\hbW$, $\hbH$ and the true weights and topics $\bW, \bH$.
Here $k=2$,$d=1000$ and estimates are obtained by averaging over $400$ realizations.}
\label{fig:AMP_corrs_k_2_HM}
\end{figure}
In order to confirm the stability analysis at the previous section,  we carried out numerical simulations 
analogous to the ones of Section \ref{sec:NMF_numerical}. We found that the AMP iteration of Eqs.~(\ref{eq:AMP1}), 
(\ref{eq:AMP2}) is somewhat unstable when  $\beta\approx \beta_{\sp}$. In order to remedy  this problem, we used a damped version of the
same iteration, see Appendix \ref{app:Damped}. 
Notice that damping does not change the stability of a local minimum or saddle, 
it merely reduces oscillations due to aggressive step sizes.

We initialize the iteration as for naive mean field, and monitor the same quantities, as in Section
\ref{sec:NMF_numerical}.
 In particular, here we report results on the distance from the uninformative subspace $\Norm(\hbH)$, $\Norm(\hbW)$,
in Figures \ref{fig:AMP_norm_k_2} and \ref{fig:AMP_norm_k_2_HM}, and  the Binder cumulants $\Bind_{\bH}$ and $\Bind_{\bW}$,
measuring the correlation between AMP estimates and the true factors $\bW, \bH$, in Figures \ref{fig:AMP_corrs_k_2},  \ref{fig:AMP_corrs_k_2_HM}.
We focus on the case $k=2$, deferring $k=3$ to the appendices.

In the intermediate regime $\beta\in (\beta_{\inst},\beta_{\sp})$, the behavior of AMP is strikingly different from the one of 
naive mean field. AMP remains close to the uninformative fixed point, confirming that this is a local minimum of the TAP 
free energy. The distance from the uninformative subspace starts growing only at the spectral threshold $\beta_{\sp}$ 
(which coincides, in the present cases, with the Bayes threshold $\beta_{\sBayes}$). At the same point, the correlation with the
true factors $\bW$, $\bH$ also becomes strictly positive.

\section{Discussion}

Bayesian methods are particularly attractive in unsupervised learning problems such as topic modeling. 
Faced with a collection of documents $\bx_1$,\dots $\bx_n$, it is not clear a priori whether they should be 
modeled as convex combinations of topics, or how many topics should be used. Even after a low-rank factorization
$\bX\approx \bW\bH^{\sT}$ is computed, it is still unclear how to evaluate it, or to which extent it should be trusted.

Bayesian approaches provide estimates of the factors $\bW$, $\bH$, but also a probabilistic measure of how much
these estimates should be trusted. To the extent that the posterior concentrates around its mean, this can be considered as 
a good estimate of a true underlying signal.

It is well understood that Bayesian estimates can be unreliable if the prior is not chosen carefully. 
Our work points at a second reason for caution. When variational inference is used for approximating the posterior,
the result can be incorrect even if the data are generated according to the prior. More precisely, we showed that
for a certain regime of parameters, naive mean field `believes' that there is a signal, even if it is information-theoretically
impossible to extract any non-trivial estimate from the data. 

Given that naive mean field is the method of choice for inference with topic models \cite{blei2003latent},
it would be of great interest to remedy this instability. We showed that the TAP free energy provides a better 
mean field approximation, and in particular does not have the same instability. However, this approximation
is also based on the correctness of the generative model, and further investigation is warranted on its robustness.

\section*{Acknowledgements}

H.J. and A.M. were partially supported by grants NSF CCF-1714305 and NSF IIS-1741162. B.G. was supported by Stanford's Caroline and Fabian Pease Graduate Fellowship.

\bibliographystyle{amsalpha}

\newcommand{\etalchar}[1]{$^{#1}$}
\providecommand{\bysame}{\leavevmode\hbox to3em{\hrulefill}\thinspace}
\providecommand{\MR}{\relax\ifhmode\unskip\space\fi MR }
\providecommand{\MRhref}[2]{%
  \href{http://www.ams.org/mathscinet-getitem?mr=#1}{#2}
}
\providecommand{\href}[2]{#2}

\addcontentsline{toc}{section}{References}

\newpage
\appendix

\section{Some remarks on alternating minimization}

Let $f: \reals^{n}\times \reals^d \to \reals$ be twice continuously differentiable in an open neighborhood $\Omega_1\times \Omega_2\subseteq\reals^{n}\times\reals^d$ 
of a critical point $(\bx^*, \by^*)$ (i.e. a point for which $\nabla_{(\bx,\by)}f(\bx,\by) = \bzero$). Further assume that, fixing $\bx_0\in \Omega_1$, 
$f(\bx_0,\,\cdot\,)$ is strongly convex with a minimizer in $\Omega_2$, and fixing $\by_0\in\Omega_2$, $f(\,\cdot\,,\by_0)$ is strongly convex 
with a minimizer in $\Omega_1$. 
By taking $\Omega_1$ and $\Omega_2$ sufficiently small, these conditions follow by requiring that the partial Hessians satisfy
$\nabla^2_{\bx}f(\bx^*,\by^*)\succ \bzero$ and $\nabla^2_{\by}f(\bx^*,\by^*)\succ \bzero$ (i.e. they are strictly positive definite).

By strong convexity, the minimizers of $f(\bx_0,\,\cdot\,)$ and $f(\,\cdot\,,\by_0)$ are unique, and 
we can  define the functions $g:\reals^d\to\reals^n$ and  $h:\reals^n\to\reals^d$ by
\begin{align}
h(\bx_0) = \arg\min_{\by\in\Omega_2} f(\bx_0,\by)\, ,\\
g(\by_0) = \arg\min_{\bx\in\Omega_1} f(\bx,\by_0)\, .
\end{align}
We then define the alternating minimization iteration
\begin{align}
\label{eq:alternatemin}
\bx^{t+1} = h(\by^t),\;\;\;\;\;\;
\by^t = g(\bx^t)\, .
\end{align}
If $d=n$ and $h:\Omega_1\to\Omega_2$, $g:\Omega_2\to\Omega_1$ are bijective, we also define the dual iteration
\begin{align}
\label{eq:alternatemin_dual}
\obx^{t+1} = g^{-1}(\oby^t),\;\;\;\;\;\;
\oby^t = h^{-1}(\obx^t)\, .
\end{align}
\begin{lemma}
\label{lemma:hessianstable}
Let $f: \reals^{n}\times \reals^d \to \reals$ by twice continuously differentiable in $\Omega_1\times \Omega_2$,
satisfying the above assumptions. Then the following are equivalent:
\begin{itemize}
\item[{\bf (A1)}] The Hessian $\bH = \nabla^2_{(\bx, \by)} f\big|_{(\bx, \by) = (\bx^*, \by^*)}$  is strictly positive definite.
\item[{\bf (A2)}] $(\bx^*,\by^*)$ is a stable fixed point of the alternate minimization algorithm (\ref{eq:alternatemin}). 
\item[{\bf (A3)}] $f_1(\bx) \equiv \min_{\by\in\Omega_2} f(\bx,\by)$ is strongly convex in a neighborhood of $\bx^*$
(and in particular, $\bx^*$ is a local minimum of $f_1$).
\end{itemize}

Further, if $n=d$ and the matrix $\left.\frac{\partial f}{\partial\bx\partial \by}\right|_{\bx^*,\by^*}$ is invertible,
then the following are equivalent:
\begin{itemize}
\item[{\bf (B1)}] $(\bx^*,\by^*)$ is a stable fixed point of the dual algorithm (\ref{eq:alternatemin_dual}). 
\item[{\bf (B2)}] $f_1(\bx) \equiv \min_{\by\in\Omega_2} f(\bx,\by)$ is strongly concave in a neighborhood of $\bx^*$
(and in particular, $\bx^*$ is a local maximum).
\end{itemize}
\end{lemma}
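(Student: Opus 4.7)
The plan is to reduce all four equivalences to the block structure of the Hessian at $(\bx^*,\by^*)$. Write $\bA=\nabla^2_{\bx\bx}f$, $\bB=\nabla^2_{\bx\by}f$, $\bC=\nabla^2_{\by\by}f$, all evaluated at $(\bx^*,\by^*)$, so that the full Hessian $\bH$ has block rows $(\bA,\bB)$ and $(\bB^\sT,\bC)$, with $\bA\succ 0$ and $\bC\succ 0$ by the standing block-convexity assumption. Since block strong convexity makes the partial minimizers unique near the critical point, the implicit function theorem applied to the optimality condition $\nabla_\by f(\bx,h(\bx))\equiv\bzero$ yields $Dh(\bx^*)=-\bC^{-1}\bB^\sT$, and symmetrically $Dg(\by^*)=-\bA^{-1}\bB$. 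The envelope identity then gives that $f_1(\bx)=f(\bx,h(\bx))$ satisfies $\nabla f_1(\bx^*)=\bzero$ and $\nabla^2 f_1(\bx^*)=\bA-\bB\bC^{-1}\bB^\sT$, the Schur complement of $\bC$ in $\bH$.

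For the equivalence (A1)$\Leftrightarrow$(A3), since $\bC\succ 0$, positive definiteness of $\bH$ is equivalent to positive definiteness of its Schur complement, which by continuity of $\nabla^2 f_1$ (inherited from the $C^2$ hypothesis via smoothness of $h$) is in turn equivalent to strong convexity of $f_1$ on a neighborhood of $\bx^*$. For (A1)$\Leftrightarrow$(A2), I linearize the alternating-minimization map at its fixed point. Up to tracking the time indexing (there is a minor notational inconsistency in the iteration as stated, since $h$ and $g$ go between spaces of different dimension; the natural reading is that one full cycle is the composition $g\circ h$), the Jacobian of one cycle is $T\equiv Dg\cdot Dh=(-\bA^{-1}\bB)(-\bC^{-1}\bB^\sT)=\bA^{-1}\bB\bC^{-1}\bB^\sT$. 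Conjugation by $\bA^{1/2}$ makes $T$ similar to the symmetric PSD matrix $\bK=\bA^{-1/2}\bB\bC^{-1}\bB^\sT\bA^{-1/2}$, so by Hartman--Grobman the local stability condition $\rho(T)<1$ reads $\bK\prec\id$, i.e.\ $\bA-\bB\bC^{-1}\bB^\sT\succ 0$. This is the same Schur-complement condition as above, closing the three-way equivalence.

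The dual statement follows by the same template. Under invertibility of $\bB$ (which is ensured by the $n=d$ hypothesis together with invertibility of the mixed partial), both $Dh$ and $Dg$ are invertible, so $h,g$ are local diffeomorphisms and the dual iteration is well defined with Jacobian $(Dh)^{-1}(Dg)^{-1}=T^{-1}$. Its spectral radius equals $1/\lambda_{\min}(T)=1/\lambda_{\min}(\bK)$, hence (B1), namely $\rho(T^{-1})<1$, is equivalent to $\lambda_{\min}(\bK)>1$, i.e.\ $\bB\bC^{-1}\bB^\sT\succ\bA$, which by the envelope computation is precisely $\nabla^2 f_1(\bx^*)\prec 0$, i.e.\ (B2); strict concavity on a neighborhood then follows from the $C^2$ regularity exactly as in the convex case.

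I do not expect any deep obstacle. The substantive content is entirely Schur-complement and implicit-function-theorem bookkeeping: the single thing that requires care is interpreting the iteration (\ref{eq:alternatemin}) consistently so that the linearization around the fixed point is indeed $Dg\cdot Dh$ and not some transposed or mis-indexed variant, and then upgrading pointwise Hessian conditions at $\bx^*$ to strict definiteness on a neighborhood, which is automatic from twice continuous differentiability of $f_1$.
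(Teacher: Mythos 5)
Your proposal is correct and follows essentially the same route as the paper: implicit-function-theorem computation of $\bD h$ and $\bD g$, identification of the one-cycle Jacobian with $\bH_{\bx\bx}^{-1}\bH_{\bx\by}\bH_{\by\by}^{-1}\bH_{\bx\by}^{\sT}$, symmetrization by conjugation with $\bH_{\bx\bx}^{1/2}$, and reduction of all conditions to the sign of the Schur complement $\bH_{\bx\bx}-\bH_{\bx\by}\bH_{\by\by}^{-1}\bH_{\bx\by}^{\sT}$, which also equals $\nabla^2 f_1(\bx^*)$ by the envelope argument. The notational swap between the definitions of $h,g$ and their use in the iteration that you flag is indeed present in the paper and is handled there the same way you handle it.
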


\begin{proof}
Let 
\begin{align}
\bH = \begin{bmatrix}
\bH_{\bx\bx} & \bH_{\bx\by} \\
\bH_{\bx\by}^\sT & \bH_{\by\by}
\end{bmatrix} = \nabla^2_{(\bx, \by)} f\big|_{(\bx, \by) = (\bx^*, \by^*)}.
\end{align}
\noindent {\bf (A1)}$\equiv${\bf (A2)} We compute the linearization of the iterations in \eqref{eq:alternatemin}
around the fixed point $(\bx^*, \by^*)$. Note that since 
$\bx^*$ is a minimizer of $f(\,\cdot\, ,\by^*)$, using the implicit function theorem for 
the Jacobian of the update rule for $\bx$ in \eqref{eq:alternatemin} we have
\begin{align}
\frac {\partial^2 f}{\partial \bx \partial \by}\bigg|_{(\bx, \by) = (\bx^*, \by^*)} + \left[\frac{\partial^2 f}{\partial \bx^{2}}\bigg|_{(\bx, \by) = (\bx^*, \by^*)}\right]\left[\bD h(\by^*)\right] = 0.
\end{align}
Hence, we get
\begin{align}
\bD h(\by^*)= - \left[\left(\frac{\partial^2 f}{\partial \bx^{2}}\right)^{-1} \left(\frac {\partial^2 f}{\partial \bx \partial \by}\right)\right]_{(\bx, \by) = (\bx^*, \by^*)} = -\bH_{\bx\bx}^{-1}\bH_{\bx\by}.
\end{align}
Similarly, for the Jacobian of the update rule for $\by$ in \eqref{eq:alternatemin} we have 
\begin{align}
\bD g(\bx^*)= - \left[\left(\frac{\partial^2 f}{\partial \by^{2}}\right)^{-1} \left(\frac {\partial^2 f}{\partial \by \partial \bx}\right)\right]_{(\bx, \by) = (\bx^*, \by^*)} = -\bH_{\by\by}^{-1}\bH_{\bx\by}^\sT.
\label{eq:JacobianG}
\end{align}
Hence, $(\bx^*, \by^*)$ is stable if and only if the operator 
\begin{align}
\bL = \bD h(\bx^*) \cdot \bD g(\by^*) = \bH_{\bx\bx}^{-1}\bH_{\bx\by} \bH_{\by\by}^{-1}\bH_{\bx\by}^\sT\, ,
\end{align}
has spectral radius
\begin{align}
\sigma(\bL)  \equiv \max_{i}\left|\lambda_i\left(\bL\right)\right| < 1.
\end{align}
Since $f(\,\cdot\,,\bx^*)$ is strongly convex, the matrices $\bH_{\bx\bx}, \bH_{\bx\bx}^{-1}$ are positive definite.
Hence, the eigenvalues of $\bH_{\bx\bx}^{-1}\bH_{\bx\by}\bH_{\by\by}^{-1}\bH_{\bx\by}^\sT$ are 
real and equal to the eigenvalues of the symmetric positive semi-definite matrix $\bH_{\bx\bx}^{-1/2}\bH_{\bx\by}\bH_{\by\by}^{-1}\bH_{\bx\by}^\sT\bH_{\bx\bx}^{-1/2}$.  
Therefore, $\sigma(\bL)<1$ if and only if
\begin{align}
\bH_{\bx\bx}^{-1/2}\bH_{\bx\by}\bH_{\by\by}^{-1}\bH_{\bx\by}^\sT\bH_{\bx\bx}^{-1/2} \prec \id_n \iff 
\bH_{\bx\by}\bH_{\by\by}^{-1}\bH_{\bx\by}^\sT \prec \bH_{\bx\bx} \iff \bH_{\bx\bx} - \bH_{\bx\by}\bH_{\by\by}^{-1}\bH_{\bx\by}^\sT\succ 0.
\end{align}
Note that since $f(\bx^*,\,\cdot\,)$ is convex, $\bH_{\by\by} \succ 0$. Therefore, $\bH_{\bx\bx} - \bH_{\bx\by}\bH_{\by\by}^{-1}\bH_{\bx\by}^\sT\succ 0$ if and only if $\bH \succ 0$. Hence, the fixed point is stable 
if and only if $\bH \succ 0$ and this completes the proof.

\vspace{0.25cm}

\noindent {\bf (A1)}$\equiv$ {\bf (A3)} By differentiating $f_1(\bz) = f(\bx,g(\bx))$, we obtain
\begin{align}
\left .\frac{\partial^2f_1}{\partial\bx^2}\right|_{\bx^*} &= \left.\frac{\partial^2f}{\partial\bx^2}\right|_{\bx^*,\by^*} +
\left.\frac{\partial^2f}{\partial\bx\partial\by}\right|_{\bx^*,\by^*} \cdot\bD g(\bx^*)\\
& = \bH_{\bx\bx} -\bH_{\bx\by}\bH_{\by\by}^{-1}\bH_{\bx\by}^{\sT}\, ,
\end{align}
where in the last line we used Eq.~(\ref{eq:JacobianG}). Hence $\left .\frac{\partial^2f_1}{\partial\bx^2}\right|_{\bx^*} \succ \bzero$
if and only if $\bH_{\bx\bx} \succ \bH_{\bx\by}\bH_{\by\by}^{-1}\bH_{\bx\by}^\sT$ which, by Schur's complement formula is equivalent to
$\bH\succ \bzero$. Further, since $f\in C^2(\reals^{n+d})$,   $\left .\frac{\partial^2f_1}{\partial\bx^2}\right|_{\bx^*} \succ \bzero$
if and only if  $\frac{\partial^2f_1}{\partial\bx^2} \succ \bzero$ in a neighborhood of $\bx^*$.

\vspace{0.25cm}

\noindent {\bf (B1)}$\equiv$ {\bf (B2)} Linearizing the iteration (\ref{eq:alternatemin_dual}), we get 
that $(\bx^*,\by^*)$ is a stable fixed point if and only if the operator 
\begin{align}
\bL^{-1} = \bD g(\bx^*)^{-1} \bD h(\by^*)^{-1}=(\bH_{\bx\by}^\sT)^{-1}\bH_{\by\by}\bH_{\bx\by}^{-1}\bH_{\bx\bx}
\end{align}
has spectral radius
\begin{align}
\sigma(\bL^{-1}) \equiv \max_{i\le n}\left|\lambda_i\left(\bL^{-1}\right)\right| < 1.
\end{align}
Using the fact that $\bH_{\bx\bx}\succ \bzero$, we have that $\sigma(\bL^{-1})<1$ if and only if
\begin{align}
\bH_{\bx\bx}^{1/2}(\bH^\sT_{\bx\by})^{-1}\bH_{\by\by}\bH_{\bx\by}^{-1}\bH_{\bx\bx}^{1/2} \prec \id_n \iff 
(\bH_{\bx\by}^{\sT})^{-1}\bH_{\by\by}\bH_{\bx\by}^{-1} \prec \bH^{-1}_{\bx\bx} \iff \bH_{\bx\bx} - \bH_{\bx\by}\bH_{\by\by}^{-1}\bH_{\bx\by}^\sT\prec \bzero.
\end{align}
As shown above, the last condition is equivalent to $\left.\frac{\partial^2 f_1}{\partial\bx^2}\right|_{\bx^*}\prec \bzero$,
and by continuity of the Hessian, this is equivalent to $f_1$ being strongly concave in a neighborhood of $\bx^*$.
\end{proof}

\section{Proof of Proposition \ref{propo:Toy}}

It is useful to first prove a simple random matrix theory remark.
\begin{lemma}\label{lemma:Submatrix}
For $S\subseteq [n]$, let $\bX_{S,S}$ be the submatrix of $\bX$ with rows and columns with index in $S$. Then, for any $\eps\in [0,1)$, 
the following holds with high probability:
\begin{align}
\min\big\{\lambda_{\max}(\bX_{S,S}):\, |S|\ge n(1-\eps)\big\}  \ge 2\sqrt{1-\eps}-o_n(1)\, .
\end{align}
\end{lemma}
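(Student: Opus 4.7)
\textbf{Proof plan for Lemma \ref{lemma:Submatrix}.}

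The first step is to reduce to the noise part. Since $\bX = (\lambda/n)\bsigma\bsigma^{\sT}+\bZ$ and $\lambda\ge 0$, the principal submatrix $\bX_{S,S} = (\lambda/n)\bsigma_S\bsigma_S^{\sT}+\bZ_{S,S}$ satisfies $\bX_{S,S}\succeq \bZ_{S,S}$, so $\lambda_{\max}(\bX_{S,S})\ge \lambda_{\max}(\bZ_{S,S})$ for every $S$. It therefore suffices to prove the claimed lower bound for $\bZ_{S,S}$ uniformly over $|S|\ge n(1-\eps)$. The second key observation is distributional: for each fixed $S$ with $|S|=m$, the submatrix $\bZ_{S,S}$ has the same law as $\sqrt{m/n}\,\bG_m$, where $\bG_m\sim\GOE(m)$ is in its standard normalization (off-diagonal entries $\normal(0,1/m)$), since the entries of $\bZ$ have variance $1/n$ off the diagonal and $2/n$ on it. In particular $\lambda_{\max}(\bZ_{S,S})=\sqrt{m/n}\,\lambda_{\max}(\bG_m)$, whose typical value is $2\sqrt{m/n}\ge 2\sqrt{1-\eps}$ whenever $m\ge n(1-\eps)$.

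The main quantitative input is the large-deviation bound for the \emph{left} tail of the top eigenvalue of the GOE (Ben Arous--Dembo--Guionnet, see e.g.\ \cite{Guionnet}): there exist constants $C,c>0$ such that for any fixed $t\in (0,2)$ and any $m$,
\begin{align}
\prob\bigl(\lambda_{\max}(\bG_m)\le 2-t\bigr)\;\le\; C\,\exp\bigl(-c\,m^2 t^3\bigr)\,.
\end{align}
Crucially, the rate is $m^2$ (not $m$), because forcing the largest eigenvalue downward is a rigid event on the entire spectrum. Given any $\delta>0$, applying this with $t=\delta\sqrt{n/m}\ge \delta$ yields, for each fixed $S$ with $|S|=m\ge n(1-\eps)$,
\begin{align}
\prob\bigl(\lambda_{\max}(\bZ_{S,S})\le 2\sqrt{m/n}-\delta\bigr)
\;\le\; C\exp(-c\, m^2 \delta^3)
\;\le\; C\exp(-c'\, n^2 \delta^3)\,.
\end{align}

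The last step is a union bound over $S$. There are at most $\sum_{m\ge n(1-\eps)}\binom{n}{m}\le (n+1)2^n$ such subsets, so the probability that \emph{any} admissible $S$ violates the bound is at most $(n+1)2^n\cdot C\exp(-c'n^2\delta^3)$, which tends to $0$ as $n\to\infty$ for every fixed $\delta>0$. Since $2\sqrt{m/n}\ge 2\sqrt{1-\eps}$ whenever $m\ge n(1-\eps)$, on this high-probability event every admissible $S$ satisfies $\lambda_{\max}(\bX_{S,S})\ge \lambda_{\max}(\bZ_{S,S})\ge 2\sqrt{1-\eps}-\delta$. Taking $\delta=\delta_n\to 0$ slowly (e.g.\ $\delta_n = 1/\log n$) packages this into the single statement with an $o_n(1)$ slack.

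The only real obstacle is step three: a naive concentration bound of the form $\exp(-cm)$ for the left tail of $\lambda_{\max}(\bG_m)$ would be \emph{insufficient}, since it cannot beat the combinatorial $2^n$ cost of the union bound. The proof genuinely relies on the fact that the left tail of the GOE top eigenvalue has the enhanced rate $\exp(-cm^2 t^3)$, which is a non-trivial large-deviation result but is by now standard. Everything else is bookkeeping.
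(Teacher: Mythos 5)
Your proof is correct and follows essentially the same route as the paper's: drop the positive semidefinite spike, identify $\bZ_{S,S}$ as a scaled $\GOE$ matrix, invoke the speed-$n^2$ left-tail large deviation bound for $\lambda_{\max}$, and union bound over the at most $2^n$ admissible subsets. The paper additionally reduces to $|S|=n(1-\eps)$ exactly via monotonicity of $\lambda_{\max}(\bX_{S,S})$ in $S$ (a cosmetic simplification), and you correctly single out the essential point that the $e^{-c(\delta)n^2}$ rate, rather than $e^{-cn}$, is what makes the union bound go through.
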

\begin{proof}
Without loss of generality we can assume $\bX\sim\GOE(n)$ (because the rank-one deformation cannot decrease the maximum eigenvalue),
and $|S|=n(1-\eps)$ (because  $\lambda_{\max}(\bX_{S,S})$ is non-decreasing in $S$). Note that $\bX_{S,S}$ is distributed as $\sqrt{1-\eps}$ times a 
$\GOE(n(1-\eps))$ matrix. Large deviation bounds on the eigenvalues of $\GOE$ matrices imply that, for any $\delta>0$, there exists $c(\delta)>0$ such
that
\begin{align}
\prob\big(\lambda_{\max}(\bX_{S,S})\le 2\sqrt{1-\eps}-\delta\big)\le 2\, e^{-c(\delta)n^2} \, ,
\end{align}
for all $n$ large enough. The claim follows by union bound since there is at most $2^n$ such sets $S$.
\end{proof}

\begin{proof}[Proof of Proposition \ref{propo:Toy}]
First notice that Lemma \ref{lemma:Submatrix} continues to hold if $\bX$ is replaced by $\bX_0$ since
$\|\bX_{S,S}-(\bX_0)_{S,S}\|_{\op}\le \max_{i\le n}|X_{ii}|\le 4\sqrt{\log n/n}$ (where the last bound holds with high probability 
since $(X_{ii})_{i\le n}\sim\normal(0,2/n)$.

Note that $\nabla\cF(\bm)_i=\pm \infty$ if $m_i= \pm 1$, whence any local minimum must be in the interior of $[-1,+1]^n$.
Let $\bm\in (-1,-1)^n$ be a local minimum of $\cF(\,\cdot\,)$. By the second-order minimality conditions,  we must have
\begin{align}
\nabla^2\cF(\bm) = -\lambda\bX_0+ \diag\left((1-m_i^2)^{-1}_{i\le n}\right)\succeq \bzero\, .
\end{align}
Denote by $m_{(1)}$, $m_{(2)}$, $\dots$ the entries of $\bm$ ordered by decreasing absolute value, and let
$S_{\ell}$ be the set of indices corresponding to entries $m_{(\ell+1)},\dots, m_{(n)}$. Finally let $\bv^{(\ell)}\in\reals^n$ be the eigenvector corresponding
to the largest eigenvalue of $(\bX_0)_{S_{\ell},S_{\ell}}$ (extended with zeros outside $S_{\ell}$). We then have, for $\ell=n\eps$
\begin{align}
0&\le \<\bv^{(\ell)},\nabla^2\cF(\bm) \bv^{(\ell)}\> \\
&= -\lambda\cdot\lambda_{\max}\big((\bX_0)_{S_{\ell},S_{\ell}}\big)+ \sum_{i\in S_{\ell}}\frac{(v^{(\ell)}_i)^2}{1-m_i^2}\\
&\le -2\lambda\sqrt{1-\eps} +\frac{1}{1-m_{(n\eps)}^2}+o_n(1)\, .
\end{align}
The last inequality holds with high probability by Lemma \ref{lemma:Submatrix}. Inverting it, we get
\begin{align}
m^2_{(n\eps)}\ge 1-\frac{1}{2\lambda\sqrt{1-\eps}}-o_n(1)\, ,
\end{align}
and therefore 
\begin{align}
\frac{1}{n}\|\bm\|_2^2\ge \eps\left(1-\frac{1}{2\lambda\sqrt{1-\eps}}\right)-o_n(1).
\end{align}
The claim follows by taking $\eps=c_1$ a small constant (for which the right-hand side is lower bounded by $c_0$ for all $\lambda\ge 1$),
or $\eps = c_2(2\lambda-1)$ (for which the right-hand side is lower bounded by $c_0(2\lambda-1)^2$).
\end{proof}

\section{Information-theoretic limits}
\label{app:IT}

\subsection{Proof of Lemma \ref{lemma:IT-Threshold-Z2}}
\label{app:LemmaITZ2}

Let $\hbQ:\reals^{n\times n}\mapsto\reals^{n\times n}$, $\bX\mapsto \hbQ(\bX)$ be any estimator of $\bsigma\bsigma^{\sT}$.
By \cite[Theorem 1.6]{deshpande2017asymptotic}, for $\lambda\in [0,1]$, 
\begin{align}
\lim\inf_{n\to\infty}\frac{1}{n^2}\E\Big\{\big\|\bsigma\bsigma^{\sT}-\hbQ(\bX)\big\|_F^2\Big\} \ge  1\, .
\end{align}
Given $\hbsigma:\reals^{n\times n}\to\reals^n\setminus \{\bzero\}$, set 
\begin{align}
\hbQ(\bX) = c\, \frac{\hbsigma(\bX)\hbsigma(\bX)^{\sT}}{\|\hbsigma(\bX)\|_2^2}
\, ,\;\;\;\;\; c = \E\left(\frac{\<\hbsigma(\bX),\bsigma\>^2}{\|\hbsigma(\bX)\|_2^2}\right)\, .
\end{align}
By a simple calculation
\begin{align}
1-o_n(1)\le \frac{1}{n^2}\E\Big\{\big\|\bsigma\bsigma^{\sT}-\hbQ(\bX)\big\|_F^2\Big\} = 1-\E\left(\frac{\<\hbsigma(\bX),\bsigma\>^2}{\|\hbsigma(\bX)\|_2^2}\right)^2
\, ,
\end{align}
which obviously implies the claim.

\subsection{Proof of Proposition \ref{propo:Bayes}}
\label{app:ProofBayes}

We begin by providing the expression for the free energy functional $\RS(\bM;k,\delta,\nu)$ of Theorem \ref{thm:IT_Limit},
which is obtained by specializing the expression in \cite{miolane2017fundamental}.
Recall the functions $\phi(\,\cdots\, )$, $\tphi(\,\cdots\, )$, introduced in Eq.~(\ref{eq:densityforms_main}). 
We then define  a function $\RS_0(\,\cdot\,,\,\cdot\,;k,\delta,\nu):\bbS_k\times\bbS_k\to\reals$ by
\begin{align}
\RS_0(\bM,\tbM;k,\delta,\nu)&= \frac{\beta\delta(\nu+1)}{k\nu+1}+
\frac{1}{2\beta}\<\bM,\tbM\>\label{eq:PsiGeneral}\\
&-\E\,\phi(\bM\bh+\bM^{1/2}\bz;\bM)-\delta\,\E\, \tphi(\tbM\bw+\tbM^{1/2}\bz;\tbM)\, ,\nonumber
\end{align}
where expectations are with respect to $\bz\sim\normal(0,\id_k)$ independent of $\bh\sim\normal(0,\id_k)$ and $\bw\sim\Dir(\nu;k)$.
We then have
\begin{align}
\RS(\bM;k,\delta,\nu) = \sup_{\tbM\in\bbS_k}\RS_0(\bM,\tbM;k,\delta,\nu)\, .
\end{align}
Further, the function  $\RS_0(\bM,\tbM;k,\delta,\nu)$  on Eq.~(\ref{eq:PsiGeneral}) is separately strictly concave in $\bM$ and $\tbM$,
and in particular the last supremum is uniquely achieved at a point $\tbM = \tbM(\bM)$.

A simple calculation shows that
\begin{align}
\frac{\partial\RS_0}{\partial\bM}(\bM,\tbM;k,\delta,\nu) & = \frac{1}{2\beta}\left\{\tbM - \E\Big\{\sF(\bM\bh+\bM^{1/2}\bz;\bM)^{\otimes 2}\Big\}\right\}\, ,
\label{eq:PdPsi1}\\
\frac{\partial\RS_0}{\partial\tbM}(\bM,\tbM;k,\delta,\nu) & = \frac{1}{2\beta}\left\{\bM - \delta\E\Big\{\tsF(\tbM\bw+\tbM^{1/2}\bz;\tbM)^{\otimes 2}\Big\}\right\}\, .
\label{eq:PdPsi2}
\end{align}
By Lemma \ref{lemma:UsefulFormulae}, for $\bM = a\bJ_k$, $\tbM = b\bJ_k$, we have 
\begin{align}
\frac{\partial\RS_0}{\partial\bM}(\bM,\tbM;k,\delta,\nu) & = \frac{1}{2\beta}\left\{b\bJ_k- \frac{\beta a}{1+ka}\bJ_k\right\}\, ,\\
\frac{\partial\RS_0}{\partial\tbM}(\bM,\tbM;k,\delta,\nu) & = \frac{1}{2\beta}\left\{a\bJ_k - \frac{\beta\delta}{k^2}\bJ_k\right\}\, .
\end{align}
Therefore, this is a stationary point of $\RS_0$ provided $a = \beta\delta/k^2$ and $b=\beta^2\delta/(k(k+\beta\delta))$
(in particular, $\bM = \bM^*$).
Since $\RS(\bM;k,\delta,\nu) = \RS_0(\bM,\tbM(\bM);k,\delta,\nu)$, for $\tbM(\,\cdot\, )$ a differentiable function, it also
follows that $\bM_*$ is a stationary point of $\RS$.

In order to prove that $\bM^*$ is a local minimum of $\RS$ for $\beta<\beta_{\sp}$, we apply Lemma \ref{lemma:hessianstable}
to the function $f(\bx,\by)= -\RS_0(\bx ,\by;k,\delta,\nu)$, whence $f_1(\bx) = -\RS(\bx;k,\delta,\nu)$. It follows from Eqs. ~(\ref{eq:PdPsi1})
and (\ref{eq:PdPsi2}) that the dynamics (\ref{eq:alternatemin_dual}) then coincides with the 
state evolution dynamics discussed in Section \ref{sec:StateEvol}, namely
\begin{align}
\bM_{t+1} & = \delta\,  \E\Big\{\tsF(\tbM_t\bw+\tbM_t^{1/2}\bz;\tbM_t)^{\otimes 2}\Big\}\, ,\\
\tbM_{t} & = \E\Big\{\sF(\bM_t\bh+\bM_t^{1/2}\bz;\bM_t)^{\otimes 2}\Big\}\, . 
\end{align}
Hence, the claim follows immediately from Theorem \ref{thm:StateEvolStable} and Lemma \ref{lemma:hessianstable}.

Finally, we prove that Eq.~(\ref{eq:TrivialEst}) holds for $\beta<\beta_{\sBayes}$. Note that the estimator
$\hbF_n(\bX)$ that minimizes the left-hand side is $\hbF_n(\bX) = \E\{\bW\bH^{\sT}|\bX\}$.
By \cite[Proposition 29]{miolane2017fundamental}, for $\beta<\beta_{\sBayes}$,
\begin{align}
\lim_{n\to\infty}\frac{1}{nd}\E\left\{\left\|\bW\bH^{\sT}-\E\{\bW\bH^{\sT}|\bX\}\right\|_F^2\right\} &= 
\lim_{n\to\infty}\frac{1}{nd}\E\left\{\left\|\bW\bH^{\sT}\right\|_F^2\right\}-\frac{1}{\beta^2\delta}\Tr(\bM^*\tbM^*)\\
&= \lim_{n\to\infty}\frac{1}{nd}\E\left\{\left\|\bW\bH^{\sT}\right\|_F^2\right\}-\frac{\beta\delta}{k(\beta\delta+k)}\, .\label{eq:CBA}
\end{align}
On the other hand,
\begin{align}
\lim_{n\to\infty}\frac{1}{nd}\E\left\{\left\|\bW\bH^{\sT} -c\bfone_n(\bX^{\sT}\bfone_n)^\sT\right\|_F^2\right\}& = \lim_{n\to\infty}\frac{1}{nd}\E\left\{\left\|\bW\bH^{\sT}\right\|_F^2\right\}-2c\, A+c^2 B\, . \label{eq:ABC}
\end{align}
Here, we defined $A$ via
\begin{align}
A &\equiv\lim_{n\to\infty}\frac{1}{nd}\E\Tr\Big(\bH\bW^{\sT}\bfone_n(\bX^{\sT}\bfone_n)^\sT\Big)\\
& = \lim_{n\to\infty}\frac{\sqrt{\beta}}{nd^2}\E\Tr\Big(\bW^{\sT}\bfone_n\bfone_n^{\sT}\bW\bH^{\sT}\bH\Big)\\
& = \sqrt{\beta} \delta \Tr\Big(\frac{\bfone_k}{k}\frac{\bfone_k^{\sT}}{k}\id_k\Big) = \frac{\sqrt{\beta}\delta}{k}\, ,\
\end{align}
(where we used $\bW^{\sT}\bfone_n/n\to \bfone_k/k$ and $\bH^{\sT}\bH/d\to \id_k$ by the law of large numbers)
and
\begin{align}
B &\equiv \lim_{n\to\infty} \frac{1}{nd}\E\Tr\big(\bfone_n(\bX^{\sT}\bfone_n)^{\sT}(\bX^{\sT}\bfone_n)\bfone_n\big)\\
&= \lim_{n\to\infty} \frac{1}{d}\E\big\<\bfone_n,\bX\bX^{\sT}\bfone_n\big\>\\
& = \lim_{n\to\infty} \frac{1}{d}\E\left\{\frac{\beta}{d^2}\Tr\big((\bW^{\sT}\bfone_n)^{\sT}\bH^{\sT}\bH(\bW^{\sT}\bfone_n)\big)+n\right\}\\
& = \beta\delta^2 \Tr\Big(\frac{\bfone_k^{\sT}}{k}\frac{\bfone_k}{k}\id_k\big)+\delta = \frac{\beta\delta^2}{k}+\delta\, .
\end{align}
Setting $c=A/B$, and substituting in Eq.~(\ref{eq:ABC}), we obtain
\begin{align}
\lim_{n\to\infty}\frac{1}{nd}\E\left\{\left\|\bW\bH^{\sT} -c\bfone_n(\bX^{\sT}\bfone_n)^\sT\right\|_F^2\right\}& = 
\lim_{n\to\infty}\frac{1}{nd}\E\left\{\left\|\bW\bH^{\sT}\right\|_F^2\right\}-\frac{\beta\delta}{k(\beta\delta+k)}\, ,
\end{align}
which coincides with Eq.~(\ref{eq:CBA}) as claimed.

\section{Naive Mean Field: Analytical results}

\subsection{Preliminary definitions}

The functions $\sF, \tsF:\reals^k\times \reals^{k\times k}\to\reals^k$ are defined in Eq.~(\ref{eq:defF_main}). Explicitly
\begin{align}
\label{eq:defF}
\sF(\by; \bQ) &\equiv\sqrt{\beta}\, \frac{\int \bh \, e^{\<\by,\bh\>-\<\bh,\bQ\bh\>/2} \, q_0(\de \bh)}{\int\, e^{\<\by,\bh\>-\<\bh,\bQ\bh\>/2} \,
              q_0(\de \bh)}\, ,\\
\label{eq:deftF}
\tsF(\hby; \tbQ) &\equiv \sqrt{\beta}\, \frac{\int \bw \, e^{\<\hby,\bw\>-\<\bw,\tbQ\bw\>/2} \, \tq_0(\de \bw)}{\int \, e^{\<\hby,\bw\>-\<\bw,\tbQ\bw\>/2} \,
              \tq_0(\de \bw)}\, ,
\end{align}
where $q_0(\,\cdot\, )$ is the prior distribution of the rows of $\bH$, and $\tq_0(\,\cdot\,)$ is the prior distribution of the rows of $\bW$. 

For $\bQ$ positive semidefinite and symmetric,
$\sF(\by;\bQ)/\sqrt{\beta}$ can be interpreted as the posterior  expectation of $\bh\sim q_0(\,\cdot\,)$, given observations $\by = \bQ\bh+\bQ^{1/2}\bz$, 
where $\bz\sim\normal(0,\id_k)$, and analogously for $\tsF(\hby;\tbQ)$. Explicitly
\begin{align}
\sF(\by; \bQ) = \sqrt{\beta}\,\E\Big\{ \bh \Big|\; \bQ\bh +\bQ^{1/2}\bz = \by\Big\}\, ,\;\;\;\;\;\;\;\;
\tsF(\hby; \tbQ) = \sqrt{\beta}\,\E\Big\{ \bw \Big|\; \tbQ\bw +\tbQ^{1/2}\bz = \hby\Big\}\, .
\end{align}

In our specific application $q_0(\,\cdot\,)$ is $\normal(0,\id_k)$, and  $\tq_0(\,\cdot\,)$ is $\Dir(\nu;k)$, namely
\begin{align}
\label{eq:initmeasures}
q_0(\de\bh) = \frac{1}{(2\pi)^{k/2}}\, e^{-\|\bh\|_2^2/2}\de\bh\, ,\;\;\;\;\;\;\;\;
\tq_0(\de\bw) = \frac{1}{Z(\nu;k)} \prod_{i=1}^kw_i^{\nu-1} \, \oq(\de\bw)\, ,
\end{align}
where $\oq(\,\cdot\,)$ is the uniform measure over the simplex $\sP_1(k) = \{\bw\in\reals^k_{\ge 0}\; :\;\;\<\bw,\bfone_k\> =1\}$. 
In particular, $\sF(\by;\bQ)$ can be computed explicitly, yielding
\begin{align}
\label{eq:sFexplicit}
\sF(\by;\bQ) = \sqrt{\beta}(\id_k+\bQ)^{-1}\by\, .
\end{align}

We also define the second moment functions $\sG, \tsG:\reals^k\times \reals^{k\times k}\to \reals^{k\times k}$ by
\begin{align}
\label{eq:defG}
     \sG(\by; \bQ) &\equiv {\beta}\, \frac{\int \bh^{\otimes 2} \, e^{\<\by,\bh\>-\<\bh,\bQ\bh\>/2} \, q_0(\de \bh)}{\int \, e^{\<\by,\bh\>-\<\bh,\bQ\bh\>/2} \,
        q_0(\de \bh)}\,,\\
        \label{eq:deftG}
     \tsG(\tby; \tbQ) &\equiv {\beta}\, \frac{\int \bw^{\otimes 2} \, e^{\<\tby,\bw\>-\<\bw,\tbQ\bw\>/2} \, \tq_0(\de \bw)}{\int \, e^{\<\tby,\bw\>-\<\bw,\tbQ\bw\>/2} \,
         \tq_0(\de \bw)}\,.
\end{align}
Again, $\sG(\,\cdots\,)$ can be written explicitly as
\begin{align}
\label{eq:sGexplicit}
 \sG(\by; \bQ) & = \beta\Big\{(\id_k+\bQ)^{-1}\by\by^{\sT}(\id_k+\bQ)^{-1}+(\id_k+\bQ)^{-1}\Big\}\, .
\end{align}

\subsection{Derivation of the iteration (\ref{eq:NMF1_Main}), (\ref{eq:NMF2_Main})}
\label{app:NMF_ansatz}

Let $\mathcal D$, the set of joint distributions
$\hat{q}\left(\bW,\bH\right)$ that factorize over the rows of $\bW, \bH$, namely 
\begin{align}
\hat{q}\left(\bW,\bH\right) = q\left(\bH\right)\tilde q\left(\bW\right) = \prod_{i=1}^d q_i\left(\bh_i\right)\prod_{a=1}^n\tilde q_a\left(\bw_a\right)\, .
\end{align}
The goal in variational inference is to find the distribution in $\cD$  that minimizes the Kullback-Leibler  (KL) divergence with respect to the 
actual posterior distribution of $\bX,\bW$ given $\bX$
\begin{align}
\hat{q}^*\left(\,\cdot\,,\,\cdot\,\right) = \arg\min_{\hq\in\cD}\KL\left(\hat{q}\left(\,\cdot\,,\,\cdot\,\right)||\;p\left(\, \cdot\, ,\,\cdot\,|\bX\right)\right)
\end{align}
The KL divergence can also be written as (denoting by $\E_{\hq}$ expectation over $(\bW,\bH)\sim \hq(\,\cdot\,,\,\cdot\,)$)
\begin{align}
\label{eq:KL}
\KL\left(\hat{q}\left(\,\cdot\,,\,\cdot\,\right)||\;p\left(\,\cdot\,,\,\cdot\,|\bX\right)\right) &= \E_{\hq}\left[\log \hat{q}\left(\bW,\bH\right) \right]
- \E_{\hq}\left[\log p\left(\bX,\bW,\bH\right)\right] + \log p\left(\bX\right)\\
& \equiv \cF(\hq)+ \log p\left(\bX\right)\, .
\end{align}
The function $\cF(\hq)$ is known as Gibbs free energy or --within the topic models literature-- as the 
opposite of the evidence lower bound $\cF(\hq) = -\ELBO(\hq)$ \cite{blei2017variational}. Since $\log p\left(\bX\right)$ does not depend on $\hq$,
minimizing the KL divergence is equivalent to minimizing the Gibbs free energy. 

In order to find $\hat{q}^*\left(\bW,\bH\right) = q^*\left(\bH\right)\tilde q^*\left(\bW\right)$, the naive 
mean field iteration minimizes the Gibbs free energy by alternating minimization: we minimize the Gibbs free energy over $q\left(\bH\right)$ (while keeping 
$\tilde q\left(\bW\right)$ fixed), then minimize over $\tilde q\left(\bW\right)$ (while keeping $q\left(\bH\right)$ fixed), and repeat.
With a slight abuse of notation, we will write  $\cF(\hq)=\cF(q,\tq)$.
Note that if we keep $\tq\left(\bW\right)$ fixed, we have
\begin{align}
\arg\min _{q}\cF(q,\tq) &= 
\arg\min _{q}\left\{\E_{q(\bH)}\left[\log q\left(\bH\right)\right] - \E_{q(\bH)}\left[\E_{\tilde q\left(\bW\right)}\left[\log p\left(\bX,\bW,\bH\right)\right]\right]\right\}\nonumber\\
&= \arg\min _{q} {\KL}\left(q\left(\bH\right)||\, C\exp\left\{\E_{\tilde q(\bW)}\left[\log p\left(\bX,\bW,\bH\right)\right]\right\}\right)\nonumber\\
&\propto \exp\left\{\E_{\tilde q(\bW)}\left[\log p\left(\bX,\bW,\bH\right)\right]\right\}\, .
\end{align}
Similarly, by taking $q\left(\bH\right)$ fixed, we have
\begin{align}
\arg\min _{\tilde q}\cF(q,\tq) \propto \exp\left\{\E_{q\left(\bH\right)}\left[\log p\left(\bX,\bW,\bH\right)\right] \right\}.
\end{align}
Therefore, the naive mean field iterations have the form
\begin{align}
\label{eq:densityevol}
\begin{split}
&q^{t+1}\left(\bH\right) = \prod_{i=1}^d q_i^{t+1}\left(\bh_i\right) \propto \exp\left\{\E_{\tilde q^t\left(\bW\right)}\left[\log p\left(\bX,\bW,\bH\right)\right]\right\},\\
&\tilde q^{t}\left(\bW\right) = \prod_{a=1}^n \tilde q_a^{t}\left(\bw_a\right) \propto\exp\left\{ \E_{q^t\left(\bH\right)}\left[\log p\left(\bX,\bW,\bH\right)\right] \right\}.
\end{split}
\end{align}
with initialization
\begin{align}
q^0\left(\bH\right) = \prod_{i=1}^d q_0\left(\bh_i\right)\,,\;\;\;\;\;\tilde q^0\left(\bW\right) = \prod_{a=1}^n \tilde q_0\left(\bw_a\right)
\end{align}
where $q_0\left(\bh_i\right)$, $\tilde q_0\left(\bw_a\right)$ are the prior distributions on the rows of $\bH$ and $\bW$,
cf. Eq.~\eqref{eq:initmeasures}. Note that 
the iterations in \eqref{eq:densityevol} can be further simplified by noting that the densities $q_i^{t}$ and
$\tilde q_i^t$ have the form
\begin{align}
\label{eq:densityforms}
\begin{split}
&q_i^t\left(\bh\right) \propto \exp\left\{\left\langle\bm_i^t,\bh\right\rangle-\frac{1}{2}\left\langle\bh, \bQ^t\bh\right\rangle\right\}q_0\left(\bh\right),\\
&\tilde q_a^t\left(\bw\right) \propto \exp\left\{\left\langle\tbm_a^t,\bw\right\rangle-\frac{1}{2}\left\langle\bw, \tbQ^t\bw\right\rangle\right\}\tilde q_0\left(\bw\right).
\end{split}
\end{align}
In order to see this, note that the initial densities $q_0\left(\bh\right)$, $\tilde q_0\left(\bw\right)$ are in the form 
\eqref{eq:densityforms}. Further, if we assume that $q_i^t\left(\bh\right)$, $\tilde q_a^t\left(\bw\right)$
are in the form \eqref{eq:densityforms}, using the update equations \eqref{eq:densityevol}, we have
\begin{align}
q^{t+1}\left(\bH\right) = \prod_{i=1}^d q_{i}^{t+1}\left(\bh_i\right) &\propto \exp\left\{\E_{\tilde q^t\left(\bW\right)}\log p\left(\bX,\bH, \bW\right)\right\}\\
& \propto \exp\left\{\E_{\tilde q^t\left(\bW\right)}\log p \left(\bH, \bX|\bW\right)\right\}\\
& \propto q_0\left(\bH\right) \exp\left\{\E_{\tilde q^t\left(\bW\right)}\log p\left(\bX|\bH, \bW\right)\right\}\\
& \propto q_0\left(\bH\right)\exp\left\{-\E_{\tilde q^t\left(\bW\right)}\left[\frac{d}{2}\left\|\bX - \frac{\sqrt{\beta}}{d}\bW\bH^\sT\right\|_F^2\right]\right\}\\
&\propto q_0\left(\bH\right)\exp\left\{\E_{\tilde q^t\left(\bW\right)}\Tr\left(\sqrt{\beta}\bX\bH\bW^\sT - \frac{\beta}{2d}\bW\bH^\sT\bH\bW^\sT\right)\right\}\\
&= q_0\left(\bH\right)\exp\left\{\E_{\tilde q^t\left(\bW\right)}\sum_{a=1}^n\left(\sqrt{\beta}\left\langle\bx_a,\bH\bw_a\right\rangle-\frac{\beta}{2d}\left\langle\bw_a,\bH^\sT\bH\bw_a\right\rangle\right)\right\}\\
&= q_0\left(\bH\right)\exp\left\{\sum_{a=1}^n\left\langle\bx_a, \bH\tsF\left(\tbm_a^t; \tbQ^t\right)\right\rangle - \frac{1}{2d}\left\langle\bH^\sT\bH, \sum_{a=1}^n\tsG\left(\tbm_a^t; \tbQ_t\right)\right\rangle\right\}\\
&= \prod_{i=1}^d \left(q_0\left(\bh_i\right)\exp\left\{\left\langle\bm_i^{t+1},\bh_i\right\rangle-\frac{1}{2}\left\langle\bh_i, \bQ^{t+1}\bh_i\right\rangle\right\}\right)
\end{align}
where $\tsF(\,\cdot\, ;\,\cdot\, ), \tsG(\,\cdot\,;\,\cdot\,)$ are given in \eqref{eq:deftF}, \eqref{eq:deftG} and
\begin{align}
\label{eq:densityevolH}
\begin{split}
&\bm^{t+1} = \bX^\sT\tsF\left(\tbm^t; \tbQ^t\right),\\
&\bQ^{t+1} = \frac{1}{d}\sum_{a=1}^n \tsG\left(\tbm_a^t; \tbQ^t\right).
\end{split}
\end{align}
Therefore, $q_i^{t+1}\left(\bh\right)$ has the form in \eqref{eq:densityforms} and the update formula
for $\bm^{t+1}$, $\bQ^{t+1}$ are given in \eqref{eq:densityevolH}. Similarly, for $\tq^{t+1}\left(\bW\right)$ we have
\begin{align}
\tq^{t+1}\left(\bW\right) = \prod_{a=1}^n \tq_{a}^{t+1}\left(\bw_a\right) &\propto \exp\left\{\E_{q^{t+1}\left(\bH\right)}\log p\left(\bX, \bH,\bW\right)\right\}\\
& \propto \exp\left\{\E_{ q^{t+1}\left(\bH\right)}\log p \left(\bW, \bX|\bH\right)\right\}\\
& = \tq_0\left(\bW\right) \exp\left\{\E_{q^{t+1}\left(\bH\right)}\log p\left(\bX|\bH, \bW\right)\right\}\\
& \propto \tq_0\left(\bW\right)\exp\left\{\E_{q^{t+1}\left(\bH\right)}\left[-\frac{d}{2}\left\|\bX - \frac{\sqrt{\beta}}{d}\bW\bH^\sT\right\|_F^2\right]\right\}\\
&\propto \tq_0\left(\bW\right)\exp\left\{\E_{q^{t+1}\left(\bH\right)}\Tr\left(\sqrt{\beta}\bW\bH^\sT\bX^\sT - \frac{\beta}{2d}\bW\bH^\sT\bH\bW^\sT\right)\right\}
\end{align}
Hence,
\begin{align}
\tq^{t+1}\left(\bW\right) &\propto \tq_0\left(\bW\right)\exp\left\{\E_{ q^{t+1}\left(\bH\right)}\sum_{a=1}^n\left(\sqrt{\beta}\left\langle\bw_a,\bx_a\bH\right\rangle-\frac{\beta}{2d}\left\langle\bw_a,\bH^\sT\bH\bw_a\right\rangle\right)\right\}\\
&= \tq_0\left(\bW\right)\exp\left\{\sum_{a=1}^n\left\langle\bw_a, \bx_a\sF\left(\bm^{t+1}; \bQ^{t+1}\right)\right\rangle - \frac{1}{2d}\left\langle\bw_a,\left(\sum_{i=1}^d \sG\left(\bm_i^{t+1}; \bQ^{t+1}\right)\right)\bw_a\right\rangle\right\}\\
&= \prod_{a=1}^n \left(\tq_0\left(\bw_a\right)\exp\left\{\left\langle\bw_a, \tbm_a^{t+1}\right\rangle-\frac{1}{2}\left\langle\bw_a, \tbQ^{t+1}\bw_a\right\rangle\right\}\right)
\end{align}
where $\sF(\,\cdot\,;\,\cdot\,), \sG(\,\cdot\,;\,\cdot\,)$ are given in \eqref{eq:defF}, \eqref{eq:defG} and
\begin{align}
\label{eq:densityevolW}
\begin{split}
&\tbm^{t+1} = \bX\sF\left(\bm^{t+1}; \bQ^{t+1}\right),\\
&\tbQ^{t+1} = \frac{1}{d}\sum_{i=1}^d \sG\left(\bm_i^{t+1}; \bQ^{t+1}\right).
\end{split}
\end{align}
Therefore, $\tq_a^{t+1}\left(\bw\right)$ has the form in \eqref{eq:densityforms} and the update formula
for $\tbm^{t+1}$, $\tbQ^{t+1}$ are given in \eqref{eq:densityevolW}.

\subsection{Derivation of the variational free energy (\ref{eq:FreeEnergy_main})}
\label{app:NMF_Free_Energy}

\def\mw{\tsF(\tbm;\tbQ)}
\def\mh{\sF(\bm;\bQ)}
\def\vh{\sG(\bm^t_{i};\bQ_t)}
\def\vw{\tsG(\tbm^t_{a};\tbQ_t)}

As already mentioned, naive mean field minimizes the 
KL divergence between a factorized distribution $\hat q(\bW, \bH) = \prod_{a=1}^{n} \tq(\bw_a) \prod_{i = 1}^{d} q(\bh_i)$
and the real posterior $p(\bW,\bH|\bX)$. The KL divergence takes the form
\begin{align}
\KL(\hat q(\,\cdot\,, \,\cdot\, )||p(\,\cdot\,,\,\cdot\,|\bX)) = \cF(\hq) +\log p(\bX) +\frac{d}{2}\|\bX\|_{F}^2\, ,
\end{align}
where $\cF(\hq)$ is the Gibbs free energy.
In this appendix we derive an explicit form for $\cF(\hq)$ when $\hq$ is factorized.
We have
\begin{align}
\cF(\hq) &= \E_{\hat q}[-\log p(\bW, \bH| \bX)] +  \E_{\hat q}[\log \hat q(\bW,\bH)] -\frac{d}{2}\|\bX\|_{F}^2\\
&= \E_{\hat q}[-\log p(\bW, \bH, \bX)] +  \E_{\hat q}[\log \hat q(\bW,\bH)] -\frac{d}{2}\|\bX\|_{F}^2\\
&= \E_{\hat q}[-\log p(\bX|\bW, \bH) -\log p(\bW, \bH)] +  \E_{\hat q}[\log \hat q(\bW,\bH)] -\frac{d}{2}\|\bX\|_{F}^2\\
&= \E_{\hat q}\left[\frac{d\Vert \bX - \frac{\sqrt{\beta}}{d}\bW\bH^\sT \Vert_F^2}{2}-\frac{d}{2}\|\bX\|_{F}^2 -\log (p(\bW, \bH))\right] +  \E_{\hat q}[\log \hat q(\bW,\bH)] \\
&= \frac{d}{2}\E_{\hat q}\left[\Vert \bX - \frac{\sqrt{\beta}}{d}\bW\bH^\sT \Vert_F^2\right] -\frac{d}{2}\|\bX\|_{F}^2+  
\KL(\hat q(\,\cdot\,, \,\cdot\,)\| q_0(\,\cdot\,, \,\cdot\,))\, .
\end{align}
(The last term is the KL divergence between $\hq$ and the prior.)

We can explicitly calculate each term. Let's denote by $\br_i,\bOmega_i$ the first and second moments of $q_i$ and by
$\tbr_a$, $\tbQ_a$ the first and second moments of $\tq_a$:
\begin{align}
\br_i&= \int \bh\,\, q_i(\de \bh) \, ,\;\;\;\;\;
\tbr_a = \int \bw \, \, \tq_a(\de \bw)\, ,\label{eq:Moment1}\\
\bOmega_i&= \int \bh^{\otimes 2} \,\,q_i(\de \bh) \, ,\;\;\;\;\;
\tbOmega_a  = \int \bw^{\otimes 2} \, \, \tq_a(\de \bw)\, . \label{eq:Moment2}
\end{align}
We then have
\begin{align} 
\frac{d}{2}\E_{\hat q}\Vert \bX - \frac{\sqrt{\beta}}{d}\bW\bH^\sT \Vert_F^2 &-\frac{d}{2}\|\bX\|_{F}^2
= \frac{d}{2}\E_{\hat q}\left[\Tr\left(-\frac{2\sqrt{\beta}}{d}\bX^\sT\bW\bH^\sT\right) + \Tr\left(\frac{\beta}{d^2}\bH\bW^\sT\bW\bH^\sT\right)\right] 
\\
&= -\sqrt{\beta} \Tr\left(\bX^\sT\E_{\hat q}[\bW\bH^\sT]\right) + \frac{\beta}{2d} \Tr\left(\E_{\hat q}[\bH\bW^\sT\bW\bH^\sT]\right) \\
&= -\sqrt{\beta} \Tr\left(\bX^\sT\br \tbr^\sT\right) +\frac{\beta}{2d} \sum_{i=1}^d \sum_{a=1}^n\<\bOmega_i,\tbOmega_a\>\,. \label{eq:expected_loglike} 
\end{align}

Since both $\hq$ and $q_0$ have product form, their KL divergence is just a sum of KL divergences for each 
row of $\bW$ and each row of $\bH$:
\begin{align} \label{kl_div_2}
\KL(\hat q(\,\cdot\,, \,\cdot\,)\|q_0(\,\cdot\,, \,\cdot\,))) &= \sum_{i = 1}^{d} \KL(q_i\| q_0) +\sum_{a = 1}^{n} \KL(\tq_a\| q_0)\, .
\end{align}
Each of these terms is treated in the same manner: we minimize over $q_i$ or $\tq_a$ subject to the 
moment constraints (\ref{eq:Moment1}), and define
\begin{align}
\psi_*(\br_i,\bOmega_i) = \min\left\{  \KL(q_i\| q_0)  :\;\;\;  \int \bh\,\, q_i(\de \bh) =\br_i\, ,\;\;
\int \bh^{\otimes 2} \, \, q_i(\de \bh) =\bOmega_i\right\}\, ,\label{eqs:PsiKL1}\\
\tpsi_*(\tbr_a,\tbOmega_a) = \min\left\{  \KL(\tq_a\| \tq_0) :\;\;\; \int \bw\,\, \tq_a(\de \bw)=\tbr_a\, ,\;\;
\int \bw^{\otimes 2} \, \, \tq_a(\de \bw) =\tbOmega_a\right\}\, . \label{eqs:PsiKL2}
\end{align}
 Standard  duality between entropy and moment generating functions yields that
$\psi_*$, $\tpsi_*$ are defined as per Eq.~(\ref{eq:LegendrePhi}). We briefly recall the argument 
for the reader's convenience. Considering for instance $\tpsi_*(\tbr,\tbOmega)$, we introduce the Lagrangian 
\begin{align}
\cL(\tq_a,\tbm_a,\tbQ_a) = \KL(\tq_a\| \tq_0) +\<\tbm_a,\tbr_a\>
-\frac{1}{2}\<\tbQ_a,\tbOmega_a\>-\int \Big\{\<\tbm_a,\bw\>-\frac{1}{2}\<\bw,\tbQ_a\bw\>\Big\}\, \tq_a(\de \bw)\, .
\end{align}
This is minimized easily with respect to $\tq_a$. The minimum is achieved at
the distribution (\ref{eq:densityforms_main}), with 
\begin{align}
\min_{\tq_a}\cL(\tq_a,\tbm_a,\tbQ_a) =
\< \tbr_a, \tbm_a\> -\frac{1}{2}\<\tbOmega_a,\tbQ_a\>- \tphi(\tbm_a, \tbQ_a)\, ,
\end{align}
and the claim (\ref{eq:LegendrePhi}) follows by strong duality.

Putting together Eqs.~(\ref{eq:expected_loglike}), (\ref{kl_div_2}), and (\ref{eqs:PsiKL1})-(\ref{eqs:PsiKL2}), we obtain the desired expression 
(\ref{eq:FreeEnergy_main}).

Using (\ref{eq:LegendrePhi}), we get the following expressions for the gradients of $\psi_*$
\begin{align}
\frac{\partial \psi_*}{\partial \br}(\br,\bOmega) = \bm\, ,\;\;\;\;\;\frac{\partial \psi_*}{\partial \bOmega}(\br,\bOmega) = -\frac{1}{2}\bQ\,,
\end{align}
and similarly for $\tpsi_*$ (where $\bm,\bQ$ are related to $\br,\bOmega$ via Eqs.~(\ref{eq:r_def}), (\ref{eq:r_def})).
 Hence, the gradients of $\cF$ with respect to $\br_i$, $\bOmega_i$ read
\begin{align}
\frac{\partial \cF}{\partial \br_i}(\br,\tbr,\bOmega,\tbOmega)& = -\sqrt{\beta}(\bX^{\sT}\tbr)_{i,\cdot}+\bm_i\, ,\;\;\;\;\;\;\;
\frac{\partial \cF}{\partial \tbr_a}(\br,\tbr,\bOmega,\tbOmega) = -\sqrt{\beta}(\bX\br)_{a,\cdot}+\tbm_a\,,\\
\frac{\partial \cF}{\partial \bOmega_i}(\br,\tbr,\bOmega,\tbOmega) & = -\frac{1}{2}\bQ_i+\frac{\beta}{2d}\sum_{a=1}^n\tbOmega_a\, ,\;\;\;\;\;\;\;
\frac{\partial \cF}{\partial \tbOmega_a}(\br,\tbr,\bOmega,\tbOmega)  = -\frac{1}{2}\bQ_a+\frac{\beta}{2d}\sum_{i=1}^d\bOmega_i
\, .\label{eq:DF_Omega}
\end{align}
Notice that at stationarity points, we have $\bQ_i = \bQ = (\beta/d)\sum_{a=1}^n\tbOmega_a$ independent of $i$.

\subsection{Proof of Lemma  \ref{lemma:Uninf}}
\label{app:Uninformative}

We start with some useful formulae.
\begin{lemma}\label{lemma:UsefulFormulae}
For $q\in\reals$ define $\sE(q)$ by 
\begin{align}
\sE(q;\nu) = \frac{\int w_1^2 e^{-q\|\bw\|_2^2}\, \tq_0(\de\bw)}{\int e^{-q\|\bw\|_2^2}\, \tq_0(\de\bw)}\, . \label{eq:Edef}
\end{align}
Then, we have 
\begin{align}
\sF(\by = y \bfone_k;\bQ =q_1\id_k+q_2\bJ_k) &= \frac{\sqrt{\beta}\, y}{1+q_1+kq_2} \, \bfone_k\, ,\label{eq:sfsimplifiedsymm}\\
\sG(\by = y \bfone_k;\bQ = q_1\id_k+q_2\bJ_k) &= \frac{\beta}{(1+q_1)} \,\id_k+\beta \left\{\frac{y^2}{(1+q_1+kq_2)^2}-\frac{q_1}{(1+q_1)(1+q_1+kq_2)}\right\} \, \bJ_k \, ,\label{eq:sgsimplifiedsymm}\\
\tsF(\tby = \ty \bfone_k;\tbQ = \tq_1\id_k+\tq_2\bJ_k) &= \frac{\sqrt{\beta}}{k} \, \bfone_k\, ,\label{eq:tsfsimplifiedsymm}\\
\tsG(\tby = \ty \bfone_k;\tbQ = \tq_1\id_k+\tq_2\bJ_k) &= \beta \, \frac{k^2\sE(\tq_1;\nu)-1}{k(k-1)}\, \id_k-\beta \, \frac{k\sE(\tq_1;\nu)-1}{k(k-1)}\, \bJ_k \, .\label{eq:tsgsimplifiedsymm}
\end{align}
In particular
\begin{align}
\sF(\by = y \bfone_k;\bQ =q\bJ_k) &= \frac{\sqrt{\beta}\, y}{1+kq} \, \bfone_k\, ,\\
\sG(\by = y \bfone_k;\bQ = q\bJ_k) &= \beta \,\id_k+ \beta \, \frac{y^2}{(1+kq)^2} \, \bJ_k \, ,\\
\tsF(\tby = \ty \bfone_k;\tbQ = \tq\bJ_k) &= \frac{\sqrt{\beta}}{k} \, \bfone_k\, ,\\
\tsG(\tby = \ty \bfone_k;\tbQ = \tq\bJ_k) &= \frac{\beta}{k(k\nu+1)}\, \left(\id_k+\nu\bJ_k\right)\, .
\end{align}
\end{lemma}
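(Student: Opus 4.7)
The proof splits into four essentially independent calculations, corresponding to the four general formulae plus their specializations. For the Gaussian-prior quantities $\sF,\sG$ the starting point is the closed-form expressions already recorded in \eqref{eq:sFexplicit} and \eqref{eq:sGexplicit}, so the task reduces to inverting the structured matrix $\id_k+\bQ = (1+q_1)\id_k+q_2\bJ_k$ and then applying it to $\by=y\bfone_k$ and $\by\by^\sT = y^2 \bJ_k$. The key observation is that $\id_k+\bQ$ acts as multiplication by $1+q_1+kq_2$ on $\bfone_k$ and by $1+q_1$ on its orthogonal complement, so by Sherman--Morrison (or by decomposing along $\bP=\bJ_k/k$ and $\bP_\perp$),
\begin{equation*}
(\id_k+\bQ)^{-1} = \frac{1}{1+q_1}\,\id_k - \frac{q_2}{(1+q_1)(1+q_1+kq_2)}\,\bJ_k,
\end{equation*}
and in particular $(\id_k+\bQ)^{-1}\bfone_k = \bfone_k/(1+q_1+kq_2)$. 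Substituting into \eqref{eq:sFexplicit}, \eqref{eq:sGexplicit} and using $\bJ_k\bJ_k = k\bJ_k$ yields \eqref{eq:sfsimplifiedsymm} and \eqref{eq:sgsimplifiedsymm} directly.

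For the Dirichlet-prior quantities $\tsF,\tsG$ the main idea is an invariance argument. Since $\tq_0$ is supported on the simplex $\sP_1(k)=\{\bw:\<\bfone_k,\bw\>=1\}$, we have $\<\tby,\bw\> = \ty$ and $\<\bw,\bJ_k\bw\> = \<\bfone_k,\bw\>^2 = 1$ identically on the support. Hence, for $\tby=\ty\bfone_k$ and $\tbQ=\tq_1\id_k+\tq_2\bJ_k$, the tilting exponent reduces to $\ty-\tq_1\|\bw\|_2^2/2-\tq_2/2$, and the $\ty$ and $-\tq_2/2$ terms cancel between numerator and denominator. The effective tilted measure is therefore proportional to $e^{-\tq_1\|\bw\|_2^2/2}\tq_0(\de \bw)$, which is invariant under permutations of the $k$ coordinates.

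From this symmetry, $\E[w_i]=1/k$ for each $i$ (because the coordinates are exchangeable and sum to $1$), which gives \eqref{eq:tsfsimplifiedsymm} immediately. For the second moment, exchangeability forces $\E[\bw\bw^\sT]$ to have all diagonal entries equal to some $\alpha$ and all off-diagonal entries equal to some $\gamma$, i.e.\ $\E[\bw\bw^\sT]=(\alpha-\gamma)\id_k+\gamma\bJ_k$. By definition $\alpha = \sE(\tq_1;\nu)$, and the simplex constraint gives $1=\E[(\<\bfone_k,\bw\>)^2]=k\alpha+k(k-1)\gamma$, so $\gamma=(1-k\alpha)/(k(k-1))$. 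A short algebraic rearrangement yields the coefficients $\beta(k^2\sE-1)/(k(k-1))$ and $-\beta(k\sE-1)/(k(k-1))$ as in \eqref{eq:tsgsimplifiedsymm}.

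The four specializations to $\bQ=q\bJ_k$ and $\tbQ=\tq\bJ_k$ follow by setting $q_1=0,q_2=q$ and $\tq_1=0,\tq_2=\tq$ in the general formulae. The only nontrivial input is the value $\sE(0;\nu)$, i.e.\ the second moment of a single coordinate under $\Dir(\nu;k)$, which equals $(\nu+1)/(k(k\nu+1))$ from standard Dirichlet moments; plugging this into \eqref{eq:tsgsimplifiedsymm} gives $\frac{\beta}{k(k\nu+1)}(\id_k+\nu\bJ_k)$ after simplification. There is no real obstacle: the whole lemma is a linear-algebra/exchangeability exercise, the only place requiring any care is tracking the cancellations produced by the simplex constraint and the rank-one plus scalar structure of $\bQ$.
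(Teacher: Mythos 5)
Your proposal is correct and follows essentially the same route as the paper's proof: the same Sherman--Morrison inversion of $(1+q_1)\id_k+q_2\bJ_k$ for the Gaussian block, and the same exploitation of the simplex constraint $\<\bfone_k,\bw\>=1$ plus exchangeability of the tilted Dirichlet measure to pin down $\tsF$ and the two free entries of $\tsG$ via $\tsG_{11}=\beta\sE(\tq_1;\nu)$ and $\<\tsG,\bJ_k\>=\beta$. The specializations, including $\sE(0;\nu)=(\nu+1)/(k(k\nu+1))$, match the paper exactly.
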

\begin{proof}
First note that
\begin{align}
\left[\left(1+q_1\right)\id_k + q_2\bJ_k\right]^{-1} = \frac{1}{1+q_1} \id_k - \frac{q_2}{(1+q_1)(1+q_1+kq_2)}\bJ_k.
\end{align}
Hence, by \eqref{eq:sFexplicit} we have
\begin{align}
\sF(\by = y \bfone_k;\bQ =q_1\id_k+q_2\bJ_k) &= \sqrt{\beta}y\left[\left(1+q_1\right)\id_k + q_2\bJ_k\right]^{-1}\bfone_k\\
&=\sqrt{\beta}y  \left(\frac{1}{1+q_1} \id_k - \frac{q_2}{(1+q_1)(1+q_1+kq_2)}\bJ_k\right) \bfone_k\\
&= \sqrt{\beta}y \left(\frac{1}{1+q_1} - \frac{kq_2}{(1+q_1)(1+q_1+kq_2)}\right)\bfone_k\\
&= \frac{\sqrt{\beta}\, y}{1+q_1+kq_2} \, \bfone_k\,.
\end{align}
Thus, by \eqref{eq:sGexplicit}
\begin{align}
\sG(\by = y \bfone_k;\bQ =q_1\id_k+q_2\bJ_k) &= \frac{\beta\, y^2}{(1+q_1+kq_2)^2} \bJ_k\,
+ \beta\left(\frac{1}{1+q_1} \id_k - \frac{q_2}{(1+q_1)(1+q_1+kq_2)}\bJ_k\right)\\
&=  \frac{\beta}{(1+q_1)} \,\id_k+\beta \left\{\frac{y^2}{(1+q_1+kq_2)^2}-\frac{q_1}{(1+q_1)(1+q_1+kq_2)}\right\} \, \bJ_k \,.
\end{align}
In addition,
using \eqref{eq:deftF},
by symmetry, all entries of $\tsF(\tby = \ty \bfone_k;\tbQ = \tq_1\id_k+\tq_2\bJ_k)$ are equal. Further, 
\begin{align}
\left\langle \bfone_k\,, \tsF(\tby = \ty \bfone_k;\tbQ = \tq_1\id_k+\tq_2\bJ_k)\right\rangle &= 
\sqrt{\beta}\, \frac{\int \left\langle\bfone_k,\bw\right\rangle \, e^{\<\hby,\bw\>-\<\bw,\tbQ\bw\>/2} \, \tq_0(\de \bw)}{\int \, e^{\<\hby,\bw\>-\<\bw,\tbQ\bw\>/2} \,
              \tq_0(\de \bw)}\\
&= \sqrt{\beta}\, \frac{\int e^{\<\hby,\bw\>-\<\bw,\tbQ\bw\>/2} \, \tq_0(\de \bw)}{\int \, e^{\<\hby,\bw\>-\<\bw,\tbQ\bw\>/2} \,
              \tq_0(\de \bw)} = \sqrt{\beta}.
\end{align}
Therefore, 
\begin{align}
\tsF(\tby = \ty \bfone_k;\tbQ = \tq_1\id_k+\tq_2\bJ_k) &= \frac{\sqrt{\beta}}{k} \, \bfone_k.
\end{align}
Finally, again by symmetry, 
$\tsG(\tby = \ty \bfone_k;\tbQ = \tq_1\id_k+\tq_2\bJ_k)$ has the same diagonal entries.
Further, the off-diagonal entries of this matrix are equal. 
Thus, we have
\begin{align}
\tsG(\tby = \ty \bfone_k;\tbQ = \tq_1\id_k+\tq_2\bJ_k) = \left(\tsG_{11}-\tsG_{12}\right)\id_k + \tsG_{12}\bJ_k.\label{eq:diagG}
\end{align}
Note that by \eqref{eq:defG}, \eqref{eq:Edef}
\begin{align}
\tsG_{1,1} &=
\beta\,\frac{\int w_1^2 e^{\ty\left\langle \bw, \bfone_k\right\rangle-\tq_1\|\bw\|_2^2/2-\tq_2\left\langle \bw, \bfone_k\right\rangle^2/2}\, \tq_0(\de\bw)}{\int e^{\ty\left\langle \bw, \bfone_k\right\rangle-\tq_1\|\bw\|_2^2/2-\tq_2\left\langle \bw, \bfone_k\right\rangle^2/2}\tq_0(\de\bw)} \\
&= \beta\,\frac{e^{\ty - \tq_2/2}\int w_1^2 e^{-\tq_1\|\bw\|_2^2/2}\, \tq_0(\de\bw)}{e^{\ty - \tq_2/2}\int e^{-\tq_1\|\bw\|_2^2/2}\tq_0(\de\bw)} = \beta\,\sE(\tq_1;\nu).
\end{align}
Further, by \eqref{eq:defG}
\begin{align}
k\tsG_{1,1} + k(k-1)\tsG_{1,2} &= \left\langle\tsG(\tby = \ty \bfone_k;\tbQ = \tq_1\id_k+\tq_2\bJ_k), \bJ_k\right\rangle \\
&=  {\beta}\, \frac{\int \left\langle\bw, \bfone_k\right\rangle^2 \, e^{\<\tby,\bw\>-\<\bw,\tbQ\bw\>/2} \, \tq_0(\de \bw)}{\int \, e^{\<\tby,\bw\>-\<\bw,\tbQ\bw\>/2} \,
         \tq_0(\de \bw)}\,\\
         &=  {\beta}\, \frac{\int e^{\<\tby,\bw\>-\<\bw,\tbQ\bw\>/2} \, \tq_0(\de \bw)}{\int \, e^{\<\tby,\bw\>-\<\bw,\tbQ\bw\>/2} \,
         \tq_0(\de \bw)}\, = \beta.\label{eq:sumallG}
\end{align}
Therefore, by \eqref{eq:sumallG}, \eqref{eq:diagG}, we get
\begin{align}
\tsG_{1,1} = \beta\,\sE(\tq_1;\nu)\,, \;\;\;\;\; \tsG_{1,2} = -\beta \, \frac{k\sE(\tq_1;\nu)-1}{k(k-1)}.
\end{align}
Hence, 
\begin{align}
\tsG(\tby = \ty \bfone_k;\tbQ = \tq_1\id_k+\tq_2\bJ_k) &= \beta \, \frac{k^2\sE(\tq_1;\nu)-1}{k(k-1)}\, \id_k-\beta \, \frac{k\sE(\tq_1;\nu)-1}{k(k-1)}\, \bJ_k. 
\end{align}
In addition, note that
\begin{align}
\sE(0;\nu) = \int w_1^2 \tq_0(\de\bw) = \frac{\nu+1}{k(k\nu+1)}.
\end{align}
Using this, and replacing $q_1, \tq_1 = 0$ in \eqref{eq:sfsimplifiedsymm} - \eqref{eq:tsgsimplifiedsymm} will complete the proof.
\end{proof}

\begin{proof}[Proof of Lemma  \ref{lemma:Uninf}]
Note that  $q\geq 0$
\begin{align}
k^2\sE(q;\nu) =  \frac{\int k^2w_1^2 e^{-q\|\bw\|_2^2}\, \tq_0(\de\bw)}{\int e^{-q\|\bw\|_2^2}\, \tq_0(\de\bw)} = 
\frac{\int k\|\bw\|_2^2 e^{-q\|\bw\|_2^2}\, \tq_0(\de\bw)}{\int e^{-q\|\bw\|_2^2}\, \tq_0(\de\bw)} \geq 
\frac{\int \|\bw\|_1^2 e^{-q\|\bw\|_2^2}\, \tq_0(\de\bw)}{\int e^{-q\|\bw\|_2^2}\, \tq_0(\de\bw)} = 1.
\end{align}
In addition, we have
\begin{align}
&\sE(0;\nu) = \int w_1^2 \tq_0(\de\bw) = \frac{\nu+1}{k(k\nu+1)}\, ,\\
&\lim_{q_1 \to \infty} \frac{k\beta\delta}{k-1}\, \left\{\sE\left(\frac{\beta}{1+q_1};\nu\right) - \frac{1}{k^2}\right\} = \frac{k\beta\delta}{k-1}\left\{\sE\left(0;\nu\right) - \frac{1}{k^2}\right\} = \frac{\beta\delta}{k(k\nu+1)} < \infty.
\end{align}
Therefore, the right hand side of \eqref{eq:qs_1_main} is non-negative, continuous, bounded for $q_1^* \in [0, \infty)$.
Hence, using intermediate value theorem, \eqref{eq:qs_1_main} has a solution in $[0,\infty)$.

Now we will check that equations (\ref{eq:NMF1_Main}) and (\ref{eq:NMF2_Main}) hold for $\bm^{t+1} = \bm^t = \bm^*$, 
$\tbm^t = \tbm^*$, $\bQ^t = \bQ^{t+1} = \bQ^*$, $\tbQ^t = \tbQ^*$. 
We start with the first  equation in (\ref{eq:NMF1_Main}). Using Lemma \ref{lemma:UsefulFormulae}, we have
\begin{align}
\tsF(\tbm^*_a; \tbQ^*) = \frac{\sqrt{\beta}}{k}\bfone_k.
\end{align}
Therefore,
\begin{align}
\tsF(\tbm^*; \tbQ^*) = \frac{\sqrt{\beta}}{k}\bfone_n\otimes \bfone_k\,, \;\;\;\;\; \bX^\sT\tsF(\tbm^*; \tbQ^*) = \frac{\sqrt{\beta}}{k}\, (\bX^{\sT}\bfone_n)\otimes \bfone_k = \bm^*.
\end{align}
Now we consider the first equation in \eqref{eq:NMF2_Main}. Using Lemma \ref{lemma:UsefulFormulae}, we have
\begin{align}
\sF(\bm_i^*; \bQ^*) = \frac{\beta}{k(1+q_1^*+kq_2^*)}\left\langle\bX_{.,i}, \bfone_n\right\rangle\bfone_k.
\end{align}
Hence,
\begin{align}
\sF(\bm^*; \bQ^*) &= \frac{\beta}{k(1+q_1^*+kq_2^*)}(\bX^\sT\bfone_n)\otimes \bfone_k\,,\\
\bX\sF(\bm^*; \bQ^*) &= \frac{\beta}{k(1+q_1^*+kq_2^*)}(\bX\bX^\sT\bfone_n)\otimes \bfone_k = \tbm^*.
\end{align}
For the second equation in \eqref{eq:NMF1_Main}, note that using Lemma \ref{lemma:UsefulFormulae}, we have
\begin{align}
\frac{1}{d}\sum_{a=1}^n \tsG(\tbm^*_{a};\tbQ^*) = \delta\beta \left(\, \frac{k^2\sE(\tq_1^*;\nu)-1}{k(k-1)}\, \id_k- \frac{k\sE(\tq_1^*;\nu)-1}{k(k-1)}\, \bJ_k\right).
\end{align}
Note that using \eqref{eq:qs_1_main}, \eqref{eq:qs_2_main}
\begin{align}
\frac{k^2\sE(\tq_1^*;\nu)-1}{k(k-1)} &= \frac{1}{k(k-1)}\left[k^2\sE\left(\frac{\beta}{1+q_1^*};\nu\right)-1\right] = \frac{q_1^*}{\delta\beta},\\
\frac{-k\sE(\tq_1^*;\nu)+1}{k(k-1)} &= \frac{-1}{k(k-1)}\left[k\sE\left(\frac{\beta}{1+q_1^*};\nu\right)-1\right] = \frac{-1}{k(k-1)}\left[\frac{k-1}{\delta\beta}q_1^*+\frac{1-k}{k}\right] \\
&= \frac{1}{\delta\beta}\left(\frac{\beta\delta-kq_1^*}{k^2}\right) = \frac{q_2^*}{\delta\beta}.
\end{align}
Therefore, 
\begin{align}
\frac{1}{d}\sum_{a=1}^n \tsG(\tbm^*_{a};\tbQ^*) = q_1^* \id_k + q_2^*\bJ_k = \bQ^*.
\end{align}
Finally, we check the second equation in \eqref{eq:NMF2_Main}. Using Lemma \ref{lemma:UsefulFormulae}, we have
\begin{align}
\sG(\bm^*_{i};\bQ^*) = \frac{\beta}{(1+q_1^*)} \,\id_k+\beta \left\{\frac{\left\langle\bX_{.,i}, \bfone_n\right\rangle^2}{(1+q_1^*+kq_2^*)^2}-\frac{q_1^*}{(1+q_1^*)(1+q_1^*+kq_2^*)}\right\} \, \bJ_k.
\end{align}
Hence,
\begin{align}
\frac{1}{d}\sum_{i=1}^d \sG(\bm^*_{i};\bQ^*) &= \frac{\beta}{(1+q_1^*)} \,\id_k+\beta \left\{\frac{\|\bX^\sT\bfone_n\|_2^2}{d(1+q_1^*+kq_2^*)^2}-\frac{q_1^*}{(1+q_1^*)(1+q_1^*+kq_2^*)}\right\} \, \bJ_k\\
&= \tq_1^* \id_k + \tq_2^*\bJ_k = \tbQ^*,
\end{align}
this completes the proof.
\end{proof}

\subsection{Proof of Theorem \ref{thm:Main}}
\label{app:ProofMain}

We will first prove that, if $L(\beta,k,\delta,\nu)>1$, then the uninformative fixed point $(\br^*,\tbr^*,\bOmega^*,\tbOmega^*)$
(or equivalently, its conjugate $(\bm^*,\tbm^*,\bQ^*,\tbQ^*)$)
 is  (with high probability) a saddle point of the naive mean field free energy (\ref{eq:FreeEnergy_main}). This implies immediately
that the naive mean field iteration is unstable at that fixed point.

Note that the mapping $(\br,\tbr,\bOmega,\tbOmega)\to (\br,\tbr,\bQ,\tbQ)$ is a diffeomorphism (since the Jacobian is always invertible 
by strict convexity of  $\phi$, $\tphi$). We define $\cF_*$ to be the restriction of $\cF$ to the submanifold defined by
$\bQ=\bQ_*$, $\tbQ=\tbQ_*$. Explicitly, this can be written in terms of the partial Legendre transforms
(we repeat the definition of Eq.~(\ref{eq:LegendrePhiPartial}) for the reader's convenience):
\begin{align}
\psi(\br,\bQ)  \equiv \sup_{\bm}\left\{\< \br, \bm\> - \phi(\bm, \bQ)\right\} \, ,\;\;\;\;
\tpsi(\tbr,\tbQ)  \equiv \sup_{\tbm}\left\{\< \tbr, \tbm\>- \tphi(\tbm, \tbQ)\right\} \, .
\end{align}
We then have
\begin{align}
\cF_*(\br,\tbr) = & \sum_{i=1}^d\psi(\br_i,\bQ_*)+\sum_{a=1}^n\tpsi(\tbr_a,\tbQ_*) -\sqrt{\beta}\Tr\left(\bX\br\tbr^{\sT}\right)\nonumber\\
&- \frac{d}{2}\<\bQ_*,\bOmega\>- \frac{n}{2}\<\tbQ_*,\tbOmega\>+\frac{\beta n}{2}\<\bOmega,\tbOmega\>\, ,\\
\bOmega &\equiv \frac{1}{d\beta} \sum_{i=1}^d\sG(\bm_i;\bQ^*)\,,\;\;\;\;\;\;\;\;\tbOmega \equiv \frac{1}{n\beta}\sum_{a=1}^n\tsG(\tbm_a;\tbQ^*)\, ,
\label{eq:OmegaM}\\
\br_i &\equiv \frac{1}{\sqrt{\beta}}\sF(\bm_i;\bQ^*)\,,\;\;\;\;\;\;\;\;\tbr_a \equiv \frac{1}{\sqrt{\beta}}\tsF(\tbm_a;\tbQ^*)\,,\label{eq:r_def_app}\, .
\end{align}
In order to prove that $(\br_*,\tbr_*)$ is a saddle point of $\cF$, it is sufficient to show that it is a saddle along a submanifold,
and hence that the Hessian of $\cF_*$ has a negative eigenvalue at $(\br_*,\tbr_*)$.

Next notice that
\begin{align}
\cF_*(\br, \tbr) &= \cG_1(\br, \tbr) + \cG_2(\br, \tbr)\, ,\\
\cG_1(\br, \tbr) &\equiv \sum_{i=1}^d\psi(\br_i,\bQ_*)+\sum_{a=1}^n\tpsi(\tbr_a,\tbQ_*) -\sqrt{\beta}\Tr\left(\bX\br\tbr^{\sT}\right)\, ,\\
\cG_2(\br, \tbr)& \equiv - \frac{d}{2}\<\bQ_*,\bOmega\>- \frac{n}{2}\<\tbQ_*,\tbOmega\>+\frac{\beta n}{2}\<\bOmega,\tbOmega\>\, .
\end{align}
Consider deviations from the stationary point $\br_i = \br_i^*+\bdelta_i$, $\tbr_a = \tbr_a^*+\tbdelta_a$. By Eqs.~(\ref{eq:OmegaM})
and (\ref{eq:r_def_app}), we have (for some tensors $\bT,\tbT\in (\reals^k)^{\otimes 3})$)
\begin{align}
\bOmega = \bOmega^*+\frac{1}{d}\sum_{i=1}^{d}\bT\bdelta_i +\bDelta\, ,\;\;\;\;\;\tbOmega = \tbOmega^*+\frac{1}{n}\sum_{a=1}^{n}\tbT\tbdelta_a +\tbDelta\, ,
\end{align}
where $\bDelta$, $\tbDelta$ are of second order in $\bdelta,\tbdelta$.  
At the stationary point, by Eq.~(\ref{eq:DF_Omega}), we have $\bQ^*= \beta\bOmega^*$,
$\tbQ^*= \beta\delta\tbOmega^*$. Hence, substituting in $\cG_2$, and letting $M_{ij}= \sum_{s,t}T_{st,i}\tilde{T}_{st,j}$, we obtain
\begin{align}
\cG_2(\br, \tbr)& = \cG_2(\br_*, \tbr_*)  +\frac{\beta}{2d}\sum_{i=1}^d\sum_{a=1}^n\<\bdelta_i,\bM\tbdelta_a\> +o(\bdelta^2)
\end{align}
Therefore, the Hessian $\nabla^2\bG_2(\br_*,\tbr_*)$ has rank at most $k$.

Since $\psi(\,\cdot\,,\bQ^*)$, 
$\tilde \psi(\,\cdot\,, \tbQ^*)$ are Legendre transforms of  $\phi(\,\cdot\,, \bQ^*)$, $\tilde \phi(\,\cdot\,, \tbQ^*)$, respectively,
we have
\begin{align}
&\nabla_{\br\br}^2 \psi(\br, \bQ^*) = \left(\nabla_{\bm\bm}^2\phi(\bm, \bQ^*)\right)^{-1} = \id_k + \bQ^*,\\
&\nabla_{\tbr\tbr}^2 \tilde\psi(\tbr, \tbQ^*) = \left(\nabla_{\tbm\tbm}^2 \tilde\phi(\tbm, \tbQ^*)\right)^{-1} = \bD^{-1}
\end{align}
where $\bD \in \reals^{k\times k}$ is as
\begin{align}
D_{ij} = \frac{1}{\sqrt{\beta}}\frac{\partial \tsF_i\left(\tbm; \tbQ\right)}{\partial \tilde m_j}\Bigg|_{\tbm = 0, \tbQ = \tbQ^*}.
\end{align}
Thus,
\begin{align}
\bD &= \frac{\left(\int \bw^{\otimes 2}e^{-\tq_1^*\|\bw\|_2^2/2}\tq_0(\de \bw)\right)\left(\int e^{-\tq_1^*\|\bw\|_2^2/2}\tq_0(\de \bw)\right) - \left(\int \bw e^{-\tq_1^*\|\bw\|_2^2/2}\tq_0(\de \bw)\right)^{\otimes 2}}{\left(\int e^{-\tq_1^*\|\bw\|_2^2/2}\tq_0(\de \bw)\right)^2}\\
&= \frac{\bQ^*}{\delta\beta} - \frac{\bJ_k}{k^2}.
\end{align}
Hence, 
\begin{align}
\nabla^2\mathcal G_1 =
\begin{bmatrix}
\id_d \otimes \left(\id_k + \tbQ^*\right) & -\sqrt{\beta}\bX^\sT \otimes \id_k \\
-\sqrt{\beta}\bX \otimes \id_k & \id_n \otimes \bD^{-1}
\end{bmatrix}.
\end{align}
Since $\id_k + \tbQ^*$ is positive definite, $\nabla^2\mathcal G \succeq 0$ if and only if
\begin{align}
&\id_n \otimes \bD^{-1} \succeq \beta\left(\bX\otimes \id_k\right)\left(\id_d\otimes\left(\id_k+\tbQ^*\right)\right)^{-1}\left(\bX^\sT\otimes \id_k\right) \iff \\
&\id_n\otimes \bD^{-1} \succeq \beta\left(\bX\bX^\sT\right) \otimes\left(\id_k+\tbQ^*\right)^{-1} \iff\\
&\id_n\otimes \id_k \succeq \beta\left(\bX\bX^\sT\right)\otimes\left(\id_k + \tbQ^*\right)^{-1}\bD.
\end{align}
Hence, $\nabla^2\mathcal G_1$ has a negative eigenvalue if and only if
\begin{align}
\beta \lambda_{\max}\left(\bX\bX^\sT \right) \lambda_{\max}\left(\left(\id_k + \tbQ^*\right)^{-1}\bD\right) > 1.
\end{align}
Further, by the same argument,  if $\beta\lambda_{\ell}(\bX\bX^\sT) \lambda_{\max}((\id_k + \tbQ^*)^{-1}\bD) > 1$, then
$\nabla^2\cG_1$ has at least $\ell$ negative eigenvalues (recall that $\lambda_{\ell}(\bM)$ denotes the $\ell$-th eigenvalue of $\bM$
in decreasing order).

Note that 
\begin{align}
\left(\id_k + \bQ^*\right)^{-1}\bD &= \left(\frac{\id_k}{1+q_1^*}-\frac{q_2^*}{(1+q_1^*)(1+q_1^*+kq_2^*)}\bJ_k\right)\left(\frac{q_1^*}{\delta\beta}\id_k+\left(\frac{q_2^*}{\delta\beta}-\frac{1}{k^2}\right)\bJ_k\right)\\
&= \frac{1}{1+q_1^*}\left(\frac{q_1^*}{\delta\beta}\id_k + \left(\frac{q_2^*}{1+q_1^*+kq_2^*}\left(\frac{1}{\delta\beta}+\frac{1}{k}\right)-\frac{1}{k^2}\right)\bJ_k\right),\\
\mu(\beta,\delta)&\equiv \lambda_{\max}\left(\left(\id_k + \bQ^*\right)^{-1}\bD\right)\\
&= \frac{1}{1+q_1^*}\left(\frac{q_1^*}{\delta\beta} + k\left[\frac{q_2^*}{1+q_1^*+kq_2^*}\left(\frac{1}{\delta\beta}+\frac{1}{k}\right)-\frac{1}{k^2}\right]_+\right).
\end{align}
where $q_1^*, \tq_1^*$, $q_2^*$ are given in \eqref{eq:qs_1_main}, \eqref{eq:qs_2_main}, \eqref{eq:qs_3_main}.
Further $\bX\bX^\sT$ is a low-rank deformation of a Wishart matrix. Hence, for any fixed $\ell$, we have,
almost surely
\begin{align}
\lim\inf_{n,d\to\infty}\lambda_{\ell}(\bX\bX^{\sT})\ge \left(1+\frac{1}{\sqrt{\delta}}\right)^2\, .
\end{align}
Thus, if 
\begin{align}
L(\beta, \delta) = \beta\lambda_{\max}\left(1+\frac{1}{\sqrt{\delta}}\right)^2\mu(\beta,\delta) > 1,
\end{align}
we have $\lambda_{n}(\nabla^2 \cG_1) \le \dots\le \lambda_{n-\ell}(\nabla^2 \cG_1)<0$ with high probability for any fixed $\ell$. 

As explained above, $\nabla^2\cG_2$
has rank at most $k$. Therefore, by Cauchy's interlacing inequality, if $L(\beta,k, \delta,\nu)  > 1$, 
\begin{align}
\lambda_{\min}\left(\nabla^2\cF_*\right) \le \lambda_{n+k}\left(\nabla^2\cG_1 + \nabla^2\cG_2\right) < 0.
\end{align}
Hence, for $L(\beta, \delta) > 1$, $\nabla^2 \cF_*$ has a negative eigenvalue.

Note that the mapping $(\br,\tbr,\bOmega,\tbOmega)\to (\bm,\tbm,\bQ,\tbQ)$ is a diffeomorphism,
and therefore, uninformative fixed point $(\bm^*,\tbm^*,\bQ^*,\tbQ^*)$ is a saddle also when we consider the free energy as
a function of the parameters $(\bm,\tbm,\bQ,\tbQ)$.
The claim that $(\bm^*,\bQ^*)$ is unstable under the naive mean field iteration follows immediately 
from the above, by using Lemma \ref{lemma:hessianstable}, applied to
$f(\bx,\by) = \cF(\bm,\tbm,\bQ,\tbQ)$, whereby $\bx = (\bm,\bQ)$, $\by=(\tbm,\tbQ)$.

\section{Naive Mean Field: Further numerical results}
\label{app:Numerical_MF}

In this section we report on additional numerical simulations using the alternate minimization to
minimize the naive mean field free energy. These results confirm the one presented in the main text in Section \ref{sec:NMF_numerical}.

\subsection{Credible intervals}

\begin{figure}[h!]
\centering
\includegraphics[height=3.in]{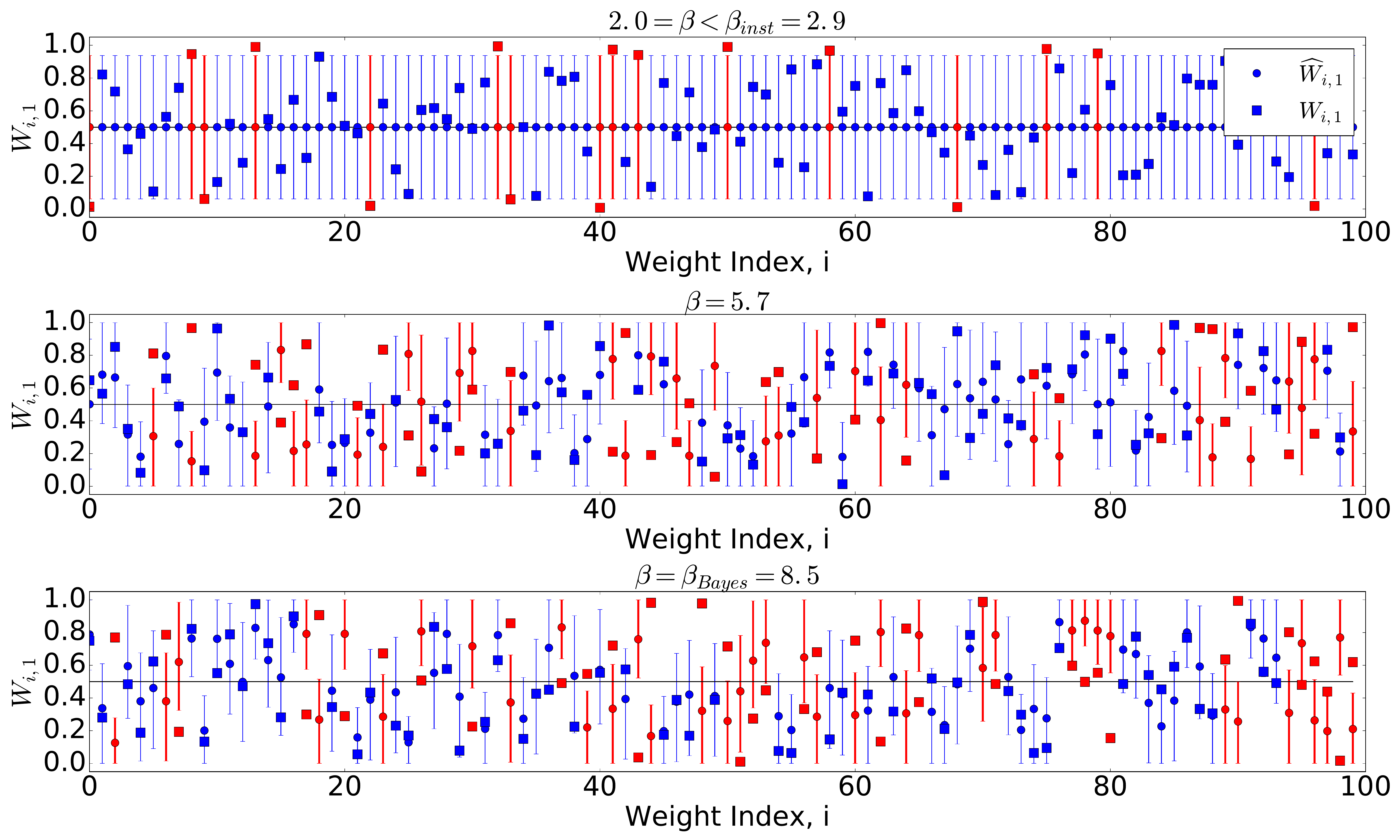}
\caption{Bayesian credible intervals as computed by variational inference at nominal coverage level $1-\alpha= 0.9$.
Here $k=2$, $d=5000$, $n=2500$ and we consider three values of $\beta$: $\beta\in\{2,5.7,8.5\}$ (for reference $\beta_{\inst}\approx 2.9$, $\beta_{\sBayes}\approx 8.5$.
Circles correspond to the posterior mean, and squares to the actual weights. We use red for the coordinates on which the 
credible interval does not cover the actual value of $w_{i,1}$.}
\label{fig:Uncertainty_delta_p5}
\end{figure}

\begin{figure}[h!]
\centering
\includegraphics[height=3.in]{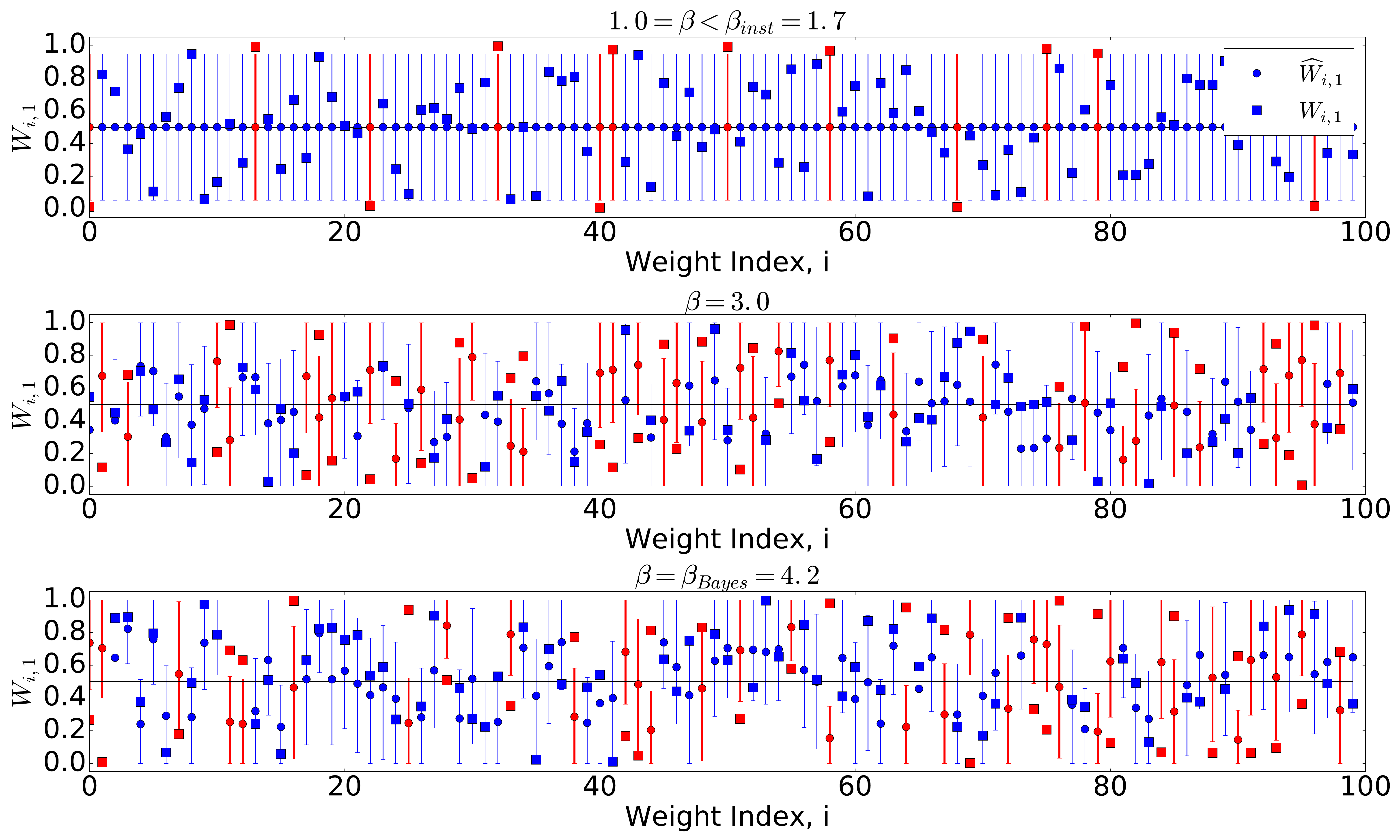}
\caption{Bayesian credible intervals as computed by variational inference at nominal coverage level $1-\alpha= 0.9$.
Here $k=2$, $d=5000$, $n=10000$ and we consider three values of $\beta$: $\beta\in\{1,3,4.2\}$ (for reference $\beta_{\inst}\approx 1.7, \beta_{\sBayes}\approx 4.2$.
Circles correspond to the posterior mean, and squares to the actual weights. We use red for the coordinates on which the 
credible interval does not cover the actual value of $w_{i,1}$.}
\label{fig:Uncertainty_delta_2}
\end{figure}
In Figures \ref{fig:Uncertainty_delta_p5} and \ref{fig:Uncertainty_delta_2} we plot Bayesian credible intervals for the weights $w_{i,1}$ as computed 
within naive mean field, for $k=2$, $d=5000$. These simulations are analogous to the one reported in the main text in Figure \ref{fig:Uncertainty_delta1},
but we use $n = 2500$ ($\delta=0.5$) in Figure \ref{fig:Uncertainty_delta_p5} and  $n = 10000$ ($\delta=2$) in Figure \ref{fig:Uncertainty_delta_p5}.

The nominal coverage of these intervals is $0.9$, but we obtain a smaller empirical coverage.
For $\delta=0.5$, the empirical coverage was $0.87$ (for $\beta = 2<\beta_{\inst}$), $0.61$ (for $\beta=5.7\in(\beta_{\inst},\beta_{\sBayes})$),
and $0.64$ (for $\beta=8.5\approx\beta_{\sBayes}$). For $\delta =2$, the empirical coverage was $0.89$ (for $\beta = 1<\beta_{\inst}$), $0.69$ 
(for $\beta=3\in(\beta_{\inst},\beta_{\sBayes})$), and $0.65$ (for $\beta=4.2\approx\beta_{\sBayes}$). 

\subsection{Results for $k=3$ topics}
\label{sec:NMF_k3}
\begin{figure}[h!]
\phantom{A}
\vspace{-1cm}

\centering
\includegraphics[height=5.5in]{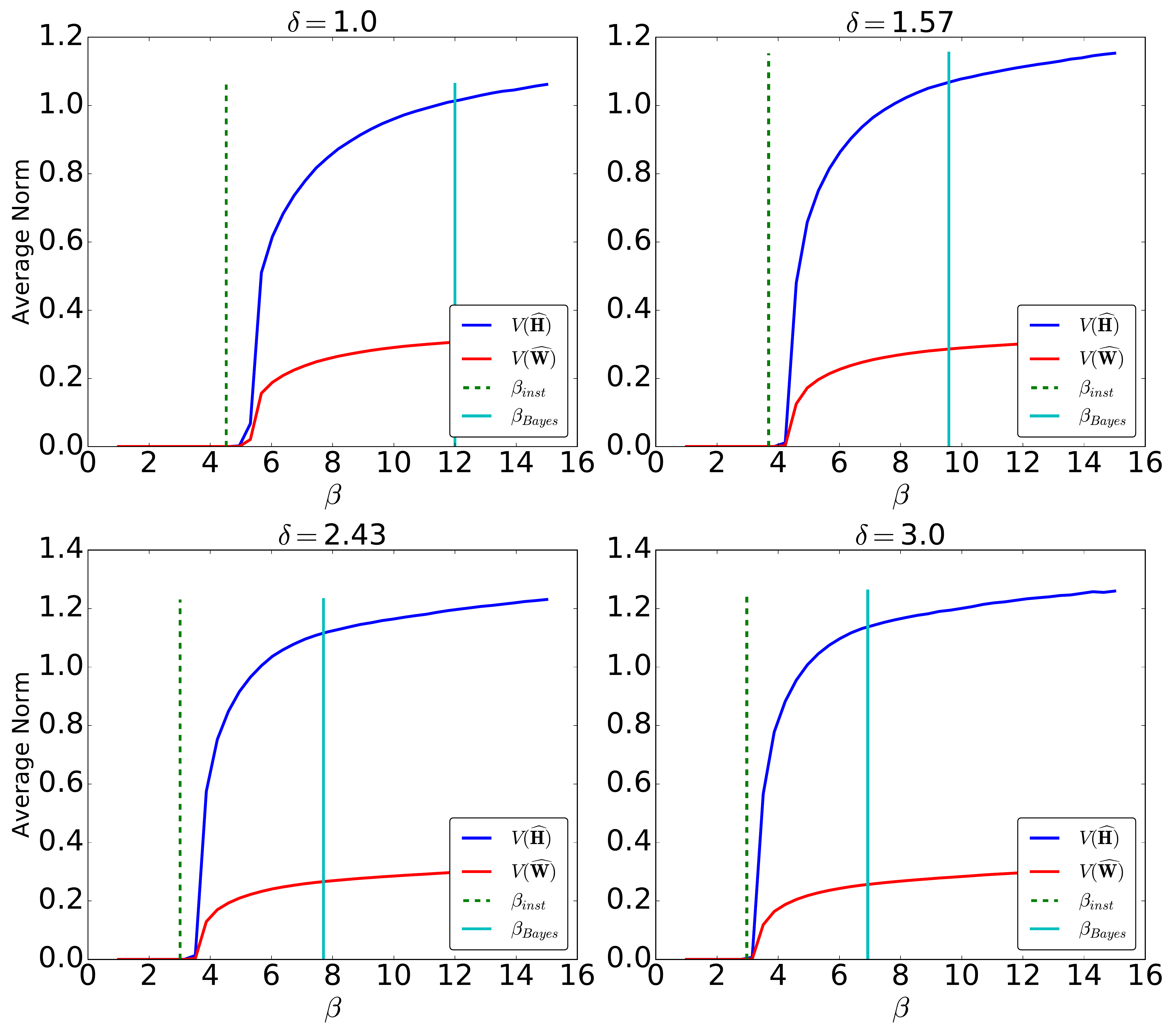}
\caption{Normalized distances $\Norm(\hbH)$, $\Norm(\hbW)$  of the naive mean field estimates from the uninformative fixed point.  Here
 $d = 1000$ and changed $n= d\delta$: each data point corresponds to an average over $400$ random realizations.} 
\label{fig:H_norm_k_3}
\end{figure}

\begin{figure}[h!]
\phantom{A}\hspace{-1.85cm}\includegraphics[height=2.66in]{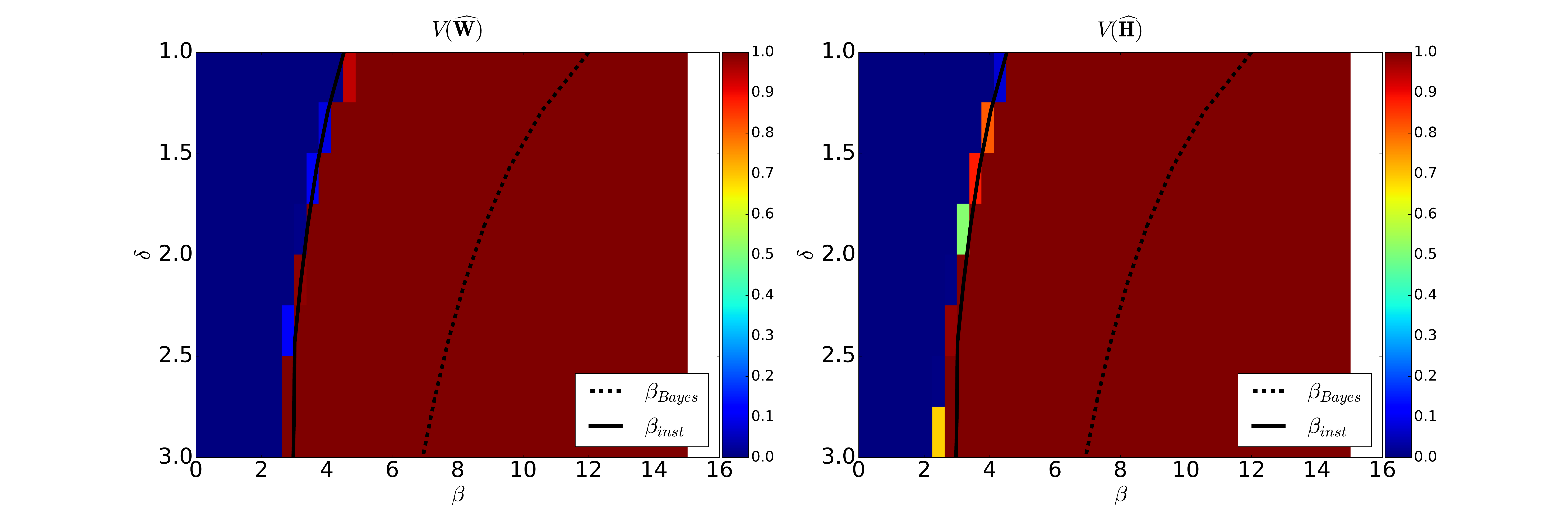}
\caption{Empirical fraction of instances such that  
$\Norm(\hbW)\ge \eps_0=5\cdot 10^{-3}$ (left) or $\Norm(\hbH)\ge \eps_0$ (right), where $\hbW, \hbH$ are the naive mean field
estimate. Here $k=3$, $d=1000$ and, for each $(\delta,\beta)$ point on a grid, we used $400$ random realizations to estimate the probability of $\Norm(\hbW)\ge \eps_0$.}
\label{fig:H_norm_k_3_HM}
\end{figure}

\begin{figure}[h!]
\phantom{A}
\vspace{-1cm}

\includegraphics[height=5.5in]{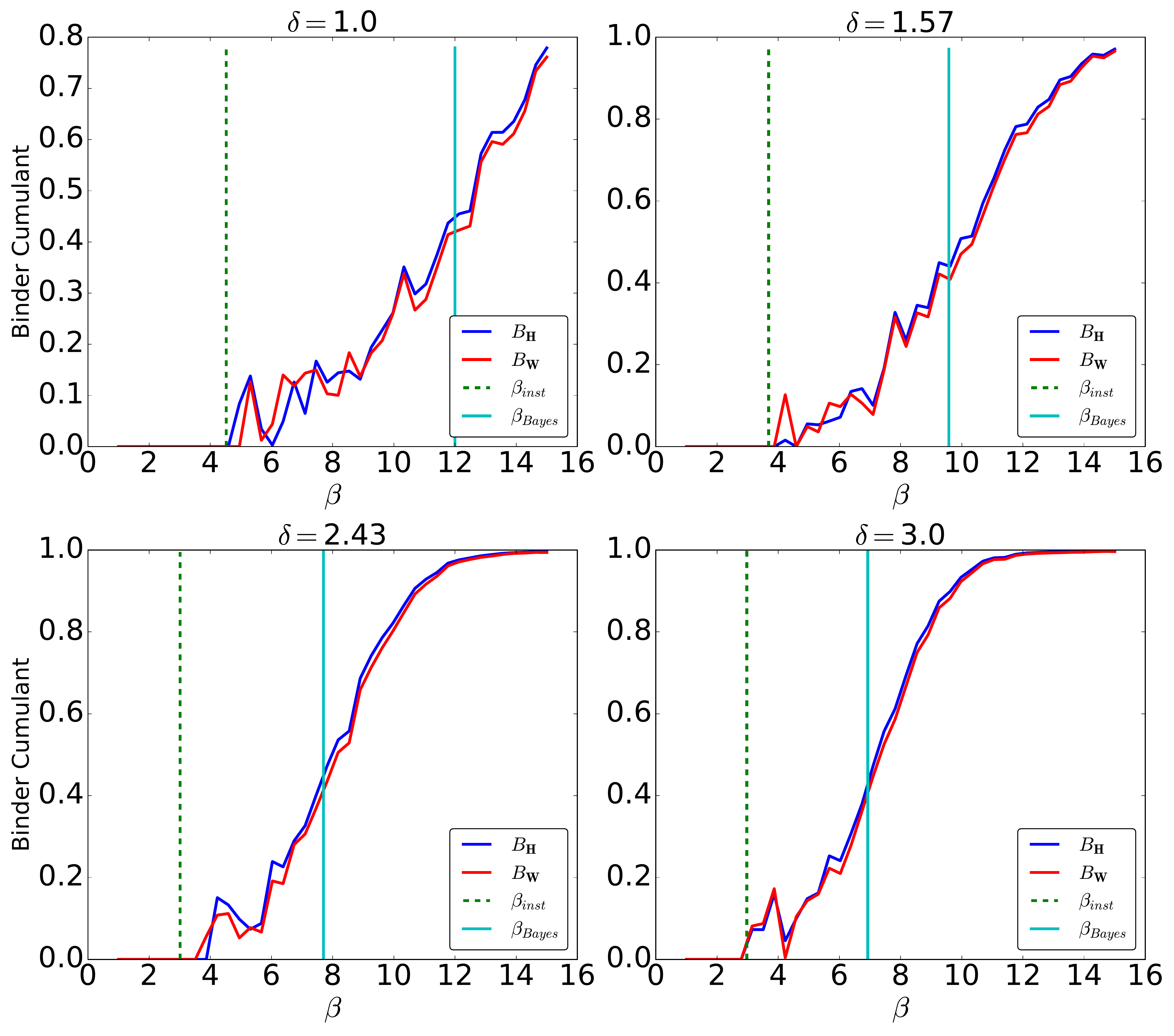}

\caption{Binder cumulant for the correlation between the naive mean field estimates $\hbH$ and the true topics $\bH$.
 Here we report results for $k=3$, $d =1000$ and $n=d\delta$, obtained by averaging over $400$
realizations. Note that for $\beta<\beta_{\sBayes}(k,\nu,\delta)$,
 $\Bind_{\bH}$ decreases with the dimensions, suggesting asymptotically vanishing correlations.}
\label{fig:Binder_k_3}
\end{figure}

\begin{figure}[h!]
\phantom{A}\hspace{-1.85cm}\includegraphics[height=2.66in]{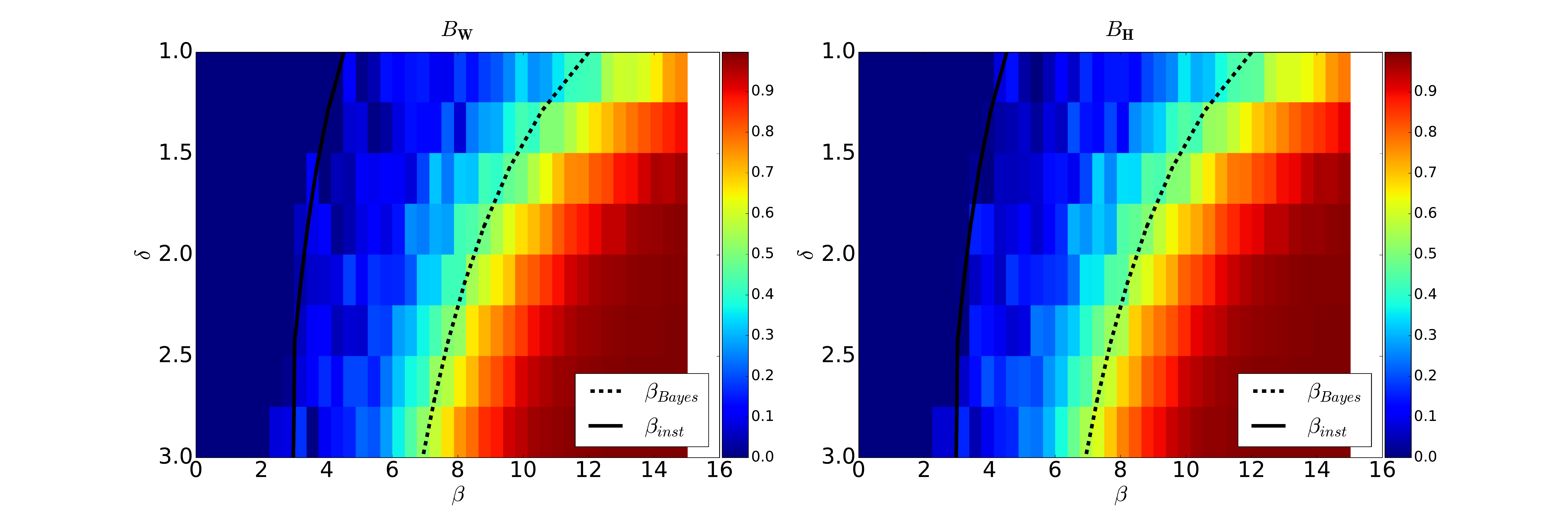}
\caption{Binder cumulant for the correlation between the naive mean field estimates $\hbW$, $\hbH$ and the true weights and topics
$\bW$, $\bH$. Here $k=3$, $d=1000$ and $n=d\delta$, and we averaged over $400$ realizations.}
\label{fig:Binder_k_3_HM}
\end{figure}

In Figures \ref{fig:H_norm_k_3} to \ref{fig:Binder_k_3_HM} we report our results using alternating minimization to
minimize the naive mean field free energy for $k=3$.

In Figures  \ref{fig:H_norm_k_3},  \ref{fig:H_norm_k_3_HM} we plot (respectively) the normalized distances $\Norm(\hbH)$, $\Norm(\hbW)$
from the uninformative subspaces $\{\bH = \bv\otimes\bfone_k:\; \bv\in\reals^d\}$ and $\{\bW = \bv\otimes\bfone_k:\; \bv\in\reals^d\}$.
Data are consistent with the claim that this distance becomes significant when $\beta\ge \beta_{\inst}(k,\nu,\delta)$.

In Figures \ref{fig:Binder_k_3}, \ref{fig:Binder_k_3_HM} we consider the correlation between the estimates $\hbH, \hbW$ and the true factorization
$\bH, \bW$, and  define a Binder cumulant as follows for $k\ge 3$.
Let $\Corr_{\eta}(\bH,\hbH)$ be the $k \times k$ matrix with entries
\begin{eqnarray}
\Corr_{\eta}(\bH,\hbH)_{i, j} &=&\frac{\<(\hbH_{\perp})_i + \eta \bg,(\bH_{\perp})_j\>}{\Vert (\hbH_{\perp})_i + \eta \bg \Vert_2 \Vert (\bH_{\perp})_j \Vert_2}
\end{eqnarray}
We then define
\begin{eqnarray} \label{eqn:Binder_General}
\hat{\bR} &\equiv& \frac{\hE\left\{\sum_{i, j \leq k} \Corr_{\eta}(\bH,\hbH)_{i, j}^4\right\}}{ \hE\left\{ \sum_{i, j \leq k} \Corr_{\eta}(\bH,\hbH)_{i, j}^2 \right\}^2} \\
\Bind_{\bH} &\equiv& \left\{ \begin{array}{cl}
6\bigg(\max\big\{\frac{2}{3} - \hat{\bR}\big\} - \frac{1}{3}\bigg) & \mbox{if } \hE\left\{ \sum_{i, j \leq k} \Corr_{\eta}(\bH,\hbH)_{i, j}^2 \right\} > 0.01 \, ,\\
0 & \mbox{otherwise.}
\end{array} \right.
\end{eqnarray}
Here $\hE$ denotes empirical average with respect to the sample and $\bg\sim\normal(0,\id_d)$. We set $\eta=10^{-4}$.
An analogous definition holds for $\Corr_{\eta}(\hbW)$, $\Bind_{\eta}(\hbW)$. In equation \eqref{eqn:Binder_General} we introduced a max thresholding step and a threshold on the denominator. These are added to ensure the stability of the fraction below the phase transition region where the denominator of $\hat{\boldsymbol{R}}$ vanishes. 

Figures \ref{fig:Binder_k_3}, \ref{fig:Binder_k_3_HM} are consistent with the prediction that the correlation between the AMP estimates and the true
factors $\bW,\bH$ starts to be non-negligible at the Bayes threshold.

\section{TAP free energy and approximate message passing}
\label{app:TAP}

\subsection{Heuristic derivation of the TAP free energy}
\label{app:TAP_Derivation}

Several heuristic approaches exist to construct the TAP free energy. Here we will derive the expression (\ref{eq:FreeEnergy_TAP_TM}) of the TAP
free energy for topic models as an approximation of the Bethe  free energy for the same problem: we refer to 
\cite{wainwright2008graphical,MezardMontanari,koller2009probabilistic} for background on the latter. 
Let us emphasize that our  derivation will be only heuristic, since our rigorous results are obtained by analyzing  the resulting expression
$\cF_{\sTAP}(\br,\tbr)$ and do not require a rigorous justification of Eq.~(\ref{eq:FreeEnergy_TAP_TM}).

The posterior $p_{\bH,\bW|\bX}$ takes the form
\begin{align}
p_{\bH,\bW|\bX}(\bH,\bW|\bX) = \frac{1}{Z(\bX)}\prod_{(a,i)\in [n]\times [d]} \exp\left\{\sqrt{\beta}X_{ai} \<\bw_a,\bh_i\> -\frac{\beta}{2d} \<\bw_a,\bh_i\>^2\right\}
\prod_{a=1}^d\tq_0(\bw_a)\prod_{i=1}^dq_0(\bh_i)\, .
\end{align}
This can be regarded as a pairwise graphical model whose underlying graph is the complete bipartite graph over
vertex sets $[n]$ (associated to variables $\bw_1$, \dots $\bw_n$) and $[d]$  (associated to variables $\bh_1$, \dots $\bh_d$).
The Bethe free energy $\cF_{\sBethe}$ takes as input messages $\bq \equiv (q_{i\to a})_{i\in [d],a\in [n]}$, $\tbq= (\tq_{a\to i})_{i\in [d],a\in [n]}$.
Messages are probability densities over the $\bh_i$'s (for  $q_{i\to a}$) or the $\bw_a$'s (for  $\tq_{a\to i}$),  
indexed  by the directed edges in this graph (each pair $(a,i)$, $a\in [n]$, $i\in [d]$ gives rise to two directed edges). 
The free energy takes the form \cite{MezardMontanari}
\begin{align}
\cF_{\sBethe}(\bq,\tbq)&= \sum_{a=1}^n\sum_{i=1}^d\log Z_{ai}-\sum_{i=1}^d\log Z_i-\sum_{a=1}^n\log \tZ_a  \,,\\
Z_i & = \int \prod_{a=1}^n e^{\sqrt{\beta}X_{ai} \<\bw_a,\bh_i\> -\frac{\beta}{2d} \<\bw_a,\bh_i\>^2} \de q_0(\bh_i)\,  \prod_{a=1}^n \de\tq_{a\to i}(\bw_a)\, ,\label{eq:Zi}\\
\tZ_a & = \int \prod_{i=1}^d e^{\sqrt{\beta}X_{ai} \<\bw_a,\bh_i\> -\frac{\beta}{2d} \<\bw_a,\bh_i\>^2} \de \tq_0(\bw_a)\,  \prod_{i=1}^d \de q_{i\to a}(\bh_i)\, ,\label{eq:Za}\\
Z_{ai} & = \int e^{\sqrt{\beta}X_{ai} \<\bw_a,\bh_i\> -\frac{\beta}{2d} \<\bw_a,\bh_i\>^2} \, \, \de q_{i\to a}(\bh_i)\,\de \tq_{a\to i}(\bw_a)\, . \label{eq:Zai}
\end{align}
The stationarity conditions for $\cF_{\sBethe}(\bq,\tbq)$ correspond to the belief propagation fixed point 
equations
\begin{align}
q_{i\to b}(\bh_i) & = \frac{1}{C_{i\to b}}\, q_0(\bh_i) \, \prod_{a\in[n]\setminus b} \int e^{\sqrt{\beta}X_{ai} \<\bw_a,\bh_i\> -\frac{\beta}{2d} \<\bw_a,\bh_i\>^2} \de\tq_{a\to i}(\bw_a)\, ,\label{eq:BP_FP1}\\
\tq_{a\to j}(\bw_i) & = \frac{1}{\tilde{C}_{a\to j}}\, \tq_0(\bw_i) \, \prod_{i\in [d]\setminus j} \int e^{\sqrt{\beta}X_{ai} \<\bw_a,\bh_i\> -\frac{\beta}{2d} \<\bw_a,\bh_i\>^2} \de q_{i\to a}(\bh_i)\, .
\label{eq:BP_FP2}
\end{align}
We define $\fb_{i\to a}= \int \bh_i \de q_{i\to a}(\bh_i)$, $\tfb_{a\to i}= \int \bw_a \de \tq_{a\to i}(\bw_a)$, and
$\bg_{i\to a}= \int \bh^{\otimes 2}_i \de q_{i\to a}(\bh_i)$, $\tbg_{a\to i}= \int \bw^{\otimes 2}_a \de \tq_{a\to i}(\bw_a)$. Since  
$X_{ai} = O(1/\sqrt{n})$,  we have
\begin{align}
 \prod_{i=1}^d&\int e^{\sqrt{\beta}X_{ai} \<\bw_a,\bh_i\> -\frac{\beta}{2d} \<\bw_a,\bh_i\>^2} \de q_{i\to a}(\bh_i) =\\
&=\prod_{i=1}^d
\exp\left\{\sqrt{\beta} X_{ai}\<\fb_{i\to a}, \bw_a\>-\frac{\beta}{2d}\<\fb_{i\to a},\bw_a\>^2+\frac{\beta}{2}\Big(X_{ai}^2-\frac{1}{d}\Big)
\<\bg_{i\to a}-\fb_{i\to a}^{\otimes 2},\bw_a^{\otimes 2}\>+O(n^{-3/2})\right\}\\
& =\exp\left\{\sum_{i=1}^d\sqrt{\beta} X_{ai}\<\fb_{i\to a}, \bw_a\>-\frac{\beta}{2d}\sum_{i=1}^d\<\fb_{i\to a},\bw_a\>^2+O(n^{-1/2})\right\}\, ,
\label{eq:ExpansionLargeDeg}
\end{align}
where in the last step we used the fact that $\E\{X^2_{ai}-d^{-1}\}=O(n^{-3/2})$ and applied the central limit theorem.

Using the expression (\ref{eq:ExpansionLargeDeg}) in Eq.~(\ref{eq:Za}), and repeating a similar calculation for (\ref{eq:Zi}),
we get
\begin{align}
\log Z_i & =\phi\left(\sqrt{\beta}\sum_{a=1}^nX_{ai}\tfb_{a\to i},\frac{\beta}{d}\sum_{a=1}^n\tfb_{a\to i}^{\otimes 2}\right) +O(n^{-1/2})\, ,\label{eq:Zi_formula}\\
\log \tZ_a & = \tphi\left(\sqrt{\beta}\sum_{i=1}^dX_{ai}\fb_{i\to a},\frac{\beta}{d}\sum_{i=1}^d\fb_{i\to a}^{\otimes 2}\right) +O(n^{-1/2})\, ,\label{eq:Za_formula}
\end{align}
where the functions $\phi$, $\tphi$ are defined implicitly in Eq.~(\ref{eq:densityforms_main}).

We can similarly expand $Z_{ai}$ for large $n,d$:
\begin{align}
Z_{ai}&= 1+\sqrt{\beta} X_{ai}\<\tfb_{a\to i},\fb_{i\to a}\> +\frac{\beta}{2}\Big(X_{ai}^2-\frac{1}{d}\Big) \<\tbg_{a\to i},\bg_{i\to a}\> +O(n^{-3/2})\\
& = \exp\left\{\sqrt{\beta} X_{ai}\<\tfb_{a\to i},\fb_{i\to a}\>-\frac{\beta}{2} X_{ai}^2\<\tfb_{a\to i},\fb_{i\to a}\>^2 +\frac{\beta}{2}\Big(X_{ai}^2-
\frac{1}{d}\Big) \<\tbg_{a\to i},\bg_{i\to a}\> +O(n^{-3/2})\right\}\, .
\end{align}
Therefore, using again the central limit theorem,
\begin{align}
\sum_{a\le n, i\le d}\log Z_{ai}& = \sqrt{\beta} \sum_{a\le n, i\le d} X_{ai}\<\tfb_{a\to i},\fb_{i\to a}\>-\frac{\beta}{2d} \sum_{a\le n, i\le d}\<\tfb_{a\to i},\fb_{i\to a}\>^2+
O(n^{1/2})\, . \label{eq:Zai_formula}
\end{align}

Putting together Eqs.~(\ref{eq:Zi_formula}), (\ref{eq:Za_formula}), and (\ref{eq:Zai_formula}), we obtain
\begin{align}
\cF_{\sBethe}(\bq,\tbq)&= -\sum_{i=1}^d \phi\left(\sqrt{\beta}\sum_{a=1}^nX_{ai}\tfb_{a\to i},\frac{\beta}{d}\sum_{a=1}^n\tfb_{a\to i}^{\otimes 2}\right)
-\sum_{a=1}^n \tphi\left(\sqrt{\beta}\sum_{i=1}^dX_{ai}\fb_{i\to a},\frac{\beta}{d}\sum_{i=1}^d\fb_{i\to a}^{\otimes 2}\right)\nonumber\\
& +\sqrt{\beta}\sum_{a\le n, i\le d} X_{ai}\<\tfb_{a\to i},\fb_{i\to a}\>-\frac{\beta}{2d} \sum_{a\le n, i\le d}\<\tfb_{a\to i},\fb_{i\to a}\>^2 +O(n^{1/2})\,.
\end{align}
Close to the solution of the stationarity conditions (\ref{eq:BP_FP1}), (\ref{eq:BP_FP2}), the message $\fb_{i\to a}$ should be roughly independent of 
$a\in [n]$ and $\tfb_{a\to i}$ should be roughly independent of $i\in [d]$. Hence, we can approximate
\begin{align}
-\frac{\beta}{2d} \sum_{a\le n, i\le d}\<\tfb_{a\to i},\fb_{i\to a}\>^2= -\frac{\beta}{2nd^2} \sum_{a\le n, i\le d}\sum_{b\le n, j\le d}\<\tfb_{a\to j},\fb_{i\to b}\>^2+o(n)\, .
\label{eq:ApproximationMessages}
\end{align}
In order to obtain the expression of Eq.~(\ref{eq:FreeEnergy_TAP_TM}) we add auxiliary variables $\bm_i,\tbm_a\in\reals^k$,
and $\bQ_i,\tbQ_a\in\reals^{k\times k}$, alongside Lagrange multipliers $\br_i$, $\tbr_a$, $\bOmega_i$, $\tbOmega_a$ to enforce the constraints 
\begin{align}
\bm_i = \sqrt{\beta}\sum_{a=1}^nX_{ai}\tfb_{a\to i}\, ,\;\;\;\;\;\;\; \bQ_i = \frac{\beta}{d}\sum_{a=1}^n\tfb_{a\to i}^{\otimes 2}\, ,\\
\bm_a = \sqrt{\beta}\sum_{i=1}^dX_{ai}\fb_{i\to a}\, ,\;\;\;\;\;\;\; \tbQ_a = \frac{\beta}{d}\sum_{i=1}^d\fb_{i\to a}^{\otimes 2}\, .
\end{align}
Denoting by $\bm\in\reals^{d\times k}$ the matrix whose $i$-th row is $\bm_i$ (and analogously for $\tbm$, $\fb$, $\tfb$
and the Lagrange multipliers $\br$, $\tbr$), and using Eq.~(\ref{eq:ApproximationMessages}) we obtain the Lagrangian
(here all sums run over $a\in [n]$ and $i\in [d]$)
\begin{align}
\cL = & \<\br,\bm\>-\sqrt{\beta}\sum_{a, i}X_{ai}\<\br_i,\tfb_{a\to i}\>+\<\tbr,\tbm\>-\sqrt{\beta}\sum_{a, i}X_{ai}\<\tbr_a,\fb_{i\to a}\>
 +\sqrt{\beta}\sum_{a, i}X_{ai}\<\tfb_{a\to i},\fb_{i\to a}\>\nonumber \\
&+\frac{\sqrt{\beta}}{2n} \sum_{a, i}\<\tbOmega_a,\fb_{i\to a}^{\otimes 2}\>
-\frac{d}{2n\sqrt{\beta}} \sum_{a}\<\tbOmega_a,\tbQ_a\>+\frac{\sqrt{\beta}}{2d} \sum_{a, i}\<\tbOmega_i,\tfb_{a\to i}^{\otimes 2}\>
-\frac{d}{2d\sqrt{\beta}} \sum_{a}\<\bOmega_i,\tbQ_i\>\nonumber\\
&-\sum_{i}\phi(\bm_i,\bQ_i)-\sum_{a}\tphi(\tbm_i,\tbQ_i)-\frac{d}{2\beta dn}\sum_{a,i}\<\tbQ_a,\bQ_i\>\, . \label{eq:LagrangianForm}
\end{align}
We next minimize with respect to the message variables $(\fb_{i\to a})$, $(\tfb_{a\to i})$. The first order stationarity conditions read
\begin{align}
X_{ai} \tfb_{a\to i} & = X_{ai}\tbr_a-\frac{1}{n}\tbOmega_a \fb_{i\to a}\, ,\label{eq:Fstationarity}\\
X_{ai} \fb_{i\to a} & = X_{ai}\br_i -\frac{1}{d}\bOmega_i \tfb_{a\to i}\, . \label{eq:tFstationarity}
\end{align}
In particular these imply that $\tfb_{a\to i}  = \tbr_a+O(1/\sqrt{n})$ and $\tfb_{a\to i}  = \tbr_a+O(1/\sqrt{n})$.
Multiplying the first of these equations by $\fb_{i\to a}$ and the second by $\tfb_{a\to i}$, and summing over $i,a$
we obtain
\begin{align}
\sum_{a,i}X_{ai} \<\tfb_{a\to i},\fb_{i\to a}\> =& \frac{1}{2} \sum_{i,a}X_{ai}\Big(\<\fb_{i\to a},\tbr_a\>+\<\tfb_{a\to i},\br_i\>\Big)
-\frac{1}{2n} \sum_{i,a}\<\tbOmega_a,\fb_{i\to a}^{\otimes 2}\>-\frac{1}{2d} \sum_{i,a}\<\bOmega_i,\tfb_{a\to i}^{\otimes 2}\>\nonumber\\
=&\frac{1}{2} \sum_{i,a}X_{ai}\Big(\<\fb_{i\to a},\tbr_a\>+\<\tfb_{a\to i},\br_i\>\Big)
-\frac{1}{2n} \sum_{i,a}\<\tbOmega_a,\br_i^{\otimes 2}\>-\frac{1}{2d} \sum_{i,a}\<\bOmega_i,\tbr_{a}^{\otimes 2}\>+O(n^{1/2})\, .
\end{align}
Further, multiplying Eqs.~(\ref{eq:Fstationarity}), (\ref{eq:tFstationarity})  respectively by $\br_i$ and $\tbr_a$, we get
\begin{align}
\frac{1}{2} \sum_{i,a}X_{ai}\Big(\<\fb_{i\to a},\tbr_a\>&+\<\tfb_{a\to i},\br_i\>\Big) = \sum_{a,i} X_{ai}\<\tbr_a,\br_i\>-\frac{1}{2n}\sum_{a,i}\<\br_i,\tbOmega_a \fb_{i\to a}\>
-\frac{1}{2d}\sum_{a,i}\<\tbr_a,\bOmega_i \tfb_{a\to i}\>\nonumber\\
&= \sum_{a,i} X_{ai}\<\tbr_a,\br_i\>-\frac{1}{2n}\sum_{a,i}\<\tbOmega_a ,\br_i^{\otimes 2}\>
-\frac{1}{2d}\sum_{a,i}\<\bOmega_i ,\tbr_{a}^{\otimes 2}\>+O(n^{1/2})\, .
\end{align}
Substituting the last two expressions in Eq.~(\ref{eq:LagrangianForm}), we obtain
\begin{align}
\cL = & \; \<\br,\bm\>+\<\tbr,\tbm\>-\sqrt{\beta}\<\tbr,\bX\br\>+\frac{\sqrt{\beta}}{2n} \sum_{a, i}\<\tbOmega_a,\br_{i}^{\otimes 2}\>
+\frac{\sqrt{\beta}}{2d} \sum_{a, i}\<\bOmega_i,\tbr_{a}^{\otimes 2}\>
-\frac{d}{2n\sqrt{\beta}} \sum_{a}\<\tbOmega_a,\tbQ_a\>\nonumber\\
&-\frac{d}{2d\sqrt{\beta}} \sum_{i}\<\bOmega_i,\bQ_i\>
-\sum_{i}\phi(\bm_i,\bQ_i)-\sum_{a}\tphi(\tbm_i,\tbQ_i)-\frac{d}{2\beta dn}\sum_{a,i}\<\tbQ_a,\bQ_i\>+O(n^{1/2})\, . \label{eq:LagrangianForm2}
\end{align}
Setting $\bQ_i = \bQ$ independent of $i$, 
 $\tbQ_a = \tbQ$ independent of $a$, defining $\bOmega = d^{-1}\sum_{i=1}^d \bOmega_i$, $\tbOmega = n^{-1}\sum_{a=1}^n \tbOmega_a$,
and neglecting $o(n)$ terms, we get 
\begin{align}
\begin{split}
\tilde{\cF}_{\sTAP} = & \frac{d}{2}\|\bX\|_{F} -\sqrt{\beta}\Tr\left(\bX\br\tbr^{\sT}\right) +\Tr(\br^{\sT}\bm) +\Tr(\tbr^{\sT}\tbm) -\frac{d}{2\sqrt{\beta}}\Tr(\bQ\bOmega) -\frac{d}{2\sqrt{\beta}}\Tr(\tbQ\tbOmega)\\
&-\sum_{a=1}^n \tphi(\tbm_a, \tbQ) - \sum_{i=1}^d\phi(\bm_i, \bQ)
+\frac{\sqrt{\beta}}{2}\sum_{i=1}^d\<\tbOmega,\br_i^{\otimes 2}\>+\frac{\sqrt{\beta}}{2}\sum_{a=1}^n\<\bOmega,\tbr_a^{\otimes 2}\>\\
&-\frac{d}{2\beta}\<\bQ,\tbQ\> \, .
\end{split}
\end{align}
Finally, the expression (\ref{eq:FreeEnergy_TAP_TM}) is recovered by using the stationarity conditions with respect to
$\bOmega$ and $\tbOmega$, which imply $\bQ = (\sqrt{\beta}/d)\sum_{a=1}^n\tbr_a^{\otimes 2}$ and $\tbQ = (\sqrt{\beta}/d)\sum_{i=1}^d\br_i^{\otimes 2}$,
and maximizing with respect to $\bm$, $\tbm$.

\subsection{Gradient of the TAP free energy}

From the definition of the partial Legendre transforms $\psi(\br,\bQ)$, $\tpsi(\tbr,\tbQ)$, the following 
derivatives hold
\begin{align}
\frac{\partial\psi}{\partial\br}(\br,\bQ) = \bm(\br,\bQ)\, ,\;\;\;\;\;\;\;
\frac{\partial\psi}{\partial\bQ}(\br,\bQ) = -\frac{1}{2\beta}\sG\big(\bm(\br,\bQ),\bQ\big)\, ,
\end{align}
where $\bm(\br,\bQ)\in\reals^k$ is the unique solution of
\begin{align}
\br = \frac{1}{\sqrt{\beta}}\,\sF(\bm;\bQ) \,.\label{eq:mrApp}
\end{align}
Using these derivatives we can compute the gradient of the free energy
\begin{align}
\frac{\partial \cF_{\sTAP}}{\partial \br_i}(\br,\tbr) & = -\sqrt{\beta} (\bX^{\sT}\tbr)_i+\bm_i -\frac{\beta}{d}\sum_{a=1}^n\<\tbr_a,\br_i\>\, \tbr_a
+\frac{1}{d}\sum_{a=1}^n\tsG(\tbm_a,\tbQ)\br_i\nonumber\\
& = -\sqrt{\beta} (\bX^{\sT}\tbr)_i+\bm_i + \sqrt{\beta}\tbOmega\br_i\, ,\label{eq:GradTAP1}\\
\frac{\partial \cF_{\sTAP}}{\partial \tbr_a}(\br,\tbr) & =  -\sqrt{\beta} (\bX\br)_a+\tbm_a -\frac{\beta}{d}\sum_{i=1}^d\<\tbr_a,\br_i\>\, \br_i
+\frac{1}{d}\sum_{i=1}^d\sG(\bm_i,\bQ)\tbr_a\nonumber\\
& = -\sqrt{\beta} (\bX\br)_a+\tbm_a +\sqrt{\beta}\bOmega\tbr_a\, ,\label{eq:GradTAP2}
\end{align}
where $\bm_i = \bm(\br_i,(\beta/d)\sum_{a\le n}\tbr_a^{\otimes 2})$, $\tbm_a = \tbm(\tbr_a,(\beta/d)\sum_{i\le d}\br_i^{\otimes 2})$, 
are defined as above,  $\bQ=(\beta/d)\sum_{a\le n}\tbr_a^{\otimes 2}$, $\bQ=(\beta/d)\sum_{i\le d}\tbr_i^{\otimes 2}$, and
\begin{align}
\bOmega &= \frac{1}{d\sqrt{\beta}}\sum_{i=1}^d\big\{\sG(\bm_i,\bQ)-\sF(\bm_i,\bQ)^{\otimes 2}\big\}\, ,\\
\tbOmega &= \frac{1}{d\sqrt{\beta}}\sum_{a=1}^n\big\{\tsG(\tbm_a,\tbQ)-\tsF(\bm_a,\tbQ)^{\otimes 2}\big\}\, .
\end{align}
\begin{remark}
We can express $\br$, $\tbr$ in terms of $\bm$, $\tbm$ in Eqs.~(\ref{eq:GradTAP1}), (\ref{eq:GradTAP2}) by using Eq.~(\ref{eq:mrApp})
\begin{align}
\bm &= \bX^{\sT}\,\tsF(\tbm;\tbQ)-\sF(\bm;\bQ) \tbOmega\, ,\;\;\;\;\;\;
\tbm = \bX\,\sF(\bm;\bQ)-\tsF(\tbm;\tbQ) \bOmega\, ,\label{eq:CriticalTAP_1}\\
\bQ &= \frac{1}{d}\sum_{a=1}^n \tsF(\tbm_a;\tbQ)^{\otimes 2}\, ,\;\;\;\; \;\;\;\; \;\;\;\tbQ = \frac{1}{d}\sum_{i=1}^d \sF(\bm_i;\bQ)^{\otimes 2}\, . \label{eq:CriticalTAP_2}
\end{align}
These coincide with the fixed point of the AMP algorithm in Section \ref{sec:TAP-Topic}.
\end{remark}

\subsection{Uninformative critical point: Proof of Lemma \ref{lemma:Uninf_TAP}}
\label{app:UninformativeTAP}

Consider the stationarity conditions (\ref{eq:CriticalTAP_1}) and (\ref{eq:CriticalTAP_2}), together with the definitions of Eqs.~(\ref{eq:OmegaTAP1}),
(\ref{eq:OmegaTAP2}). Since these are invariant under permutations of the topics, they admit a solution of the form
$\bm = \bv \bfone^{\sT}_k$, $\tbm = \tbv \bfone^{\sT}_k$, $\bQ = q_0\bJ_k+q_0'\id_k$, $\tbQ = \tq_0\bJ_k+\tq_0'\id_k$.
Using Eq.~(\ref{eq:CriticalTAP_2}) and Lemma \ref{lemma:UsefulFormulae}, Eqs.~(\ref{eq:sfsimplifiedsymm}), (\ref{eq:tsfsimplifiedsymm}),
we get $q_0' = \tq'_0=0$. 

Substituting this in Eqs.~(\ref{eq:OmegaTAP1}), (\ref{eq:OmegaTAP2}), and using again Lemma \ref{lemma:UsefulFormulae}, we get
\begin{align}
\bOmega = \sqrt{\beta}\, \id_k\, ,\;\;\;\;\;\;\;\; \tbOmega = \frac{\sqrt{\beta}\delta}{k(k\nu+1)} \bPp\, ,
\end{align}
where we recall that $\bPp= \id_k -\bfone_k\bfone_k/k$. Substituting these in Eq.~(\ref{eq:CriticalTAP_1}), we obtained that this is satisfied provided
$\bv, \tbv$ are given as in Eqs.~(\ref{eq:UninfTAP_1}), (\ref{eq:UninfTAP_2}). Finally,  $q_0$, $\tq_0$ are fixed by substituting in Eq.~(\ref{eq:CriticalTAP_2}). 

\section{State evolution analysis}

\subsection{State evolution equations}

Note that there is an alternative way to express the state evolution recursion in Eqs.~(\ref{eq:FirstSE}), (\ref{eq:SecondSE}).
Given a probability measure $p$ on $\reals^k$ and a matrix $\bM\succeq 0$, $\bM\in\reals^{k\times k}$, we define
the minimum mean square error
\begin{align}
\mmse(\bM;p)\equiv \inf_{\hbx(\, \cdot\, )}\,\E\Big\{[\bx-\hbx(\by)][\bx-\hbx(\by)]^{\sT}\Big\}\, ,
\end{align}
where the expectation is with respect to $\bx\sim p(\,\cdot\,)$ and $\by = \bM^{1/2}\bx+\bz$ for $\bz\sim\normal(0,\id_k)$. 
The infimum is understood in the positive semidefinite order, and it is achieved by $\hbx(\by ) = \E \{\bx|\by\}$.
We then rewrite Eqs.~(\ref{eq:FirstSE}), (\ref{eq:SecondSE}) as
\begin{align}
\bM_{t+1} & =\beta\delta\, \Big\{\mmse(0;\tq_0)-\mmse(\tbM_t;\tq_0)\Big\}\, ,\label{eq:FirstSE_b}\\
\tbM_{t} & = \beta\, \Big\{\mmse(0;q_0)-\mmse(\bM_t;q_0)\Big\}\, .\label{eq:SecondSE_b}
\end{align}
\subsection{Uninformative fixed point}
\label{app:SE_FP}

\begin{lemma}
\label{lemma:symmfixedpointSE}
The state evolution recursion in \eqref{eq:FirstSE}, \eqref{eq:SecondSE} admit uninformative fixed point of the form
\begin{align}
\label{eq:fixedpoint}
\begin{split}
&\tbM^* = \rho_0\bJ_k,\quad\quad \rho_0 = \frac{\delta\beta^2}{k\delta\beta + k^2},\\
&\bM^* = \frac{\delta\beta}{k^2}\bJ_k.
\end{split}
\end{align}
\end{lemma}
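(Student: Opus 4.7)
The proof is a direct substitution verification, in two steps, using the explicit formulae already established for $\sF$ and $\tsF$ at symmetric arguments.

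My plan is to first check the second state-evolution equation \eqref{eq:SecondSE} at $\bM_t = \bM^\ast = (\delta\beta/k^2)\bJ_k$. Because $\sF(\by;\bQ) = \sqrt{\beta}(\id_k+\bQ)^{-1}\by$ is linear in $\by$, the Gaussian expectation
\begin{align*}
\E\bigl\{(\bM^\ast\bh+(\bM^\ast)^{1/2}\bz)(\bM^\ast\bh+(\bM^\ast)^{1/2}\bz)^{\sT}\bigr\}=(\bM^\ast)^2+\bM^\ast=\bM^\ast(\id_k+\bM^\ast)
\end{align*}
collapses, and I will get the closed form $\tbM^\ast = \beta(\id_k+\bM^\ast)^{-1}\bM^\ast$ mentioned immediately after \eqref{eq:SecondSE}. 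Using $\bJ_k^2 = k\bJ_k$ to compute $(\id_k+a\bJ_k)^{-1} = \id_k - \frac{a}{1+ka}\bJ_k$ with $a=\delta\beta/k^2$, a one-line calculation yields $\tbM^\ast=\frac{\beta a}{1+ka}\bJ_k=\rho_0\bJ_k$, exactly the claimed value.

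Next I will check the first equation \eqref{eq:FirstSE} at $\tbM_t = \tbM^\ast = \rho_0\bJ_k$. Writing $\bw\in \sP_1(k)$ so that $\bJ_k\bw=\bfone_k$, and noting that $\bJ_k^{1/2}=\bJ_k/\sqrt{k}$ gives $(\rho_0\bJ_k)^{1/2}\bz = \sqrt{\rho_0/k}\,\langle\bfone_k,\bz\rangle\,\bfone_k$, the argument of $\tsF$ has the form $\ty\,\bfone_k$ for some scalar $\ty$, and $\tbM^\ast$ has the form $\tq_1\id_k+\tq_2\bJ_k$ with $\tq_1=0$, $\tq_2=\rho_0$. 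Applying Lemma \ref{lemma:UsefulFormulae}, Eq.~\eqref{eq:tsfsimplifiedsymm}, this gives
\begin{align*}
\tsF\bigl(\tbM^\ast\bw+(\tbM^\ast)^{1/2}\bz;\,\tbM^\ast\bigr)=\frac{\sqrt{\beta}}{k}\,\bfone_k
\end{align*}
almost surely, so $\tsF(\cdots)^{\otimes 2}=(\beta/k^2)\bJ_k$ is deterministic. Multiplying by $\delta$ recovers $\bM^\ast=(\delta\beta/k^2)\bJ_k$, closing the fixed-point equation.

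The whole argument is a routine verification; there is no real obstacle. The only ingredient one has to remember to use is the symmetry property \eqref{eq:tsfsimplifiedsymm}, which reflects the fact that conditioning on an observation that is invariant under permutations of the Dirichlet coordinates cannot break that permutation symmetry, and therefore forces the posterior mean to be the uniform vector $\bfone_k/k$.
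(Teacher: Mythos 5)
Your proposal is correct and follows essentially the same route as the paper's proof: the paper also verifies \eqref{eq:FirstSE} by observing that $\tbM^*\bw+(\tbM^*)^{1/2}\bz$ is proportional to $\bfone_k$ and invoking Eq.~\eqref{eq:tsfsimplifiedsymm}, and verifies \eqref{eq:SecondSE} via the explicit linear form of $\sF$ and the identity $\tbM^*=\beta(\id_k+\bM^*)^{-1}\bM^*$. Your write-up merely supplies a little more detail (e.g.\ the computation of $(\rho_0\bJ_k)^{1/2}\bz$) than the paper does.
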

\begin{proof}
First note that for this value of $\tbM^*$, $\tbM^*\bw+\tbM^{*^{1/2}}\bz = y\bfone_k$ for some (random) $y$. Hence,
using Eq.~(\ref{eq:tsfsimplifiedsymm})
\begin{align}
\delta\,  \E\Big\{\tsF(\tbM^*\bw+\tbM^{*^{1/2}}\bz;\tbM^*)^{\otimes 2}\Big\} = \frac{\delta\beta}{k^2}\bJ_k = \bM^*.
\end{align}
In addition, using the explicit form (\ref{eq:sFexplicit})
\begin{align}
 \E\Big\{\sF(\bM^*\bh+\bM^{*^{1/2}}\bz;\bM^*)^{\otimes 2}\Big\} =  \beta(\id_k+\bM^*)^{-1}\bM^* = \frac{\beta^2\delta}{k^2}\left(\id_k+\frac{\delta\beta}{k^2}\bJ_k\right)^{-1}\bJ_k = \rho_0\bJ_k = \tbM^*.
\end{align}
Hence, the pair $\bM^*,\tbM^*$ in \eqref{eq:fixedpoint} is a fixed point for the iterations in \eqref{eq:FirstSE}, \eqref{eq:SecondSE}.
\end{proof}

\subsection{Stability of state evolution and proof of Theorem \ref{thm:StateEvolStable}}
\label{sec:StabilitySE}

The following theorem characterizes the region of parameters in which the uninformative fixed point of the state evolution iterations
in Lemma \ref{lemma:symmfixedpointSE} is stable.

\begin{theorem}
Consider the state evolution equations in \eqref{eq:FirstSE}, \eqref{eq:SecondSE}.
The uninformative symmetric fixed point of these equations is stable if and only if
\begin{align}
\beta < \beta_{\sp} = \frac{k(k\nu+1)}{\sqrt{\delta}}.
\end{align}
\end{theorem}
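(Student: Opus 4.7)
My plan is to linearize the composed iteration $\bM \mapsto F_1(F_2(\bM))$ around the uninformative fixed point, where $F_2(\bM) = \beta(\id_k+\bM)^{-1}\bM$ is the (explicit) map from Eq.~(\ref{eq:SecondSE}) and $F_1(\tbM) = \delta\,\E\{\tsF(\tbM\bw+\tbM^{1/2}\bz;\tbM)^{\otimes 2}\}$ from Eq.~(\ref{eq:FirstSE}). Using the identity $\tsF(\tby;\tbQ) = \sqrt\beta\,\E\{\bw \mid \tbQ\bw+\tbQ^{1/2}\bz=\tby\}$ together with the law of total variance, I would rewrite $F_1(\tbM) = \beta\delta\big[\E\bw\bw^{\sT}-\mmse(\tbM;\tq_0)\big]$, reducing the problem to computing how the MMSE depends on $\tbM$ near $\tbM^* = \rho_0 \bJ_k$.

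The computation of $DF_2(\bM^*)$ is immediate: since $F_2(\bM)=\beta[\id_k-(\id_k+\bM)^{-1}]$, its differential is $DF_2(\bM^*)[\bE] = \beta(\id_k+\bM^*)^{-1}\bE(\id_k+\bM^*)^{-1}$, and on the subspace $\{\bE : \bE\bfone_k = 0\}$ the factor $(\id_k+\bM^*)^{-1}$ acts as the identity (since $\bM^*\propto \bJ_k$ and $\bJ_k$ vanishes on $\bfone_k^{\perp}$), giving $DF_2(\bM^*)[\bE]=\beta\bE$. For $DF_1(\tbM^*)$, the main observation is that at $\tbM^*=\rho_0\bJ_k$ we have $\tbM^*\bw=\rho_0\bfone_k$ (using $\bfone_k^{\sT}\bw=1$) and $\tbM^{*1/2}\bz=\sqrt{\rho_0/k}(\bfone_k^{\sT}\bz)\bfone_k$, so the observation is a scalar multiple of $\bfone_k$ and carries no information about $\bw$; hence $\mmse(\tbM^*;\tq_0)=\Cov(\bw)=\bC_0$, and one computes directly from the Dirichlet moment formulas that $\bC_0 = \bPp/[k(k\nu+1)]$.

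Now for a perturbation $\tbM = \tbM^* + \eps\bE$ with $\bE\bfone_k=0$ and $\bE\succeq 0$ on $\bfone_k^{\perp}$, the component of the observation in $\bfone_k^{\perp}$ is $\by_{\perp} = \eps\bE\tilde\bw + \sqrt\eps\bE^{1/2}\bz_{\perp}$ where $\tilde\bw = \bw-\bfone_k/k$ and $\bz_\perp = \bPp\bz$; after rescaling by $\eps^{-1/2}\bE^{-1/2}$ this becomes an effective Gaussian channel $\tilde\by=\sqrt\eps\bE^{1/2}\tilde\bw+\bz_{\perp}$ with SNR matrix $\eps\bE$. Applying the low-SNR expansion $\mmse(\bSigma;p)=\Cov_p-\Cov_p\bSigma\Cov_p+O(\|\bSigma\|^2)$ (which, restricted to $\bfone_k^{\perp}$, is justified by a standard second-order Taylor expansion of the log-partition function) I obtain $DF_1(\tbM^*)[\bE]=\beta\delta\,\bC_0\bE\bC_0 = \frac{\beta\delta}{k^2(k\nu+1)^2}\bE$. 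Composing, the restricted Jacobian on $\{\bE:\bE\bfone_k=0\}$ is scalar multiplication by $\lambda = \beta^2\delta/[k^2(k\nu+1)^2]$, and $\lambda<1$ is exactly $\beta<k(k\nu+1)/\sqrt\delta=\beta_{\sp}$.

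To finish, I would verify that the complementary directions do not become unstable first. Perturbations proportional to $\bJ_k$ are annihilated by $DF_1$ since the observation remains uninformative (the fixed point is parametrized by a $\bJ_k$-scalar), and perturbations of the form $\bu\bfone_k^{\sT}+\bfone_k\bu^{\sT}$ with $\bu\perp\bfone_k$ would push $\tbM$ out of the PSD cone so are not realized by the state-evolution flow (which is PSD-valued by construction). Thus the eigenvalue $\lambda$ above governs stability. The main technical obstacle I expect is rigorously justifying the low-SNR MMSE expansion for the Dirichlet prior and controlling the non-analytic $\sqrt\eps$ behavior of $\tbM^{1/2}$ at the degenerate fixed point; the rescaling argument above reduces this to the standard low-SNR expansion for a Gaussian channel with a smooth compactly-supported prior on the simplex, for which the expansion follows from dominated convergence applied to the posterior density.
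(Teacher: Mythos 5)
Your proposal is correct and lands on exactly the same contraction factor as the paper, $\lambda=\beta^2\delta/[k^2(k\nu+1)^2]$ on the subspace $\{\bE:\bE\bfone_k=0\}$, but it gets there by a genuinely different and cleaner route. The paper's proof in Appendix \ref{sec:StabilitySE} brute-forces the linearization of Eq.~(\ref{eq:FirstSE}): it Taylor-expands the numerator and denominator of $\tsF(\tbM_t\bw+\tbM_t^{1/2}\bz;\tbM_t)$ to second order, which requires the second- and third-moment tensors $\bS$ and $\bT$ of the Dirichlet prior and a careful bookkeeping of the matrix $\bC^t_{\bDelta}=(\rho_0\bJ_k+\tbDelta_t)^{1/2}-(\rho_0/k)^{1/2}\bJ_k$ before everything collapses to $\bDelta^{(2)}_{t+1}=\tfrac{\beta\delta}{k^2(k\nu+1)^2}\tbDelta^{(2)}_t$. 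You instead exploit the Bayes-estimator interpretation of $\tsF$, reduce $DF_1$ to the derivative of the MMSE matrix, and invoke the standard low-SNR expansion $\mmse(\bSigma;p)=\Cov_p-\Cov_p\bSigma\Cov_p+O(\|\bSigma\|^2)$ after the exact splitting $(\rho_0\bJ_k+\eps\bE)^{1/2}=\sqrt{k\rho_0}\,\bP+\sqrt{\eps}\,\bE^{1/2}$ (valid because $\bJ_k$ and $\bE$ have orthogonal supports). This makes the appearance of $\Cov(\bw)^2=\bPp/[k^2(k\nu+1)^2]$ conceptually transparent and avoids the third-moment tensor entirely; the paper's calculation is in effect a hands-on verification of the same MMSE derivative. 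The treatment of $DF_2$ and of the $\bJ_k$-direction coincides with the paper's.

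The one place where you are more cavalier than the paper is the mixed directions $\bu\bfone_k^{\sT}+\bfone_k\bu^{\sT}$, $\bu\perp\bfone_k$. You dismiss them on the grounds that they leave the PSD cone, whereas the paper actually computes that the linearization annihilates them ($\bDelta^{(1)}_{t+1}=o(\tbDelta_t)$), which is the cleaner statement: a perturbation of $\bM_t$ inside the cone can still carry a mixed component of subleading order (e.g.\ $O(\eps^2)$ relative to an $O(\eps)$ orthogonal block), so "not realized by the flow" is not literally true, and a complete argument should either verify contraction in those directions or argue that they cannot feed back into the unstable block at linear order. This is a presentational gap rather than a substantive one; with that step filled in as in the paper, your argument is complete, including the "only if" direction, since $\bDelta^{(2)}$ can be chosen PSD and hence gives an admissible expanding perturbation when $\lambda>1$.
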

\begin{proof}
We linearize Eqs.~\eqref{eq:FirstSE}, \eqref{eq:SecondSE}
around the fixed point in \eqref{eq:fixedpoint} by setting $\bM_t = \bM_*+\bDelta_t$, $\tbM_t = \tbM_*+\tbDelta_t$ and expanding 
Eqs.~\eqref{eq:FirstSE}, \eqref{eq:SecondSE} to first order in $\bDelta, \tbDelta_t$.
First note that Eq.~\eqref{eq:SecondSE} takes the explicit form
\begin{align}
\tbM_t = \beta(\id_k+\bM_t)^{-1}\bM_t\, .
\end{align}
Hence, expanding to linear order we get
\begin{align}
\tbDelta_t = \beta\left(\id_k + \frac{\delta\beta}{k^2}\bJ_k\right)^{-1}\bDelta_t \left(\id_k + \frac{\delta\beta}{k^2}\bJ_k\right)^{-1}+o(\bDelta_t)\, . \label{eq:LinearizationTilde}
\end{align}
In the following, we shall decompose $\bDelta_t$ and $\tbDelta_t$ in the components\ along $\bfone_k$ and the ones orthogonal
\begin{align}
\begin{split}
\bDelta_t & = \delta_t\, \bP+\bDelta_t^{(1)}+\bDelta_t^{(2)}\, ,\\
\bDelta_t^{(1)} & = \bP\bDelta_t\bPp+ \bPp\bDelta_t\bP\, ,\\
\bDelta_t^{(2)} & = \bPp\bDelta_t\bPp\, ,
\end{split}\label{eq:DecompositionDelta}
\end{align}
and similarly for $\tbDelta_t$. Note that the linearization (\ref{eq:LinearizationTilde}) preserves these
subspaces
\begin{align}
\tdelta_t &= \beta \left(1+\frac{\delta\beta}{k}\right)^{-2} \delta_t+o(\bDelta_t)\, ,\label{eq:Linearization1}\\
\tbDelta^{(1)}_t &= \beta  \left(1+\frac{\delta\beta}{k}\right)^{-1} \bDelta^{(1)}_t+o(\bDelta_t)\, ,\\
\tbDelta^{(2)}_t &= \beta \, \bDelta^{(2)}_t+o(\bDelta_t)\, . \label{eq:Linearization3}
\end{align}

Next we consider Eq.~(\ref{eq:FirstSE}). We compute the value of
\begin{align}
f_{\bw,\bz} &= \tsF(\tbM_t\bw+\tbM_t^{1/2}\bz;\tbM_t)\\
&= \sqrt{\beta}\frac{\int\bw_1\exp\left\{\left\langle \tbM_t\bw+\tbM_t^{1/2}\bz,\bw_1\right\rangle -\frac{1}{2}\left\langle\bw_1,\tbM_t\bw_1\right\rangle\right\}\tq_0(\de \bw_1)}{\int\exp\left\{\left\langle \tbM_t\bw+\tbM_t^{1/2}\bz,\bw_1\right\rangle -\frac{1}{2}\left\langle\bw_1,\tbM_t\bw_1\right\rangle\right\}\tq_0(\de \bw_1)} = \sqrt{\beta}\frac{A_{\bw,\bz}}{B_{\bw,\bz}}.
\end{align}
for $\bw\in \sP_1(k)$. We have
\begin{align}
&\tbM_{t}\bw = \rho_0\bfone_k + \tbDelta^t\bw,\\
&\left\langle\bw_1, \tbM_{t}\bw_1\right\rangle = \rho_0 +\left\langle\bw_1, \tbDelta^t\bw_1\right\rangle.
\end{align}
Hence,
\begin{align}
A_{\bw,\bz} &=\int\bw_1\exp\left\{\left\langle \rho_0\bfone_k + \tbDelta^t\bw+\left(\rho_0 \bJ_k + \tbDelta^t\right)^{1/2}\bz, \bw_1\right\rangle - \frac{\rho_0}{2} - \frac{1}{2}\left\langle\bw_1, \tbDelta^t\bw_1\right\rangle\right\}\tq_0(\de \bw_1)\\
&= \int\bw_1\exp\left\{\frac{\rho_0}{2} + \left\langle\bw_1,\tbDelta^t\bw\right\rangle - \frac{1}{2}\left\langle \bw_1, \tbDelta^t\bw_1\right\rangle + \sqrt{\rho_0/ k} \left\langle \bJ_k \bz, \bw_1\right\rangle + \left\langle\bC_\bDelta^t\bz, \bw_1\right\rangle \right\}\tq_0(\de \bw_1)
\end{align}
where $\bC_\bDelta^t \equiv \left(\rho_0\bJ_k + \tbDelta^t\right)^{1/2} - (\rho_0/k)^{1/2}\bJ_k$. Therefore, we have
\begin{align}
A_{\bw,\bz} = a \int\bw_1\exp\left\{\left\langle\bw_1,\tbDelta^t\bw\right\rangle - \frac{1}{2}\left\langle \bw_1, \tbDelta^t\bw_1\right\rangle +\left\langle\bC_\bDelta^t\bz, \bw_1\right\rangle \right\}\tq_0(\de \bw_1)
\end{align}
where $a = \exp\left\{ \rho_0/2 + \sqrt{\rho_0/k}\left\langle\bz, \bfone_k\right\rangle\right\}$. Expanding 
the exponential, we get
\small
\begin{align}
A_{\bw,\bz} = a\int\bw_1\left\{1+\left\langle\bw_1, \tbDelta^t\bw\right\rangle - \frac{1}{2}\left\langle\bw_1, \tbDelta^t\bw_1\right\rangle +\left\langle\bz, \bC_\bDelta^t\bw_1\right\rangle + \frac{1}{2}\left\langle\bz,\bC_\bDelta^t\bw_1\right\rangle^2 + o\left(\tbDelta^t\right)\right\}\tq_0(\de \bw_1).
\end{align}
\normalsize
Thus,
\footnotesize
\begin{align}
A_{\bw,\bz} = a\left(\frac{1}{k}\bfone_k + \bS\tbDelta^t\bw - 
\frac{1}{2}\begin{pmatrix} \left\langle \tbDelta^t, \bT_1\right\rangle  \\  \left\langle \tbDelta^t, \bT_2\right\rangle \\ \vdots \\ \left\langle \tbDelta^t, \bT_k\right\rangle \end{pmatrix} 
+ \bS\bC_\bDelta^t\bz + \frac{1}{2} \begin{pmatrix} \left\langle \bC_\bDelta^t\bz^{\otimes2}\bC_\bDelta^t, \bT_1\right\rangle \\ \left\langle \bC_\bDelta^t\bz^{\otimes2}\bC_\bDelta^t, \bOmega_2^\prime\right\rangle \\ \vdots \\ \left\langle \bC_\bDelta^t\bz^{\otimes2}\bC_\bDelta^t, \bT_k\right\rangle \end{pmatrix} + o\left(\tbDelta^t\right)\right) 
\end{align}
\normalsize
where $\bS, \bT \in \reals^{k\times k}$ are the moment tensors 
\begin{align}
&\bS = \int\bw_1^{\otimes 2}\tq_0(\de \bw_1) = \frac{\nu}{k\nu(k\nu+1)}\left(\id_k+\nu\bJ_k\right)= \frac{1}{k(k\nu+1)}\bPp+ \frac{1}{k}\bP\, ,\label{eq:S_Formula}\\
&\bT = \int\bw_1^{\otimes 3}\tq_0(\de \bw_1), \;\label{eq:bSdef}\\
&(T_i)_{jl} = \frac{1}{k\nu(k\nu+1)(k\nu+2)}.
\begin{cases}
\nu(\nu+1)(\nu+2)\quad \text{if}\; j = l = i,\\
\nu^2(\nu+1)\quad \text{if}\; j = i,\, l\neq i\; \text{or}\; l=i,\, j\neq i\; \text{or}\; l=j, j\neq i,\\
\nu^3\quad \text{otherwise}.
\end{cases}\label{eq:bTdef}
\end{align}
Similarly, we have
\small
\begin{align}
B_{\bw,\bz} = a\int\left\{1 + \left\langle \bw_1, \tbDelta^t\bw\right\rangle - \frac{1}{2}\left\langle\bw_1, \tbDelta^t\bw_1\right\rangle + \left\langle\bz, \bC_\bDelta^t\bw_1\right\rangle + \frac{1}{2}\left\langle\bz, \bC_\bDelta^t\bw_1\right\rangle^2 + o\left(\tbDelta^t\right)\right\}\tq_0(\de \bw_1).
\end{align}
\normalsize
Therefore,
\begin{align}
B_{\bw,\bz} = a\left(1 + \frac{1}{k}\left\langle\bfone_k\otimes\bw, \tbDelta^t\right\rangle - \frac{1}{2}\left\langle\bS, \tbDelta^t\right\rangle + \frac{1}{k}\left\langle\bfone_k\otimes\bz,\bC_\bDelta^t\right\rangle + \frac{1}{2}\left\langle\bz, \bC_\bDelta^t\bS\bC_\bDelta^t\bz\right\rangle + o\left(\tbDelta^t\right)\right).
\end{align}
Hence, we can write
\scriptsize
\begin{align}
f_{\bw,\bz} = \sqrt{\beta}\frac{A_{\bw,\bz}}{B_{\bw,\bz}} 
&= \sqrt{\beta}\Bigg(\frac{1}{k}\bfone_k 
+ \bS\tbDelta^t\bw 
- \frac{1}{2}\begin{pmatrix} \left\langle \tbDelta^t, \bT_1\right\rangle \\ \left\langle \tbDelta^t, \T_2\right\rangle \\ \vdots \\ \left\langle \tbDelta^t, \bT_k\right\rangle \end{pmatrix} 
+ \bS\bC_\bDelta^t\bz + \frac{1}{2}\begin{pmatrix} \left\langle   \bC_\bDelta^t\bz^{\otimes2}\bC_\bDelta^t, \bT_1\right\rangle \\  \left\langle \bC_\bDelta^t\bz^{\otimes2}\bC_\bDelta^t,    \bT_2\right\rangle \\ \vdots \\ \left\langle
    \bC_\bDelta^t\bz^{\otimes2}\bC_\bDelta^t, \bT_k\right\rangle\end{pmatrix} \\
&- \frac{1}{k^2}\left\langle\bfone_k\otimes\bw, \tbDelta^t\right\rangle\bfone_k 
+ \frac{1}{2k}\left\langle\bS, \tbDelta^t\right\rangle\bfone_k  \nonumber
- \frac{1}{k^2}\left\langle\bfone_k\otimes\bz,\bC_\bDelta^t\right\rangle\bfone_k 
- \frac{1}{2k}\left\langle\bz, \bC_\bDelta^t\bS\bC_\bDelta^t\bz\right\rangle\bfone_k \\
&- \frac{1}{k} \left\langle\bfone_k\otimes \bz, \bC_\bDelta^t\right\rangle\bS\bC_\bDelta^t\bz 
- \frac{1}{k^3}\left\langle\bfone_k\otimes \bz, \bC_\bDelta^t\right\rangle^2\bfone_k
+ o\left(\tbDelta^t\right)\Bigg).
\end{align}
\normalsize
Therefore, linearizing Eq.~(\eqref{eq:FirstSE}), we get (below, we denote by $[\bA]_s$ the symmetric part of matrix $\bA$,
namely $[\bA]_s = (\bA+\bA^{\sT})/2$)
\scriptsize
\begin{align}
\bDelta_{t+1}&= \delta\E_{\bw,\bz}\left(f_{\bw,\bz}^{\otimes2}\right) -\frac{\delta\beta}{k^2}\bJ_k \\
&= 
\delta\beta\Bigg(
\frac{2}{k^2}\big[\bS(\tbDelta^t-(\bC_{\bDelta}^t)^2)\bJ_k\big]_s 
- \frac{1}{2k}\begin{pmatrix} \left\langle \tbDelta^t-(\bC_{\bDelta}^t)^2, \bT_1\right\rangle \\ \left\langle \tbDelta^t-(\bC_{\bDelta}^t)^2, \bT_2\right\rangle \\ \vdots \\ \left\langle \tbDelta^t-(\bC_{\bDelta}^t)^2, \bT_k\right\rangle \end{pmatrix}\otimes\bfone_k
- \frac{1}{2k}\bfone_k\otimes\begin{pmatrix} \left\langle \tbDelta^t-(\bC_{\bDelta}^t)^2, \bT_1\right\rangle \\ \left\langle \tbDelta^t-(\bC_{\bDelta}^t)^2, \bT_2\right\rangle \\ \vdots \\ \left\langle {\tbDelta^t}-(\bC_{\bDelta}^t)^2, \bT_k\right\rangle \end{pmatrix}\label{eq:DeltaIteration}\\
&- \frac{2}{k^4}\left\langle\bJ_k,\tbDelta^t\right\rangle\bJ_k 
+ \frac{1}{k^2}\left\langle\bS,\tbDelta^t-(\bC_\bDelta^t)^2\right\rangle\bJ_k\nonumber
\\
&- \frac{2}{k^4}\left\langle\bJ_k, (\bC_{\bDelta}^t)^2\right\rangle\bJ_k
+ \bS(\bC_{\bDelta}^t)^2\bS
- \frac{2}{k^2}\big[\bS(\bC_{\bDelta}^t)^2\bJ_k \big]_s
+ \frac{1}{k^4}\left\langle\bJ_k, (\bC_{\bDelta}^t)^2\right\rangle\bJ_k
+ o(\tbDelta_t)\Bigg).\nonumber
\end{align}
\normalsize

We next decompose  $\tbDelta_t$ in the component along $\bJ_k$ and the one orthogonal, as per Eq.~(\ref{eq:DecompositionDelta}), and 
note that
\begin{align}
\bC_{\bDelta}^t &=\Big((k\rho_0+\tdelta_t)\bP+\tbDelta_t^{(1)}+\tbDelta^{(2)}_t\Big)^{1/2}-(k\rho_0)^{1/2}\bP\\
&=\sqrt{k\rho_0+\tdelta_t}\, \bP +\big(\tbDelta_t^{(2)}\big)^{1/2}-\sqrt{k\rho_0}\, \bP+O(\tbDelta_t) -(k\rho_0)^{1/2}\bP = \big(\tbDelta_t^{(2)}\big)^{1/2}+O(\tbDelta_t) \, ,
\end{align}
whence
\begin{align}
(\bC_{\bDelta}^t)^2 = \tbDelta_t^{(2)}+o(\bDelta)\, .
\end{align}
Using this identity together with Eqs.~(\ref{eq:bSdef}), (\ref{eq:bTdef}) in Eq.~(\ref {eq:DeltaIteration}) we get
\begin{align}
\delta_{t+1} & = o(\tbDelta_t)\, ,\\
\bDelta^{(1)}_{t+1} &= o(\tbDelta_t)\, ,\\
\bDelta^{(2)}_{t+1} &= \frac{\beta\delta}{k^2(k\nu+1)^2}\, \tbDelta^{(2)}_t+o(\tbDelta_t)\, .
\end{align}

Together with Eqs.~(\ref{eq:Linearization1}) to (\ref{eq:Linearization3}), these yield 
\begin{align}
\delta_{t+1} & = o(\bDelta_t)\, ,\\
\bDelta^{(1)}_{t+1} &= o(\bDelta_t)\, ,\\
\bDelta^{(2)}_{t+1} &= \frac{\beta^2\delta}{k^2(k\nu+1)^2}\, \bDelta^{(2)}_t +o(\tbDelta_t)\, .
\end{align}
Hence the uninformative fixed point is stable if and only if 
\begin{align}
\beta \leq \frac{k(k\nu+1)}{\sqrt{\delta}}.
\end{align}
Note that this is the same condition as the spectral threshold. 
\end{proof}

\subsection{Stability of the uninformative point: Proof of Theorem \ref{thm:StabilityTAP}}

In this section we compute the Hessian of the TAP free energy around the uninformative stationary point.
We will establish a second order approximation of $\tcF_{\sTAP}(\br,\tbr)$ near the stationary point. 
Namely, we denote by $\br^*_i = r^*_i\bfone_k$, $\tbr^*_a = \tr^*_a\bfone_k$ the uninformative stationary point,
and by $\bm^*_i = m^*_i\bfone_k$,  $\tbm^*_a = \tm^*_a\bfone_k$ the dual variables, where
\begin{align}
m^*_i &= \frac{\sqrt{\beta}}{k} (\bX^{\sT}\bfone_n)_i\, ,\;\;\;\;\;\;\; \tm^*_a = \frac{\beta}{k(1+kq_0)} (\bX\bX^{\sT}\bfone_n)_a-\frac{\beta}{k+\delta\beta}\, ,\\
r^*_i &=\frac{\sqrt{\beta}}{k(1+kq_0^*)} (\bX^{\sT}\bfone_n)_i\, ,\;\;\;\;\;\;\; \tr^*_a =\frac{1}{k}\,.
\end{align}
For any other assignment of the  variables, $\br,\tbr$, $\bm,\tbm$, we introduce the decomposition
\begin{align}
\br_i &= r_i^s\bfone_k+\bdelta_i \, ,\;\;\;\;\;\;\;\tbr_a= \tr_a^s\bfone_k+\tbdelta_a\, ,\label{eq:ExpansionP1}\\
r_i^s &= r_i^*+\delta^s_i\, ,\;\;\;\;\;\;\;\;\;\; \tr_a^s = \tr_a^*+\tdelta^s_a\, ,\\
\bm_i &= m_i^s\bfone_k+\bfeta_i \, ,\;\;\;\;\;\;\;\tbm_a= \tm_a^s\bfone_k+\tbfeta_a\, ,\\
m_i^s &= m_i^*+\eta^s_i\, ,\;\;\;\;\;\;\;\;\;\; \tm_a^s = \tm_a^*+\teta^s_a\, ,\label{eq:ExpansionP2}
\end{align}
where $\<\bdelta_i,\bfone_k\>=\<\tbdelta_a,\bfone_k\>=\<\bfeta_i,\bfone_k\>=\<\tbfeta_a,\bfone_k\>=0$.
Note that, by construction $\tr_a^s = 1/k$.

We will establish an expansion of the form
\begin{align}
\cF_{\sTAP}(\br,\tbr) = \tcF_{\sTAP}(\br^*,\tbr^*) + 
\cF^{(2)}_{\sTAP}(\bdelta,\tbdelta,\delta^s,\tdelta^s) +o(\delta^2)\, ,\label{eq:TAP_Expansion}
\end{align}
where $\cF^{(2)}_{\sTAP}$ is a quadratic function, and when using the $O(\,\cdot\, )$ notation, we implicitly consider  all $\delta,\eta$
parameters to be of the same order and use $\delta$ for denoting that order. Notice that the first-order term is missing 
from this expansion since $(\br^*,\tbr^*)$ is a stationary point. 

The crucial step in obtaining the expansion (\ref{eq:TAP_Expansion}) is to derive a second order expansion for the logarithmic
moment generating functions $\phi$, $\tphi$, and subsequently for the entropy functions $\psi$, $\tpsi$.
\begin{lemma}
Setting variables as per Eq.~(\ref{eq:ExpansionP1}), we have
\begin{align}
\phi\left(\bm_i,\frac{\beta}{d}\sum_{a=1}^n\tbr_a^{\otimes 2}\right) =& -\frac{1}{2}\log(1+ka_0) +\frac{k(m_i^s)^2}{2(1+ka_0)}+\frac{\beta^2(1+\beta\delta/k+k(m_i^*)^2)}{2d^2k(1+\beta\delta/k)^2}
\left\|\sum_{a=1}^n\tbdelta_a\right\|^2_2\label{eq:PhiFormula}\\
&-\frac{\beta m_i^*}{d(1+\beta\delta/k)}
\sum_{a=1}^n\<\bfeta_i,\tbdelta_a\>+\frac{1}{2}\|\bfeta_i\|_2^2 -\frac{\beta}{2d}\sum_{a=1}^n\|\tbdelta_a\|_2^2+o(\delta^2)\, ,\nonumber
\end{align}
where $a_0 = (\beta/d)\sum_{a=1}^n  (\tr^s_a)^2$.
\end{lemma}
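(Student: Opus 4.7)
The plan is to exploit the closed-form expression for $\phi$ that follows from the Gaussian prior $q_0 = \normal(0,\id_k)$: a direct Gaussian integral gives
\begin{align}
\phi(\bm,\bQ) = -\tfrac{1}{2}\log\det(\id_k+\bQ) + \tfrac{1}{2}\bm^{\sT}(\id_k+\bQ)^{-1}\bm,
\end{align}
so the claim reduces to a careful Taylor expansion. I would first decompose the arguments: $\bm_i = m_i^s\bfone_k + \bfeta_i$ with $\bfeta_i\perp\bfone_k$, and $(\beta/d)\sum_a\tbr_a^{\otimes 2}$, using $\tbr_a = \tr_a^s\bfone_k + \tbdelta_a$ with $\tbdelta_a\perp\bfone_k$, decomposes as $\bA_0 - \id_k + \bDelta^{(1)} + \bDelta^{(2)}$, where $\bA_0 \equiv \id_k + a_0\bJ_k$, $\bDelta^{(1)} = (\beta/d)\sum_a \tr_a^s(\bfone_k\tbdelta_a^{\sT} + \tbdelta_a\bfone_k^{\sT})$ is linear in $\tbdelta$, and $\bDelta^{(2)} = (\beta/d)\sum_a\tbdelta_a\tbdelta_a^{\sT}$ is quadratic. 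Evaluating the closed form at the ``symmetric'' base $(m_i^s\bfone_k, a_0\bJ_k)$ uses $\det \bA_0 = 1+ka_0$ and $\bA_0^{-1} = \id - \frac{a_0}{1+ka_0}\bJ_k$, and produces exactly the stated $-\frac{1}{2}\log(1+ka_0) + \frac{k(m_i^s)^2}{2(1+ka_0)}$.

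For the remainder the plan is to expand $-\frac{1}{2}\log\det(\bA_0+\bDelta)$ and $\frac{1}{2}\bm_i^{\sT}(\bA_0+\bDelta)^{-1}\bm_i$ up to second order in $\bfeta_i$ and $\bDelta = \bDelta^{(1)}+\bDelta^{(2)}$ via the standard series $\log\det(\bA_0+\bDelta) = \log\det\bA_0 + \Tr(\bA_0^{-1}\bDelta) - \frac{1}{2}\Tr((\bA_0^{-1}\bDelta)^2) + O(\|\bDelta\|^3)$ and $(\bA_0+\bDelta)^{-1} = \bA_0^{-1} - \bA_0^{-1}\bDelta\bA_0^{-1} + \bA_0^{-1}\bDelta\bA_0^{-1}\bDelta\bA_0^{-1} + O(\|\bDelta\|^3)$. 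All simplifications are driven by: $\bA_0^{-1}\bfone_k = (1+ka_0)^{-1}\bfone_k$ and $\bA_0^{-1}\bv = \bv$ for any $\bv\perp\bfone_k$; $\bfone_k^{\sT}\bDelta^{(1)}\bfone_k = 0$ and $\bDelta^{(1)}\bfone_k = (\beta k/d)\bu$ with $\bu \equiv \sum_a\tr_a^s\tbdelta_a$; and the resulting contraction $\bA_0^{-1}\bDelta^{(1)}\bA_0^{-1} = (1+ka_0)^{-1}\bDelta^{(1)}$. These orthogonalities kill all would-be linear terms (e.g.\ $\Tr(\bA_0^{-1}\bDelta^{(1)})$ and $\bfone_k^{\sT}\bA_0^{-1}\bDelta\bA_0^{-1}\bfone_k$).

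Collecting the four surviving second-order terms: (i) $\frac{1}{2}\bfeta_i^{\sT}\bA_0^{-1}\bfeta_i = \frac{1}{2}\|\bfeta_i\|_2^2$; (ii) the mixed $\bm$--$\bQ$ contribution $-m_i^s\bfone_k^{\sT}\bA_0^{-1}\bDelta^{(1)}\bA_0^{-1}\bfeta_i$, which via the contraction becomes $-\frac{\beta k m_i^s}{d(1+ka_0)}\sum_a\tr_a^s\<\tbdelta_a,\bfeta_i\>$; (iii) $-\frac{1}{2}\Tr(\bA_0^{-1}\bDelta^{(2)}) = -\frac{\beta}{2d}\sum_a\|\tbdelta_a\|_2^2$, again using $\bfone_k^{\sT}\tbdelta_a = 0$; and (iv) the pure-$\bDelta^{(1)}$ quadratic piece $\frac{1}{4}\Tr((\bA_0^{-1}\bDelta^{(1)})^2) + \frac{(m_i^s)^2}{2}\bfone_k^{\sT}\bA_0^{-1}\bDelta^{(1)}\bA_0^{-1}\bDelta^{(1)}\bA_0^{-1}\bfone_k$. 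Writing $\bDelta^{(1)} = (\beta/d)(\bfone_k\bu^{\sT} + \bu\bfone_k^{\sT})$ gives $(\bDelta^{(1)})^2 = (\beta/d)^2(\|\bu\|_2^2\bJ_k + k\bu\bu^{\sT})$, so (iv) collapses to $\frac{k\beta^2[(1+ka_0) + k(m_i^s)^2]}{2d^2(1+ka_0)^2}\|\bu\|_2^2$. Replacing the base values $m_i^s\to m_i^*$, $\tr_a^s\to 1/k$, $a_0\to\beta\delta/k^2$ (so $1+ka_0\to 1+\beta\delta/k$ and $\|\bu\|_2^2\to (1/k^2)\|\sum_a\tbdelta_a\|_2^2$) in these already-quadratic coefficients yields the four displayed terms.

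The only genuine care needed is justifying that the coefficient substitutions ($m_i^s\to m_i^*$, $\tr_a^s \to 1/k$, $a_0\to\beta\delta/k^2$) do not spoil the $o(\delta^2)$ remainder. Each such replacement shifts the coefficient by $O(\delta)$, multiplies an expression already of order $\delta^2$, and therefore contributes $O(\delta^3)$, which is absorbed. The second bookkeeping point is confirming that $\bDelta^{(2)}$ generates no cross-term with $\bfeta_i$ or with the $(m_i^s)^2$ prefactor beyond quadratic order, which follows because $\bDelta^{(2)}$ is already $O(\delta^2)$ and any extra factor of $\bfeta_i$ or $\bDelta^{(1)}$ immediately pushes the total order above two. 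The proof is mechanical once the orthogonality identities above are in place; the main obstacle is purely the organization of the expansion, not any deep estimate.
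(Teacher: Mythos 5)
Your proposal is correct and follows essentially the same route as the paper's proof: the exact Gaussian formula $\phi(\bm,\bQ)=-\tfrac12\Tr\log(\id+\bQ)+\tfrac12\<\bm,(\id+\bQ)^{-1}\bm\>$, the orthogonal decomposition of $\bQ$ into the $\bJ_k$-component, the mixed rank-two piece $\bfone_k\bu^{\sT}+\bu\bfone_k^{\sT}$, and the purely orthogonal quadratic piece, followed by a second-order expansion in which the orthogonality relations kill the linear terms. Your computed coefficients all agree with the stated formula, and your explicit remark that the substitutions $m_i^s\to m_i^*$, $\tr_a^s\to 1/k$, $a_0\to\beta\delta/k^2$ only perturb already-quadratic terms by $O(\delta^3)$ is a point the paper leaves implicit.
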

\begin{proof}
Let $\bQ = (\beta/d)\sum_{a=1}^n\tbr_a^{\otimes 2}$, and define the orthogonal decomposition $\bQ = \bQ_0 +\bQ_1+\bQ_2$,
where $\bQ_0= \bP\bQ\bP$, $\bQ_1= \bP\bQ\bPp+\bPp\bQ\bP$, $\bQ_2= \bPp\bQ\bPp$. Using the representation (\ref{eq:ExpansionP1}),
we get
\begin{align}
\bQ_0 & =a_0\bfone_k\bfone_k^{\sT}\, ,\;\;\;\;\;\;\;\;\;\;\; a_0 = \frac{\beta}{d}\sum_{a=1}^n  (\tr^s_a)^2\, ,\\
\bQ_1 & =\bfone_k\ba_1^{\sT}+\ba_1\bfone_k^{\sT}\, ,\;\;\;\;\;\; 
\ba_1 = \frac{\beta}{d}\sum_{a=1}^n \tr^s_a \tbdelta_a\, ,\\
\bQ_2 & =\frac{\beta}{d}\sum_{a=1}^n \tbdelta_a\tbdelta_a^{\sT}\, .
\end{align}
By Gaussian integration, we have 
\begin{align}
\phi(\bm_i,\bQ) = -\frac{1}{2}\Tr \log\big(\id+\bQ\big)+\frac{1}{2}\<\bm_i,(\id+\bQ)^{-1}\bm_i\>\, .\label{eq:PhiExact}
\end{align}
Expanding the logarithm, we get
\begin{align}
\Tr \log\big(\id+\bQ\big) =&\Tr \log\big(\id+\bQ_0\big) +\Tr\big\{(\id+\bQ_0)^{-1}(\bQ_1+\bQ_2)\big\}\nonumber\\
&-\frac{1}{2} \Tr\big\{(\id+\bQ_0)^{-1}\bQ_1 (\id+\bQ_0)^{-1}\bQ_1\big\} +o(\delta^2)\nonumber\\
 =&\Tr \log\big(\id+\bQ_0\big)+\Tr(\bQ_2)- \, \<\ba_1,(\id+\bQ_0)^{-1}\ba_1\>\, \<\bfone,(\id+\bQ_0)^{-1}\bfone\> +o(\delta^2)\nonumber\\
 = &\log(1+ka_0) + \frac{\beta}{d}\sum_{a=1}^n\|\tbdelta_a\|_2^2-\frac{k}{1+ka_0}\left\|\frac{\beta}{d}\sum_{a=1}^n \tr^s_a \tbdelta_a\right\|_2^2  +o(\delta^2)\nonumber\\
= &\log(1+ka_0) +\frac{\beta}{d}\sum_{a=1}^n\|\tbdelta_a\|_2^2-\frac{\beta^2}{kd^2(1+kq^*_0)}\left\|\sum_{a=1}^n \tbdelta_a\right\|_2^2  +o(\delta^2)
%
\end{align}
Considering next the second term in Eq.~(\ref{eq:PhiExact}), we get
\begin{align}
\<\bm_i,(\id+\bQ)^{-1}\bm_i\> = & (m_i^s)^2\<\bfone,(\id+\bQ_0+\bQ_1+\bQ_2)^{-1}\bfone\>+2 m_i^s\<\bfeta_i,(\id+\bQ_0+\bQ_1)^{-1}\bfone\>\nonumber\\
&+\<\bfeta_i,(\id+\bQ_0)^{-1}\bfeta_i\>+o(\delta^2)\nonumber \\
=&  (m_i^s)^2\<\bfone,(\id+\bQ_0)^{-1}\bfone\>+(m_i^s)^2\<\bfone,(\id+\bQ_0)^{-1}\bQ_1 (\id+\bQ_0)^{-1}\bQ_1 (\id+\bQ_0)^{-1}\bfone\>\nonumber\\
&-2 m_i^s\<\bfeta_i,(\id+\bQ_0)^{-1}\bQ_1 (\id+\bQ_0)^{-1}\bfone\>+\|\bfeta_i\|_2^2+o(\delta^2)\nonumber\\
=&  \frac{k(m_i^s)^2}{1+ka_0}+\frac{(km_i^s)^2}{(1+ka_0)^2}\|\ba_1\|^2_2-\frac{2km_i^s}{(1+ka_0)}\<\bfeta_i,\ba_1\>+\|\bfeta_i\|_2^2+o(\delta^2)\nonumber\\
=&  \frac{k(m_i^s)^2}{1+ka_0}+\frac{(\beta m_i^s)^2}{d^2(1+kq^*_0)^2}
\left\|\sum_{a=1}^n\tbdelta_a\right\|^2_2-\frac{2\beta m_i^s}{d(1+kq^*_0)}
\sum_{a=1}^n\<\bfeta_i,\tbdelta_a\>+\|\bfeta_i\|_2^2+o(\delta^2)\nonumber
\, .
\end{align}
\end{proof}

\begin{lemma}
Setting variables as per Eq.~(\ref{eq:ExpansionP1}), we have
\begin{align}
\tphi\left(\tbm_a,\frac{\beta}{d}\sum_{i=1}^d\br_i^{\otimes 2}\right) & =  \tm_a^s -\frac{1}{2}b_0 +\frac{1}{2k(k\nu+1)}\left\|\tbfeta_a-\frac{\beta}{d}\sum_{i=1}^d r^*_i \bdelta_i\right\|_2^2
-\frac{\beta}{2dk(k\nu+1)}\sum_{i=1}^d \|\bdelta_i\|_2^2 +o(\delta^2)\,,
\end{align}
where  $b_0 = (\beta/d)\sum_{i=1}^d  (r^s_i)^2$.
\end{lemma}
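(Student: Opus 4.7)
The plan is to mimic the structure of the preceding lemma for $\phi$, but exploiting the simplex constraint $\langle\bfone_k,\bw\rangle=1$ that comes from $\tq_0=\Dir(\nu;k)$, which drastically simplifies the algebra. First I would decompose $\tbQ=(\beta/d)\sum_{i=1}^d\br_i^{\otimes 2}$ according to the splitting $\tbQ=\tbQ_0+\tbQ_1+\tbQ_2$ with $\tbQ_0=\bP\tbQ\bP$, $\tbQ_1=\bP\tbQ\bPp+\bPp\tbQ\bP$, $\tbQ_2=\bPp\tbQ\bPp$. Substituting $\br_i=r_i^s\bfone_k+\bdelta_i$ with $\langle\bdelta_i,\bfone_k\rangle=0$ gives explicitly
\begin{align*}
\tbQ_0=b_0\bfone_k\bfone_k^{\sT},\quad \tbQ_1=\bfone_k\bb_1^{\sT}+\bb_1\bfone_k^{\sT}\ \text{with}\ \bb_1=\frac{\beta}{d}\sum_{i}r_i^s\bdelta_i,\quad \tbQ_2=\frac{\beta}{d}\sum_{i}\bdelta_i\bdelta_i^{\sT}.
\end{align*}

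The crucial step is that, since every $\bw$ in the support of $\tq_0$ satisfies $\langle\bw,\bfone_k\rangle=1$, we have $\langle\bw,\tbQ_0\bw\rangle=b_0$ (constant) and $\langle\bw,\tbQ_1\bw\rangle=2\langle\bb_1,\bw\rangle$. Writing $\tbm_a=\tm_a^s\bfone_k+\tbfeta_a$ and using $\langle\tbm_a,\bw\rangle=\tm_a^s+\langle\tbfeta_a,\bw\rangle$, this yields
\begin{align*}
e^{\tphi(\tbm_a,\tbQ)}=e^{\tm_a^s-b_0/2}\int\exp\!\Big\{\langle\bc,\bw\rangle-\tfrac{\beta}{2d}\sum_{i=1}^d\langle\bw,\bdelta_i\rangle^2\Big\}\,\tq_0(\de\bw),\qquad \bc\equiv\tbfeta_a-\bb_1.
\end{align*}
Since $\bc,\bdelta_i$ are $O(\delta)$, I would Taylor-expand the integrand to second order. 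Because $\bc$ and $\bdelta_i$ are orthogonal to $\bfone_k$, the first-order term vanishes as $\E_{\tq_0}[\langle\bc,\bw\rangle]=\langle\bc,\bfone_k\rangle/k=0$. For the quadratic terms I invoke the Dirichlet second-moment matrix $\bS=\int \bw^{\otimes 2}\tq_0(\de\bw)=\frac{1}{k(k\nu+1)}\bPp+\frac{1}{k}\bP$ already computed in \eqref{eq:S_Formula}. Orthogonality to $\bfone_k$ then gives $\E_{\tq_0}[\langle\bc,\bw\rangle^2]=\|\bc\|_2^2/(k(k\nu+1))$ and $\E_{\tq_0}[\langle\bw,\bdelta_i\rangle^2]=\|\bdelta_i\|_2^2/(k(k\nu+1))$.

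Taking the logarithm (using $\log(1+x)=x+O(x^2)$ and noting that the $O(\delta)$ correction is already absent) produces
\begin{align*}
\tphi(\tbm_a,\tbQ)=\tm_a^s-\tfrac{1}{2}b_0+\tfrac{1}{2k(k\nu+1)}\|\bc\|_2^2-\tfrac{\beta}{2dk(k\nu+1)}\sum_{i=1}^d\|\bdelta_i\|_2^2+o(\delta^2).
\end{align*}
Finally, I would replace $r_i^s$ by $r_i^*$ inside $\bb_1$, since $r_i^s-r_i^*=\delta_i^s=O(\delta)$ and $\bdelta_i=O(\delta)$, so this substitution affects $\|\bc\|_2^2$ only at order $O(\delta^3)$ and may be absorbed into the $o(\delta^2)$ remainder, yielding precisely the claimed formula. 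The main obstacle is largely bookkeeping: verifying that all cross-terms between the simplex constraint and the orthogonal projections are handled consistently so that nothing at order $\delta$ survives — but the simplex constraint kills the would-be linear term automatically, which is what makes this expansion considerably cleaner than the one for $\phi$.
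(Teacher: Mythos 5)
Your proposal is correct and follows essentially the same route as the paper: the same orthogonal decomposition $\tbQ=\tbQ_0+\tbQ_1+\tbQ_2$, the same use of the simplex constraint to reduce $\tbQ_0,\tbQ_1$ to constant and linear terms, and the same second-order expansion against $\bS_\perp=\bPp/(k(k\nu+1))$. Your explicit justification of the substitution $r_i^s\to r_i^*$ inside $\bb_1$ (an $O(\delta^3)$ effect) is a detail the paper leaves implicit, but otherwise the arguments coincide.
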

\begin{proof}
Let $\tbQ = (\beta/d)\sum_{i=1}^d\br_i^{\otimes 2}$ and, as in the previous proof, define the orthogonal decomposition $\tbQ = \tbQ_0 +\tbQ_1+\tbQ_2$,
where $\tbQ_0= \bP\tbQ\bP$, $\tbQ_1= \bP\tbQ\bPp+\bPp\tbQ\bP$, $\tbQ_2= \bPp\tbQ\bPp$. Using the representation (\ref{eq:ExpansionP1}),
we get
\begin{align}
\tbQ_0 & =b_0\bfone_k\bfone_k^{\sT}\, ,\;\;\;\;\;\;\;\;\;\;\; b_0 = \frac{\beta}{d}\sum_{i=1}^d  (r^s_i)^2\, ,\\
\tbQ_1 & =\bfone_k\bb_1^{\sT}+\bb_1\bfone_k^{\sT}\, ,\;\;\;\;\;\; 
\bb_1 = \frac{\beta}{d}\sum_{i=1}^d r^s_i \bdelta_i\, ,\\
\tbQ_2 & =\frac{\beta}{d}\sum_{i=1}^d \bdelta_i\bdelta_i^{\sT}\, .
\end{align}
For $\bw\in\supp(\tq_0)$, we have $\<\bfone,\bw\>=1$ and therefore
\begin{align}
\tphi(\tbm_a,\tbQ) & = \log\left\{\int e^{\<\tbm,\bw\>-\frac{1}{2} \<\bw,\tbQ\bw\>}\tq_0(\de\bw)\right\}\\
& = \tm_a^s -\frac{1}{2}b_0+\log\left\{\int e^{\<\tbfeta_a-\bb_1,\bw\>-\frac{1}{2} \<\bw,\tbQ_2\bw\>}\tq_0(\de\bw)\right\}\\
& = \tm_a^s -\frac{1}{2}b_0+ \frac{1}{2}\<(\tbfeta_a-\bb_1)(\tbfeta_a-\bb_1)^{\sT}-\tbQ_1,\bS_{\perp}\> +o(\delta^2)\, ,
\end{align}
where, cf. Eq.~(\ref{eq:S_Formula}),
\begin{align}
\bS_{\perp} = \int (\bPp\bw)^{\otimes 2}\tq_0(\de\bw) = \frac{1}{k(k\nu+1)} \bPp\, .
\end{align}
Hence, we obtain immediately the claim.
\end{proof}

We next transfer the above results on the moment generating functions $\phi$, $\tphi$, to analogous
results on the entropy functions $\psi$, $\tpsi$.
\begin{lemma}\label{lemma:PsiExp}
Setting variables as per Eq.~(\ref{eq:ExpansionP1}), we have
\begin{align}
\psi\left(\br_i,\frac{\beta}{d}\sum_{a=1}^n\tbr_a^{\otimes 2}\right) =& \frac{1}{2}\log(1+ka_0) + \frac{1}{2}k(1+ka_0)(r_i^s)^2-\frac{\beta^2(1+\beta\delta/k+k(m_i^*)^2)}{2d^2k(1+\beta\delta/k)^2}
\left\|\sum_{a=1}^n\tbdelta_a\right\|^2_2\\
&+\frac{1}{2}\left\|\bdelta_i+\frac{\beta m_i^*}{d(1+\beta\delta/k)}\sum_{a=1}^n\tbdelta_a\right\|_2^2 
+\frac{\beta}{2d}\sum_{a=1}^n\|\tbdelta_a\|_2^2+o(\delta^2)\, ,\nonumber
\end{align}
where $a_0 = (\beta/d)\sum_{a=1}^n  (\tr^s_a)^2$.
\end{lemma}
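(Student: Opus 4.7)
The plan is to obtain this expansion as a direct consequence of the expansion of $\phi$ established in the preceding lemma, using the Legendre duality $\psi(\br,\bQ) = \sup_{\bm} \{\langle\br,\bm\rangle - \phi(\bm,\bQ)\}$. Because $\phi(\,\cdot\,,\bQ)$ is strictly convex for $\bQ$ in a neighborhood of $\bQ^* = q_0^* \bJ_k$, the supremum is uniquely attained and differentiable, so a second-order expansion of $\phi$ in the relevant direction transfers to a second-order expansion of $\psi$ provided the critical point $\bm(\br,\bQ)$ is identified correctly.

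First I would parametrize $\bm_i = m_i^s \bfone_k + \bfeta_i$ with $\langle \bfeta_i,\bfone_k\rangle = 0$, matching the parametrization of $\br_i = r_i^s\bfone_k + \bdelta_i$ in Eqs.~(\ref{eq:ExpansionP1})--(\ref{eq:ExpansionP2}), and rewrite $\langle\br_i,\bm_i\rangle = k m_i^s r_i^s + \langle\bdelta_i,\bfeta_i\rangle$. Using the explicit expansion (\ref{eq:PhiFormula}), the dependence on $\bm_i$ splits cleanly into a scalar piece $\frac{k(m_i^s)^2}{2(1+ka_0)}$ and a perpendicular piece $\frac{1}{2}\|\bfeta_i\|_2^2 - \frac{\beta m_i^*}{d(1+\beta\delta/k)} \sum_a \langle\bfeta_i,\tbdelta_a\rangle$, with the rest of $\phi$ independent of $\bm_i$. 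Maximizing the scalar piece yields $m_i^s = (1+ka_0)\, r_i^s$ with value $\frac{1}{2} k(1+ka_0)(r_i^s)^2$; maximizing the perpendicular piece yields $\bfeta_i = \bdelta_i + \frac{\beta m_i^*}{d(1+\beta\delta/k)} \sum_a \tbdelta_a$ with value $\frac{1}{2}\|\bdelta_i + \frac{\beta m_i^*}{d(1+\beta\delta/k)} \sum_a \tbdelta_a\|_2^2$.

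Next I would simply add these Legendre contributions and subtract the $\bm$-independent terms of $\phi$, flipping their sign. This recovers precisely the four displayed terms: the $\frac{1}{2}\log(1+ka_0)$ and $\frac{1}{2}k(1+ka_0)(r_i^s)^2$ from the scalar optimization, the squared-norm shift $\frac{1}{2}\|\bdelta_i + \cdots\|_2^2$ from the perpendicular optimization, and the two correction terms $-\frac{\beta^2(1+\beta\delta/k+k(m_i^*)^2)}{2d^2k(1+\beta\delta/k)^2}\|\sum_a\tbdelta_a\|_2^2$ and $+\frac{\beta}{2d}\sum_a\|\tbdelta_a\|_2^2$, which come from negating the two corresponding summands of (\ref{eq:PhiFormula}).

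The mildly delicate bookkeeping point will be to justify that the $o(\delta^2)$ remainder in $\phi$ survives intact under the Legendre transform. This requires checking that the optimizer $\bm^*(\br,\bQ)$ is within $O(\delta)$ of the reference point, so that the cubic remainder in $\phi$ yields only $o(\delta^2)$ perturbation of the optimal value, and that the background quantity $1+ka_0 = 1+\beta\delta/k + O(\delta)$ can be expanded freely around $1+\beta\delta/k$ without affecting the stated quadratic form (one checks that $m_i^*= O(1)$ while $\delta_i^s, \tdelta_a^s = O(\delta)$, so the induced variations of $1+ka_0$ contribute only to terms absorbed into the explicit quadratic $r_i^s$ term or the $o(\delta^2)$ remainder). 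Apart from this verification, the derivation is a two-line application of partial Legendre duality to Eq.~(\ref{eq:PhiFormula}).
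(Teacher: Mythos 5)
Your proposal is correct and follows essentially the same route as the paper: the paper's proof likewise writes $\psi(\br_i,\bQ)=\max_{m_i^s,\bfeta_i}\{km_i^s r_i^s+\<\bfeta_i,\bdelta_i\>-\phi(\bm_i,\bQ)\}$, invokes strong convexity of $\phi(\,\cdot\,,\bQ)$ to place the optimizer within $O(\delta)$ of the reference point, and substitutes the expansion (\ref{eq:PhiFormula}) order by order. Your explicit scalar/perpendicular optimization and the remark on why the $o(\delta^2)$ remainder survives the Legendre transform simply spell out details the paper leaves implicit.
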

\begin{proof}
By definition
\begin{align}
\psi(\br_i,\bQ) = \max_{m_i^s,\bfeta_i}\big\{km_i^sr_i^s+\<\bfeta_i,\bdelta_i\>- \phi(\bm_i,\bQ)\big\}\, .
\end{align}
Since $\phi(\,\cdot\,,\bQ)$ is strongly convex, the maximum is realized when $\eta_i^s,\bfeta_i = O(\delta)$ and can be computed
order-by-order in $\delta$. Hence, substituting (\ref{eq:PhiFormula}) we obtain the claim. 
\end{proof}

\begin{lemma}\label{lemma:TPsiExp}
Setting variables as per Eq.~(\ref{eq:ExpansionP1}), we have
\begin{align}
\tpsi\left(\tbr_a,\frac{\beta}{d}\sum_{i=1}^d\br_i^{\otimes 2}\right) & = \frac{1}{2}b_0 
+\frac{1}{2}k(k\nu+1)\|\tbdelta_a\|^2_2+\frac{\beta}{d}\sum_{i=1}^d r^*_i \<\bdelta_i,\tbdelta_a\>
+\frac{\beta}{2dk(k\nu+1)}\sum_{i=1}^d \|\bdelta_i\|_2^2 +o(\delta^2)\, ,
\end{align}
where  $b_0 = (\beta/d)\sum_{i=1}^d  (r^s_i)^2$.
\end{lemma}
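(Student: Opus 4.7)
The plan is to obtain the expansion for $\tpsi$ by Legendre-dualizing the expansion for $\tphi$ supplied by the preceding lemma. Recall from Eq.~(\ref{eq:LegendrePhiPartial}) that
\begin{equation*}
\tpsi(\tbr_a,\tbQ) = \sup_{\tbm_a}\big\{\<\tbr_a,\tbm_a\> - \tphi(\tbm_a,\tbQ)\big\},
\end{equation*}
with $\tbQ = (\beta/d)\sum_{i=1}^d \br_i^{\otimes 2}$. Decompose $\tbm_a = \tm_a^s \bfone_k + \tbfeta_a$ with $\<\bfone_k,\tbfeta_a\>=0$, matching the parametrization~(\ref{eq:ExpansionP1})--(\ref{eq:ExpansionP2}) used for $\tbr_a = \tr_a^s \bfone_k + \tbdelta_a$ with $\tr_a^s = 1/k$ and $\<\bfone_k,\tbdelta_a\>=0$. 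Since $\tq_0$ is supported on the simplex, $\tphi$ is affine in $\tm_a^s$, and from the previous lemma the coefficient of $\tm_a^s$ is exactly $1$. Simultaneously $\<\tbr_a,\tbm_a\> = k\tr_a^s \tm_a^s + \<\tbdelta_a,\tbfeta_a\> = \tm_a^s + \<\tbdelta_a,\tbfeta_a\>$, so the linear terms in $\tm_a^s$ cancel (this is the dual manifestation of the constraint $\<\bfone_k,\tbr_a\>=1$). The remaining problem is a finite-dimensional quadratic maximization in $\tbfeta_a$ on the hyperplane $\bfone_k^\perp$.

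Next I substitute the expansion of $\tphi$ given by the preceding lemma to obtain, up to $o(\delta^2)$,
\begin{align*}
\<\tbr_a,\tbm_a\> - \tphi(\tbm_a,\tbQ) = \<\tbdelta_a,\tbfeta_a\> + \tfrac{1}{2}b_0 - \tfrac{1}{2k(k\nu+1)}\|\tbfeta_a - \bc\|_2^2 + \tfrac{\beta}{2dk(k\nu+1)}\sum_{i=1}^d\|\bdelta_i\|_2^2,
\end{align*}
where $\bc \equiv (\beta/d)\sum_{i=1}^d r_i^* \bdelta_i$. Since $\<\bfone_k,\bdelta_i\>=0$, the vector $\bc$ lies in $\bfone_k^\perp$, so the quadratic in $\tbfeta_a$ is genuinely defined on that subspace. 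Setting the derivative to zero gives the explicit maximizer $\tbfeta_a^* = \bc + k(k\nu+1)\tbdelta_a$, which is automatically in $\bfone_k^\perp$.

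Plugging $\tbfeta_a^*$ back in yields
\begin{align*}
\<\tbdelta_a,\tbfeta_a^*\> - \tfrac{1}{2k(k\nu+1)}\|\tbfeta_a^* - \bc\|_2^2 = \<\tbdelta_a,\bc\> + \tfrac{1}{2}k(k\nu+1)\|\tbdelta_a\|_2^2,
\end{align*}
and substituting $\bc = (\beta/d)\sum_i r_i^*\bdelta_i$ in the cross term produces exactly the claimed formula.

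The only point needing care is the control of the $o(\delta^2)$ remainder during the Legendre inversion. The hard part will be arguing that higher-order terms in the expansion of $\tphi$ do not disturb the maximum at this order. This follows from the strict (indeed uniform) strong convexity of $\tphi(\,\cdot\,,\tbQ)$ restricted to $\bfone_k^\perp$ in a neighborhood of $(\tbm_a^*,\tbQ^*)$ (which underlies the validity of the partial Legendre transform in the first place): the argmax depends $C^1$-continuously on the perturbation, so the approximate maximizer $\tbfeta_a^*$ computed from the quadratic expansion differs from the true maximizer by $O(\delta^2)$, and the value thus differs by $o(\delta^2)$.
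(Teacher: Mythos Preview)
Your proof is correct and follows essentially the same approach as the paper: decompose $\tbm_a$ into its components along and orthogonal to $\bfone_k$, use $\tr_a^s=1/k$ to cancel the affine dependence on $\tm_a^s$, and then maximize the quadratic expansion of $\tphi$ from the preceding lemma over $\tbfeta_a$. The paper's own proof is much terser (it simply says ``maximizing order by order in $\delta$''), so your version actually supplies the details the paper omits.
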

\begin{proof}
By definition
\begin{align}
\tpsi(\tbr_i,\tbQ) = \max_{\tm_i^s,\tbfeta_i}\big\{k\tm_i^s\tr_i^s+\<\tbfeta_i,\tbdelta_i\>- \tphi(\tbm_i,\tbQ)\big\}\, .
\end{align}
The proof is again obtained by maximizing order by order in $\delta$, and using $\tr_a^s = 1/k$.
\end{proof}

\begin{lemma}
Setting variables as per Eq. ~(\ref{eq:ExpansionP1}), and introducing the vectors $\br^s = (r_i^s)_{i\le d}\in\reals^d$,
$\tbr^s = (\tr_a^s)_{a\le n}\in\reals^n$, we obtain
\begin{align}
\cF_{\sTAP}(\br,\tbr)= &\cF_{\sTAP}^{(s)}(\br^s,\tbr^s) + \cF_{\sTAP}^{(a)}(\bdelta,\tbdelta) +o(\delta^2)\, ,\label{eq:F_SecondOrder}\\
\cF_{\sTAP}^{(s)}(\br^s,\tbr^s)= &  \frac{d}{2}\log\Big(1+\frac{\beta \delta}{k}\Big) + \frac{1}{2}k\Big(1+\frac{\beta \delta}{k}\Big)\|\br^s\|_2^2
-k\sqrt{\beta}\<\bfone, \bX\br^s\>\label{eq:SecondSymmetric}\, ,\\
\cF_{\sTAP}^{(a)} (\bdelta,\tbdelta) & = \frac{1}{2}\left(1+\frac{\beta\delta}{k(k\nu+1)}\right)\|\bdelta\|_F^2+\frac{1}{2}\big(\beta+k(k\nu+1)\big) \|\tbdelta\|_F^2-
\frac{\beta^2}{2dk(1+\beta\delta/k)}\left\|\sum_{a\le n}\tbdelta_a\right\|_2^2\nonumber\\
&-\sqrt{\beta}\Tr(\bX\bdelta\tbdelta^{\sT})+\frac{\beta}{d(1+\beta\delta/k)}\sum_{i\le d, a\le n} m_i^*\<\bdelta_i,\tbdelta_a\>\, .
\end{align}
\end{lemma}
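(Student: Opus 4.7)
The plan is to compute the Hessian $\bH\equiv\nabla^2\cF_{\sTAP}(\br_*,\tbr_*)$ directly from the gradient formulae (\ref{eq:GradTAP1})-(\ref{eq:GradTAP2}), then exploit the $S_k$-symmetry of the TAP free energy (and of its uninformative critical point) to block-diagonalize $\bH$, and finally establish positivity of each invariant block via a Schur-complement reduction combined with standard random-matrix bounds on $\|\bX\|_{\op}$. Because the columns of $\br,\tbr$ can be permuted simultaneously without changing $\cF_{\sTAP}$, $\bH$ commutes with the diagonal $S_k$-action on $\reals^k$; decomposing $\reals^k=\<\bfone_k\>\oplus \bfone_k^{\perp}$ into the trivial and standard representations (the latter $(k-1)$-dimensional and irreducible), Schur's lemma forces $\bH$ to split into a symmetric block and an antisymmetric block which acts as $\id_{k-1}\otimes\bH_0$ for some operator $\bH_0$ on $\reals^d\oplus\reals^n$. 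The constraint $\<\bfone_k,\tbr_a\>=1$ freezes the symmetric component of $\tbr$, leaving only the single variable $\br^s\in\reals^d$ in that sector.

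The Hessian is assembled by differentiating (\ref{eq:GradTAP1})-(\ref{eq:GradTAP2}) once more, using the envelope identities $\partial_{\br}\psi=\bm$, $2\beta\,\partial_{\bQ}\psi=\sG$ (and analogues for $\tpsi$), applying the chain rule through $\bQ(\tbr)=(\beta/d)\sum_a \tbr_a^{\otimes 2}$ and $\tbQ(\br)=(\beta/d)\sum_i \br_i^{\otimes 2}$, and evaluating $\sF,\tsF,\sG,\tsG$ at $\bm_*,\tbm_*,\bQ_*,\tbQ_*$ via Lemma \ref{lemma:UsefulFormulae}. The crucial algebraic observation is that the TAP correction term $-\frac{\beta}{2d}\sum_{i,a}\<\br_i,\tbr_a\>^2$ contributes second-order terms that precisely cancel the dangerous $(\tbr,\tbr)$ and $(\br,\br)$ self-couplings produced when the arguments $\bQ(\tbr),\tbQ(\br)$ are differentiated inside $\psi,\tpsi$ --- the same Onsager-reaction mechanism that allows Bayes-AMP to attain the Bayes-optimal error. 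In the symmetric sector one finds a $d\times d$ block of the form $(1+\beta\delta/k)\,\id_d$, trivially positive, while in the antisymmetric sector the result is
\[
\bH_0 \;=\; \begin{pmatrix} \alpha\,\id_d & -\sqrt{\beta}\,\bX \\ -\sqrt{\beta}\,\bX^{\sT} & \gamma\,\id_n \end{pmatrix}+\bR,
\]
with $\alpha,\gamma>0$ the scalars read off from Lemma \ref{lemma:UsefulFormulae} and $\bR$ a low-rank perturbation encoding both the rank-one Onsager damping of the direction $\bfone_n\in\reals^n$ by the factor $1/(1+\beta\delta/k)$ and the rank-$k$ signal $(\sqrt{\beta}/d)\bW\bH^{\sT}$ contained in $\bX$.

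To conclude I would apply the Schur complement to $\bH_0$: since $\gamma>0$, positive definiteness reduces to $\alpha\gamma\,\id_d-\beta\,\bX\bX^{\sT}+\tilde\bR\succ 0$ for a rescaled finite-rank $\tilde\bR$. The noise part of $\bX$ satisfies $\lambda_{\max}((\bX-(\sqrt{\beta}/d)\bW\bH^{\sT})(\bX-(\sqrt{\beta}/d)\bW\bH^{\sT})^{\sT})\to(1+1/\sqrt{\delta})^2$ almost surely by the Bai--Silverstein theorem, and BBP-type bounds \cite{benaych2012singular} ensure that for $\beta$ strictly below the spectral threshold the rank-$k$ signal perturbation does not push eigenvalues out of the bulk. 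Matching coefficients --- with the damping of the $\bfone_n$-direction carried by $\bR$ supplying the slack between $\alpha\gamma$ and $\beta(1+1/\sqrt{\delta})^2$ --- collapses the positivity condition to $\beta<k(k\nu+1)/\sqrt{\delta}=\beta_{\sp}$; Weyl's inequality then upgrades strict positivity to a uniform spectral gap $\lambda_{\min}(\bH_0)\ge\eps>0$, which combined with the symmetric-block bound yields $\lambda_{\min}(\bH)\ge\eps$. The main obstacle is precisely this last matching step: one must verify that the projection of $\tilde\bR$ onto the top eigenspace of $\bX\bX^{\sT}$ has the right sign and magnitude to recover exactly the state-evolution threshold $\beta_{\sp}$, rather than the weaker threshold that a naive Schur bound (ignoring $\bR$) would yield. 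This requires a careful identification of the range of $\bR$ with the span of $\bfone_n$ and the singular vectors of the signal component, together with first-order perturbation theory for the top eigenvectors of $\bX\bX^{\sT}$.
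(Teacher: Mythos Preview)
You have misidentified what the lemma asserts. The statement is a purely computational result: it gives an explicit second-order Taylor expansion of $\cF_{\sTAP}$ around the uninformative point, with the precise coefficients appearing in $\cF_{\sTAP}^{(s)}$ and $\cF_{\sTAP}^{(a)}$. It does \emph{not} claim that the Hessian is positive definite, nor does it involve any condition on $\beta$. Your proposal instead sketches a proof of Theorem~\ref{thm:StabilityTAP} (stability of the uninformative point for $\beta<\beta_{\sp}$): Schur complements, Bai--Silverstein spectral bounds, BBP thresholds, and the final inequality $\lambda_{\min}(\bH)\ge\eps$ are all irrelevant to the lemma at hand.

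What the paper actually does to prove this lemma is direct expansion. One decomposes $\br_i,\tbr_a$ as in Eq.~(\ref{eq:ExpansionP1}), writes
\[
\Tr(\bX\br\tbr^{\sT})=k\Tr(\bX\br^s(\tbr^s)^{\sT})+\Tr(\bX\bdelta\tbdelta^{\sT}),\qquad
\sum_{i,a}\<\br_i,\tbr_a\>^2=k^2\sum_{i,a}(r_i^s\tr_a^s)^2+2\sum_{i,a}r_i^s\<\bdelta_i,\tbdelta_a\>+o(\delta^2),
\]
and then invokes the two preceding Lemmas~\ref{lemma:PsiExp} and~\ref{lemma:TPsiExp}, which give the second-order expansions of $\psi(\br_i,\bQ(\tbr))$ and $\tpsi(\tbr_a,\tbQ(\br))$. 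Summing these contributions, collecting the symmetric terms (those depending only on $r_i^s,\tr_a^s$) and the antisymmetric terms (those depending on $\bdelta,\tbdelta$), and finally substituting $\tr_a^s=1/k$ yields the displayed expressions for $\cF_{\sTAP}^{(s)}$ and $\cF_{\sTAP}^{(a)}$. The absence of symmetric/antisymmetric cross terms --- which your $S_k$-symmetry argument would indeed explain abstractly --- simply falls out of the computation. No random-matrix input is needed at this stage; the specific constants $1+\beta\delta/(k(k\nu+1))$, $\beta+k(k\nu+1)$, etc., are the whole point, and they come from explicit evaluation of $\sG,\tsG$ at the uninformative point via Lemma~\ref{lemma:UsefulFormulae}, not from any spectral reasoning.
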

\begin{proof}
Using the decomposition (\ref{eq:ExpansionP1}), we get 
\begin{align}
\Tr(\bX\br\tbr^{\sT}) &= k \Tr\big(\bX\br^{s}(\tbr^s)^{\sT}\big) +  \Tr(\bX\bdelta\tbdelta^{\sT})\, , \\
\sum_{i\le d, a\le n} \<\br_i,\tbr_a\>^2 & = k^2\sum_{i\le d, a\le n} (r^s_i)^2(\tr_a^s)^2+ 2k  \sum_{i\le d, a\le n} (r^s_i\tr_a^s)\<\bdelta_i,\tbdelta_a\> +o(\delta^2)\\
& = k^2\sum_{i\le d, a\le n} (r^s_i)^2(\tr_a^s)^2+ 2  \sum_{i\le d, a\le n} r^s_i\<\bdelta_i,\tbdelta_a\> +o(\delta^2)\, ,
\end{align}
where we used the fact that $\tr^s_a=1/k$. Using these, together with Lemma \ref{lemma:PsiExp}, \ref{lemma:TPsiExp} in 
Eq.~(\ref{eq:FreeEnergy_TAP_TM}), we get the decomposition (\ref{eq:F_SecondOrder}) where
\begin{align}
\cF_{\sTAP}^{(s)}(\br^s,\tbr^s)= &  \frac{d}{2}\log\Big(1+\frac{\beta k}{d}\,\|\tbr^s\|_2^2\Big) + \frac{1}{2}k^2\Big(1+\frac{\beta k}{d}\,\|\tbr^s\|_2^2\Big)\|\br^s\|_2^2
+\frac{1}{2}\beta\delta\|\br^s\|^2_2\nonumber\\
&-k\sqrt{\beta}\Tr(\bX\br^s(\tbr^s)^{\sT})-\frac{\beta k^2}{2d}\|\br^s\|_2^2\|\tbr^s\|_2^2\, ,
\end{align}
Substituting $\tbr^s= \bfone_n/k$, we obtain Eq.~(\ref{eq:SecondSymmetric}).
\end{proof}

Notice that $\cF_{\sTAP}^{(s)}(\br^s,\tbr^s)$ is a positive definite quadratic function in $\br^s$, minimized
at $\br^s = \br^*$. Hence, in order to establish the stability of the uninformative stationary point, it is sufficient to check
that the quadratic form $\cF_{\sTAP}^{(a)}(\bdelta,\tbdelta)$ is positive definite. The matrix representation of this quadratic form yields
\begin{align}
\Hess = \left[\begin{matrix}
\Big(1+\frac{\delta\beta}{k(k\nu+1)}\Big)\id_d & -\sqrt{\beta}\bX^{\sT}\Big(\id_n-\frac{\beta}{d(k+\delta \beta)}\allone_n\Big)\\
-\sqrt{\beta}\Big(\id_n-\frac{\beta}{d(k+\delta \beta)}\allone_n\Big)\bX & 
\big(\beta+k(k\nu+1)\big)\id_n-\frac{\beta^2}{d(k+\delta\beta)}\allone_n
\end{matrix}\right]\, . \label{eq:HessianFinalFormula}
\end{align}
We are left with the task of proving that $\Hess\succ \bzero$ for $\beta< \beta_{\sp}(k,\delta,\nu)$. We will use the following 
random matrix theory lemma.
\begin{lemma}\label{lemma:RMT_lemma}
Let $\bu\in\reals^n$, $\bv\in\reals^d$ be vectors with $\|\bu\|_2=\|\bv\|_2=1$, $\gamma,\alpha_{\|}$, $\alpha_{\perp}, \lbar\in\reals$ be numbers,
and let $\bPu = \bu\bu^{\sT}$ be the orthogonal projector onto $\bu$, and $\bPup = \id-\bu\bu^{\sT}$ be its orthogonal complement. 
Denote by $\bZ\in\reals^{n\times d}$ random matrices with $(Z_{ij})_{i\le n,j\le d}\sim\normal(0,1/d)$, with $n/d\to \delta\in(0,\infty)$ as $n\to\infty$, and define the matrix
\begin{align}
\bM = \gamma\bu\bv^{\sT} +\alpha_{\|}\bPu\bZ+\alpha_{\perp} \bPup\bZ\, .\label{eq:M_lemma}
\end{align}
Finally define $\gamma_*^2 \equiv (1+\sqrt{\delta})\alpha_{\perp}^2-\alpha_{\|}^2$, and 
\begin{align}
\lambda_*^2\equiv \begin{cases}
\frac{(\gamma^2+\alpha_{\|}^2)(\gamma^2+\alpha_{\|}^2-\alpha_{\perp}^2(1-\delta))}{\gamma^2+\alpha_{\|}^2-\alpha_{\perp}^2}
& \mbox{ if $\gamma^2>\gamma_*^2$,}\\
\alpha^2_{\perp}(1+\sqrt{\delta})^2 & \mbox{ otherwise.}\label{eq:LimitSingValue}
\end{cases}
\end{align}
Then, denoting by  $s_{\max}(\bM)$ the largest singular value of $\bM$, we have $\lim_{n\to\infty}s_{\max}(\bM) =\lambda_*$ in probability.
\end{lemma}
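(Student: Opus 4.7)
My plan is to reduce the claim to the top eigenvalue of a rank-one additive perturbation of a Wishart-type matrix. By orthogonal invariance of $\bZ$ under left multiplication, I take $\bu = \be_1$ without loss of generality. Writing the first row of $\bZ$ as $\bg$ (a $1\times d$ Gaussian row with iid $\normal(0,1/d)$ entries) and the remaining $n-1$ rows as an independent matrix $\tilde\bZ\in\reals^{(n-1)\times d}$, a direct block computation gives the key identity
\begin{align*}
\bM^{\sT}\bM \;=\; \alpha_\perp^2\,\tilde\bZ^{\sT}\tilde\bZ + \bh\bh^{\sT}, \qquad \bh \;:=\; \gamma\bv + \alpha_\|\bg^{\sT}.
\end{align*}
The crucial point is that $\bh$ is \emph{independent} of $\tilde\bZ$, and by the law of large numbers $\|\bh\|^2 \to \theta := \gamma^2 + \alpha_\|^2$ in probability (using $\|\bg\|^2\to 1$ and $\bv^{\sT}\bg^{\sT}\to 0$). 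Since $s_{\max}(\bM)^2 = \lambda_{\max}(\bM^{\sT}\bM)$, it suffices to study the top eigenvalue of a rank-one additive perturbation of the Wishart-type matrix $\bS := \tilde\bZ^{\sT}\tilde\bZ$ by an independent direction.

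I then invoke classical BBP / Benaych-Georges--Nadakuditi theory. The empirical spectral distribution of $\bS$ converges to a measure $\mu^S_\delta$ with support $[(1-\sqrt{\delta})^2,(1+\sqrt{\delta})^2]$ and explicit Stieltjes transform
\begin{align*}
m_{\mu^S_\delta}(y) \;=\; \frac{(\delta-1-y) + \sqrt{(y-1-\delta)^2 - 4\delta}}{2y},
\end{align*}
and $\lambda_{\max}(\bS) \to (1+\sqrt{\delta})^2$. Any outlier eigenvalue $\lambda > \alpha_\perp^2(1+\sqrt{\delta})^2$ of $\alpha_\perp^2\bS + \bh\bh^{\sT}$ satisfies the secular equation $\bh_0^{\sT}(\lambda\id - \alpha_\perp^2\bS)^{-1}\bh_0 = 1/\|\bh\|^2$ with $\bh_0 = \bh/\|\bh\|$. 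Because $\bh_0$ is independent of $\bS$ and $\bS$ has a distribution invariant under orthogonal conjugation, this quadratic form has the same law as $\be_1^{\sT}(\lambda\id - \alpha_\perp^2\bS)^{-1}\be_1$, which concentrates uniformly (on compact subsets of $\lambda$ separated from the limiting support) onto $-(1/\alpha_\perp^2)\,m_{\mu^S_\delta}(\lambda/\alpha_\perp^2)$ by standard Marchenko-Pastur trace convergence.

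Setting $y = \lambda/\alpha_\perp^2$ and $r = \alpha_\perp^2/\theta$, the limiting equation $-m_{\mu^S_\delta}(y) = r$ reduces (after isolating the radical and squaring) to $ry(1-r) = 1 - r(1-\delta)$, whose solution $y = [1 - r(1-\delta)]/[r(1-r)]$ back-substitutes to $\lambda_*^2 = \theta[\theta - (1-\delta)\alpha_\perp^2]/(\theta - \alpha_\perp^2)$, matching the statement. A solution $y > (1+\sqrt{\delta})^2$ exists iff $r < -m_{\mu^S_\delta}((1+\sqrt{\delta})^2) = 1/(1+\sqrt{\delta})$, equivalently $\theta > (1+\sqrt{\delta})\alpha_\perp^2$, which is precisely $\gamma^2 > \gamma_*^2$. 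In the subcritical regime, a rank-one perturbation independent of $\bS$ cannot push the top eigenvalue past the spectral edge (the subcritical half of \cite{benaych2012singular}), so $\lambda_{\max}(\bM^{\sT}\bM) \to \alpha_\perp^2(1+\sqrt{\delta})^2$; taking square roots in either case yields $s_{\max}(\bM) \to \lambda_*$.

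The main technical obstacle is establishing the uniform concentration of the resolvent quadratic form together with the subcritical no-outlier claim. Both follow by conditioning on $\bh$ (since $\bh_0$ is independent of $\tilde\bZ$ and $\|\bh\|^2 \to \theta$) and invoking the Benaych-Georges--Nadakuditi theorem \cite{benaych2012singular} for finite-rank perturbations of rectangular random matrices with square-root spectral edge behavior, which applies directly once we observe that $\bM^{\sT}\bM$ is precisely a rank-one perturbation of a Wishart matrix with an independent perturbation direction.
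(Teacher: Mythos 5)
Your proof is correct and follows essentially the same route as the paper: after rotating $\bu$ to $\be_1$, both arguments isolate the spiked direction $\gamma\bv+\alpha_{\|}\bg^{\sT}$ (independent of the remaining rows $\tbZ$), reduce the top singular value to a secular equation involving the Marchenko--Pastur Stieltjes transform via concentration of the resolvent quadratic form, and solve it to obtain the same threshold $\gamma_*^2$ and the same outlier location. The only difference is packaging: you write $\bM^{\sT}\bM$ as an additive rank-one perturbation of $\alpha_{\perp}^2\tbZ^{\sT}\tbZ$ and invoke BBP/Benaych-Georges--Nadakuditi (for which the additive-Hermitian version of their results is the right citation), whereas the paper works with the bordered matrix $\bM\bM^{\sT}$ and derives the equivalent fixed-point equation for the largest eigenvalue directly.
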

\begin{proof}
By rotational invariance of $\bZ$, we can and will assume $\bu=\be_1$, and will denote by $\tbZ\in\reals^{(n-1)\times d}$ the matrix
containing the last $(n-1)$ rows of $\bZ$. We further let $\bw = \gamma\bv+\alpha_{\|}\bZ^{\sT}\bu$. With these definitions,  
\begin{align}
\bM\bM^{\sT} 
= \left[\begin{matrix}
\|\bw\|_2^2 & \alpha_{\perp}(\tbZ\bw)^{\sT}\\
\alpha_{\perp}(\tbZ\bw)& \alpha_{\perp}^2\tbZ\tbZ^{\sT}
\end{matrix}\right]\, .
\end{align}
Note that, almost surely, $\lim_{n\to\infty}\lambda_{\max}(\tbZ\tbZ^{\sT}) = (1+\sqrt{\delta})^2$ \cite{BaiSilverstein},
and therefore $\lim\inf_{n\to\infty} s_{\max}(\bM)^2\ge  \alpha_{\perp}^2(1+\sqrt{\delta})^2$ almost surely.

Recall that, as long as $s_n^2$ is not an eigenvalue of $\alpha_{\perp}^2\tbZ\tbZ^{\sT}$, we have
\begin{align}
\det(s_n^2\id -\bM\bM^{\sT}) = \det(s_n^2\id -\alpha_{\perp}^2\tbZ\tbZ^{\sT})\, \Big\{s_n^2 - \|\bw\|_2^2-
\alpha_{\perp}^2\<\bw,\tbZ^{\sT}(s_n^2\id-\alpha_{\perp}^2\tbZ\tbZ^{\sT})^{-1}\tbZ\bw\>\Big\}\
\end{align}
It is immediate to see that (unless $\alpha_{\perp}=0$ or $\bv=0$), $s_n^2> \lambda_{\max}(\alpha_{\perp}^2\tbZ\tbZ^{\sT})$
almost surely, and therefore $s_n$ is given by the largest solution of  the equation
\begin{align}
s_n^2 = \|\bw\|_2^2+\alpha_{\perp}^2\<\bw,\tbZ^{\sT}(s_n^2\id-\alpha_{\perp}^2\tbZ\tbZ^{\sT})^{-1}\tbZ\bw\>\, .\label{eq:FiniteN}
\end{align}
Note that, almost surely, $\lim_{n\to\infty}\|\bw\|_2^2=\gamma^2+\alpha_{\|}^2 \equiv \tgamma^2$.
Further, $\bw$ is independent of $\tbZ$. 
Hence, by a standard random matrix theory argument \cite{Guionnet,BaiSilverstein}, for any $s^2>\alpha_{\perp}^2(1+\sqrt{\delta})^2$,
the following limits hold almost surely
\begin{align}
\lim_{n\to\infty}\frac{\alpha_{\perp}^2}{\|\bw\|_2^2}\<\bw,\tbZ^{\sT}(s^2\id-\alpha_{\perp}^2\tbZ\tbZ^{\sT})^{-1}\tbZ\bw\> &=
\lim_{n\to\infty}\frac{1}{d}\Tr\Big[\tbZ^{\sT}\big((s^2/\alpha_{\perp}^2)\id-\tbZ\tbZ^{\sT}\big)^{-1}\tbZ\Big] \\
& = -\delta - \frac{s^2\delta}{\alpha_{\perp}^2}\lim_{n\to\infty}\frac{1}{n}\Tr\Big[\big(\tbZ\tbZ^{\sT}-(s^2/\alpha_{\perp}^2)\id\big)^{-1}\Big] \\
& = -\delta -\frac{s^2\delta}{\alpha_{\perp}^2}  R\Big(\frac{s^2}{\alpha_{\perp}^2}\Big)\, ,
\end{align}
where $R(t)$ is the Stieltjes transform of the limit eigenvalues distribution of a Wishart matrix, which is given by the
Marcenko-Pastur law \cite{BaiSilverstein}
\begin{align}
R(z) & = \frac{-z-\delta+1+\sqrt{(z+\delta-1)-4\delta z}}{2\delta z}\, . 
\end{align}
Recall that $z\mapsto R(z)$ is increasing on $[z_v,\infty)$, $z_c\equiv (1+\sqrt{\delta})^2$, with $R(z_c+u) = R(z_c)-c\sqrt{u}+O(u)$
(for a constant $c>0$)
as $u\downarrow 0$, and $R(z) = -1/z+O(1/z^2)$ as $z\to \infty$. We therefore can consider the following 
asymptotic version of Eq.~(\ref{eq:FiniteN}):
\begin{align}
&\frac{s^2}{\tgamma^2}  =\hR\left(\frac{s^2}{\alpha_{\perp}^2}\right)\, ,\;\;\;\;\;\;\;\;\;\;
\hR(z)  = 1-\delta -\delta z\, R(z)\, .\label{eq:FixedPointEigenvalue}
\end{align}
Note that $\hR(z)$ is monotone decreasing on $[z_c,\infty)$ with $\hR(z_c) = (1+\sqrt{\delta})$, $\hR(z_c+u) =\hR(z_c)-c\sqrt{u}+O(u)$,
and $\hR(z) = 1+O(1/z)$ as $z\to\infty$.
For $\tgamma^2>(1+\sqrt{\delta})\alpha_{\perp}^2$, this equation has a unique solution $s_*^2$ with 
$s^2/\tgamma^2 <\hR(s^2\alpha_{\perp}^2)$ for $s^2\in [\alpha_{\perp}^2(1+\sqrt{\delta})^2,s_*^2)$ and 
$s^2/\tgamma^2 >\hR(s^2\alpha_{\perp}^2)$ for $s^2>s_*^2$. Hence, the largest solution $s_n^2$ of (\ref{eq:FiniteN}) converges 
almost surely to  $s_*^2$ as $n\to\infty$. 

For $\tgamma^2>(1+\sqrt{\delta})\alpha_{\perp}^2$, we have  $s^2/\tgamma^2 >\hR(s^2\alpha_{\perp}^2)$ for all
$s^2>\alpha_{\perp}^2(1+\sqrt{\delta})^2$ and therefore $\lim\sup_{n\to\infty} s^2_n\le \alpha_{\perp}^2(1+\sqrt{\delta})^2$
almost surely. Since we have a matching lower bound, we conclude that $\lim_{n\to\infty} s^2_n\le \alpha_{\perp}^2(1+\sqrt{\delta})^2$
in this case.

Finally, the expression  (\ref{eq:LimitSingValue})  follows by solving rewriting Eq.~(\ref{eq:FixedPointEigenvalue}) 
as $\hR^{-1}(s^2/\tgamma^2)= s^2/\alpha_{\perp}^2$, whereby the inverse of $\hR$ in $(1,1+\sqrt{\delta})$ is given by
\begin{align}
\hR^{-1}(x) = \frac{x(x+\delta-1)}{x-1}\, .
\end{align}
\end{proof}

We next state a general lemma that can be used to check whether a matrix of the form (\ref{eq:HessianFinalFormula})
is positive semidefinite.
\begin{lemma}\label{lemma:GeneralBlock}
Let $\bZ\in\reals^{n\times d}$ be random matrices with $(Z_{ij})_{i\le n, j\le d}\sim\normal(0,1/d)$, and $\bu\in\reals^n$, $\bv\in\reals^d$ be unit vectors,
with $n/d\to\delta$ as $n\to\infty$.
Define the projectors $\bPu = \bu\bu^{\sT}$ and $\bPup =\id-\bu\bu^{\sT}$.
For $a,b r,s,\beta,\xi\in\reals$ with $\beta\ge 0$ and $r> s$, let
\begin{align}
\obX & = \xi \,\bu\bv^{\sT} + \bZ\, ,\label{eq:RankOneLemma}\\
\oHess & =\left[\begin{matrix}
a\,\id_d & -\sqrt{\beta}\, \obX^{\sT}(\id_n-b\bPu)\\
-\sqrt{\beta}\, (\id_n-b\bPu) \obX & (r\id_n-s\bPu)
\end{matrix}\right]\, .\label{eq:HGeneral}
\end{align}
Assume that one of the following two conditions holds:
\begin{enumerate}
\item $(1-b)^2(1+\xi^2)/(r-s)\ge (1+\sqrt{\delta})/r$ and
\begin{align}
a (r-s) >\beta \frac{(1-b)^2(1+\xi^2)\big[(1-b)^2(1+\xi^2)r-(1-\delta)(r-s)\big]}{(1-b)^2(1+\xi^2)r-(r-s)}\, .
\label{eq:aCond_1}
\end{align}
\item $(1-b)^2(1+\xi^2)/(r-s)< (1+\sqrt{\delta})/r$ and
\begin{align}
a> \frac{\beta}{r}(1+\sqrt{\delta})^2\, . \label{eq:aCond_2}
\end{align}
\end{enumerate}
Then, there exists a constant $\eps>0$ such that, almost surely, $\Hess\succeq \eps\id$ for all $n$ large enough.
\end{lemma}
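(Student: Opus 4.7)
The plan is to reduce the positive-definiteness of $\oHess$ to a spectral bound on a rectangular low-rank perturbation of a Gaussian matrix, and then apply Lemma~\ref{lemma:RMT_lemma} directly. First I would verify that the stated hypotheses force $a > 0$: in case 2 this is immediate from (\ref{eq:aCond_2}) since $r > 0$, while in case 1 the inequality $1 + \sqrt{\delta} \ge 1 - \delta$ ensures both numerator and denominator on the right-hand side of (\ref{eq:aCond_1}) are positive, so $a(r-s) > 0$. Together with $r > 0$ and $r - s > 0$, this makes the $(2,2)$ block $\bR \equiv r\id_n - s\bPu$ strictly positive definite, so the Schur complement formula applies and gives
\begin{equation*}
\oHess \succ 0 \;\iff\; \bR \succ \frac{\beta}{a}\,(\id_n - b\bPu)\obX\obX^{\sT}(\id_n - b\bPu).
\end{equation*}

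Since $\bR^{-1/2} = (r-s)^{-1/2}\bPu + r^{-1/2}\bPup$, the right-hand side is equivalent to $s_{\max}(\tilde{\bA})^2 < a/\beta$, where $\tilde{\bA} \equiv \bR^{-1/2}(\id_n - b\bPu)\obX$. Substituting $\obX = \xi \bu\bv^{\sT} + \bZ$ and using $\bPu\bu = \bu$, $\bPup\bu = 0$, a direct computation yields
\begin{equation*}
\tilde{\bA} = \frac{(1-b)\xi}{\sqrt{r-s}}\,\bu\bv^{\sT} + \frac{1-b}{\sqrt{r-s}}\,\bPu\bZ + \frac{1}{\sqrt{r}}\,\bPup\bZ,
\end{equation*}
which is exactly the form of $\bM$ in (\ref{eq:M_lemma}) with $\gamma^2 = (1-b)^2\xi^2/(r-s)$, $\alpha_{\|}^2 = (1-b)^2/(r-s)$, $\alpha_{\perp}^2 = 1/r$. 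In particular $\gamma^2 + \alpha_{\|}^2 = (1-b)^2(1+\xi^2)/(r-s)$, and the supercritical/subcritical dichotomy $\gamma^2 \gtreqless \gamma_*^2$ of Lemma~\ref{lemma:RMT_lemma} is precisely the dichotomy between cases~1 and~2 of the present lemma.

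In the subcritical case, Lemma~\ref{lemma:RMT_lemma} yields $s_{\max}(\tilde{\bA})^2 \to \lambda_*^2 = (1+\sqrt{\delta})^2/r$, and the required bound $a/\beta > \lambda_*^2$ is nothing but (\ref{eq:aCond_2}). In the supercritical case, plugging the parameters into the BBP formula (\ref{eq:LimitSingValue}) and clearing denominators by $r(r-s)$ gives
\begin{equation*}
\lambda_*^2 \;=\; \frac{(1-b)^2(1+\xi^2)\big[\,r(1-b)^2(1+\xi^2) - (1-\delta)(r-s)\,\big]}{(r-s)\big[\,r(1-b)^2(1+\xi^2) - (r-s)\,\big]},
\end{equation*}
and after multiplying through by $(r-s)$ one sees that $a/\beta > \lambda_*^2$ is identical to (\ref{eq:aCond_1}). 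Since $s_{\max}(\tilde{\bA}) \to \lambda_*$ almost surely (this is built into the proof of Lemma~\ref{lemma:RMT_lemma}, which uses a.s.\ convergence of the Stieltjes transform and of $\|\bw\|_2^2$), there exists $\eps > 0$ with $s_{\max}(\tilde{\bA})^2 \le a/\beta - \eps$ almost surely for all $n$ large enough, hence the Schur complement exceeds $\eps \bR \succeq \eps \min(r, r-s)\, \id_n$, and transferring back through the Schur decomposition delivers $\oHess \succeq \eps'\id$ for some $\eps' > 0$, as required.

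The only real obstacle is the algebraic identification of the expression in (\ref{eq:aCond_1}) with $\lambda_*^2 \cdot (r-s)/\beta$; everything else is standard Schur complement bookkeeping and a direct invocation of Lemma~\ref{lemma:RMT_lemma}. The placement of the factors $r$ vs.\ $r-s$ in the formula for $\tilde{\bA}$ (which comes from $\bR^{-1/2}$ acting differently on $\bu$ and $\bPup$) is what makes the thresholds depend on $r$ and $r-s$ in the particular combination appearing in the statement.
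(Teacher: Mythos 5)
Your proof is correct and follows essentially the same route as the paper's: a Schur-complement reduction to the singular-value condition $s_{\max}\big(\bR^{-1/2}(\id-b\bPu)\obX\big)^2<a/\beta$ with $\bR=r\id_n-s\bPu$, followed by a direct application of Lemma \ref{lemma:RMT_lemma} with $\gamma^2=(1-b)^2\xi^2/(r-s)$, $\alpha_{\|}^2=(1-b)^2/(r-s)$, $\alpha_{\perp}^2=1/r$ (the paper complements the other diagonal block and absorbs the factor $\sqrt{\beta}$ into the matrix, but the resulting condition is identical). Your explicit algebra identifying (\ref{eq:aCond_1}) with $a/\beta>\lambda_*^2$ in the supercritical case, and your margin argument upgrading $\oHess\succ 0$ to $\oHess\succeq\eps\id$, match the paper's proof, which handles the latter via the perturbation $a\mapsto a-\eps$, $r\mapsto r-\eps$.
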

\begin{proof}
Let us first prove that, under the stated conditions, $\Hess\succeq\bzero$.
Since $r\id_n-s\bPu\succ \bzero$, we have $\Hess\succ \bzero$ if and only if
\begin{align}
a\id_d\succ \beta\obX^{\sT}(\id-b\bPu) (r-s\bPu)^{-1}(\id-b\bPu)\obX\, . \label{eq:ConditionSchur}
\end{align}
Notice that
\begin{align}
 (\id-b\bPu) (r-s\bPu)^{-1}(\id-b\bPu) = \frac{1}{r}\, \bPup+\frac{(1-b)^2}{r-s}\, \bPu\, .
\end{align}
Hence, condition (\ref{eq:ConditionSchur}) is equivalent to $a>\lambda_{\max}(\bM^{\sT}\bM) = s_{\max}(\bM)^2$,
where
\begin{align}
\bM = \sqrt{\beta}\left[\frac{1-b}{\sqrt{r-s}}\bPu+\frac{1}{\sqrt{r}}\bPup\right]\obX\, .
\end{align}
Note that $\bM$ is of the form of Lemma \ref{lemma:RMT_lemma}, with 
\begin{align}
\gamma = \sqrt{\frac{\beta\xi^2(1-b)^2}{r-s}}\, ,\;\;\;\;\;\; \alpha_{\|} = \sqrt{\frac{\beta(1-b)^2}{r-s}}\, ,
\;\;\;\;\;\; \alpha_{\perp} = \sqrt{\frac{\beta}{r}}\, .
\end{align}
The claim that $\Hess\succ\bzero$ then follows by using the asymptotic characterization of $s_{\max}(\bM)$ in 
Lemma \ref{lemma:RMT_lemma}.

We next  prove that in fact $\Hess\succeq \eps\id$. If the stated conditions hold, there exists $\eps$ small enough such that they hold  also after replacing 
$a$ with $a'=a-\eps$ and $r$ with $r'=r-\eps$. Let us write $\Hess(a,r)$ for the matrix of Eq.~(\ref{eq:HGeneral}), where we emphasized the dependence on
the parameters $a,r$. We have $\Hess(a,r)= \Hess(a',r')+\eps\id$, and hence the thesis follows since $\Hess(a',b')\succeq \bzero$.
\end{proof}

In order to apply the last lemma, we will show that, for $\beta<\beta_{\sp}$, the LDA model of Eq.~(\ref{eq:LDAModel}) is equivalent
for our purposes to a simpler model.
\begin{lemma}\label{lemma:Contiguity}
Let $\bX\in\reals^{n\times d}$ be distributed according to the LDA model (\ref{eq:LDAModel}) and
let  $\bR_1\in \reals^{n\times n}$, $\bR_2\in \reals^{d\times d}$ be uniformly random (Haar distributed) orthogonal matrices conditional to $\bR_1\bfone = \bfone$,
with $\{\bX,\bR_1,\bR_2\}$ mutually independent. Denote by $\prob_{1,n}$ the law of $\bX_{R}\equiv\bR_1\bX\bR_2$.

Define $\obX = \xi \,\bu\bv^{\sT} + \bZ$ as per Eq.~(\ref{eq:RankOneLemma}), with $\bu =\bfone_n/\sqrt{n}$, $\bv$ be a vector with i.i.d. entries 
$v_i\sim\normal(0,1/d)$, independent of $\bZ$, and $\xi = \sqrt{\beta\delta/k}$,
and denote by $\prob_{0,n}$ the law of $\obX$.

If $\beta<\beta_{\sp}(k,\nu,\delta)$, then $\prob_{1,n}$ is contiguous to $\prob_{0,n}$.
\end{lemma}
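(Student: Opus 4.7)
The plan is to establish contiguity via the second moment method applied to the likelihood ratio $L_n \equiv d\prob_{1,n}/d\prob_{0,n}$. If $\E_{\prob_{0,n}}[L_n^2]$ stays bounded as $n\to\infty$, then for any sequence of events $A_n$ with $\prob_{0,n}(A_n)\to 0$, Cauchy--Schwarz yields $\prob_{1,n}(A_n)\le \prob_{0,n}(A_n)^{1/2}\,\E_{\prob_{0,n}}[L_n^2]^{1/2}\to 0$, which is exactly contiguity.

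First I would rewrite $\prob_{1,n}$ in a form directly comparable to $\prob_{0,n}$. Decompose $\bW = \bfone_n\bfone_k^\sT/k+\bW_\perp$ with $\bW_\perp\bfone_k=\bzero$, and split the Gaussian $\bH$ into its $\bfone_k$-component $\bh_\parallel \equiv \bH\bfone_k \sim\normal(0,k\id_d)$ and its independent complement $\bH_\perp\equiv\bH\bPp$. Using $\bR_1\bfone_n=\bfone_n$ and the rotation invariance of $\bZ$, the identity $\bW_\perp\bH^\sT = \bW_\perp\bH_\perp^\sT$ produces the representation
\[
\bX_R \stackrel{d}{=} \xi\,\bu\bv^\sT + \bE + \bZ',
\]
with $\bu=\bfone_n/\sqrt{n}$, $\bv \equiv \bR_2^\sT\bh_\parallel/\sqrt{kd}\sim\normal(0,\id_d/d)$, $\bZ'\stackrel{d}{=}\bZ$, and $\bE \equiv (\sqrt{\beta}/d)\bR_1\bW_\perp\bH_\perp^\sT\bR_2$ a rank-$(k-1)$ perturbation; crucially $\bv,\bE,\bZ'$ are mutually independent. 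Integrating out $\bh_\parallel$ and $\bH_\perp$, both measures become mixtures of column-i.i.d.\ Gaussians: $\prob_{0,n}$ has columns i.i.d.\ $\normal(0,\Sigma_0)$ with deterministic $\Sigma_0 = d^{-1}(\id_n+\xi^2\bPu)$, while $\prob_{1,n}$ has, conditional on $\bW'\equiv \bR_1\bW$, columns i.i.d.\ $\normal(0,\Sigma_1(\bW'))$ with $\Sigma_1(\bW') = d^{-1}(\id_n+(\beta/d)\bW'\bW'^\sT)$.

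Conditional column independence lets the second moment factorize as
\[
\E_{\prob_{0,n}}[L_n^2] = \E_{\bW',\bW''}\!\big[I(\bW',\bW'')^d\big],
\]
where $\bW',\bW''$ are i.i.d.\ copies and $I(\bW',\bW'')$ is the per-column Gaussian overlap integral, explicitly equal to $\det(\Sigma_0)^{1/2}\det(\Sigma_1(\bW'))^{-1/2}\det(\Sigma_1(\bW''))^{-1/2}\det\big(\Sigma_1(\bW')^{-1}+\Sigma_1(\bW'')^{-1}-\Sigma_0^{-1}\big)^{-1/2}$. The Woodbury identity reduces every factor to a $2k\times 2k$ determinant built from the Gram matrices $\bW'^\sT\bW'$, $\bW''^\sT\bW''$, and $\bW'^\sT\bW''$. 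Since $\bW^\sT\bW/n\to \bJ_k/k^2 + \bPp/(k(k\nu+1))$ by the strong law, the $\bJ_k/k^2$ component is cancelled against the $\xi^2\bPu$ spike built into $\Sigma_0$, leaving an effective rank-$(k-1)$ perturbation of amplitude $\mu^2\equiv\beta\delta/(k(k\nu+1))$ in $\{\bu\}^\perp$. The threshold $\beta<\beta_{\sp}(k,\nu,\delta)=k(k\nu+1)/\sqrt{\delta}$ translates to the subcritical BBP condition $\mu^2<\sqrt{\delta}$ for a rank-$(k-1)$ spike in a Wishart model of aspect ratio $\delta$; contiguity for subcritical spiked Wishart is a classical second-moment result, and the explicit expansion of $d\log I(\bW',\bW'')$ in the limit reproduces their bound.

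The main obstacle will be controlling the $d$-th power: $\log I(\bW',\bW'')$ is of order $1/d$ on typical configurations, but $I^d$ is exponentially sensitive, so rare configurations in which the Gram matrices $\bW^\sT\bW/n$ deviate from their limit, or in which the cross Gram $\bW'^\sT\bW''/n$ is atypically aligned after the Haar rotation $\bR_1$, can blow up the expectation. The standard remedy is a truncation: restrict the expectation to a high-probability event $\mathcal{E}_n = \{\|\bW^\sT\bW/n - \E[\bW^\sT\bW/n]\|\le \epsilon\}$ (and analogously for $\bW''$), reduce the truncated second moment to an explicit spectral expression in the $2k\times 2k$ Gram data, and verify boundedness precisely when $\mu^2<\sqrt{\delta}$; contiguity of the full measures then follows since $\prob_{0,n}(\mathcal{E}_n^c)\to 0$ and the truncated likelihood ratio is asymptotically unbiased. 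This is the step where $\beta_{\sp}$ enters sharply: at and above the spectral threshold the typical value of the $2k\times 2k$ determinant ceases to be bounded after exponentiation by $d$, and the second moment bound necessarily fails, consistent with the fact that the two models then become distinguishable by a spectral statistic.
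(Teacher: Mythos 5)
Your proposal is correct and follows essentially the same route as the paper: both isolate the $\bfone_k$-aligned rank-one component of $\bW\bH^{\sT}$ (which is exactly matched by the null model $\prob_{0,n}$), reduce the problem to showing that the residual rank-$(k-1)$ perturbation $(\sqrt{\beta}/d)\bW_{\perp}\bH_{\perp}^{\sT}$, of operator norm $\sqrt{\beta\delta/(k(k\nu+1))}$, is subcritical precisely when $\beta<\beta_{\sp}$, and conclude by a second moment bound on the likelihood ratio. The only real difference is in the handling of the conditioned rotation $\bR_1$: the paper first replaces it by an unconditioned Haar rotation via a local-CLT total-variation argument and then outsources the second-moment computation to \cite{montanari2017limitation}, whereas you propose to carry out the conditional, column-factorized Gaussian second moment directly. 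One caveat on your final step: truncating only the self-Gram matrices $\bW'^{\sT}\bW'/n$ and $\bW''^{\sT}\bW''/n$ does not by itself tame the $d$-th power; boundedness of $\E\big[I(\bW',\bW'')^d\big]$ hinges on the exponential moment of the cross-overlap $\bW'^{\sT}\bW''/n$, whose large deviations under the Haar rotation are exactly what produce the condition $\mu^2<\sqrt{\delta}$ --- this is the computation that the paper's citation supplies and that your sketch would still need to execute.
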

\begin{proof}
Recalling that $\bP = \bfone_k\bfone_k^{\sT}/k$, $\bPp=\id_k\bP$, and letting $\bv_0 = \bH\bfone_k/\sqrt{d k}$, we have
\begin{align} 
\bX =\xi\, \bu\bv_0^{\sT} + \frac{\sqrt{\beta}}{d}\bW_{\perp}\bH^{\sT}_\perp+\bZ \equiv \xi\, \bu\bv_0^{\sT} +\tbZ\, ,
\end{align}
where $\bW_{\perp} =\bW\bPp$ and $\bH_{\perp} =\bH\bPp$. Since $\bv_0$ is distributes as $\bv$, and independent of
$\tbZ$, it is sufficient to prove that the law of $\tbZ_R= \bR_1\tbZ\bR_2$ is contiguous to the law of $\bZ$. 

Note that by the law of large numbers, almost surely (see Eq.~(\ref{eq:S_Formula}))
\begin{align}
\lim_{n\to\infty}\frac{1}{n}\|\bW_{\perp}\|_{\op}^2 & = \lim_{n\to\infty}\frac{1}{n}\|\bW_{\perp}^{\sT}\bW_{\perp}\|_{\op} =  
\left\|\int (\bPp\bw)^{\otimes 2}\tq_0(\de\bw)\right\|_{\op} = \frac{1}{k(k\nu+1)}\, ,\\
\lim_{d\to\infty}\frac{1}{d}\|\bH_{\perp}\|_{\op}^2 & = \lim_{d\to\infty}\frac{1}{d}\|\bH_{\perp}^{\sT}\bH_{\perp}\|_{\op} =1\, .
\end{align}
Hence 
\begin{align}
\lim\sup_{n\to\infty}\left\|\frac{\sqrt{\beta}}{d}\bW_{\perp}\bH^{\sT}_\perp\right\|_{\op}\le \sqrt{\frac{\beta\delta}{k(k\nu+1)}} \equiv \sqrt{\beta_{\perp}}\, .
\end{align}
For $\beta<\beta_{\sp}$, we have $\beta_{\perp}<\sqrt{\delta}$, and therefore the rank-$k$ perturbation in $\tbZ$ does not produce 
an outlier eigenvalue \cite{benaych2012singular}. 

In order to prove that the law of $\tbZ_R= \bR_1\tbZ\bR_2$ is contiguous to the law of $\bZ$, note that 
$\tbZ_R \stackrel{{\rm d}}{=} (\sqrt{\beta}/d)\bR_1\bW_{\perp}\bH_{\perp}\bR_2+\bZ$. Let $\bbQ_{1,n}$ be the law of $\bW_1 = \bR_1\bW_{\perp}$ 
and $\bbQ_{2,n}$ the law of $\bW_2=\tilde{\bR_1}\bW_{\perp}$, where $\tilde{\bR_1}$ is a uniformly random orthogonal matrix (not Haar distributed).
We claim that $\lim_{n\to\infty}\|\bbQ_{1,n}-\bbQ_2\|_{\sTV}=0$. Indeed both $\bbQ_1$ and $\bbQ_1$ are uniform
conditional on $\bW^{\sT}\bW/\sqrt{n} = \bQ$ and $\bW^{\sT}\bfone/\sqrt{n} = \bb$. 
However, the joint laws of  $(\bQ,\bb)$ converge in total variation to the same Gaussian limit by the local central limit theorem.

It is therefore sufficient to show that the law of $\tbZ_{RR}= \tilde{\bR_1}\tbZ\bR_2$ is contiguous to the law of $\bZ$.
This follows by second moment method and follows exactly  as in \cite{montanari2017limitation}.
\end{proof}

\begin{lemma}\label{lemma:EigContig}
Let $\obX$ as per Eq.~(\ref{eq:RankOneLemma}), with $\bu =\bfone_n/\sqrt{n}$, $\bv$ be a vector with i.i.d. entries 
$v_i\sim\normal(0,1/d)$, independent of $\bZ$, and $\xi = \sqrt{\beta\delta/k}$, and define 
\begin{align}
\oHess =  \left[\begin{matrix}
\Big(1+\frac{\delta\beta}{k(k\nu+1)}\Big)\id_d & -\sqrt{\beta}\obX^{\sT}\Big(\id_n-\frac{\beta}{d(k+\delta \beta)}\allone_n\Big)\\
-\sqrt{\beta}\Big(\id_n-\frac{\beta}{d(k+\delta \beta)}\allone_n\Big)\obX & 
\big(\beta+k(k\nu+1)\big)\id_n-\frac{\beta^2}{d(k+\delta\beta)}\allone_n
\end{matrix}\right]\, .\label{eq:SimplifiedHessian}
\end{align}
If $\beta<\beta_{\sp}(k,\nu,\delta)$, then the law of the eigenvalues of the Hessian $\Hess$ defined in 
Eq.~(\ref{eq:HessianFinalFormula}) is contiguous to the law of the eigenvalues of $\oHess$.
\end{lemma}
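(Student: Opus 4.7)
The plan is to reduce the claim to Lemma \ref{lemma:Contiguity}. Observe that $\Hess$ and $\oHess$ are obtained from exactly the same algebraic formula (Eqs.~(\ref{eq:HessianFinalFormula}) and (\ref{eq:SimplifiedHessian}) are identical once $\obX$ is replaced by a generic data matrix), so we may write them uniformly as $\mathcal{H}(\bX)$ and $\mathcal{H}(\obX)$ for a deterministic matrix-valued map $\mathcal{H}:\reals^{n\times d}\to\reals^{(n+d)\times(n+d)}$. The plan has two steps. First, establish the pointwise spectral identity: for every orthogonal $\bR_1\in\reals^{n\times n}$ with $\bR_1\bfone_n=\bfone_n$ and every orthogonal $\bR_2\in\reals^{d\times d}$, the matrices $\mathcal{H}(\bY)$ and $\mathcal{H}(\bR_1\bY\bR_2)$ are orthogonally similar. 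Second, push contiguity from the data-matrix level (Lemma \ref{lemma:Contiguity}) to the eigenvalue level via the deterministic map $\bY\mapsto\mathrm{spec}\,\mathcal{H}(\bY)$.

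The key algebraic verification is $\mathbf{O}\,\mathcal{H}(\bR_1\bY\bR_2)\,\mathbf{O}^{\sT}=\mathcal{H}(\bY)$ with $\mathbf{O}=\diag(\bR_2,\bR_1^{\sT})$. The top-left block $a\,\id_d$ (with $a=1+\delta\beta/(k(k\nu+1))$) is invariant under $\bR_2$-conjugation. The bottom-right block $(\beta+k(k\nu+1))\id_n-(\beta^2/(d(k+\delta\beta)))\allone_n$ is preserved because $\bR_1^{\sT}\allone_n\bR_1=\allone_n$, which follows from $\bR_1\bfone_n=\bfone_n$ (equivalently $\bR_1^{\sT}\bfone_n=\bfone_n$). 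For the off-diagonal block, set $\bT_n=\id_n-(\beta/(d(k+\delta\beta)))\allone_n$; then $\bR_1^{\sT}\bT_n\bR_1=\bT_n$ by the same commutation, and
\begin{align*}
\bR_2\,(\bR_1\bY\bR_2)^{\sT}\,\bT_n\,\bR_1 = \bR_2\bR_2^{\sT}\,\bY^{\sT}\,\bR_1^{\sT}\bT_n\bR_1 = \bY^{\sT}\bT_n\,,
\end{align*}
which is exactly the off-diagonal block of $\mathcal{H}(\bY)$ up to the factor $-\sqrt{\beta}$.

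With the pointwise identity $\mathrm{spec}\,\mathcal{H}(\bY)=\mathrm{spec}\,\mathcal{H}(\bR_1\bY\bR_2)$ established, the law of $\mathrm{spec}\,\Hess$ under the LDA model agrees with the law of $\mathrm{spec}\,\mathcal{H}(\bX_R)$ under $\bX_R\sim\prob_{1,n}$. Contiguity then transfers by pushforward: let $\lambda_{j,n}$ denote the pushforward of $\prob_{j,n}$ under $\bY\mapsto\mathrm{spec}\,\mathcal{H}(\bY)$ for $j\in\{0,1\}$; if $\lambda_{0,n}(E_n)\to 0$ for a sequence of measurable sets $E_n$, then $\prob_{0,n}(\{\bY:\mathrm{spec}\,\mathcal{H}(\bY)\in E_n\})\to 0$, and Lemma \ref{lemma:Contiguity} yields $\prob_{1,n}(\{\bY:\mathrm{spec}\,\mathcal{H}(\bY)\in E_n\})\to 0$, i.e., $\lambda_{1,n}(E_n)\to 0$; the reverse implication is symmetric. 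Since by step one $\lambda_{1,n}$ coincides with the law of $\mathrm{spec}\,\Hess$ while $\lambda_{0,n}$ is by definition the law of $\mathrm{spec}\,\oHess$, the claim follows.

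The main obstacle is conceptually mild: the entire argument hinges on the single commutation identity $\bR_1^{\sT}\allone_n\bR_1=\allone_n$, which is available precisely because Lemma \ref{lemma:Contiguity} restricts $\bR_1$ to the stabilizer of $\bfone_n$ rather than sampling it Haar-uniformly on $\mathrm{O}(n)$. Without that constraint, the $\allone_n$-containing terms in the bottom-right and off-diagonal blocks would not be preserved under conjugation and no block-diagonal orthogonal similarity of the required form would exist. The shape of $\prob_{1,n}$ in the preceding lemma is thus tailored specifically to make this eigenvalue-level contiguity go through.
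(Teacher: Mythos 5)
Your proposal is correct and follows essentially the same route as the paper: conjugate by the block-orthogonal matrix $\diag(\bR_2^{\sT},\bR_1)$, use that $\bR_1$ stabilizes $\bfone_n$ (hence commutes with $\allone_n$) to identify $\mathrm{spec}\,\Hess$ with $\mathrm{spec}$ of the Hessian built from $\bX_R=\bR_1\bX\bR_2$, and then transfer contiguity from Lemma \ref{lemma:Contiguity} through the pushforward under the spectrum map. Your write-up is in fact slightly more explicit than the paper's on both the block-by-block verification and the pushforward step.
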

\begin{proof}
Consider the random orthogonal matrix $\bR\in\reals^{(n+d)\times (n+d)}$
\begin{align}
\bR = \left[
\begin{matrix}
\bR_2^{\sT} & \bzero\\
\bzero& \bR_1
\end{matrix}
\right]
\end{align}
where $\bR_1\in \reals^{n\times n}$, $\bR_2\in \reals^{d\times d}$ be uniformly random (Haar distributed) orthogonal matrices conditional to $\bR_1\bfone = \bfone$.
Notice that the eigenvalues of $\Hess$ are the same as the ones of $\bR\Hess\bR^{\sT}$.
Further, we have
\begin{align}
\bR\Hess\bR^{\sT} =  \left[\begin{matrix}
\Big(1+\frac{\delta\beta}{k(k\nu+1)}\Big)\id_d & -\sqrt{\beta}\bX_R^{\sT}\Big(\id_n-\frac{\beta}{d(k+\delta \beta)}\allone_n\Big)\\
-\sqrt{\beta}\Big(\id_n-\frac{\beta}{d(k+\delta \beta)}\allone_n\Big)\bX_R & 
\big(\beta+k(k\nu+1)\big)\id_n-\frac{\beta^2}{d(k+\delta\beta)}\allone_n
\end{matrix}\right] \, ,
\end{align}
where $\bX_R = \bR_{1}\bX\bR_2$ is defined as in the statement of Lemma \ref{lemma:Contiguity}. Applying that
lemma, we obtain that the law of $\bR\Hess\bR^{\sT}$ is contiguous to the one of $\oHess$, and therefore
we obtain the desired contiguity for the laws of eigenvalues.
\end{proof}

The next lemma establishes that the simplified Hessian $\oHess$ is positive semidefinite.
\begin{lemma}
Let $\oHess$ be defined as per Eq.~(\ref{eq:SimplifiedHessian}) where $\obX = \xi \,\bu\bv^{\sT} + \bZ$
with $\bu =\bfone_n/\sqrt{n}$, $\bv$ be a vector with i.i.d. entries 
$v_i\sim\normal(0,1/d)$, independent of $(Z_{ij})_{i\le n,j\le d}\sim_{i.i.d.}\normal(0,1/d)$, and $\xi = \sqrt{\beta\delta/k}$.

If $\beta<\beta_{\sp}(k,\delta,\nu)$, then there exists $\eps>0$ such that, almost surely, $\oHess\succeq \eps\,\id$ 
for all $n$ large enough.
\end{lemma}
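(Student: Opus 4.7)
My plan is to apply Lemma \ref{lemma:GeneralBlock} directly, matching $\oHess$ to the template matrix in that lemma and verifying its hypotheses for $\beta<\beta_{\sp}$. First, I would identify the parameters. Since $\allone_n = n\,\bPu$ with $\bu=\bfone_n/\sqrt{n}$, and using $n/d\to\delta$, the matrix $\oHess$ is exactly of the form in Eq.~\eqref{eq:HGeneral} with
\[
a = 1+\frac{\delta\beta}{k(k\nu+1)},\quad b=\frac{\beta\delta}{k+\delta\beta},\quad r=\beta+k(k\nu+1),\quad s=\frac{\beta^2\delta}{k+\delta\beta},\quad \xi=\sqrt{\beta\delta/k}.
\]
(Strictly speaking $b$ and $s$ have $n/d$ in place of $\delta$, but since the lemma's conclusion is asymptotic and the relevant quantities are continuous in the parameters, this discrepancy is harmless.)

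Second, I would determine which of the two cases in Lemma \ref{lemma:GeneralBlock} applies. A direct computation gives $1-b=k/(k+\delta\beta)$ and $1+\xi^2=(k+\delta\beta)/k$, hence $(1-b)^2(1+\xi^2)=k/(k+\delta\beta)$. Similarly,
\[
r-s = \frac{k\bigl[\beta+(k\nu+1)(k+\delta\beta)\bigr]}{k+\delta\beta},
\]
so that
\[
\frac{(1-b)^2(1+\xi^2)}{r-s} = \frac{1}{\beta+(k\nu+1)(k+\delta\beta)}.
\]
Comparing with $(1+\sqrt{\delta})/r=(1+\sqrt{\delta})/[\beta+k(k\nu+1)]$, the left-hand side is strictly smaller for every $\beta>0$ (the denominator of the first quantity exceeds that of the second by $(k\nu+1)\delta\beta$, and multiplying the second by $(1+\sqrt{\delta})>1$ only worsens the gap). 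Hence Case 2 of the lemma applies, and the only condition to check is $a>\beta(1+\sqrt{\delta})^2/r$.

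Third, and this is the heart of the matter, I would verify that this last inequality is exactly the condition $\beta<\beta_{\sp}$. Expanding the product $ar$ and canceling, one obtains the clean ``perfect square'' identity
\[
ar - \beta(1+\sqrt{\delta})^2 \;=\; \frac{\bigl[k(k\nu+1)-\sqrt{\delta}\,\beta\bigr]^2}{k(k\nu+1)},
\]
which is strictly positive precisely when $\beta\neq k(k\nu+1)/\sqrt{\delta}=\beta_{\sp}$, and is bounded below by a positive constant uniformly on compact subsets of $[0,\beta_{\sp})$. Therefore the hypotheses of Lemma \ref{lemma:GeneralBlock} (case 2) are met, and that lemma produces the desired $\eps>0$ with $\oHess\succeq \eps\id$ almost surely for all $n$ large enough.

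The main obstacle is purely bookkeeping: making sure the block structure of $\oHess$ matches the template in Lemma \ref{lemma:GeneralBlock} (in particular absorbing the factor $n$ from $\allone_n$ into the projector $\bPu$), and carrying out the algebra cleanly enough that the factorization $ar-\beta(1+\sqrt{\delta})^2 = [k(k\nu+1)-\sqrt{\delta}\beta]^2/[k(k\nu+1)]$ becomes apparent. No new random matrix theory or analytic input is needed beyond what is already established; the proof is a verification that the threshold arising from the top singular value of the perturbed Gaussian matrix in Lemma \ref{lemma:RMT_lemma} coincides with the spectral threshold $\beta_{\sp}$ defined in Proposition \ref{propo:Bayes}.
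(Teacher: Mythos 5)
Your proposal is correct and follows essentially the same route as the paper: identify $a,b,r,s,\xi$, check that Case 2 of Lemma \ref{lemma:GeneralBlock} applies via the computation $(1-b)^2(1+\xi^2)/(r-s)=1/[\beta+(k\nu+1)(k+\beta\delta)]$, and reduce the remaining inequality to the perfect-square identity $ar-\beta(1+\sqrt{\delta})^2=[k(k\nu+1)-\sqrt{\delta}\beta]^2/[k(k\nu+1)]$, which is exactly the paper's computation $a-\beta(1+\sqrt{\delta})^2/r=\tfrac{\delta}{q(\beta+q)}(\beta-q/\sqrt{\delta})^2$ with $q=k(k\nu+1)$. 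The only differences are cosmetic (multiplying through by $r$, and your explicit remark about $n/d$ versus $\delta$, which the paper leaves implicit).
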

\begin{proof}
The matrix $\obX$ fits the setting of Lemma \ref{lemma:GeneralBlock} with 
\begin{align}
a &= 1+\frac{\delta \beta}{k(k\nu+1)}\, ,\;\;\;\;\;\;\;\;\;\; b = \frac{\beta\delta}{k+\delta\beta}\, ,\\
r &= \beta+k(k\nu+1)\, ,\;\;\;\;\;\;\;\;\;\; s = \frac{\beta^2\delta}{k+\delta\beta}\, .
\end{align}
The claim follows by checking that condition 2 in Lemma \ref{lemma:GeneralBlock} holds. Indeed we
have
\begin{align}
A \equiv \frac{(1-b)^2(1+\xi^2)}{r-s} = \frac{1}{\beta+(k\nu+1)(k+\beta\delta)}\, .
\end{align}
Hence $A<(1+\sqrt{\delta}/r)$. Further, setting $q=k(k\nu+1)$, we have 
\begin{align}
a-\frac{\beta}{r}(1+\sqrt{\delta})^2 &= 1+\frac{\delta\beta}{q} - \frac{\beta(1+\sqrt{\delta})^2}{\beta+q}\\
&=\frac{1}{\beta+q}\Big(\frac{\delta\beta^2}{q}-2\sqrt{\delta} \beta+q\Big)\\
& = \frac{\delta}{q(\beta+q)}\Big(\beta-\frac{q}{\sqrt{\delta}}\Big)>0\, .
\end{align}
(The last inequality follows since $\beta_{\sp} = q/\sqrt{\delta}$.) This completes the proof.
\end{proof}

The proof of Theorem \ref{thm:StabilityTAP} follows immediately from the above lemmas. Since the law of the
eigenvalues of $\Hess$ is contiguous to the law of the eigenvalues of $\oHess$ (by Lemma \ref{lemma:EigContig}),
and $\oHess\succeq \eps\id$ with high probability, we have 
\begin{align}
\lim_{n\to\infty}\prob(\lambda_{\min}(\Hess)<\eps/2) = 0\, .
\end{align}

\section{TAP free energy: Numerical results}

\subsection{Damped AMP}
\label{app:Damped}

AMP turns out to converge poorly near the spectral threshold, i.e. for $\beta\approx \beta_{\sp}$.
Note that this appears to be an algorithmic problem, rather than a problem related to the free energy approximation.
 To alleviate this issue, we used damped AMP for our numerical simulations. Damped AMP iterations are as follows
\begin{eqnarray}
\bm^{t+1} &=&  (1-\gamma) \bm^{t} +\gamma \bX^{\sT}\,\tsF(\tbm^t;\tbQ^t) -\gamma^ 2 \sF(\bm^t;\bQ^t) \bK^t_W\,, \\
\tbm^t &=&  (1 - \gamma) \tbm^{t-1} +\gamma \bX\,\sF(\bm^t;\bQ^t) - \gamma^2 \tsF(\tbm^{t-1};\tbQ_{t-1}) \bK_H^t\,, \\
\bQ^{t+1} &=& \frac{1}{d}\sum_{a=1}^n \tsF(\tbm_a^t;\tbQ^t)^{\otimes  2}\, ,\\
\tbQ^t &=& \frac{1}{d} \sum_{i=1}^d \sF(\bm_i^t;\bQ^t)^{\otimes  2}\, .
\end{eqnarray}
The matrices $\bK_H^t$ and $\bK_W^t$ are smoothed sum of Jacobian matrices and are computed as
\begin{eqnarray}
\bK_H^{t+1}&=& \sum_{i = 1}^{t+1} (1 - \gamma)^{t - i + 1}\sB_t\,,  \\
\bK_W^t &=& \sum_{i = 1}^t (1-\gamma)^{t - i}\sC_t
\end{eqnarray}
where 
\begin{eqnarray}
(\sB_t)_{rs} &=& \frac{1}{d}\sum_{i=1}^d\frac{\partial\sF_s}{\partial (\bm^t_i)_{r}}(\bm_i^t;\bQ^t)\,, \\
(\sC_t)_{rs} &=& \frac{1}{d}\sum_{a=1}^n\frac{\partial\tsF_s}{\partial (\tbm^t_i)_{r}}(\tbm_a^t;\tbQ^t)\,. 
\end{eqnarray}
In these calculations, $\gamma$ is the smoothing parameter that throughout our simulations is fixed to $\gamma = 0.8$.

The specific choice of this damping scheme (and --in particular-- the construction of matrices $\bK_H^{t+1}$, $\bK_W^{t+1}$)
is dictated by the fact that this specific choice admits a state evolution analysis, analogous to the one holding on the undamped case.

\section{Approximate Message Passing: Numerical results for $k=3$}
\label{app:AMPk3}

\begin{figure}[h!]
\phantom{A}
\vspace{-1cm}

\centering
\includegraphics[height=5.5in]{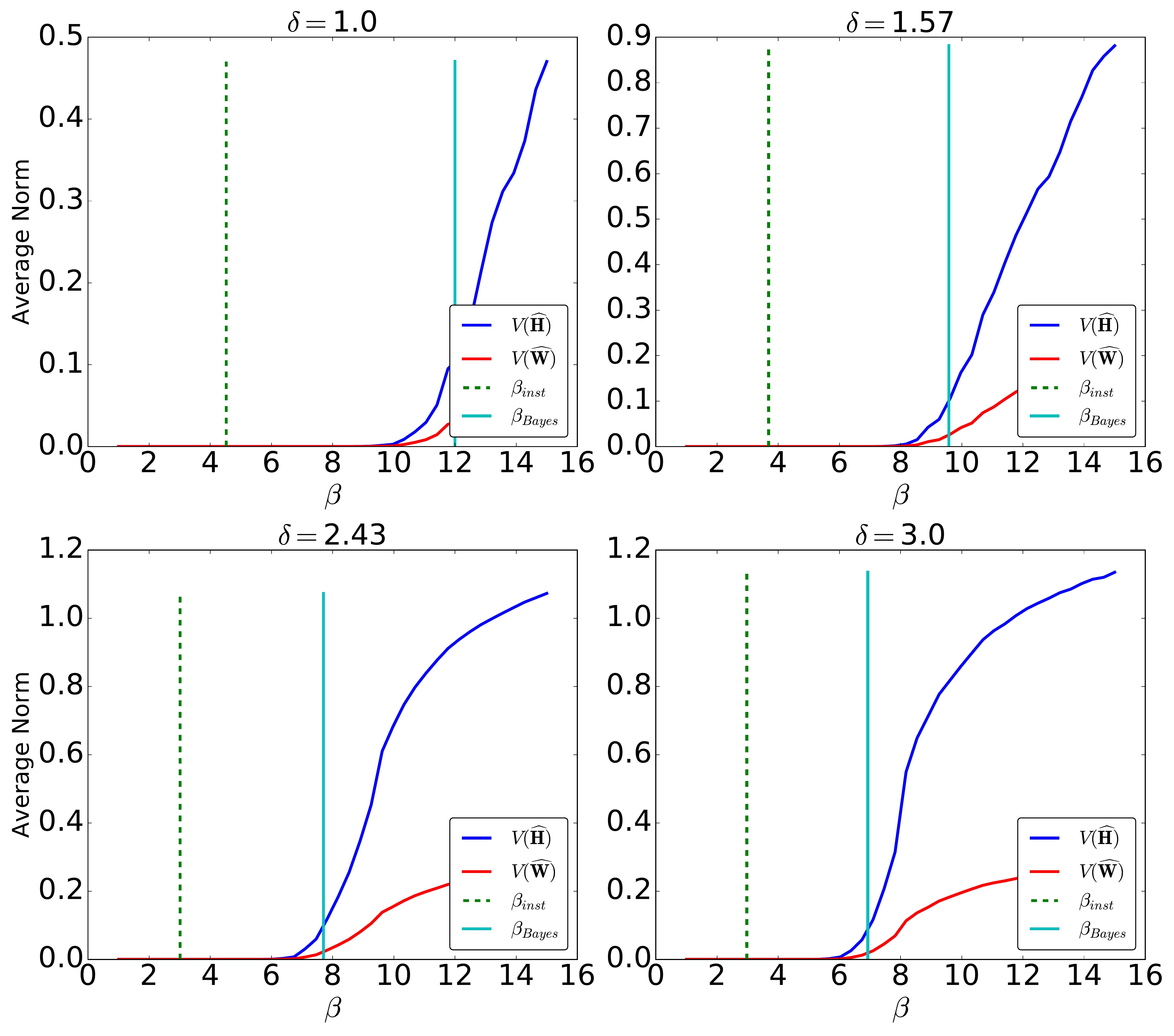}
\caption{Normalized distances $\Norm(\hbH)$, $\Norm(\hbW)$  of the AMP estimates from the uninformative fixed point.  Here
 $k=3$, $d = 1000$ and $n= d\delta$: each data point corresponds to an average over $400$ random realizations.} 
\label{fig:AMP_norm_k_3}
\end{figure}

\begin{figure}[h!]
\phantom{A}\hspace{-1.85cm}\includegraphics[height=2.66in]{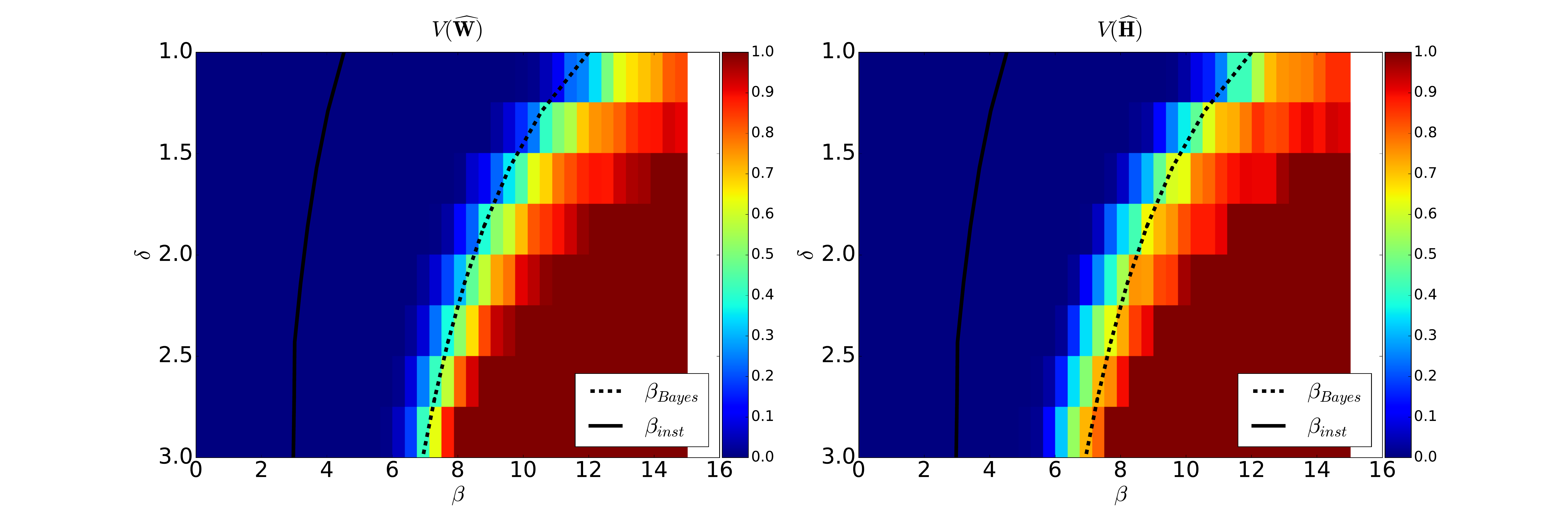}
\caption{Empirical fraction of instances such that  $\Norm(\hbW)\ge \eps_0=5\cdot 10^{-3}$ (left) or $\Norm(\hbH)\ge \eps_0$ (right), where $\hbW, \hbH$ are the AMP estimates. Here 
$k=3$, $d=1000$,    and for each $(\delta,\beta)$ point on a grid we ran AMP on  $400$ random realizations.}
\label{fig:AMP_norm_k_3_HM}
\end{figure}

\begin{figure}[h!]
\phantom{A}
\vspace{-1cm}

\centering
\includegraphics[height=5.5in]{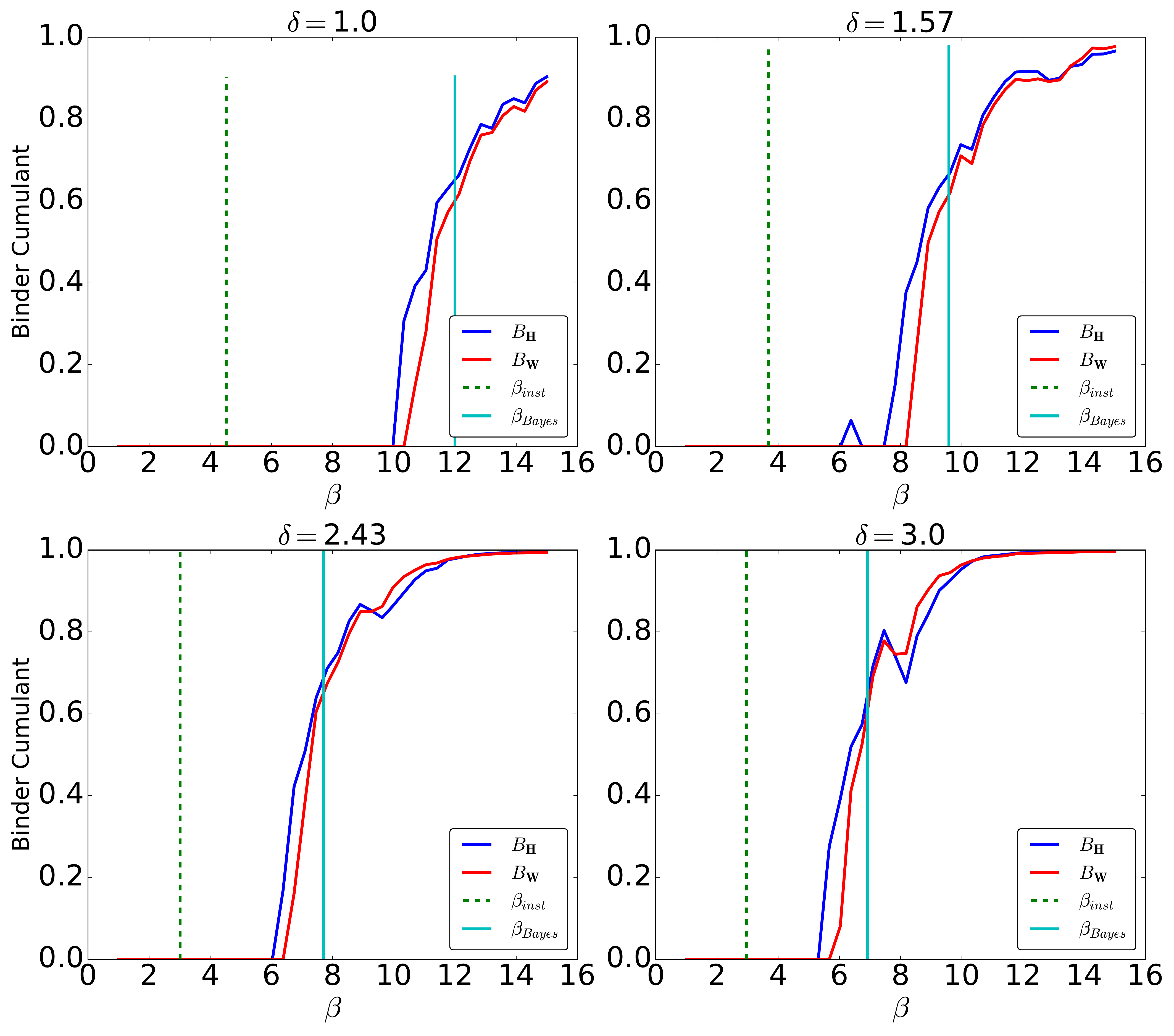}
\caption{Binder cumulant for the correlation between AMP estimates  $\hbW,\hbH$ and the true weights and topics $\bW, \bH$. Here $k=3$, $d=1000$,  $n=d\delta$ and estimates are obtained by averaging over $400$ realizations.} 
\label{fig:AMP_corrs_k_3}
\end{figure}

\begin{figure}[h!]
\phantom{A}\hspace{-1.85cm}\includegraphics[height=2.66in]{new_k_2_amp_Binder_heatmap-eps-converted-to.pdf}
\caption{Binder cumulant for the correlation between AMP estimates  $\hbW$, $\hbH$ and the true weights and topics $\bW, \bH$.
Here $k=3$, $d=1000$ and estimates are obtained by averaging over $400$ realizations.}
\label{fig:AMP_corrs_k_3_HM}
\end{figure}

In Figures \ref{fig:AMP_norm_k_3} to \ref{fig:AMP_corrs_k_3_HM} we report our numerical results using damped AMP 
for the case of $k=3$ topics. These simulations are analogous to the one presented in the main text for $k=2$, 
cf. Section \ref{sec:TAP_numerical}.

Figures \ref{fig:AMP_norm_k_3} and \ref{fig:AMP_norm_k_3_HM} report results on the normalized distance from the uninformative
subspace $\Norm(\hbH)$, $\Norm(\hbW)$. These are consistent with the claim that AMP converges to a fixed point that is
significantly distant from this subspace only if $\beta >\beta_{\sBayes}(k,\nu,\delta)=\beta_{\sp}(k,\nu,\delta)$.
In Figures \ref{fig:AMP_corrs_k_3} and \ref{fig:AMP_corrs_k_3_HM} we present our results on the correlation between the AMP estimates
$\hbH$, $\hbW$ and the true factors $\bH$, $\bW$. We measure this correlation through the same Binder parameter 
introduced in Section \ref{sec:NMF_k3}.

\section{Uniqueness of the solution to \eqref{eq:qs_1_main}}
\label{app:Uniqueness}

In this appendix, we  prove that the solution to \eqref{eq:qs_1_main} is unique under the following conjecture 
\begin{conjecture}
\label{conj:normsquared}

Let $q>0$ and $\bw\in \reals^k$ be a random variable with density $p(\bw) \propto \exp\left\{-q\left\|\bw\right\|_2^2\right\}\tq_0(\bw)$. Then
\begin{align}
\sigma(q)\gamma(q) \leq \frac{2}{q}
\end{align}
where $\sigma(q)$ and $\gamma(q)$ are the standard deviation and skewness of $\left\|\bw\right\|_2^2$.
\end{conjecture}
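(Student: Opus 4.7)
The plan is to convert the conjecture into a monotonicity statement and then attack that. Writing $Z(q) = \int e^{-q\|\bw\|_2^2}\tq_0(\de\bw)$ and $Y = \|\bw\|_2^2$, the tilted measure makes $q\mapsto \log Z(q)$ (up to sign) the cumulant generating function of $Y$, so that $\sigma^2(q) = (\log Z)''(q)$ and $\E[(Y-\mu(q))^3] = -(\sigma^2)'(q)$. The claim $\sigma(q)\gamma(q) \le 2/q$ is therefore equivalent to
\begin{align}
\frac{d}{dq}\log\big(q^2\sigma^2(q)\big) \;\ge\; 0,
\end{align}
i.e., $q \mapsto q\sigma(q)$ is nondecreasing on $(0,\infty)$. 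This is the cleaner statement I would try to establish.

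First I would verify the endpoints and confirm the conjecture is sharp. As $q \downarrow 0$, $\sigma^2$ approaches the (finite, positive) variance of $\|\bw\|_2^2$ under $\Dir(\nu;k)$, so $q\sigma(q) \to 0$ trivially. As $q \to \infty$, the tilted measure concentrates at $\bfone_k/k$: writing $\bw = \bfone_k/k + \bu$ with $\bu\in\bfone_k^\perp$, the Dirichlet weight becomes a subleading correction and $\bu$ is asymptotically $\normal(0,\bPp/(2q))$, so $Y - 1/k$ is approximately $\chi^2_{k-1}/(2q)$. A direct cumulant calculation then gives $q\sigma(q) \to \sqrt{(k-1)/2}$ with $q\sigma(q)\gamma(q) \to 2$. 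The bound is therefore \emph{asymptotically tight}, which is the single most delicate feature of the conjecture: no proof can afford any constant slack at infinity.

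The main approach I would try is an integration-by-parts identity on $[1/k,1]$. Let $g$ be the density of $Y$ under $\tq_0$ and set $\phi = -\log g$. Integration by parts in the tilted density $e^{-qy}g(y)$ against $\psi(y) = (y-\mu)^3$ yields
\begin{align}
3\sigma^2(q) \;=\; q\,\E[(Y-\mu)^3] + \E[(Y-\mu)^3 \phi'(Y)] - B(q),
\end{align}
where $B(q)$ collects boundary terms at $1/k$ and $1$. Provided one can control $B(q)$ (using the explicit endpoint behavior $g(y)\propto (y-1/k)^{(k-3)/2}$ near the centroid and the $k$-fold symmetric vertex contributions near $1$), the desired inequality $q\E[(Y-\mu)^3] \le 2\sigma^2$ reduces to the structural bound $\E[(Y-\mu)^3\phi'(Y)] \ge \sigma^2 + B(q)$. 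In parallel, symmetrizing with two iid copies $Y_1,Y_2\sim p_q$ and using $\E[(Y-\mu)^3] = \tfrac12\E[(Y_1-Y_2)^2(Y_1+Y_2-2\mu)]$ rewrites the conjecture as the covariance inequality
\begin{align}
\mathrm{Cov}\big((Y_1-Y_2)^2,\;Y_1+Y_2\big) \;\le\; \frac{2}{q}\,\E[(Y_1-Y_2)^2],
\end{align}
which for $\nu\ge 1$ (where the tilted density on the simplex is log-concave with Hessian $\succeq 2q\bPp$ on the tangent space) should be attackable via Brascamp--Lieb applied to a suitable pair of test functions.

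The main obstacle is precisely the Gaussian-limit sharpness. Brascamp--Lieb applied directly gives $\sigma^2(q) \le 2(\mu(q)-1/k)/q$, which has the correct $1/q$ rate but loses a factor of $2$; Hoeffding-type moment bounds on a bounded interval are $q$-independent and therefore useless for large $q$. A successful argument will need a non-perturbative identity that becomes an equality in the Gaussian limit --- most plausibly an expression of $2\sigma^2 - q\,\E[(Y-\mu)^3]$ as a manifestly nonnegative variance or entropy-like functional of the Dirichlet density. Establishing such an identity, and then signing the boundary contributions uniformly over $\nu>0$ and $k\ge 2$, is where I expect the argument to require the most work.
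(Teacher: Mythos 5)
The statement you were given is not proved in the paper at all: it appears there as Conjecture \ref{conj:normsquared}, used as a hypothesis to establish uniqueness of the solution of the fixed-point equation, with only a remark verifying that the Gaussian analogue holds with equality. So there is no proof of record to compare yours against. Your reductions are correct and consistent with the one computation the paper does perform: the identity $\sigma(q)\gamma(q) = -V'(q)/V(q)$ (which the paper derives when invoking the conjecture) makes the claim equivalent to monotonicity of $q\mapsto q\,\sigma(q)$, exactly as you say, and your verification that $q\,\sigma(q)\gamma(q)\to 2$ as $q\to\infty$ (via the $\chi^2_{k-1}/(2q)$ limit of $\|\bw\|_2^2-1/k$) correctly identifies why the constant $2$ is not improvable.

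However, the proposal is not a proof, and you should be explicit with yourself about that. The integration-by-parts identity reduces the conjecture to $\E[(Y-\mu)^3\phi'(Y)]\ge \sigma^2+B(q)$, and the symmetrization reduces it to a covariance inequality, but neither reduced statement is established: you note yourself that Brascamp--Lieb loses a factor of $2$, that Hoeffding-type bounds give nothing at large $q$, and that the needed nonnegative representation of $2\sigma^2-q\,\E[(Y-\mu)^3]$ remains to be found. The boundary terms $B(q)$ are likewise never controlled; the local behavior $g(y)\propto (y-1/k)^{(k-3)/2}$ near the centroid means the boundary contribution vanishes only for $k\ge 4$, is a nonzero constant for $k=3$, and actually diverges for $k=2$, so the identity as written needs repair in the smallest (and most relevant) case. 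What you have is a sound equivalent reformulation plus a sharpness check --- genuinely useful, and more than the paper itself records --- but the conjecture is still open at the end of it.
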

\begin{remark}
For a Gaussian random vector $\bz \sim \mathcal N(0, (2q)^{-1}\id_k)$ so that $p(\bz) \propto \exp\left\{-q\left\|\bz\right\|_2^2\right\}$,
\begin{align}
\tilde \sigma(q) \tilde\gamma(q) = \frac{2}{q}
\end{align}
where $\tilde \sigma(q), \tilde\gamma_1(q)$ are the standard deviation and the skewness of $\left\|\bz\right\|_2^2$.
\end{remark}
Using the above conjecture, it can be shown that the solution to \eqref{eq:qs_1_main} is unique.

Let $V(q)$ be the variance of $X = \|\bw\|_2^2$, when $\bw$ is distributed with density $p(\bw) \propto \exp\left\{-q\left\|\bw\right\|_2^2\right\}\tq_0(\bw)$. Define
\begin{align}
f(q) = \frac{k\beta\delta}{k-1}\, \left\{\sE\left(\frac{\beta}{1+q};\nu\right) - \frac{1}{k^2}\right\}\, .
\end{align}
Note that using the proof of Lemma \eqref{lemma:Uninf}, $f(q)$ is non-negative, continuous and monotone increasing
for $q>0$. Further,
\begin{align}
f^\prime(q) = \frac{\beta^2\delta}{(k-1)(1+q)^2}V\left(\frac{\beta}{1+q}\right).
\end{align}
Since $f(0) > 0$,
if we show that $f^\prime(q)$ is decreasing, then for $q>q^*$ where $q^*$ is the smallest solution to $f(q)=q$,
$f^\prime(q) < 1$. This will imply that $f(q) < q$ for $q > q^*$ that proves the uniqueness. We have
\begin{align}
f^{\prime\prime} (q) = \frac{\beta^2\delta}{(k-1)(1+q)^4} \left[-\frac{\beta}{(1+q)^2}V^\prime\left(\frac{\beta}{1+q}\right)(1+q)^2 - 2(1+q)V\left(\frac{\beta}{1+q}\right)\right]
\end{align}
Hence, $f^\prime(q)$ is decreasing if and only if
\begin{align}
- V^\prime\left(\frac{\beta}{1+q}\right) \leq 2\left(\frac{1+q}{\beta}\right)V\left(\frac{\beta}{1+q}\right).
\end{align}
Therefore, it is sufficient to show that for $q>0$, 
\begin{align}
\frac{-V^\prime(q)}{V(q)} \leq \frac{2}{q}.
\end{align}
Note that if we let $X=\|\bw\|_2^2$ where $\bw$ is as in Conjecture \ref{conj:normsquared}, we have
\begin{align}
V(q) = \E (X^2) - (\E X)^2.
\end{align}
Further,
\begin{align}
V^\prime(q) &= -\E X^3 + (\E X)(\E X^2) - 2(\E X)\left[- \E X^2 + (\E X)^2\right]\\
&= -\E X^3 + 3(\E X^2)(\E X) - 2 (\E X)^3.
\end{align}

Hence, 
\begin{align}
\frac{-V^\prime(q)}{V(q)} =  \frac{ \E (X^3) - 3(\E X^2)(\E X) + 2 (\E X)^3}{- \E X^2 + (\E X)^2} = \sigma(q)\gamma(q) \leq \frac{2}{q}
\end{align}
using Conjecture \ref{conj:normsquared}. Therefore, $f(q)$ is concave and \eqref{eq:qs_1_main} has a unique solution in $q \in (0, \infty)$. 


\begin{thebibliography}{AGKM12}

\bibitem[ABFX08]{airoldi2008mixed}
Edoardo~M Airoldi, David~M Blei, Stephen~E Fienberg, and Eric~P Xing,
  \emph{Mixed membership stochastic blockmodels}, Journal of Machine Learning
  Research \textbf{9} (2008), no.~Sep, 1981--2014.

\bibitem[AGKM12]{arora2012computing}
Sanjeev Arora, Rong Ge, Ravindran Kannan, and Ankur Moitra, \emph{Computing a
  nonnegative matrix factorization--provably}, Proceedings of the forty-fourth
  annual ACM symposium on Theory of computing, ACM, 2012, pp.~145--162.

\bibitem[AGM12]{arora2012learning}
Sanjeev Arora, Rong Ge, and Ankur Moitra, \emph{Learning topic models--going
  beyond svd}, Foundations of Computer Science (FOCS), 2012 IEEE 53rd Annual
  Symposium on, IEEE, 2012, pp.~1--10.

\bibitem[AGZ09]{Guionnet}
Greg~W. Anderson, Alice Guionnet, and Ofer Zeitouni, \emph{{An introduction to
  random matrices}}, Cambridge University Press, 2009.

\bibitem[AK18]{alaoui2018estimation}
Ahmed~El Alaoui and Florent Krzakala, \emph{Estimation in the spiked wigner
  model: A short proof of the replica formula}, arXiv preprint arXiv:1801.01593
  (2018).

\bibitem[BBAP05]{baik2005phase}
Jinho Baik, G{\'e}rard Ben~Arous, and Sandrine P{\'e}ch{\'e}, \emph{Phase
  transition of the largest eigenvalue for nonnull complex sample covariance
  matrices}, Annals of Probability (2005), 1643--1697.

\bibitem[BBW{\etalchar{+}}13]{broderick2013streaming}
Tamara Broderick, Nicholas Boyd, Andre Wibisono, Ashia~C Wilson, and Michael~I
  Jordan, \emph{Streaming variational bayes}, Advances in Neural Information
  Processing Systems, 2013, pp.~1727--1735.

\bibitem[BCCZ13]{bickel2013asymptotic}
Peter Bickel, David Choi, Xiangyu Chang, and Hai Zhang, \emph{Asymptotic
  normality of maximum likelihood and its variational approximation for
  stochastic blockmodels}, The Annals of Statistics (2013), 1922--1943.

\bibitem[BDM{\etalchar{+}}16]{barbier2016mutual}
Jean Barbier, Mohamad Dia, Nicolas Macris, Florent Krzakala, Thibault Lesieur,
  and Lenka Zdeborov{\'a}, \emph{Mutual information for symmetric rank-one
  matrix estimation: A proof of the replica formula}, Advances in Neural
  Information Processing Systems, 2016, pp.~424--432.

\bibitem[BGN11]{benaych2011eigenvalues}
Florent Benaych-Georges and Raj~Rao Nadakuditi, \emph{The eigenvalues and
  eigenvectors of finite, low rank perturbations of large random matrices},
  Advances in Mathematics \textbf{227} (2011), no.~1, 494--521.

\bibitem[BGN12]{benaych2012singular}
\bysame, \emph{The singular values and vectors of low rank perturbations of
  large rectangular random matrices}, Journal of Multivariate Analysis
  \textbf{111} (2012), 120--135.

\bibitem[Bin81]{binder1981finite}
Kurt Binder, \emph{Finite size scaling analysis of ising model block
  distribution functions}, Zeitschrift f{\"u}r Physik B Condensed Matter
  \textbf{43} (1981), no.~2, 119--140.

\bibitem[BKM17]{blei2017variational}
David~M Blei, Alp Kucukelbir, and Jon~D McAuliffe, \emph{Variational inference:
  A review for statisticians}, Journal of the American Statistical Association
  (2017), no.~just-accepted.

\bibitem[BL06]{blei2006dynamic}
David~M Blei and John~D Lafferty, \emph{Dynamic topic models}, Proceedings of
  the 23rd international conference on Machine learning, ACM, 2006,
  pp.~113--120.

\bibitem[BL07]{blei2007correlated}
\bysame, \emph{A correlated topic model of science}, The Annals of Applied
  Statistics (2007), 17--35.

\bibitem[Ble12]{blei2012probabilistic}
David~M Blei, \emph{Probabilistic topic models}, Communications of the ACM
  \textbf{55} (2012), no.~4, 77--84.

\bibitem[BM11]{BM-MPCS-2011}
Mohsen Bayati and Andrea Montanari, \emph{{The dynamics of message passing on
  dense graphs, with applications to compressed sensing}}, IEEE Trans. on
  Inform. Theory \textbf{57} (2011), 764--785.

\bibitem[BMN17]{berthier2017state}
Raphael Berthier, Andrea Montanari, and Phan-Minh Nguyen, \emph{State evolution
  for approximate message passing with non-separable functions}, {\sf
  arXiv:1708.03950} (2017).

\bibitem[BNJ03]{blei2003latent}
David~M Blei, Andrew~Y Ng, and Michael~I Jordan, \emph{Latent dirichlet
  allocation}, Journal of machine Learning research \textbf{3} (2003), no.~Jan,
  993--1022.

\bibitem[BS10]{BaiSilverstein}
Z.~Bai and J.~Silverstein, \emph{{Spectral Analysis of Large Dimensional Random
  Matrices}}, Springer, 2010.

\bibitem[CB09]{chang2009relational}
Jonathan Chang and David Blei, \emph{Relational topic models for document
  networks}, Artificial Intelligence and Statistics, 2009, pp.~81--88.

\bibitem[CDP{\etalchar{+}}12]{celisse2012consistency}
Alain Celisse, Jean-Jacques Daudin, Laurent Pierre, et~al., \emph{Consistency
  of maximum-likelihood and variational estimators in the stochastic block
  model}, Electronic Journal of Statistics \textbf{6} (2012), 1847--1899.

\bibitem[DAM17]{deshpande2017asymptotic}
Yash Deshpande, Emmanuel Abbe, and Andrea Montanari, \emph{Asymptotic mutual
  information for the balanced binary stochastic block model}, Information and
  Inference: A Journal of the IMA \textbf{6} (2017), no.~2, 125--170.

\bibitem[DM14]{deshpande2014information}
Yash Deshpande and Andrea Montanari, \emph{Information-theoretically optimal
  sparse pca}, Information Theory (ISIT), 2014 IEEE International Symposium on,
  IEEE, 2014, pp.~2197--2201.

\bibitem[DMM09]{DMM09}
David~L. Donoho, Arian Maleki, and Andrea Montanari, \emph{{Message Passing
  Algorithms for Compressed Sensing}}, Proceedings of the National Academy of
  Sciences \textbf{106} (2009), 18914--18919.

\bibitem[DMM10]{DMM_ITW_I}
\bysame, \emph{{Message Passing Algorithms for Compressed Sensing: I.
  Motivation and Construction}}, Proceedings of IEEE Inform. Theory Workshop
  (Cairo), 2010.

\bibitem[EFL04]{erosheva2004mixed}
Elena Erosheva, Stephen Fienberg, and John Lafferty, \emph{Mixed-membership
  models of scientific publications}, Proceedings of the National Academy of
  Sciences \textbf{101} (2004), no.~suppl 1, 5220--5227.

\bibitem[FFP05]{fei2005bayesian}
Li~Fei-Fei and Pietro Perona, \emph{A bayesian hierarchical model for learning
  natural scene categories}, Computer Vision and Pattern Recognition, 2005.
  CVPR 2005. IEEE Computer Society Conference on, vol.~2, IEEE, 2005,
  pp.~524--531.

\bibitem[FP07]{feral2007largest}
Delphine F{\'e}ral and Sandrine P{\'e}ch{\'e}, \emph{The largest eigenvalue of
  rank one deformation of large wigner matrices}, Communications in
  mathematical physics \textbf{272} (2007), no.~1, 185--228.

\bibitem[GBJ15]{giordano2015linear}
Ryan~J Giordano, Tamara Broderick, and Michael~I Jordan, \emph{Linear response
  methods for accurate covariance estimates from mean field variational bayes},
  Advances in Neural Information Processing Systems, 2015, pp.~1441--1449.

\bibitem[HBB10]{hoffman2010online}
Matthew Hoffman, Francis~R Bach, and David~M Blei, \emph{{Online learning for
  Latent Dirichlet Allocation}}, Advances in neural information processing
  systems, 2010, pp.~856--864.

\bibitem[JGJS99]{jordan1999introduction}
Michael~I Jordan, Zoubin Ghahramani, Tommi~S Jaakkola, and Lawrence~K Saul,
  \emph{An introduction to variational methods for graphical models}, Machine
  learning \textbf{37} (1999), no.~2, 183--233.

\bibitem[JM13]{javanmard2013state}
Adel Javanmard and Andrea Montanari, \emph{State evolution for general
  approximate message passing algorithms, with applications to spatial
  coupling}, Information and Inference: A Journal of the IMA \textbf{2} (2013),
  no.~2, 115--144.

\bibitem[KF09]{koller2009probabilistic}
Daphne Koller and Nir Friedman, \emph{Probabilistic graphical models:
  principles and techniques}, MIT press, 2009.

\bibitem[KM09]{korada2009exact}
Satish~Babu Korada and Nicolas Macris, \emph{Exact solution of the gauge
  symmetric p-spin glass model on a complete graph}, Journal of Statistical
  Physics \textbf{136} (2009), no.~2, 205--230.

\bibitem[KXZ16]{krzakala2016mutual}
Florent Krzakala, Jiaming Xu, and Lenka Zdeborov{\'a}, \emph{Mutual information
  in rank-one matrix estimation}, IEEE Information Theory Workshop (ITW), 2016,
  pp.~71--75.

\bibitem[LB06]{lafferty2006correlated}
John~D Lafferty and David~M Blei, \emph{Correlated topic models}, Advances in
  neural information processing systems, 2006, pp.~147--154.

\bibitem[LKZ17]{lesieur2017constrained}
Thibault Lesieur, Florent Krzakala, and Lenka Zdeborov{\'a}, \emph{Constrained
  low-rank matrix estimation: Phase transitions, approximate message passing
  and applications}, {arXiv:1701.00858} (2017).

\bibitem[LM16]{lelarge2016fundamental}
Marc Lelarge and L{\'e}o Miolane, \emph{Fundamental limits of symmetric
  low-rank matrix estimation}, {\sf arXiv:1611.03888} (2016).

\bibitem[Mio17]{miolane2017fundamental}
L{\'e}o Miolane, \emph{Fundamental limits of low-rank matrix estimation}, {\sf
  arXiv:1702.00473} (2017).

\bibitem[MM09]{MezardMontanari}
Marc M{\'e}zard and Andrea Montanari, \emph{{Information, Physics and
  Computation}}, Oxford, 2009.

\bibitem[MPV87]{SpinGlass}
Marc M\'ezard, Giorgio Parisi, and Miguel~A. Virasoro, \emph{Spin glass theory
  and beyond}, World Scientific, 1987.

\bibitem[MRZ17]{montanari2017limitation}
Andrea Montanari, Daniel Reichman, and Ofer Zeitouni, \emph{On the limitation
  of spectral methods: From the gaussian hidden clique problem to rank one
  perturbations of gaussian tensors}, IEEE Transactions on Information Theory
  \textbf{63} (2017), no.~3, 1572--1579.

\bibitem[MV17]{montanari2017estimation}
Andrea Montanari and Ramji Venkataramanan, \emph{Estimation of low-rank
  matrices via approximate message passing}, {\sf arXiv:1711.01682} (2017).

\bibitem[OW01]{opper2001adaptive}
Manfred Opper and Ole Winther, \emph{Adaptive and self-averaging
  thouless-anderson-palmer mean-field theory for probabilistic modeling},
  Physical Review E \textbf{64} (2001), no.~5, 056131.

\bibitem[PBY17]{pati2017statistical}
Debdeep Pati, Anirban Bhattacharya, and Yun Yang, \emph{On statistical
  optimality of variational bayes}, {\sf arXiv:1712.08983} (2017).

\bibitem[Per13]{perko2013differential}
Lawrence Perko, \emph{Differential equations and dynamical systems}, vol.~7,
  Springer Science \& Business Media, 2013.

\bibitem[RRTB12]{recht2012factoring}
Ben Recht, Christopher Re, Joel Tropp, and Victor Bittorf, \emph{Factoring
  nonnegative matrices with linear programs}, Advances in Neural Information
  Processing Systems, 2012, pp.~1214--1222.

\bibitem[RSP14]{raj2014faststructure}
Anil Raj, Matthew Stephens, and Jonathan~K Pritchard, \emph{faststructure:
  variational inference of population structure in large snp data sets},
  Genetics \textbf{197} (2014), no.~2, 573--589.

\bibitem[TAP77]{thouless1977solution}
David~J. Thouless, Philip~W. Anderson, and Richard~G. Palmer, \emph{Solution
  of'solvable model of a spin glass'}, Philosophical Magazine \textbf{35}
  (1977), no.~3, 593--601.

\bibitem[WB11]{wang2011collaborative}
Chong Wang and David~M Blei, \emph{Collaborative topic modeling for
  recommending scientific articles}, Proceedings of the 17th ACM SIGKDD
  international conference on Knowledge discovery and data mining, ACM, 2011,
  pp.~448--456.

\bibitem[WG08]{wang2008spatial}
Xiaogang Wang and Eric Grimson, \emph{Spatial latent dirichlet allocation},
  Advances in neural information processing systems, 2008, pp.~1577--1584.

\bibitem[WJ08]{wainwright2008graphical}
Martin~J Wainwright and Michael~I Jordan, \emph{Graphical models, exponential
  families, and variational inference}, Foundations and Trends in Machine
  Learning \textbf{1} (2008), no.~1-2, 1--305.

\bibitem[WT04]{wang2004convergence}
Bo~Wang and D.~Michael Titterington, \emph{{Convergence and asymptotic
  normality of variational Bayesian approximations for exponential family
  models with missing values}}, Proceedings of the 20th conference on
  Uncertainty in artificial intelligence, AUAI Press, 2004, pp.~577--584.

\bibitem[WT05]{wang2005inadequacy}
\bysame, \emph{{Inadequacy of interval estimates corresponding to variational
  Bayesian approximations}}, AISTATS, 2005.

\bibitem[WT06]{wang2006convergence}
\bysame, \emph{{Convergence properties of a general algorithm for calculating
  variational Bayesian estimates for a normal mixture model}}, Bayesian
  Analysis \textbf{1} (2006), no.~3, 625--650.

\bibitem[ZBHD15]{zhou2015bayesian}
Jing Zhou, Anirban Bhattacharya, Amy~H Herring, and David~B Dunson,
  \emph{Bayesian factorizations of big sparse tensors}, Journal of the American
  Statistical Association \textbf{110} (2015), no.~512, 1562--1576.

\bibitem[ZZ17]{zhang2017theoretical}
Anderson~Y Zhang and Harrison~H Zhou, \emph{Theoretical and computational
  guarantees of mean field variational inference for community detection}, {\sf
  arXiv:1710.11268} (2017).

\end{thebibliography}
\end{document}